\documentclass{article}

\PassOptionsToPackage{numbers, compress}{natbib}




\usepackage[final]{neurips_2021}

\usepackage[utf8]{inputenc} 
\usepackage[T1]{fontenc}    
\usepackage{hyperref}       
\usepackage{url}            
\usepackage{booktabs}       
\usepackage{amsfonts}       
\usepackage{nicefrac}       
\usepackage{microtype}      
\usepackage[noend]{algorithmic} 
\usepackage{algorithm}

\usepackage{amsmath,amssymb,amsthm}
\usepackage{caption}
\usepackage{subcaption}
\usepackage{float}
\usepackage{xcolor}
\usepackage{graphicx}
\hypersetup{colorlinks,linkcolor={blue},citecolor={blue},urlcolor={red}}

\definecolor{DarkRed}{rgb}{0.75,0,0}
\definecolor{DarkGreen}{rgb}{0,0.5,0}
\definecolor{DarkPurple}{rgb}{0.5,0,0.5}
\definecolor{DarkBlue}{rgb}{0,0,0.7}

\hypersetup{colorlinks=true, plainpages = false, citecolor = DarkBlue, urlcolor = DarkBlue, filecolor = black, linkcolor =DarkBlue}

\usepackage{times}
\usepackage{enumitem}

\usepackage{thmtools} 

\usepackage{thm-restate}
\newtheorem{lemma}{Lemma}[section]
\newtheorem{theorem}[lemma]{Theorem}
\newtheorem{proposition}[lemma]{Proposition}
\newtheorem{corollary}[lemma]{Corollary}

\newtheorem{claim}[lemma]{Claim}

\newtheorem{definition}[lemma]{Definition}

\usepackage{prettyref}

\usepackage{upgreek}
\usepackage[mathscr]{euscript}
\usepackage{mathtools}
\usepackage{multirow}

\newcommand{\Acal}{\mathcal{A}}
\newcommand{\Scal}{\mathcal{S}}
\newcommand{\Ecal}{\mathcal{E}}
\newcommand{\Mcal}{\mathcal{M}}
\newcommand{\Zcal}{\mathcal{Z}}
\newcommand{\Bcal}{\mathcal{B}}

\newcommand{\Vcal}{\mathcal{V}}
\newcommand{\Pcal}{\mathcal{P}}

\newcommand{\Ncal}{\mathcal{N}}
\newcommand{\Xcal}{\mathcal{X}}
\newcommand{\Hcal}{\mathcal{H}}

\newcommand{\EE}{\mathbb{E}} 
\newcommand{\VV}{\mathbb{V}} 
\newcommand{\NN}{\mathbb{N}} 
\newcommand{\PP}{\mathbb{P}} 
\newcommand{\RR}{\mathbb{R}} 
 %
 %

\newcommand{\Ocal}{\mathcal{O}}

\newcommand{\minimize}[3]{
& \underset{#1}{\operatorname{minimize}}
& & #2\\
& \operatorname{s.t.}
& & #3
}

\newcommand{\maximize}[3]{
& \underset{#1}{\textrm{maximize}}
& & #2\\
& \textrm{s.t.}
& & #3
}

\DeclareMathOperator*{\argmax}{argmax}

\DeclareMathOperator{\gap}{gap}
\DeclareMathOperator{\clip}{clip}

\DeclarePairedDelimiter{\bracket}{[}{]}
\DeclarePairedDelimiter{\curl}{\{}{\}}

\newcommand{\returngap}{\overline{\gap}}

\newcommand{\ignore}[1]{}

\newcommand{\return}[1]{v^{#1}}
\newcommand{\regret}{\mathfrak{R}}

\newcommand{\Blead}{B^{\mathrm{lead}}}
\newcommand{\Bfut}{B^{\mathrm{fut}}}
\newcommand{\indicator}[1]{\chi\left(#1\right)}


\usepackage{prettyref}
\newcommand{\pref}[1]{\prettyref{#1}}

\newcommand{\savehyperref}[2]{\texorpdfstring{\hyperref[#1]{#2}}{#2}}
\newrefformat{eq}{\savehyperref{#1}{\textup{(\ref*{#1})}}}
\newrefformat{eqn}{\savehyperref{#1}{Equation~\ref*{#1}}}
\newrefformat{con}{\savehyperref{#1}{Conjecture~\ref*{#1}}}
\newrefformat{lem}{\savehyperref{#1}{Lemma~\ref*{#1}}}
\newrefformat{def}{\savehyperref{#1}{Definition~\ref*{#1}}}
\newrefformat{line}{\savehyperref{#1}{line~\ref*{#1}}}
\newrefformat{thm}{\savehyperref{#1}{Theorem~\ref*{#1}}}
\newrefformat{corr}{\savehyperref{#1}{Corollary~\ref*{#1}}}
\newrefformat{sec}{\savehyperref{#1}{Section~\ref*{#1}}}
\newrefformat{app}{\savehyperref{#1}{Appendix~\ref*{#1}}}
\newrefformat{ass}{\savehyperref{#1}{Assumption~\ref*{#1}}}
\newrefformat{ex}{\savehyperref{#1}{Example~\ref*{#1}}}
\newrefformat{fig}{\savehyperref{#1}{Figure~\ref*{#1}}}
\newrefformat{alg}{\savehyperref{#1}{Algorithm~\ref*{#1}}}
\newrefformat{rem}{\savehyperref{#1}{Remark~\ref*{#1}}}
\newrefformat{conj}{\savehyperref{#1}{Conjecture~\ref*{#1}}}
\newrefformat{prop}{\savehyperref{#1}{Proposition~\ref*{#1}}}
\newrefformat{proto}{\savehyperref{#1}{Protocol~\ref*{#1}}}
\newrefformat{prob}{\savehyperref{#1}{Problem~\ref*{#1}}}
\newrefformat{claim}{\savehyperref{#1}{Claim~\ref*{#1}}}

\usepackage{pifont}

\title{Beyond Value-Function Gaps: Improved Instance-Dependent Regret Bounds for Episodic Reinforcement Learning}

%

\author{%
  Chris Dann \\
  Google Research \\
  \texttt{chrisdann@google.com} \\
  \And
  Teodor V. Marinov\thanks{Author was at Johns Hopkins University during part of this work.} \\
  Google Research \\
  \texttt{tvmarinov@google.com} \\
  \AND
  Mehryar Mohri \\
  Courant Institute and Google Research \\
  \texttt{mohri@google.com} \\
  \And
  Julian Zimmert \\
  Google Research \\
  \texttt{zimmert@google.com} \\
}

\begin{document}

\maketitle

\begin{abstract}
We provide improved gap-dependent regret bounds for reinforcement learning in finite episodic Markov decision processes. Compared to prior work, our bounds depend on alternative definitions of gaps. These definitions are based on the insight that, in order to achieve a favorable regret, an algorithm does not need to learn how to behave optimally in states that are not reached by an optimal policy. We prove tighter upper regret bounds for optimistic algorithms and accompany them with new information-theoretic lower bounds for a large class of MDPs. Our results show that optimistic algorithms can not achieve the information-theoretic lower bounds even in deterministic MDPs unless there is a unique optimal policy.
\end{abstract}

\section{Introduction}

Reinforcement Learning (RL) is a general scenario where agents interact with
the environment to achieve some goal. The environment and an agent's interactions
are typically modeled as a Markov decision process (MDP) \citep{puterman1994markov},
which can represent a rich variety of tasks. But, for which MDPs can an agent or an RL algorithm succeed? This requires a theoretical analysis of the complexity of an MDP.
This paper studies this question in the tabular episodic setting, where an agent interacts with the environment in episodes of fixed length $H$ and where the size of the state and action space is finite ($S$ and $A$ respectively).

While the performance of RL algorithms in tabular Markov decision processes has been 
the subject of many studies in the past \citep[e.g.][]{fiechter1994efficient,kakade2003sample,osband2013more,dann2017unifying,azar2017minimax,jin2018q,zanette2019tighter, dann2019strategic}, the vast majority of existing analyses focuses on worst-case problem-independent regret bounds, which only take into account the size of the MDP, the horizon $H$ and the number of episodes $K$. 

Recently, however, some significant progress has been achieved towards deriving
more optimistic (problem-dependent) guarantees. This includes 
more refined regret bounds for the tabular episodic setting that depend on 
structural properties of the specific MDP considered \citep{simchowitz2019non, lykouris2019corruption, jin2020simultaneously,foster2020instance,he2020logarithmic}. Motivated by instance-dependent analyses in multi-armed bandits \citep{lai1985asymptotically}, these analyses derive 
gap-dependent regret-bounds of the form 
$O\left(\sum_{(s,a)\in\Scal\times\Acal} \frac{H\log(K)}{\gap(s,a)}\right)$,
where the sum is over state-actions pairs $(s, a)$ and
where the gap notion is defined as the difference of the optimal value function $V^{*}$ of the Bellman optimal policy $\pi^*$ and the $Q$-function of $\pi^*$ at a sub-optimal action:
$\gap(s,a) = V^{*}(s) - Q^{*}(s,a)$. We will refer to this gap definition as \emph{value-function gap} in the following. We note that a similar notion of gap has been used in the infinite horizon setting to achieve instance-dependent bounds \citep{auer2007logarithmic,tewari2008optimistic,auer2009near,filippi2010optimism,ok2018exploration}, however, a strong assumption about irreducibility of the MDP is required.

While regret bounds based on these value function gaps generalize the bounds available in the multi-armed bandit setting, we argue that they have a major limitation. The bound at each state-action pair depends only on the gap at the pair and treats all state-action pairs equally, ignoring their topological ordering in the MDP. This can have a major impact on the derived bound.
In this paper, we address this issue and formalize the following key observation about the difficulty of RL in an episodic MDP through improved instance-dependent regret bounds:
\begin{center}
\begin{minipage}{0.9\linewidth}
\emph{Learning a policy with optimal return does not require an RL agent to distinguish between actions with similar outcomes (small value-function gap) in states that can only be reached by taking highly suboptimal actions (large value-function gap).}
\end{minipage}
\end{center}

To illustrate this insight, consider autonomous driving, where each episode corresponds to driving from a start to a destination. If the RL agent decides to run a red light on a crowded intersection, then a car crash is inevitable. Even though the agent could slightly affect the severity of the car crash by steering, this effect is small and, hence, a good RL agent does not need to learn how to best steer after running a red light. Instead, it would only need a few samples to learn to obey the traffic light in the first place as the action of disregarding a red light has a very large value-function gap.

\begin{figure}[t]
\begin{minipage}[c]{0.3\textwidth}
\includegraphics[width=\textwidth, trim=0 0cm 0cm 0, clip]{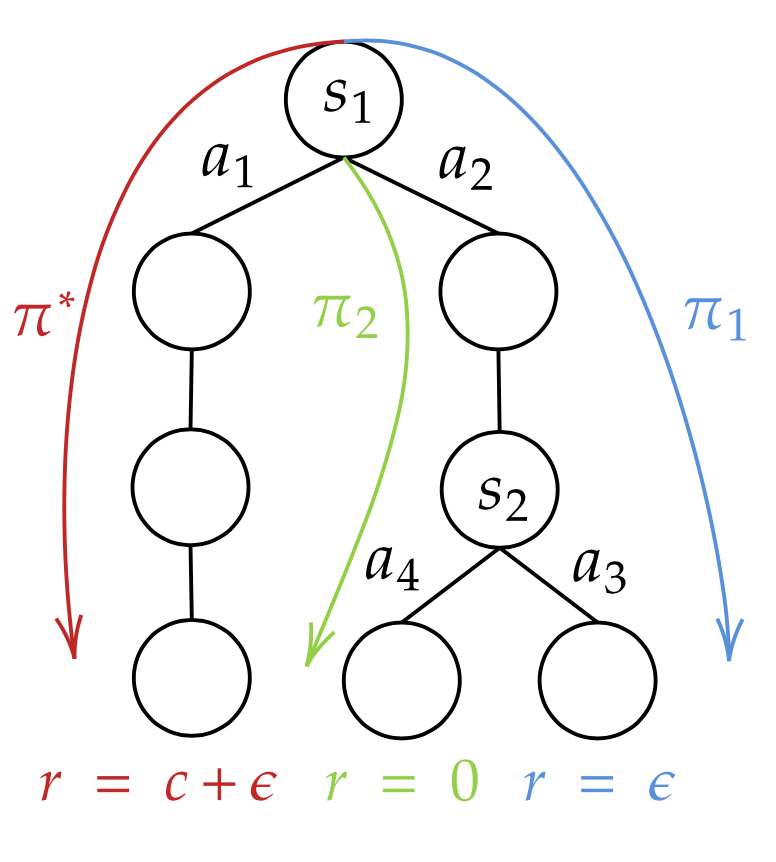}
\end{minipage}
\scalebox{0.9}{
\begin{minipage}[r]{0.7\textwidth}
    \bgroup
    \vspace*{-5mm}
\def\arraystretch{2.1}
\begin{tabular}{r|c|c|}
 \multicolumn{1}{c}{} & \multicolumn{1}{c}{Value-function gap (prior)} & 
 \multicolumn{1}{c}{Return gap (ours)}
 \\ \cline{2-3}
 \multirow{2}{*}{
\begin{minipage}{1.4cm}\raggedleft General Regret bounds\end{minipage}
}
& $\displaystyle O\Big(\sum_{s,a} \frac{H\log(K)}{\gap(s,a)}\Big)$       
& $\displaystyle O\Big(\sum_{s,a} \frac{\log(K)}{\returngap(s,a)}\Big)$ \\
& $\displaystyle \Omega\Big(\sum_{\substack{s,a \colon s \in \pi^*}} \frac{\log(K)}{\gap(s,a)}\Big)$       
& $\displaystyle \Omega\Big(\sum_{s,a} \frac{\log(K)}{H\returngap(s,a)}\Big)$ \\ \cline{2-3}
 \multirow{2}{*}{
\begin{minipage}{1.6cm}\raggedleft Example on the left\end{minipage}
}
& 
\begin{minipage}{2.5cm}
\begin{center}
$\gap(s_1, a_2) = c$\\
$\gap(s_2, a_4) = \epsilon$
\end{center}
\end{minipage}
& 
\begin{minipage}{4cm}
\begin{center}
$\returngap(s_1, a_2) = c$\\
$\returngap(s_2, a_4) = \frac{c + \epsilon}{H} \approx c$
\end{center}
\end{minipage}
\\
&
$\displaystyle O\Big( \frac{SH\log(K)}{\epsilon}\Big)$ 
& 
$\displaystyle O\Big( \frac{SH\log(K)}{c}\Big)$ 
\\ \cline{2-3}
\end{tabular}\egroup
\end{minipage}}
\caption{Comparison of our contributions in MDPs with deterministic transitions. Bounds only include the main terms and all sums over $(s,a)$ are understood to only include terms where the respective gap is nonzero. $\returngap$ is our alternative \emph{return gap} definition introduced later (\pref{def:return_gap}). 
}
\label{fig:summary}
\end{figure}

To understand how this observation translates into regret bounds,  consider the toy example in \prettyref{fig:summary}. This MDP has deterministic transitions and only terminal rewards with $c \gg \epsilon > 0$. There are two decision points, $s_1$ and $s_2$, with two actions each, and all other states have a single action. There are three policies which govern the regret bounds: $\pi^*$ (red path) which takes action $a_1$ in state $s_1$; 
$\pi_1$ which takes action $a_2$ at $s_1$ and $a_3$ at $s_2$ (blue path); and $\pi_2$ which takes action $a_2$ at $s_1$ and $a_4$ at $s_2$ (green path). 
Since $\pi^*$ follows the red path, it never reaches $s_2$ and achieves optimal return $c+\epsilon$, while $\pi_1$ and $\pi_2$ are both suboptimal with return $\epsilon$ and $0$ respectively.
Existing value-function gaps evaluate to $\gap(s_1,a_2) = c$ and $\gap(s_2,a_4) = \epsilon$ which yields a regret bound of order $H\log(K)(1/c + 1/\epsilon)$. The idea behind these bounds is to capture the necessary number of episodes to distinguish the value of the optimal policy $\pi^*$ from the value of any other sub-optimal policy \emph{on all states}. 
However, since $\pi^*$ will never reach $s_2$ it is not necessary to distinguish it from any other policy at $s_2$.
A good algorithm only needs to determine that $a_2$ is sub-optimal in $s_1$, which eliminates both $\pi_1$ and $\pi_2$ as optimal policies after only $\log(K)/c^2$ episodes. This suggests a regret of order $O(\log(K)/c)$. 
The bounds presented in this paper achieve this rate up to factors of $H$ by replacing the gaps at every state-action pair with the average of all gaps along certain paths containing the state action pair. We call these averaged gaps \emph{return gaps}. The return gap at $(s,a)$ is denoted as $\returngap(s,a)$. Our new bounds replace $\gap(s_2,a_4) = \epsilon$ by $\returngap(s_2,a_4) \approx \frac{1}{2}\gap(s_1,a_2) + \frac{1}{2}\gap(s_2,a_4) = \Omega(c)$.
Notice that $\epsilon$ and $c$ can be selected arbitrarily in this example. In particular, if we take $c=0.5$ and $\epsilon = 1/\sqrt{K}$ our bounds remain logarithmic $O(\log(K))$, while prior regret bounds scale as $\sqrt{K}$.

This work is motivated by the insight just discussed. First, we show that improved regret bounds are indeed possible by proving a tighter regret bound for \textsc{StrongEuler}, an existing algorithm based on the optimism-in-the-face-of-uncertainty (OFU) principle \citep{simchowitz2019non}. 
Our regret bound is stated in terms of our new return gaps that capture the problem difficulty more accurately and avoid explicit dependencies on the smallest value function gap $\gap_{\min}$. Our technique applies to optimistic algorithms in general and as a by-product improves the dependency on episode length $H$ of prior results. 
%
Second, we investigate the difficulty of RL in episodic MDPs from an information-theoretic perspective by deriving regret lower-bounds. We show that existing value-function gaps are indeed sufficient to capture difficulty of problems but only when each state is visited by an optimal policy with some probability. 
Finally, we prove a new lower bound when the transitions of the MDP are deterministic that depends only on the difference in return of the optimal policy and suboptimal policies, which is closely related to our notion of return gap.

\section{Problem setting and notation}

We consider reinforcement learning in episodic tabular MDPs with a fixed horizon. An MDP can be described as a tuple $(\Scal, \Acal, P, R, H)$, where $\Scal$ and $\Acal$ are state- and action-space of size $S$ and $A$ respectively, $P$ is the state transition distribution with $P(\cdot|s,a) \in \Delta^{S-1}$ the next state probability distribution, given that action $a$ was taken in the current state $s$. $R$ is the reward distribution defined over $\Scal \times \Acal$  
and $r(s,a) = \mathbb{E}[R(s,a)] \in [0,1]$. Episodes
admit a fixed length or \emph{horizon} $H$. 

We consider \emph{layered} MDPs: each state $s \in \Scal$ belongs to a layer $\kappa(s) \in [H]$ and the only non-zero transitions are between states $s, s'$ in consecutive layers, with $\kappa(s') = \kappa(s)+1$. This common assumption \citep[see e.g.][]{krishnamurthy2016pac} corresponds to MDPs with time-dependent transitions, as in  \citep{jin2018q, dann2017unifying}, but allows us to omit an explicit time-index in value-functions and policies.
For ease of presentation, we assume there is a unique start state $s_1$ with $\kappa(s_1) = 1$ but our results can be generalized to multiple (possibly adversarial) start states. 
Similarly, for convenience, we assume that all states are reachable by some policy with non-zero probability, but not necessarily all policies or the same policy.

We denote by $K$ the number of episodes during which the MDP is visited. Before each episode $k \in [K]$, the agent selects a deterministic policy 
$\pi_k \colon \Scal \rightarrow \Acal$ out of a set of all policies $\Pi$ and $\pi_k$ is then executed for all $H$ time steps in episode $k$.
For each policy $\pi$, we denote by
$w^\pi(s,a) = \PP( S_{\kappa(s)} = s, A_{\kappa(s)} = a \mid A_h = \pi(S_h) \,\,\forall h \in [H])$
and $w^\pi(s) = \sum_a w^\pi(s,a)$
 probability of reaching state-action pair $(s,a)$ and state $s$ respectively when executing $\pi$.
For convenience, $supp(\pi) = \{ s \in \Scal \colon w^\pi(s) > 0\}$ is the set of states visited by $\pi$ with non-zero probability. The Q- and value function of a policy $\pi$ are
\begin{align*}
        Q^\pi(s, a) &= \EE_\pi \Bigg[ \sum_{h=\kappa(s)}^H r(S_h, A_h) ~\Bigg|~ S_{\kappa(s)} = s, A_{\kappa(s)} = a\Bigg],
        & \textrm{and} \quad
            V^\pi(s) &= Q^\pi(s, \pi(s))
\end{align*}
and the regret incurred by the agent is the sum of its regret over $K$ episodes
\begin{equation}
    \regret(K) = \sum_{k=1}^K \return{*} - \return{\pi_k} = \sum_{k=1}^K V^{*}(s_1) - V^{\pi_k}(s_1),
\end{equation}
where  $\return{\pi} = V^{\pi}(s_1)$ is the expected total sum of rewards or \emph{return} of $\pi$ and $V^*$ is the optimal value function $V^*(s) = \max_{\pi \in \Pi} V^\pi(s)$.
Finally, the set of optimal policies is denoted as $\Pi^* = \{\pi \in \Pi : V^{\pi} = V^*\}$. Note that we only call a policy optimal if it satisfies the Bellman equation in every state, as is common in literature, but there may be policies outside of $\Pi^*$ that also achieve maximum return because they only take suboptimal actions outside of their support. The variance of the $Q$ function at a state-action pair $(s,a)$ of the optimal policy is $\Vcal^*(s,a) = \VV[R(s,a)] + \VV_{s'\sim P(\cdot|s,a)}[V^{*}(s')]$, where $\VV[X]$ denotes the variance of the r.v.\ $X$. The maximum variance over all state-action pairs is $\Vcal^* = \max_{(s,a)}\Vcal^*(s,a)$.
Finally, our proofs will make use of the following clipping operator $\clip[a|b] = \chi(a\geq b)a$ that sets $a$ to zero if it is smaller than $b$, where $\chi$ is the indicator function.

\section{Novel upper bounds for optimistic algorithms}
\label{sec:upper_bounds}
In this section, we present tighter regret upper-bounds for optimistic algorithms through a novel analysis technique.
Our technique can be generally applied to model-based optimistic algorithms such as \textsc{StrongEuler} \citep{simchowitz2019non}, \textsc{Ucbvi} \citep{azar2012sample}, \textsc{ORLC} \citep{dann2019policy} or \textsc{Euler} \citep{zanette2019tighter}.
In the following, we will first give a brief overview of this class of algorithms (see \pref{app:opt_algs} for more details) and then state our main results for the \textsc{StrongEuler} algorithm \cite{simchowitz2019non}. We focus on this algorithm for concreteness and ease of comparison.

Optimistic algorithms maintain estimators of the $Q$-functions at every state-action pair such that there exists at least one policy $\pi$ for which the estimator, $\bar Q^{\pi}$, overestimates the $Q$-function of the optimal policy, that is $\bar Q^{\pi}(s,a) \geq Q^*(s,a),\forall (s,a)\in\Scal\times\Acal$. During episode $k\in[K]$, the optimistic algorithm selects the policy $\pi_k$ with highest optimistic value function $\bar V_k$. By definition, it holds that $\bar V_k(s) \geq V^*(s)$. The optimistic value and $Q$-functions are constructed through finite-sample estimators of the true rewards $r(s,a)$ and the transition kernel $P(\cdot|s,a)$ plus bias terms, similar to estimators for the UCB-I multi-armed bandit algorithm. Careful construction of these bias terms is crucial for deriving min-max optimal regret bounds in $S,A$ and $H$ \citep{azar2017minimax}. Bias terms which yield the tightest known bounds come from concentration of martingales results such as Freedman's inequality~\citep{Freedman1975} and empirical Bernstein's inequality for martingales~\citep{maurer2009empirical}.

The \textsc{StrongEuler} algorithm not only satisfies optimism, i.e.,  $\bar V_k \geq V^*$, but also a stronger version called \emph{strong optimism}. To define strong optimism we need the notion of \emph{surplus} which roughly measures the optimism at a fixed state-action pair. Formally the surplus at $(s,a)$ during episode $k$ is defined as
\begin{align}
\label{eq:surpl_def}
    E_k(s,a) = \bar Q_k(s,a) - r(s,a) - \langle P(\cdot|s,a), \bar V_k\rangle~.
\end{align}
We say that an algorithm is strongly optimistic if $E_k(s,a) \geq 0,\forall (s,a) \in \Scal\times\Acal, k\in[K]$. Surpluses are also central to our new regret bounds and we will carefully discuss their use in Appendix~\ref{app:upper_bounds}.

As hinted to in the introduction, the way prior regret bounds treat value-function gaps independently at each state-action pair can lead to excessively loose guarantees.
Bounds  that use value-function gaps \citep{simchowitz2019non,lykouris2019corruption,jin2020simultaneously}
scale at least as
$$\sum_{s,a \colon \gap(s,a) >0 } \frac{H\log(K)}{\gap(s,a)} + 
\sum_{s,a \colon \gap(s,a) =0 }
\frac{H\log(K)}{\gap_{\min}} ,
$$ 
where state-action pairs with zero gap appear, with  $\gap_{\min} = \min_{s, a \colon \gap(s,a) > 0} \gap(s,a)$, the smallest positive gap.
To illustrate where these bounds are loose, let us revisit the example in \pref{fig:summary}. Here, these bounds evaluate to $  \frac{H\log(K)}{c}+\frac{H\log(K)}{\epsilon}+\frac{SH\log(K)}{\epsilon}$, where the first two terms come from state-action pairs with positive value-function gaps and the last term comes from all the state-action pairs with zero gaps. There are several opportunities for improvement:
\begin{enumerate}[label = \textbf{O.\arabic*}]
    \item\label{enum_prob_1} \textbf{State-action pairs that can only be visited by taking optimal actions:} We should not pay the $1/\gap_{\min}$ factor for such $(s,a)$ as there are no other suboptimal policies $\pi$ to distinguish from $\pi^*$ in such states.
    \item\label{enum_prob_3} 
    \textbf{State-action pairs that can only be visited by taking at least one suboptimal action:}
    We should not pay the $1 / \gap(s_2, a_3)$ factor for state-action pair $(s_2, a_3)$ and the $1 / \gap_{\min}$ factor for $(s_2, a_4)$ because no optimal policy visits $s_2$. Such state-action pairs should only be accounted for with the price to learn that $a_2$ is not optimal in state $s_1$. After all,  learning to distinguish between $\pi_1$ and $\pi_2$ is unnecessary for optimal return. 
\end{enumerate}
Both opportunities suggest that the price $\frac{1}{\gap(s,a)}$ or $\frac{1}{\gap_{\min}}$ that each state-action pair $(s,a)$ contributes to the regret bound can be reduced by taking into account the regret incurred by the time $(s,a)$ is reached. Opportunity~\ref{enum_prob_1} postulates that if no regret can be incurred up to (and including) the time step $(s,a)$ is reached, then this state-action pair should not appear in the regret bound. Similarly, if this regret is necessarily large, then the agent can learn this with few observations and stop reaching $(s,a)$ earlier than $\gap(s,a)$ may suggest. Thus, as claimed in \ref{enum_prob_3}, the contribution of $(s,a)$ to the regret should be more limited in this case.

Since the total regret incurred during one episode by a policy $\pi$ is simply the expected sum of value-function gaps visited (\pref{lem:gap_decomp_pi} in the appendix),
\begin{align}
v^* - v^\pi = \EE_{\pi}\left[ \sum_{h=1}^H \gap(S_h, A_h) \right],
\label{eq:reg_decomp}
\end{align}
we can measure the regret incurred up to reaching $(S_{t}, A_{t})$ by the sum of value function gaps $\sum_{h=1}^t \gap(S_h, A_h)$ up to this point $t$. We are interested in the regret incurred up to visiting a certain state-action pair $(s,a)$ which $\pi$ may visit only with some probability. We therefore need to take the expectation of such gaps conditioned on the event that $(s,a)$ is actually visited. We further condition on the event that this regret is nonzero, which is exactly the case when the agent encounters a positive value-function gap within the first $\kappa(s)$ time steps. We arrive at
\begin{align*}
\EE_{\pi}\left[ \sum_{h=1}^{\kappa(s)} \gap(S_h, A_h) ~ \bigg| ~S_{\kappa(s)} = s, A_{\kappa(s)} = a, B \leq \kappa(s) \right],
\end{align*}
where $B = \min \{ h \in [H+1] \colon \gap(S_h, A_h) > 0\}$ is the first time a non-zero gap is visited. This quantity measures the regret incurred up to visiting $(s,a)$ through suboptimal actions. If this quantity is large for all policies $\pi$, then a learner will stop visiting this state-action pair after few observations because it can rule out all actions that lead to $(s,a)$ quickly. Conversely, if the event that we condition on has zero probability under any policy, then $(s,a)$ can only be reached through optimal action choices (including $a$ in $s$) and incurs no regret. This motivates our new definition of gaps that combines value function gaps with the regret incurred up to visiting the state-action pair:
\begin{definition}[Return gap]
\label{def:return_gap}
For any state-action pair $(s, a)\in\Scal\times\Acal$ define $\Bcal(s,a)\equiv \{B \leq \kappa(s), S_{\kappa(s)} = s, A_{\kappa(s)} = a\}$, where $B$ is the first time a non-zero gap is encountered. $\Bcal(s,a)$ denotes the event that state-action pair $(s,a)$ is visited 
and that a suboptimal action was played at any time up to visiting $(s,a)$.
We define the return gap as
\begin{align*}
    \returngap(s,a) \equiv \gap(s,a)\lor\min_{\substack{\pi \in \Pi \colon \\\PP_{\pi}(\Bcal(s,a)) > 0}}&~
    \frac{1}{H} \,
    \EE_{\pi} \bracket*{ \sum_{h = 1}^{\kappa(s)} \gap(S_h, A_h) ~ \bigg| ~\Bcal(s,a) }
\end{align*}
if there is a policy $\pi \in \Pi$ with $\PP_\pi(\Bcal(s,a)) > 0$ and $\returngap(s,a) \equiv 0$ otherwise.
\end{definition}

The additional $1/H$ factor in the second term is a required normalization suggesting that it is the average gap rather than their sum that matters.
We emphasize that Definition~\ref{def:return_gap} is independent of the choice of RL algorithm and in particular does not depend on the algorithm being optimistic. Thus, we expect our main ideas and techniques to be useful beyond the analysis of optimistic algorithms.
Equipped with this definition, we are ready to state our main upper bound which pertains to the \textsc{StrongEuler} algorithm proposed by \citet{simchowitz2019non}.
\begin{theorem}[Main Result (Informal)]
\label{thm:reg_bound_gen_informal}
The regret $\regret(K)$ of  \textsc{StrongEuler} is bounded with high probability for all number of episodes $K$ as
\begin{align*}
    \regret(K) &\lessapprox \sum_{\substack{(s,a)\in\Scal\times\Acal \colon \\\returngap(s,a) > 0}} \frac{\Vcal^*(s,a)}{\returngap(s,a)}
    \log K
    .
\end{align*}
\end{theorem}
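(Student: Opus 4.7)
The plan is to start from the optimism-based surplus decomposition that underlies existing \textsc{StrongEuler} analyses and then replace the local per-step clipping threshold of Simchowitz--Jamieson by a larger, path-aware threshold determined by the return gap. By optimism $\bar V_k \geq V^*$, the fact that $\pi_k$ maximizes $\bar V_k(s_1)$, and the Bellman-style identity for surpluses, the per-episode regret satisfies
\begin{equation*}
V^*(s_1) - V^{\pi_k}(s_1) \;\leq\; \bar V_k(s_1) - V^{\pi_k}(s_1) \;=\; \sum_{s,a} w^{\pi_k}(s,a)\, E_k(s,a),
\end{equation*}
so the total regret is a cumulative expected surplus along the trajectories of $\pi_1,\dots,\pi_K$, and strong optimism ($E_k \geq 0$) lets me manipulate this sum without sign issues.

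Next I would establish the key technical lemma: the regret is, up to constants, unchanged if each surplus is replaced by $\clip[E_k(s,a)\,|\,\returngap(s,a)/(cH)]$ for a small constant $c$. For pairs where $\returngap(s,a) = \gap(s,a)$ this reduces to the standard Simchowitz--Jamieson clipping. The new case is $(s,a)$ with small or zero value-function gap but positive return gap. Here I would exploit \pref{def:return_gap}: any policy $\pi$ that reaches $(s,a)$ through a suboptimal prefix satisfies
\begin{equation*}
\EE_\pi\Bigl[\sum_{h=1}^{\kappa(s)} \gap(S_h,A_h)\,\Big|\,\Bcal(s,a)\Bigr] \;\geq\; H\,\returngap(s,a),
\end{equation*}
so visits to $(s,a)$ through suboptimal prefixes come bundled with prefix regret of order $H\,\returngap(s,a)\,\PP_\pi(\Bcal(s,a))$. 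Any sub-threshold surplus at $(s,a)$ can then be charged to this prefix regret rather than to $(s,a)$ itself, with the bookkeeping carried out by a layer-by-layer induction from $\kappa(s) = H$ down to $1$, absorbing sub-threshold contributions at layer $h$ into the gap sum at earlier layers.

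Once the refined clipping is in place, the remainder is standard for optimistic algorithms. I would apply the Freedman and empirical-Bernstein bounds already used in the \textsc{StrongEuler} analysis to show that, on the favorable concentration event, the surviving clipped surpluses satisfy $E_k(s,a) = O(\sqrt{\Vcal^*(s,a)\log(K)/n_k(s,a)})$ plus lower-order terms, where $n_k(s,a)$ is the visit count. Combined with the requirement that the surplus exceed $\returngap(s,a)/(cH)$ to contribute, this yields $n_K(s,a) \leq \tilde O(H^2 \Vcal^*(s,a)/\returngap(s,a)^2)$. Summing the per-visit surplus $\returngap(s,a)$ over at most this many visits gives the claimed $\sum_{(s,a)} \Vcal^*(s,a)\log(K)/\returngap(s,a)$ bound, with $H$ factors absorbed into the lower-order terms of the informal statement.

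The main obstacle will be the refined clipping lemma. The original Simchowitz--Jamieson threshold is local: a surplus of size at most $\gap(s,a)/(2H)$ at a single pair is absorbed into the local one-step regret because a trajectory has only $H$ steps. The return-gap threshold is inherently non-local, mixing gaps at distinct pairs along each path, so I cannot zero out the surplus at $(s,a)$ without carefully tracking which prefix regret is being charged. The delicate part is to show that the backward induction does not double-count prefix regret across state-action pairs that share common prefixes; I expect the $1/H$ normalization in \pref{def:return_gap} and the conditioning on $\Bcal(s,a)$ together with the occupancy weights $w^\pi(s,a)$ to exactly balance this accounting.
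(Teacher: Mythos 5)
Your overall strategy is the same as the paper's: bound the per-episode regret by expected surpluses via optimism, replace the local $\gap_{\min}/(2H)$ clipping threshold of Simchowitz--Jamieson by a path-aware threshold tied to the return gap, justify the new threshold by charging sub-threshold surpluses at $(s,a)$ to the regret accumulated on the suboptimal prefix leading to $(s,a)$, and then finish with the standard Freedman/Bernstein surplus bounds and an integration step. The paper packages the charging argument as \pref{prop:surplus_clipping_bound}: any threshold function $\epsilon_k$ with $\EE_{\pi_k}[\sum_{h=B}^H \epsilon_k(S_h,A_h)] \leq \tfrac12 \EE_{\pi_k}[\sum_{h=1}^H \gap(S_h,A_h)]$ is admissible, and the conditional-average choice \pref{eqn:epsilon_choice} is verified in \pref{lem:clipping_gaps_rel} --- which is exactly the ``no double-counting'' accounting you correctly identify as the delicate step, resolved there by a tower-property computation over the stopping time $B$ rather than by your proposed backward induction over layers. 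So the key ideas match.

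The one substantive issue is your calibration of the threshold. You propose clipping at $\returngap(s,a)/(cH)$, but $\returngap$ already carries a $1/H$ normalization in its definition (\pref{def:return_gap}), and the admissible threshold in the paper is $\tfrac14\gap(s,a) \vee \epsilon_k(s,a)$ with $\epsilon_k(s,a) = \tfrac{1}{2H}\EE_{\pi_k}[\sum_h \gap \mid \Bcal(s,a)] = \Theta(\returngap(s,a))$ --- no additional $1/H$. The single $1/H$ is precisely what is needed so that summing the threshold over the at most $H$ steps after $B$ stays below half the total gap; dividing by $H$ a second time is unnecessary and costs you a factor of $H$ in the leading term (by your own accounting, $n_K(s,a) \lesssim H^2\Vcal^*/\returngap^2$ yields a leading term of order $H^2\Vcal^*/\returngap$, not $\Vcal^*/\returngap$). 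This cannot be ``absorbed into lower-order terms'': removing exactly this $H$ from the clipping threshold is one of the stated contributions of \pref{prop:surplus_clipping_bound}. Your argument proves a correct but weaker bound; to recover the theorem as stated you need to run the charging argument with the threshold $\tfrac14\gap(s,a)\vee\epsilon_k(s,a)$ itself.
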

In the above, 
we have restricted the bound to only those terms that have inverse polynomial dependence on the gaps.

\paragraph{Comparison with existing gap-dependent bounds.} We now compare our bound to the existing gap-dependent bound for \textsc{StrongEuler} by \citet[Corollary B.1]{simchowitz2019non}
\begin{align}
\label{eq:cor_1b_simchowitz}
    \regret(K) \lessapprox \sum_{\substack{(s,a)\in\Scal\times\Acal \colon \\\gap(s,a) > 0}} \frac{ H\Vcal^*(s,a)}{\gap(s,a)}\log K + \sum_{\substack{(s,a)\in\Scal\times\Acal \colon \\\gap(s,a) = 0}} \frac{H\Vcal^*}{\gap_{\min}}\log K .
\end{align}
We here focus only on terms that admit a dependency on $K$ and an inverse-polynomial dependency on gaps as all other terms are comparable. 
Most notable is the absence of the second term of \pref{eq:cor_1b_simchowitz} in our bound in \pref{thm:reg_bound_gen_informal}. Thus, while state-action pairs with $\returngap(s, a) = 0$ do not contribute to our regret bound, they appear with a $1/\gap_{\min}$ factor in existing bounds. Therefore, our bound addresses \ref{enum_prob_1} because it does not pay for state-action pairs that can only be visited through optimal actions.
Further, state-action pairs that do contribute to our bound satisfy $\frac{1}{\returngap(s,a)} \leq \frac{1}{\gap(s,a)} \wedge \frac{H}{\gap_{\min}}$ and thus never contribute more than in the existing bound in \pref{eq:cor_1b_simchowitz}. Therefore, our regret bound is never worse. 
In fact, it is significantly tighter when there are states that are only reachable by taking severely suboptimal actions, i.e., when the average value-function gaps are much larger than $\gap(s,a)$ or $\gap_{\min}$. By our definition of return gaps, we only pay the inverse of these larger gaps instead of $\gap_{\min}$.
Thus, our bound also addresses \ref{enum_prob_3} and achieves the desired $\log(K)/c$ regret bound in the motivating example of \pref{fig:summary} as opposed to the $\log(K)/\epsilon$ bound of prior work. 

One of the limitations of optimistic algorithms is their $\nicefrac{S}{\gap_{\min}}$ dependence even when there is only one state with a gap of $\gap_{\min}$ \citep{simchowitz2019non}. We note that even though our bound in \pref{thm:reg_bound_gen_informal} improves on prior work, our result does not aim to address this limitation.
Very recent concurrent work \citep{xu2021fine} proposed an action-elimination based algorithm that avoids the $\nicefrac{S}{\gap_{\min}}$ issue of optimistic algorithm but their regret bounds still suffer the issues illustrated in \pref{fig:summary} (e.g.  \ref{enum_prob_3}).
We therefore view our contributions as complementary. In fact, we believe our analysis techniques can be applied to their algorithm as well and result similar improvements as for the example in \pref{fig:summary}.

\paragraph{Regret bound when transitions are deterministic.}
We now interpret \pref{def:return_gap} for MDPs with deterministic transitions and derive an alternative form of our bound in this case.  
Let $\Pi_{s,a}$ be the set of all policies that visit $(s,a)$ and have taken a suboptimal action up to that visit, that is,
$$
\Pi_{s,a} \equiv \curl*{\pi \in \Pi ~\colon s^\pi_{\kappa(s)} = s ,a^\pi_{\kappa(s)} = a, \exists~ h \leq \kappa(s), \gap(s^\pi_{h},a^\pi_{h})>0}.
$$
where $(s^\pi_1, a^{\pi}_1, s^\pi_2, \dots, s^\pi_H, a^{\pi}_H)$ are the state-action pairs visited (deterministically) by $\pi$.
Further,  let $v^{*}_{s,a} = \max_{\pi \in \Pi_{s,a}} v^\pi$ be the best return of such policies.
\pref{def:return_gap} now evaluates to $\returngap(s,a) = \gap(s,a) \vee \frac{1}{H}(v^* - v^{*}_{s,a})$ and the bound in  \pref{thm:reg_bound_gen_informal} can be written as
\begin{align}
\label{eq:det_trans_reg}
    \regret(K) &\lessapprox \sum_{s,a \colon \Pi_{s,a} \neq \varnothing} \frac{H\log(K)}{v^* - v^*_{s,a}}~.
\end{align}
We show in \pref{app:unique_opt_pol}, that it is possible to further improve this bound when the optimal policy is unique by only summing over state-action pairs which are not visited by the optimal policy.

\subsection{Regret analysis with improved clipping: from minimum gap to average gap}
\label{sec:gen_clipping}
In this section, we present the main technical innovations of our tighter regret analysis.
Our framework applies to \emph{optimistic} algorithms that maintain a $Q$-function estimate, $\bar Q_k(s,a)$, which overestimates the optimal $Q$-function $Q^*(s,a)$ with high probability in all states $s$, actions $a$ and episodes $k$.
We first give an overview of gap-dependent analyses and then describe our approach.

\paragraph{Overview of gap-dependent analyses. }
A central quantity in regret analyses of optimistic algorithms are the surpluses $E_k(s,a)$, defined in \pref{eq:surpl_def}, which, roughly speaking, quantify the local amount of optimism.
Worst-case regret analyses bound the regret in episode $k$ as
$\sum_{(s,a) \in \Scal\times\Acal}w_{\pi_k}(s,a)E_{k}(s,a)$, the expected surpluses under the optimistic policy $\pi_k$ executed in that episode. Instead, gap-dependent analyses rely on a tighter version and bound the instantaneous regret by the \emph{clipped surpluses} \citep[e.g. Proposition 3.1][]{simchowitz2019non}
\begin{align}
    V^*(s_1) - V^{\pi_k}(s_1) \leq 2e \sum_{s,a} w^{\pi_k}(s,a)\clip\left[E_{k}(s,a) ~\bigg\vert~ \frac{1}{4H}\gap(s,a) \lor \frac{\gap_{\min}}{2H}\right].
    \label{eqn:old_surplusclipping}
\end{align}

\paragraph{Sharper clipping with general thresholds.}
Our main technical contribution for achieving a regret bound in terms of return gaps $\returngap(s,a)$ is the following improved surplus clipping bound:

\begin{restatable}[Improved surplus clipping bound]{proposition}{surplusclippingbound}
\label{prop:surplus_clipping_bound}
Let the surpluses $E_k(s,a)$ be generated by an optimistic algorithm.
Then the instantaneous regret of $\pi_k$ is bounded as follows:
\vspace{-1mm}
\begin{align*}
    V^*(s_1) - V^{\pi_k}(s_1) 
    \leq 4 \sum_{s,a} w^{\pi_k}(s,a) \clip\left[ E_k(s,a) ~\bigg| ~ \frac 1 4 \gap(s,a) \vee \epsilon_{k}(s, a) \right]~, 
\end{align*}
\vspace{-1mm}
where $\epsilon_k \colon \Scal \times \Acal \rightarrow \RR^+_0$ 
is any clipping threshold function that satisfies
\vspace{-1mm}
\begin{align*}
    \EE_{\pi_k}\left[\sum_{h=B}^H \epsilon_k(S_h, A_h) \right]
    \leq \frac{1}{2} \EE_{\pi_k} \left[\sum_{h=1}^H \gap(S_h, A_h) \right].
\end{align*}

\end{restatable}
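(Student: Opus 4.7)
The plan is to first sharpen the standard surplus upper bound on $V^*(s_1) - V^{\pi_k}(s_1)$ so that only the surpluses from step $B$ onwards appear, and then to absorb the clipping error into the regret using the hypothesis on $\epsilon_k$.

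Let $\delta^*_k(s) := \bar V_k(s) - V^*(s)$, which is nonnegative pointwise by optimism. Writing $\bar V_k(S_h) = \bar Q_k(S_h, A_h)$ (as $\pi_k$ is greedy with respect to $\bar Q_k$) and $V^*(S_h) = r(S_h, A_h) + \EE[V^*(S_{h+1}) \mid S_h, A_h] + \gap(S_h, A_h)$ and subtracting yields the per-step identity
\[
E_k(S_h, A_h) = \delta^*_k(S_h) - \EE[\delta^*_k(S_{h+1}) \mid S_h, A_h] + \gap(S_h, A_h).
\]

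The key step, which I expect to be the main obstacle, is the sharpened surplus bound
\[
V^*(s_1) - V^{\pi_k}(s_1) \leq \EE_{\pi_k}\bracket*{\sum_{h=B}^{H} E_k(S_h, A_h)}.
\]
To establish it, define the martingale differences $X_h := \EE[\delta^*_k(S_{h+1}) \mid S_h, A_h] - \delta^*_k(S_{h+1})$ and the boundary value $\delta^*_k(S_{H+1}) := 0$. Telescoping the per-step identity from $h = B$ to $h = H$ gives pathwise
\[
\sum_{h=B}^{H} E_k(S_h, A_h) = \delta^*_k(S_B) - \sum_{h=B}^{H} X_h + \sum_{h=B}^{H} \gap(S_h, A_h).
\]
Since $B$ is a stopping time with respect to the trajectory filtration (the event $\{h \geq B\}$ is determined by $(S_i, A_i)_{i \leq h}$) and $\EE[X_h \mid S_h, A_h] = 0$, the expected martingale sum vanishes. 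Combined with $\gap(S_h, A_h) = 0$ for $h < B$ and \pref{eq:reg_decomp}, this simplifies to
\[
\EE_{\pi_k}\bracket*{\sum_{h=B}^{H} E_k(S_h, A_h)} = \EE_{\pi_k}[\delta^*_k(S_B)] + (V^*(s_1) - V^{\pi_k}(s_1)),
\]
and the sharpened bound follows from $\delta^*_k(S_B) \geq 0$. This explains precisely why the hypothesis on $\epsilon_k$ need only control the sum from $B$ onwards: the contribution of the steps before $B$ is absorbed by the telescoping of $\delta^*_k$.

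The remainder is routine. Set $\tau_k(s,a) := \tfrac{1}{4}\gap(s,a) \vee \epsilon_k(s,a) \leq \tfrac{1}{4}\gap(s,a) + \epsilon_k(s,a)$ and use the elementary inequality $a \leq \clip[a|b] + b$ valid for any $b \geq 0$ to obtain
\[
\EE_{\pi_k}\bracket*{\sum_{h=B}^{H} E_k} \leq \EE_{\pi_k}\bracket*{\sum_{h=B}^{H} \clip[E_k|\tau_k]} + \tfrac{1}{4}\EE_{\pi_k}\bracket*{\sum_{h=B}^{H}\gap(S_h, A_h)} + \EE_{\pi_k}\bracket*{\sum_{h=B}^{H}\epsilon_k(S_h, A_h)}.
\]
The first term is at most $\sum_{s,a} w^{\pi_k}(s,a)\,\clip[E_k(s,a) \mid \tau_k(s,a)]$ by nonnegativity of the clipped summand and extending the sum to all $h \in [H]$. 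The second term equals $\tfrac{1}{4}(V^*(s_1) - V^{\pi_k}(s_1))$ since $\gap = 0$ for $h < B$, and the third term is at most $\tfrac{1}{2}(V^*(s_1) - V^{\pi_k}(s_1))$ by the hypothesis on $\epsilon_k$. Moving the accumulated $\tfrac{3}{4}(V^*(s_1) - V^{\pi_k}(s_1))$ to the left-hand side yields the desired bound with prefactor $4$.
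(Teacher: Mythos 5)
Your proof is correct, and it reaches the same two pivotal facts as the paper's argument---that the expected sum of surpluses from the stopping time $B$ onward dominates the full regret (via optimism evaluated at the stopped state $S_B$), and that the hypothesis on $\epsilon_k$ lets the post-$B$ clipping error be absorbed at a cost of half the regret---but it gets there by a noticeably more elementary route. The paper never states your ``sharpened surplus bound'' $V^*(s_1)-V^{\pi_k}(s_1)\leq \EE_{\pi_k}[\sum_{h=B}^H E_k(S_h,A_h)]$ as a standalone fact; instead it introduces the clipped surpluses and clipped value functions $\ddot E_k,\ddot Q_k,\ddot V_k$ in \pref{eq:clipped_value_def}, proves in \pref{lem:Vdd_lb1} that $\ddot V_k(s_1)-V^{\pi_k}(s_1)\geq \EE_{\pi_k}[\sum_{h=B}^H(\gap(S_h,A_h)-\epsilon_k(S_h,A_h))]$ through a recursion over the events $\Acal_h,\Pcal_{1:h}$ and repeated applications of \pref{lem:rec_rel}, upgrades this to scaled optimism of $\ddot V_k$ in \pref{lem:opt_clipped_weak_stopping_time}, and finally rearranges using the identity $\clip[a\,|\,b]-c\leq\clip[a\,|\,b\vee c]$. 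Your version replaces all of that machinery with a single pathwise telescoping of $\delta^*_k=\bar V_k-V^*$ plus an optional-stopping observation ($\{B\leq h\}$ is $\mathcal F_h$-measurable and the increments $X_h$ are conditionally centered), and handles the clipping at the end with $a\leq\clip[a\,|\,b]+b$ and $\tfrac14\gap\vee\epsilon_k\leq\tfrac14\gap+\epsilon_k$, moving $\tfrac34(V^*-V^{\pi_k})$ to the left. What the paper's construction buys is reusability: the clipped value functions and \pref{lem:rec_rel} are invoked again in \pref{app:unique_opt_pol} to prove the refined bound with the restricted thresholds $\bar\epsilon_k$, whereas your argument would need to be redone there. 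What your route buys is transparency---it isolates exactly where optimism enters ($\delta^*_k(S_B)\geq 0$) and makes clear that no clipped Bellman recursion is actually needed for the proposition itself. One small point worth making explicit: your step $\bar V_k(S_h)=\bar Q_k(S_h,A_h)$ uses that $\pi_k$ is greedy with respect to $\bar Q_k$, which is the same structural property the paper relies on when applying \pref{lem:rec_rel} with $\Psi=\bar V_k-V^{\pi_k}$ and $\Delta=E_k$, so you are not assuming anything extra.
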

Compared to previous surplus clipping bounds in \eqref{eqn:old_surplusclipping}, there are several notable differences. First, instead of $\gap_{\min}/2H$, we can now pair $\gap(s,a)$ with more general clipping thresholds $\epsilon_k(s,a)$, as long as their expected sum over time steps after the first non-zero gap was encountered is at most half the expected sum of gaps. We will provide some intuition for this condition below. Note that $\epsilon_{k}(s,a) \equiv \frac{\gap_{\min}}{2H}$ satisfies the condition because the LHS is bounded between $\frac{\gap_{\min}}{2H} \PP_{\pi_k}( B \leq H)$ and $\gap_{\min} \PP_{\pi_k}( B \leq H)$, and there must be at least one positive gap in the sum $\sum_{h=1}^H\gap(S_h,A_h)$ on the RHS in event $\{B \leq H\}$. Thus our bound recovers existing results.
In addition, the first term in our clipping thresholds is $\frac{1}{4}\gap(s,a)$ instead of $\frac{1}{4H}\gap(s,a)$. \citet{simchowitz2019non} are able to remove this spurious $H$ factor only if the problem instance happens to be a bandit instance and the algorithm satisfies a condition called \emph{strong optimism} where surpluses have to be non-negative. Our analysis does not require such conditions and therefore generalizes these existing results.\footnote{Our layered state space assumption changes $H$ factors in lower-order terms of our final regret compared to \citet{simchowitz2019non}. However, \pref{prop:surplus_clipping_bound} directly applies to their setting with no penalty in $H$.}

\paragraph{Choice of clipping thresholds for return gaps.}
The condition in \pref{prop:surplus_clipping_bound} suggests that one can set $\epsilon_{k}(S_h,A_h)$ to be proportional to the average expected gap under policy $\pi_k$: 
\begin{align}
    \epsilon_{k}(s,a) = 
    \frac{1}{2H}
    \EE_{\pi_k}\left[ \sum_{h=1}^H \gap(S_h, A_h) ~ \bigg| ~\Bcal(s,a)\right].
    \label{eqn:epsilon_choice}
\end{align}
if $\PP_{\pi_k}(\Bcal(s,a)) > 0$ and  $\epsilon_{k}(s,a) = \infty$ otherwise.  
\pref{lem:clipping_gaps_rel} in \pref{app:upper_bounds} shows that this choice indeed satisfies the condition in \pref{prop:surplus_clipping_bound}. 
If we now take the minimum over all policies for $\pi_k$, then we can proceed with the standard analysis and derive our main result in \pref{thm:reg_bound_gen_informal}. However, by avoiding the minimum over policies, we can derive a stronger policy-dependent regret bound which we discuss in the appendix.

\section{Instance-dependent lower bounds}
\label{sec:lower_bounds_main}
We here shed light on what properties on an episodic MDP determine the statistical difficulty of RL by deriving information-theoretic lower bounds on the asymptotic expected regret of any (good) algorithm. To that end, we first derive a general result that expresses a lower bound as the optimal value of a certain optimization problem and then derive closed-form lower-bounds from this optimization problem that depend on certain notions of gaps for two special cases of episodic MDPs.

Specifically, in those special cases, we assume that the rewards follow a Gaussian distribution with variance $1/2$. We further assume that the optimal value function is bounded in the same range as individual rewards, e.g.\ as $0 \leq V^*(s) < 1$ for all $s \in \Scal$. This assumption is common in the literature \citep[e.g.][]{krishnamurthy2016pac, jiang2017contextual, dann2018oracle} and can be considered harder than a normalization of $V^*(s) \in [0, H]$ \citep{jiang2018open}.

\subsection{General instance-dependent lower bound as an optimization problem}

The idea behind deriving instance-dependent lower bounds for the stochastic MAB problem~\citep{lai1985asymptotically,combes2017minimal,garivier2019explore} and infinite horizon MDPs~\citep{graves1997asymptotically,ok2018exploration} are based on first assuming that the algorithm studied is \emph{uniformly good}, that is, on any instance of the problem and for any $\alpha >0$, the algorithm incurs regret at most $o(T^\alpha)$, and then argue that, to achieve that guarantee, the algorithm must select a certain policy or action at least some number of times as it would otherwise not be able to distinguish the current MDP from another MDP that requires a different optimal strategy. 

Since comparison between different MDPs is central to lower-bound constructions, it is convenient to make the problem-instance explicit in the notation. To that end, let $\Theta$ be the problem class of possible MDPs and we use subscripts $\theta$ and $\lambda$ for value functions, return, MDP parameters etc., to denote specific problem instances $\theta, \lambda \in \Theta$ of those quantities. Further, for a policy $\pi$ and MDP $\theta$, $\PP_\theta^{\pi}$ denotes the law of one episode, i.e., the distribution of $(S_1, A_1, R_1, S_2, A_2, R_2, \dots, S_{H+1})$. To state the general regret lower-bound we need to introduce the set of \emph{confusing} MDPs. This set consists of all MDPs $\lambda$ in which there is at least one optimal policy $\pi$ such that $\pi \not\in \Pi^*_\theta$, i.e., $\pi$ is not optimal for the original MDP and no policy in $\Pi^*_\theta$ has been changed.
\begin{definition}
\label{def:conf_MDP_set}
For any problem instance $\theta\in\Theta$ we define the set of confusing MDPs $\Lambda(\theta)$ as
\begin{align*}
    \Lambda(\theta): = \{\lambda \in \Theta \colon \Pi^*_\lambda \setminus  \Pi^*_\theta \neq \varnothing \textrm{ and } KL(\PP_\theta^{\pi}, \PP_\lambda^{\pi}) = 0 \,\,\forall \pi \in \Pi^*_\theta\}.
\end{align*}
\end{definition}
We are now ready to state our general regret lower-bound for episodic MDPs:

\begin{restatable}[General instance-dependent lower bound for episodic MDPs]{theorem}{generallb}
\label{thm:lower_bound_gen}
Let $\psi$ be a uniformly good RL algorithm for $\Theta$, that is, for all problem instances $\theta \in \Theta$ and exponents $\alpha > 0$, the regret of $\psi$ is bounded as $\EE[\regret_\theta(K)] \leq o(K^{\alpha})$, and assume that $\return{*}_\theta < H$. Then, for any $\theta \in \Theta$, the regret of $\psi$ satisfies
\begin{align*}
    \liminf_{K \to \infty} \frac{\EE[\regret_\theta(K)]}{\log{K}} \geq C(\theta),
\end{align*}
where $C(\theta)$ is the optimal value of the following optimization problem
\begin{equation}
\label{eq:opt_prob}
\begin{aligned}
    \minimize{\eta(\pi)\geq 0}{\sum_{\pi \in \Pi} \eta(\pi)\left(\return{*}_{\theta} - \return{\pi}_{\theta}\right)}
    {
    \sum_{\pi \in \Pi} \eta(\pi) KL(\PP_\theta^\pi,\PP_\lambda^\pi) \geq 1 \qquad \textrm{for all } \,\,\lambda \in \Lambda(\theta)
    }.
\end{aligned}
\end{equation}
\end{restatable}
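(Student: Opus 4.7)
The plan is to follow the classical Graves--Lai change-of-measure template, adapted to episodic MDPs and recast as the LP \pref{eq:opt_prob}. The workhorse is the data-processing inequality
\begin{align*}
\sum_{\pi\in\Pi}\EE_\theta[N_\pi(K)]\,KL(\PP^\pi_\theta,\PP^\pi_\lambda)\;\ge\;kl\!\left(\PP_\theta(\mathcal{E}),\PP_\lambda(\mathcal{E})\right),
\end{align*}
valid for any $\lambda\in\Theta$ and any $\mathcal{E}$ measurable with respect to the first $K$ episodes, where $N_\pi(K)$ is the number of episodes in which $\psi$ executed $\pi$ and $kl$ is the binary KL. For $\lambda\in\Lambda(\theta)$ the summand vanishes whenever $\pi\in\Pi^*_\theta$, so the left-hand side effectively runs over $\Pi\setminus\Pi^*_\theta$.

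To instantiate the inequality, I would fix $\lambda\in\Lambda(\theta)$ and pick $\mathcal{E}_K=\{M_K\le K/2\}$, where $M_K:=\sum_{\pi:\return{\pi}_\theta<\return{*}_\theta}N_\pi(K)$ counts episodes in which a strictly $\theta$-suboptimal policy was played. Uniform goodness of $\psi$ under $\theta$ combined with $\regret_\theta(K)\ge\Delta^\theta_{\min}M_K$ and Markov's inequality yields $\PP_\theta(\mathcal{E}_K)\to 1$, where $\Delta^\theta_{\min}>0$ denotes the smallest positive return gap. For the confusing $\lambda$, the KL${}=0$ condition forces $\return{\pi}_\lambda=\return{*}_\theta$ for every $\pi\in\Pi^*_\theta$; together with the existence of a $\pi^\lambda\in\Pi^*_\lambda\setminus\Pi^*_\theta$ this gives $\return{*}_\lambda\ge\return{*}_\theta$ and, modulo the degenerate case below, strict inequality, so executing any $\theta$-optimal policy in $\lambda$ incurs a positive per-episode regret $\return{*}_\lambda-\return{*}_\theta$. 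Uniform goodness under $\lambda$ then gives $\PP_\lambda(\mathcal{E}_K)\to 0$, and the classical estimate $kl(p,q)\ge(1-o(1))\log(1/q)$ for $p\to 1,\,q\to 0$ delivers
\begin{align*}
\sum_{\pi\notin\Pi^*_\theta}\EE_\theta[N_\pi(K)]\,KL(\PP^\pi_\theta,\PP^\pi_\lambda)\;\ge\;(1-o_\lambda(1))\log K \qquad \forall\,\lambda\in\Lambda(\theta).
\end{align*}

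Setting $\bar\eta_K(\pi):=\EE_\theta[N_\pi(K)]/\log K$, the last display states that $\bar\eta_K$ is asymptotically feasible for \pref{eq:opt_prob}, while its objective $\sum_\pi\bar\eta_K(\pi)(\return{*}_\theta-\return{\pi}_\theta)$ equals $\EE[\regret_\theta(K)]/\log K$ exactly, by \pref{eq:reg_decomp}. A uniform rescaling $\bar\eta_K\mapsto(1-\varepsilon_K)^{-1}\bar\eta_K$ for a single sequence $\varepsilon_K\downarrow 0$ that dominates the per-$\lambda$ slack converts asymptotic into exact feasibility at the price of a $1+o(1)$ overhead on the objective; taking $\liminf_K$ together with the definition of $C(\theta)$ as the LP optimum then concludes.

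The principal obstacle I foresee is precisely this last passage: the slack $o_\lambda(1)$ is a priori $\lambda$-dependent while \pref{eq:opt_prob} quantifies over the entire, possibly uncountable, family $\Lambda(\theta)$, so a direct diagonal extraction is not automatically legitimate. A clean resolution uses continuity of $KL$ in the MDP parameters combined with a compactness argument on a suitable parametrization of $\Lambda(\theta)$, or alternatively lower semicontinuity of the LP value in its right-hand side. A secondary subtlety is the degenerate case $\return{*}_\lambda=\return{*}_\theta$, where uniform goodness under $\lambda$ is vacuous and the event-probability step above breaks down; such $\lambda$ can be handled by a small perturbation enabled by the assumption $\return{*}_\theta<H$ (which leaves headroom for a strictly improving alternative in $\Theta$), or by choosing the event to single out $\pi^\lambda$ directly rather than the whole set $\Pi^*_\theta$. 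These two points are where I expect to spend the most care; the change-of-measure application itself is routine.
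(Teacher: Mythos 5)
Your proposal follows essentially the same route as the paper: the Graves--Lai change-of-measure combined with the Garivier--M\'enard--Stoltz data-processing inequality applied to a statistic counting (sub)optimal plays, uniform goodness invoked on both $\theta$ and $\lambda$ to force the binary KL up to $(1-o(1))\log K$, and the identification of $\eta(\pi)=\EE_\theta[N_\pi(K)]/\log K$ as an asymptotically feasible point of the LP (the paper uses the fraction of episodes playing a $\lambda$-optimal policy where you use the event that at most half the episodes were $\theta$-suboptimal, a cosmetic difference). The two delicate points you correctly flag --- the degenerate case $\return{*}_\lambda=\return{*}_\theta$ and passing from per-$\lambda$ asymptotic feasibility to the exact constraints --- are precisely what the paper's appendix handles, via a closure/compactness argument showing the infimum over $\Lambda_{\pi'}(\theta)$ (defined with strict return improvement) equals the minimum over its closure, which matches the perturbation/semicontinuity resolutions you sketch.
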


 The optimization problem in \pref{thm:lower_bound_gen} can be interpreted as follows. The variables $\eta(\pi)$ are the (expected) number of times the algorithm chooses to play policy $\pi$ which makes the objective the total expected regret incurred by the algorithm.
 The constraints encode that any uniformly good algorithm needs to be able to distinguish the true instance $\theta$ from all confusing instances $\lambda \in \Lambda(\theta)$, because otherwise it would incur linear regret. To do so, a uniformly good algorithm needs to play policies $\pi$ that induce different behavior in $\lambda$ and $\theta$ which is precisely captured by the constraints $\sum_{\pi\in\Pi}\eta(\pi) KL(\PP_\theta^{\pi}, \PP_\lambda^{\pi}) \geq 1$.

Although \pref{thm:lower_bound_gen} has the flavor of results in the bandit and RL literature, there are a few notable differences.
Compared to lower-bounds in the infinite-horizon MDP setting \citep{graves1997asymptotically,tewari2008optimistic,ok2018exploration}, we for example do not assume that the Markov chain induced by an optimal policy $\pi^*$ is irreducible. That irreducibility plays a key role in converting the semi-infinite linear program \pref{eq:opt_prob}, which typically has uncountably many constraints, into a linear program with only $O(SA)$ constraints. While for infinite horizon MDPs, irreducibility is somewhat necessary to facilitate exploration, this is not the case for the finite horizon setting and in general we cannot obtain a convenient reduction of the set of constraints $\Lambda(\theta)$ (see also \pref{app:lower_bounds_full_supp}).

\subsection{Gap-dependent lower bound when optimal policies visit all states}
To derive closed-form gap-dependent bounds from the general optimization problem \pref{eq:opt_prob}, we need to identify a finite subset of confusing MDPs $\Lambda(\theta)$ that each require the RL agent to play a distinct set of policies that do not help to distinguish the other confusing MDPs. To do so, we restrict our attention to the special case of MDPs where every state is visited with non-zero probability by some optimal policy, similar to the irreducibility assumptions in the infinite-horizon setting \citep{tewari2008optimistic, ok2018exploration}. In this case, it is sufficient to raise the expected immediate reward of a suboptimal $(s,a)$ by $\gap_\theta(s,a)$ in order to create a confusing MDP, as shown in \pref{lem:non-empty_change_env}:

\begin{restatable}[]{lemma}{gaplemmafullsupp}
\label{lem:non-empty_change_env}
Let $\Theta$ be the set of all episodic MDPs with Gaussian immediate rewards and optimal value function uniformly bounded by 1 and let $\theta \in \Theta$ be an MDP in this class. Then for any suboptimal state-action pair $(s,a)$ with $\gap_\theta(s,a) > 0$ such that $s$ is visited by some optimal policy with non-zero probability, there exists a confusing MDP $\lambda \in \Lambda(\theta)$ with
\begin{itemize}[itemsep=1mm, topsep=1mm]
    \item $\lambda$ and $\theta$ only differ in the immediate reward at $(s,a)$
    \item $KL(\PP_\theta^{\pi}, \PP_\lambda^{\pi}) \leq  \gap_\theta(s,a)^2$ for all $\pi \in \Pi$.
\end{itemize}
\end{restatable}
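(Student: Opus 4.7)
The plan is to construct $\lambda$ by shifting the Gaussian mean reward at the single pair $(s,a)$ upward by exactly $\gap_\theta(s,a)$, leaving transitions and all other rewards identical to $\theta$. This immediately gives the first bullet and reduces the rest of the lemma to three verifications: that $V^*$ is preserved (so $\lambda\in\Theta$ and $a$ becomes Bellman-tied at $s$), that the required witness in $\Pi^*_\lambda\setminus\Pi^*_\theta$ exists, and that the KL constraints hold. The crux is the first item: if $V^*_\lambda$ strictly exceeded $V^*_\theta$ anywhere, the excess would cascade backward through earlier layers and could break both $\lambda\in\Theta$ and the ``confusing'' structure. The construction is designed so that the shift produces only a tie at $s$ rather than an overshoot.

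For the $V^*$-invariance, I would proceed by backward induction on the layer index. Beyond $\kappa(s)$ nothing changes; at $s$ itself the shift gives $Q^*_\lambda(s,a) = Q^*_\theta(s,a) + \gap_\theta(s,a) = V^*_\theta(s)$, so action $a$ exactly ties the existing maximum rather than exceeding it, and every other $(s',a')$ in layer $\kappa(s)$ has unchanged $Q^*_\lambda(s',a')$ because successor values at layer $\kappa(s)+1$ have not moved. The earlier layers then follow because neither rewards nor successor values have been altered relative to $\theta$. As byproducts, $V^*_\lambda\in[0,1)$ and $r_\lambda(s,a) = V^*_\theta(s) - \langle P(\cdot\mid s,a), V^*_\theta\rangle \in [0,1)$, placing $\lambda$ in $\Theta$.

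The remaining steps are routine. For $\Pi^*_\lambda\setminus\Pi^*_\theta\neq\varnothing$, I would pick $\pi^* \in \Pi^*_\theta$ with $w^{\pi^*}_\theta(s) > 0$ (guaranteed by hypothesis) and define $\pi'$ to agree with $\pi^*$ everywhere except $\pi'(s) = a$; the tie above makes $\pi'$ Bellman-optimal at $s$ under $\lambda$, Bellman optimality elsewhere is inherited from $\pi^*$ since $Q^*_\lambda$ and $V^*_\lambda$ coincide with their $\theta$-counterparts away from $(s,a)$, while $\gap_\theta(s,a)>0$ keeps $\pi'$ outside $\Pi^*_\theta$. For the $\pi\in\Pi^*_\theta$ KL-zero condition, Bellman optimality under $\theta$ forces $\pi(s)\neq a$, so $w^\pi_\theta(s,a)=0$ and the two trajectory laws coincide. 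Finally, for arbitrary $\pi$ the chain rule for Kullback--Leibler divergence applied over one episode collapses to the single contribution at $(s,a)$, weighted by $w^\pi_\theta(s,a)\leq 1$; two Gaussians of common variance $1/2$ with means differing by $\gap_\theta(s,a)$ have KL equal to $\gap_\theta(s,a)^2$, so $KL(\PP^\pi_\theta,\PP^\pi_\lambda) = w^\pi_\theta(s,a)\gap_\theta(s,a)^2 \leq \gap_\theta(s,a)^2$, completing the plan.
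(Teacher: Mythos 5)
Your proposal is correct and follows essentially the same route as the paper: raise the mean reward at $(s,a)$ by exactly $\gap_\theta(s,a)$, verify that $V^*_\lambda = V^*_\theta$ (so $\lambda \in \Theta$ and $a$ becomes tied at $s$), exhibit the witness by redirecting an optimal policy that visits $s$ to play $a$ there, and bound the KL by the occupancy-weighted single-pair contribution $w^\pi_\theta(s,a)\gap_\theta(s,a)^2 \leq \gap_\theta(s,a)^2$. Your explicit check that every $\pi \in \Pi^*_\theta$ has $w^\pi_\theta(s,a) = 0$ (hence $KL(\PP^\pi_\theta,\PP^\pi_\lambda)=0$) is a detail the paper leaves implicit, but the argument is the same.
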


By relaxing the problem in \pref{eq:opt_prob} to only consider constraints from the confusing MDPs in \pref{lem:non-empty_change_env} with $KL(\PP_\theta^{\pi},\PP_\lambda^{\pi}) \leq \gap_\theta(s,a)^2$, for every $(s,a)$, we can derive the following closed-form bound:

\begin{restatable}[Gap-dependent lower bound when optimal policies visit all states]{theorem}{fullsupportlb}
\label{thm:lower_bound_all_states_supp}
Let $\Theta$ be the set of all episodic MDPs with Gaussian immediate rewards and optimal value function uniformly bounded by 1. Let $\theta \in \Theta$ be an instance where every state is visited by some optimal policy with non-zero probability. Then any uniformly good algorithm on $\Theta$ has expected regret on $\theta$ that satisfies
\begin{align*}
    \liminf_{K\rightarrow \infty}\frac{\EE[\regret_\theta(K)]}{\log{K}} \geq \sum_{s,a \colon \gap_\theta(s,a) > 0} \frac{1}{\gap_\theta(s,a)}.
\end{align*}
\end{restatable}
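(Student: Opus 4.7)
The plan is to invoke the general lower bound in \pref{thm:lower_bound_gen} and then lower-bound the value of the resulting optimization problem \pref{eq:opt_prob} by exhibiting a tractable finite subset of constraints and doing an LP relaxation. Specifically, since every state is visited by some optimal policy, \pref{lem:non-empty_change_env} produces, for each suboptimal pair $(s,a)$ with $\gap_\theta(s,a)>0$, a confusing MDP $\lambda_{s,a}\in\Lambda(\theta)$ that differs from $\theta$ only in the reward at $(s,a)$. I would retain only the constraints corresponding to these $\lambda_{s,a}$: dropping the other constraints enlarges the feasible set of \pref{eq:opt_prob}, so the optimum of the reduced program is a lower bound on $C(\theta)$.

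Next, I would compute the KL term appearing in the surviving constraints more sharply than the uniform bound $\gap_\theta(s,a)^2$ stated in \pref{lem:non-empty_change_env}. By the standard chain-rule decomposition of KL for episodic MDPs, since $\theta$ and $\lambda_{s,a}$ agree in transitions everywhere and in rewards at every state-action pair except $(s,a)$, and since both reward distributions there are Gaussians with variance $1/2$ and means separated by $\gap_\theta(s,a)$, we have
\begin{align*}
KL(\PP_\theta^\pi,\PP_{\lambda_{s,a}}^\pi) \;=\; w^\pi(s,a)\,\gap_\theta(s,a)^2,
\end{align*}
using that the layered structure means $(s,a)$ is traversed at most once per episode. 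Introducing the aggregate visitation counts $N(s,a):=\sum_{\pi}\eta(\pi)w^\pi(s,a)$, the retained constraints become simply $N(s,a)\geq 1/\gap_\theta(s,a)^2$ for every $(s,a)$ with positive gap.

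For the objective, I would use the gap decomposition identity $\return{*}_\theta-\return{\pi}_\theta=\sum_{s,a}w^\pi(s,a)\gap_\theta(s,a)$ (cited earlier in the excerpt) to rewrite
\begin{align*}
\sum_\pi \eta(\pi)\bigl(\return{*}_\theta-\return{\pi}_\theta\bigr) \;=\; \sum_{s,a}\gap_\theta(s,a)\,N(s,a).
\end{align*}
Relaxing once more by treating the $N(s,a)$ as independent nonnegative variables (i.e.\ dropping the requirement that they be realized by a single occupancy measure $\eta$) only decreases the optimal value, and the resulting decoupled LP is solved coordinatewise at $N(s,a)=1/\gap_\theta(s,a)^2$ for each $(s,a)$ with $\gap_\theta(s,a)>0$, yielding objective value exactly $\sum_{s,a:\gap_\theta(s,a)>0} 1/\gap_\theta(s,a)$. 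Chaining the two relaxations together with \pref{thm:lower_bound_gen} completes the proof. The main technical obstacle is the sharp KL computation: one must verify carefully that no contribution to $KL(\PP_\theta^\pi,\PP_{\lambda_{s,a}}^\pi)$ arises from trajectories that miss $(s,a)$, and that the visit probability $w^\pi(s,a)$ is exactly the expected number of KL-incurring samples per episode; the remaining steps are elementary LP manipulations enabled by the full-support assumption on optimal policies, which is what lets \pref{lem:non-empty_change_env} apply to \emph{every} suboptimal state-action pair simultaneously.
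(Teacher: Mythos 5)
Your proposal is correct and follows essentially the same route as the paper: the paper's proof likewise restricts the constraints of \pref{eq:opt_prob} to the confusing MDPs from \pref{lem:non-empty_change_env}, passes from policy variables $\eta(\pi)$ to aggregated state-action variables $\eta(s,a)=\sum_\pi \eta(\pi)w^\pi_\theta(s,a)$ via the chain-rule KL decomposition and the gap decomposition of the objective (this change of variables is packaged as \pref{lem:LP_(s,a)_relaxed}), and then solves the decoupled LP coordinatewise at $\eta(s,a)=1/\gap_\theta(s,a)^2$. The only cosmetic difference is that you compute the per-policy KL exactly as $w^\pi(s,a)\gap_\theta(s,a)^2$ inline rather than invoking the separate relaxation lemma.
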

\pref{thm:lower_bound_all_states_supp} can be viewed as a generalization of Proposition 2.2 in \citet{simchowitz2019non}, which gives a lower bound of order  $\sum_{s,a \colon \gap_\theta(s,a) > 0} \frac{H}{\gap_\theta(s,a)}$ for a certain set of MDPs.\footnote{We translated their results to our setting where $V^* \leq 1$ which reduces the bound by a factor of $H$.} While our lower bound is a factor of $H$ worse, it is significantly more general and holds in any MDP where optimal policies visit all states and with appropriate normalization of the value function.
\pref{thm:lower_bound_all_states_supp} indicates that value-function gaps characterize the instance-optimal regret when optimal policies cover the entire state space.

\subsection{Gap-dependent lower bound for deterministic-transition MDPs} 
\label{sec:lower_bound_def}
We expect that optimal policies do not visit all states in most MDPs of practical interest (e.g. because certain parts of the state space can only be reached by making an egregious error). We therefore now consider the general case where $\bigcup_{\pi \in \Pi^*_\theta} supp(\pi) \subsetneq \Scal$ but restrict our attention to MDPs with deterministic transitions where we are able to give an intuitive closed-form lower bound.
Note that deterministic transitions imply $\forall \pi,s,a:\,w^\pi(s,a)\in\{0,1\}$. 
Here, a confusing MDP can be created by simply raising the reward of any $(s,a)$ by
\begin{align}
\label{eq:return_gap}
    \return{*}_\theta - \max_{\pi \colon w^\pi_\theta(s,a) > 0}\return{\pi}_\theta~, 
\end{align}
the regret of the best policy that visits $(s,a)$, as long as it is positive and $(s,a)$ is not visited by any optimal policy. \pref{eq:return_gap} is positive when no optimal policy visits $(s,a)$ in which case suboptimal actions have to be taken to reach $(s,a)$ and $\returngap_\theta(s,a) > 0$. Let $\pi^*_{(s,a)}$ be any maximizer in \pref{eq:return_gap}, which has to act optimally after visiting $(s,a)$. From the regret decomposition in \pref{eq:reg_decomp} and the fact that $\pi^*_{(s,a)}$ visits $(s,a)$ with probability $1$, it follows that $v_{\theta}^* - v_{\theta}^{\pi^*_{(s,a)}} \geq \gap_{\theta}(s,a)$. We further have $v_{\theta}^* - v_{\theta}^{\pi^*_{(s,a)}} \leq H\returngap_\theta(s,a)$.
Equipped with the subset of confusing MDPs $\lambda$ that each raise the reward of a single $(s,a)$ as 
$r_\lambda(s,a) = r_\theta(s,a) + v^*_{\theta} - v^{\pi^*_{(s,a)}}_{\theta}$, we can derive the following gap-dependent lower bound:

\begin{restatable}{theorem}{lowerbounddeterministic}
\label{thm:lower_bound_deterministic}
Let $\Theta$ be the set of all episodic MDPs with Gaussian immediate rewards and optimal value function uniformly bounded by 1. Let $\theta \in \Theta$ be an instance with deterministic transitions. Then any uniformly good algorithm on $\Theta$ has expected regret on $\theta$ that satisfies
\begin{align*}
    \liminf_{K \rightarrow \infty} \frac{\EE[\regret_\theta(K)]}{\log K} 
    \geq \sum_{s, a \in \Zcal_\theta \colon \returngap_\theta(s,a) > 0 }
    \frac{1}{H \cdot (v^*_{\theta}-v^{\pi^*_{(s,a)}}_{\theta})} \geq \sum_{s, a \in \Zcal_\theta \colon \returngap_\theta(s,a) > 0 }
    \frac{1}{H^2 \cdot \returngap_\theta(s,a)},
\end{align*}
where $\Zcal_\theta = \{ (s,a) \in \Scal \times \Acal \colon \forall \pi^* \in \Pi^*_\theta ~~~ w^{\pi^*}_\theta(s,a) = 0\}$ is the set of state-action pairs that no optimal policy in $\theta$ visits.
\end{restatable}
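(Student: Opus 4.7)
My plan is to specialize the general instance-dependent lower bound \pref{thm:lower_bound_gen} by exhibiting a finite family of confusing instances indexed by the state-action pairs $(s,a) \in \Zcal_\theta$ with $\returngap_\theta(s,a) > 0$, and then to solve the resulting relaxation of \pref{eq:opt_prob} in closed form. For each such $(s,a)$, I construct $\lambda_{s,a} \in \Lambda(\theta)$ by leaving transitions and all other rewards unchanged, and by increasing the mean reward at $(s,a)$ by $\Delta_{s,a} := \return{*}_\theta - \return{\pi^*_{(s,a)}}_\theta > 0$. Because $(s,a) \in \Zcal_\theta$, no policy in $\Pi^*_\theta$ ever visits $(s,a)$, so every $\pi \in \Pi^*_\theta$ induces the same trajectory distribution on $\theta$ and on $\lambda_{s,a}$; in particular $KL(\PP_\theta^\pi, \PP_{\lambda_{s,a}}^\pi) = 0$ for $\pi \in \Pi^*_\theta$. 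By the choice of $\Delta_{s,a}$, the policy $\pi^*_{(s,a)}$ becomes optimal in $\lambda_{s,a}$, but since $\pi^*_{(s,a)}$ plays a suboptimal action somewhere before reaching $(s,a)$, it lies outside $\Pi^*_\theta$, hence $\Pi^*_{\lambda_{s,a}}\setminus\Pi^*_\theta\neq\varnothing$.

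For each $\lambda_{s,a}$, deterministic transitions force $w^\pi(s,a)\in\{0,1\}$, and since the two instances differ only in one Gaussian reward of variance $1/2$, the KL chain rule yields
$$KL(\PP_\theta^\pi,\PP_{\lambda_{s,a}}^\pi) = w^\pi(s,a)\,\Delta_{s,a}^2.$$
The constraint in \pref{eq:opt_prob} induced by $\lambda_{s,a}$ therefore reduces to $\sum_{\pi\in\Pi_{s,a}}\eta(\pi)\geq \Delta_{s,a}^{-2}$, using that for $(s,a)\in\Zcal_\theta$ every policy visiting $(s,a)$ is automatically in $\Pi_{s,a}$ (any such policy must have taken a suboptimal action, or else an optimal policy would visit $(s,a)$). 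For every $\pi\in\Pi_{s,a}$ the definition of $\pi^*_{(s,a)}$ as a maximizer gives $\return{*}_\theta-\return{\pi}_\theta\geq \Delta_{s,a}$, so multiplying the constraint by $\Delta_{s,a}$ produces
$$\sum_{\pi\in\Pi_{s,a}}\eta(\pi)(\return{*}_\theta-\return{\pi}_\theta)\;\geq\; \frac{1}{\Delta_{s,a}}.$$

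Summing these inequalities over all $(s,a)\in\Zcal_\theta$ with $\returngap_\theta(s,a)>0$, a given policy $\pi$ appears on the left-hand side at most $H$ times since it visits exactly $H$ state-action pairs. Hence
$$H\sum_\pi \eta(\pi)(\return{*}_\theta-\return{\pi}_\theta)\;\geq\;\sum_{(s,a)\in\Zcal_\theta:\,\returngap_\theta(s,a)>0}\frac{1}{\Delta_{s,a}},$$
and dividing by $H$ yields the first inequality together with \pref{thm:lower_bound_gen}. The second inequality follows from the deterministic-transition reformulation $\returngap_\theta(s,a)=\gap_\theta(s,a)\lor \frac{1}{H}\Delta_{s,a}$ stated in the body of \pref{sec:upper_bounds}, which implies $\Delta_{s,a}\leq H\,\returngap_\theta(s,a)$.

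The main obstacle I anticipate is verifying that each $\lambda_{s,a}$ is genuinely confusing in the sense of \pref{def:conf_MDP_set}: I must check simultaneously that $\pi^*_{(s,a)}$ is optimal in $\lambda_{s,a}$, that every $\pi\in\Pi^*_\theta$ remains optimal there, and that $\lambda_{s,a}\in\Theta$ (in particular that $V^*_{\lambda_{s,a}}$ still respects the normalization used by $\Theta$). The first two rely crucially on the fact that no optimal policy of $\theta$ visits $(s,a)$ and that $\pi^*_{(s,a)}$ was chosen to act optimally after reaching $(s,a)$; bounding $V^*_{\lambda_{s,a}}$ can be handled either by an additional margin on the original rewards or by a standard truncation that does not affect the asymptotic KL. A secondary bookkeeping point is the identification $\Pi_{s,a}=\{\pi:w^\pi(s,a)=1\}$ when $(s,a)\in\Zcal_\theta$, which I use to translate the KL constraint into a statement about $\Pi_{s,a}$ before invoking the maximality of $\pi^*_{(s,a)}$.
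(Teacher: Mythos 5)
Your proposal is correct and follows essentially the same route as the paper: the same family of confusing MDPs that each raise the reward at a single $(s,a) \in \Zcal_\theta$ by $\Delta_{s,a} = \return{*}_\theta - \return{\pi^*_{(s,a)}}_\theta$ (the paper's Lemma~\ref{lem:confusing_mdps_det}, including the check that $r_\lambda(s,a) \leq \return{*}_\theta \leq 1$ so that $\lambda \in \Theta$, which resolves the normalization obstacle you flag), the same KL identity $KL(\PP_\theta^\pi,\PP_{\lambda_{s,a}}^\pi) = w^\pi(s,a)\Delta_{s,a}^2$, and the same ``each policy visits at most $H$ of these pairs'' counting step. The only cosmetic difference is that the paper packages the final aggregation as exhibiting the feasible dual solution $\mu(s,a) = \frac{1}{H}\Delta_{s,a}^{-1}$ and invoking weak duality, whereas you take the corresponding nonnegative combination of the primal constraints directly---these are the same calculation.
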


We now compare the above lower bound to the upper bound guaranteed by \textsc{StrongEuler} in \pref{eq:det_trans_reg}. The comparison is only with respect to number of episodes and gaps{\footnote{We carry out the comparison in expectation, since our lower bounds do not apply with high probability.}}
\begin{align*}
    \sum_{s,a \in \Zcal_\theta \colon \returngap_\theta(s,a) > 0} \frac{\log(K)}{H^2\returngap_\theta(s,a)} \leq \mathbb{E}_\theta[\regret(K)] \leq \sum_{s,a \colon \returngap_\theta(s,a) > 0}\frac{\log(K)}{\returngap_\theta(s,a)}.
\end{align*}
The difference between the two bounds, besides the extra $H^2$ factor, is the fact that $(s,a)$ pairs that are visited by any optimal policy ($s,a \neq \Zcal_\theta$) do not appear in the lower-bound while the upper-bound pays for such pairs if they can also be visited after playing a suboptimal action. This could result in cases where the number of terms in the lower bound is $O(1)$ but the number of terms in the upper bound is $\Omega(SA)$ leading to a large discrepancy. In \pref{thm:det_lower_bound} in the appendix we show that there exists an MDP instance on which it is information-theoretically possible to achieve $O(\log(K)/\epsilon)$ regret, however, any optimistic algorithm with confidence parameter $\delta$ will incur expected regret of at least $\Omega(S\log(1/\delta)/\epsilon)$. \pref{thm:det_lower_bound} has two implications for optimistic algorithms in MDPs with deterministic transitions. Specifically, optimistic algorithms
\begin{itemize}[topsep=1mm, itemsep=0mm]
    \item cannot be asymptotically optimal if confidence parameter $\delta$ is tuned to the time horizon $K$;
    \item cannot have an anytime bound that matches the information-theoretic lower bound.
\end{itemize}


\section{Conclusion}
In this work, we prove that optimistic algorithms such as \textsc{StrongEuler}, can suffer substantially less regret compared to what prior work had shown. We do this by introducing a new notion of gap, while greatly simplifying and generalizing existing analysis techniques.
We further investigated the information-theoretic limits of learning episodic layered MDPs. 
We provide two new closed-form lower bounds in the special case where the MDP has either 
deterministic transitions or the optimal policy is supported on all states.
These lower bounds suggest that our notion of gap better captures the difficulty of an episodic MDP for RL.

\bibliographystyle{plainnat}
\bibliography{mybib,cdann_thesis_lib}

\begin{thebibliography}{35}
\providecommand{\natexlab}[1]{#1}
\providecommand{\url}[1]{\texttt{#1}}
\expandafter\ifx\csname urlstyle\endcsname\relax
  \providecommand{\doi}[1]{doi: #1}\else
  \providecommand{\doi}{doi: \begingroup \urlstyle{rm}\Url}\fi

\bibitem[Auer and Ortner(2007)]{auer2007logarithmic}
Peter Auer and Ronald Ortner.
\newblock Logarithmic online regret bounds for undiscounted reinforcement
  learning.
\newblock In \emph{Advances in Neural Information Processing Systems}, pages
  49--56, 2007.

\bibitem[Auer et~al.(2009)Auer, Jaksch, and Ortner]{auer2009near}
Peter Auer, Thomas Jaksch, and Ronald Ortner.
\newblock Near-optimal regret bounds for reinforcement learning.
\newblock In \emph{Advances in Neural Information Processing Systems}, 2009.

\bibitem[Azar et~al.(2012)Azar, Munos, and Kappen]{azar2012sample}
Mohammad~Gheshlaghi Azar, R{\'e}mi Munos, and Hilbert~J Kappen.
\newblock On the sample complexity of reinforcement learning with a generative
  model.
\newblock In \emph{Proceedings of the 29th International Coference on
  International Conference on Machine Learning}, pages 1707--1714. Omnipress,
  2012.

\bibitem[Azar et~al.(2017)Azar, Osband, and Munos]{azar2017minimax}
Mohammad~Gheshlaghi Azar, Ian Osband, and R{\'e}mi Munos.
\newblock Minimax regret bounds for reinforcement learning.
\newblock In \emph{International Conference on Machine Learning}, pages
  263--272, 2017.

\bibitem[Combes et~al.(2017)Combes, Magureanu, and
  Proutiere]{combes2017minimal}
Richard Combes, Stefan Magureanu, and Alexandre Proutiere.
\newblock Minimal exploration in structured stochastic bandits.
\newblock In \emph{Advances in Neural Information Processing Systems}, pages
  1763--1771, 2017.

\bibitem[Dann(2019)]{dann2019strategic}
Christoph Dann.
\newblock \emph{Strategic Exploration in Reinforcement Learning - New
  Algorithms and Learning Guarantees}.
\newblock PhD thesis, Carnegie Mellon University, 2019.

\bibitem[Dann et~al.(2017)Dann, Lattimore, and Brunskill]{dann2017unifying}
Christoph Dann, Tor Lattimore, and Emma Brunskill.
\newblock Unifying {PAC} and regret: Uniform pac bounds for episodic
  reinforcement learning.
\newblock In \emph{Advances in Neural Information Processing Systems}, pages
  5713--5723, 2017.

\bibitem[Dann et~al.(2018)Dann, Jiang, Krishnamurthy, Agarwal, Langford, and
  Schapire]{dann2018oracle}
Christoph Dann, Nan Jiang, Akshay Krishnamurthy, Alekh Agarwal, John Langford,
  and Robert~E Schapire.
\newblock On oracle-efficient {PAC} reinforcement learning with rich
  observations.
\newblock \emph{arXiv preprint arXiv:1803.00606}, 2018.

\bibitem[Dann et~al.(2019)Dann, Li, Wei, and Brunskill]{dann2019policy}
Christoph Dann, Lihong Li, Wei Wei, and Emma Brunskill.
\newblock Policy certificates: Towards accountable reinforcement learning.
\newblock \emph{International Conference on Machine Learning}, 2019.

\bibitem[Du et~al.(2020)Du, Lee, Mahajan, and Wang]{du2020agnostic}
Simon~S Du, Jason~D Lee, Gaurav Mahajan, and Ruosong Wang.
\newblock Agnostic {Q}-learning with function approximation in deterministic
  systems: Tight bounds on approximation error and sample complexity.
\newblock \emph{arXiv preprint arXiv:2002.07125}, 2020.

\bibitem[Fiechter(1994)]{fiechter1994efficient}
Claude-Nicolas Fiechter.
\newblock Efficient reinforcement learning.
\newblock In \emph{Proceedings of the seventh annual conference on
  Computational learning theory}, pages 88--97. ACM, 1994.

\bibitem[Filippi et~al.(2010)Filippi, Capp{\'e}, and
  Garivier]{filippi2010optimism}
Sarah Filippi, Olivier Capp{\'e}, and Aur{\'e}lien Garivier.
\newblock Optimism in reinforcement learning and {K}ullback-{L}eibler
  divergence.
\newblock In \emph{2010 48th Annual Allerton Conference on Communication,
  Control, and Computing (Allerton)}, pages 115--122. IEEE, 2010.

\bibitem[Foster et~al.(2020)Foster, Rakhlin, Simchi-Levi, and
  Xu]{foster2020instance}
Dylan~J Foster, Alexander Rakhlin, David Simchi-Levi, and Yunzong Xu.
\newblock Instance-dependent complexity of contextual bandits and reinforcement
  learning: A disagreement-based perspective.
\newblock \emph{arXiv preprint arXiv:2010.03104}, 2020.

\bibitem[Freedman(1975)]{Freedman1975}
David~A Freedman.
\newblock On tail probabilities for martingales.
\newblock \emph{the Annals of Probability}, pages 100--118, 1975.

\bibitem[Garivier et~al.(2019)Garivier, M{\'e}nard, and
  Stoltz]{garivier2019explore}
Aur{\'e}lien Garivier, Pierre M{\'e}nard, and Gilles Stoltz.
\newblock Explore first, exploit next: The true shape of regret in bandit
  problems.
\newblock \emph{Mathematics of Operations Research}, 44\penalty0 (2):\penalty0
  377--399, 2019.

\bibitem[Graves and Lai(1997)]{graves1997asymptotically}
Todd~L Graves and Tze~Leung Lai.
\newblock Asymptotically efficient adaptive choice of control laws incontrolled
  markov chains.
\newblock \emph{SIAM journal on control and optimization}, 35\penalty0
  (3):\penalty0 715--743, 1997.

\bibitem[He et~al.(2020)He, Zhou, and Gu]{he2020logarithmic}
Jiafan He, Dongruo Zhou, and Quanquan Gu.
\newblock Logarithmic regret for reinforcement learning with linear function
  approximation.
\newblock \emph{arXiv preprint arXiv:2011.11566}, 2020.

\bibitem[Jiang and Agarwal(2018)]{jiang2018open}
Nan Jiang and Alekh Agarwal.
\newblock Open problem: The dependence of sample complexity lower bounds on
  planning horizon.
\newblock In \emph{Conference On Learning Theory}, pages 3395--3398, 2018.

\bibitem[Jiang et~al.(2017)Jiang, Krishnamurthy, Agarwal, Langford, and
  Schapire]{jiang2017contextual}
Nan Jiang, Akshay Krishnamurthy, Alekh Agarwal, John Langford, and Robert~E
  Schapire.
\newblock Contextual decision processes with low bellman rank are
  pac-learnable.
\newblock In \emph{International Conference on Machine Learning}, pages
  1704--1713, 2017.

\bibitem[Jin et~al.(2018)Jin, Allen-Zhu, Bubeck, and Jordan]{jin2018q}
Chi Jin, Zeyuan Allen-Zhu, Sebastien Bubeck, and Michael~I Jordan.
\newblock Is {Q}-learning provably efficient?
\newblock \emph{arXiv preprint arXiv:1807.03765}, 2018.

\bibitem[Jin and Luo(2020)]{jin2020simultaneously}
Tiancheng Jin and Haipeng Luo.
\newblock Simultaneously learning stochastic and adversarial episodic {MDP}s
  with known transition.
\newblock \emph{arXiv preprint arXiv:2006.05606}, 2020.

\bibitem[Kakade(2003)]{kakade2003sample}
Sham Kakade.
\newblock \emph{On the sample complexity of reinforcement learning}.
\newblock PhD thesis, University College London, 2003.

\bibitem[Krishnamurthy et~al.(2016)Krishnamurthy, Agarwal, and
  Langford]{krishnamurthy2016pac}
Akshay Krishnamurthy, Alekh Agarwal, and John Langford.
\newblock Pac reinforcement learning with rich observations.
\newblock In \emph{Advances in Neural Information Processing Systems}, pages
  1840--1848, 2016.

\bibitem[Lai and Robbins(1985)]{lai1985asymptotically}
Tze~Leung Lai and Herbert Robbins.
\newblock Asymptotically efficient adaptive allocation rules.
\newblock \emph{Advances in applied mathematics}, 6\penalty0 (1):\penalty0
  4--22, 1985.

\bibitem[Lykouris et~al.(2019)Lykouris, Simchowitz, Slivkins, and
  Sun]{lykouris2019corruption}
Thodoris Lykouris, Max Simchowitz, Aleksandrs Slivkins, and Wen Sun.
\newblock Corruption robust exploration in episodic reinforcement learning.
\newblock \emph{arXiv preprint arXiv:1911.08689}, 2019.

\bibitem[Maurer and Pontil(2009)]{maurer2009empirical}
Andreas Maurer and Massimiliano Pontil.
\newblock Empirical bernstein bounds and sample variance penalization.
\newblock \emph{arXiv preprint arXiv:0907.3740}, 2009.

\bibitem[Ok et~al.(2018)Ok, Proutiere, and Tranos]{ok2018exploration}
Jungseul Ok, Alexandre Proutiere, and Damianos Tranos.
\newblock Exploration in structured reinforcement learning.
\newblock In \emph{Advances in Neural Information Processing Systems}, pages
  8874--8882, 2018.

\bibitem[Osband et~al.(2013)Osband, Russo, and Van~Roy]{osband2013more}
Ian Osband, Daniel Russo, and Benjamin Van~Roy.
\newblock (more) efficient reinforcement learning via posterior sampling.
\newblock In \emph{Advances in Neural Information Processing Systems}, pages
  3003--3011, 2013.

\bibitem[Puterman(1994)]{puterman1994markov}
Martin Puterman.
\newblock \emph{Markov Decision Processes: Discrete Stochastic Dynamic
  Programming}.
\newblock Wiley-Interscience, 1994.

\bibitem[Simchowitz and Jamieson(2019)]{simchowitz2019non}
Max Simchowitz and Kevin Jamieson.
\newblock Non-asymptotic gap-dependent regret bounds for tabular {MDP}s.
\newblock \emph{arXiv preprint arXiv:1905.03814}, 2019.

\bibitem[Tewari and Bartlett(2008)]{tewari2008optimistic}
Ambuj Tewari and Peter~L Bartlett.
\newblock Optimistic linear programming gives logarithmic regret for
  irreducible {MDP}s.
\newblock In \emph{Advances in Neural Information Processing Systems}, pages
  1505--1512, 2008.

\bibitem[Xu et~al.(2021)Xu, Ma, and Du]{xu2021fine}
Haike Xu, Tengyu Ma, and Simon~S Du.
\newblock Fine-grained gap-dependent bounds for tabular mdps via adaptive
  multi-step bootstrap.
\newblock \emph{arXiv preprint arXiv:2102.04692}, 2021.

\bibitem[Yang et~al.(2020)Yang, Yang, and Du]{yang2020q}
Kunhe Yang, Lin~F Yang, and Simon~S Du.
\newblock {$Q$}-learning with logarithmic regret.
\newblock \emph{arXiv preprint arXiv:2006.09118}, 2020.

\bibitem[Zanette and Brunskill(2019)]{zanette2019tighter}
A.~Zanette and E.~Brunskill.
\newblock Tighter problem-dependent regret bounds in reinforcement learning
  without domain knowledge using value function bounds.
\newblock \emph{https://arxiv.org/abs/1901.00210}, 2019.

\bibitem[Zimin and Neu(2013)]{zimin2013online}
Alexander Zimin and Gergely Neu.
\newblock Online learning in episodic markovian decision processes by relative
  entropy policy search.
\newblock In \emph{Advances in neural information processing systems}, pages
  1583--1591, 2013.

\end{thebibliography}
\clearpage
\appendix
\renewcommand{\contentsname}{Contents of main article and appendix}
\tableofcontents
\addtocontents{toc}{\protect\setcounter{tocdepth}{3}}
\clearpage

\section*{Checklist}


\begin{enumerate}

\item For all authors...
\begin{enumerate}
  \item Do the main claims made in the abstract and introduction accurately reflect the paper's contributions and scope?
    \answerYes{See \pref{sec:upper_bounds}, \pref{sec:lower_bounds_main} and corresponding sections in the appendix.}
  \item Did you describe the limitations of your work?
    \answerYes{See lower bounds, discussion after Equation~\ref{eq:det_trans_reg}, \pref{app:lower_bounds_issues}}
  \item Did you discuss any potential negative societal impacts of your work?
    \answerNA{Our work is theoretical and we do not see any potential negative societal impacts.}
  \item Have you read the ethics review guidelines and ensured that your paper conforms to them?
    \answerYes{}
\end{enumerate}

\item If you are including theoretical results...
\begin{enumerate}
  \item Did you state the full set of assumptions of all theoretical results?
    \answerYes{}
	\item Did you include complete proofs of all theoretical results?
    \answerYes{See \pref{app:lower_bounds} for lower bounds and \pref{app:upper_bounds} for upper bounds.}
\end{enumerate}

\item If you ran experiments...
\begin{enumerate}
  \item Did you include the code, data, and instructions needed to reproduce the main experimental results (either in the supplemental material or as a URL)?
    \answerNA{}
  \item Did you specify all the training details (e.g., data splits, hyperparameters, how they were chosen)?
    \answerNA{}
	\item Did you report error bars (e.g., with respect to the random seed after running experiments multiple times)?
    \answerNA{}
	\item Did you include the total amount of compute and the type of resources used (e.g., type of GPUs, internal cluster, or cloud provider)?
    \answerNA{}
\end{enumerate}

\item If you are using existing assets (e.g., code, data, models) or curating/releasing new assets...
\begin{enumerate}
  \item If your work uses existing assets, did you cite the creators?
    \answerNA{}
  \item Did you mention the license of the assets?
    \answerNA{}
  \item Did you include any new assets either in the supplemental material or as a URL?
    \answerNA{}
  \item Did you discuss whether and how consent was obtained from people whose data you're using/curating?
    \answerNA{}
  \item Did you discuss whether the data you are using/curating contains personally identifiable information or offensive content?
    \answerNA{}
\end{enumerate}

\item If you used crowdsourcing or conducted research with human subjects...
\begin{enumerate}
  \item Did you include the full text of instructions given to participants and screenshots, if applicable?
    \answerNA{}
  \item Did you describe any potential participant risks, with links to Institutional Review Board (IRB) approvals, if applicable?
    \answerNA{}
  \item Did you include the estimated hourly wage paid to participants and the total amount spent on participant compensation?
    \answerNA{}
\end{enumerate}

\end{enumerate}

\section{Related work}
We now discuss related work carefully. Instance dependent regret lower bounds for the MAB were first introduced in \citet{lai1985asymptotically}. Later \citet{graves1997asymptotically} extend such instance dependent lower bounds to the setting of controlled Markov chains, while assuming infinite horizon and certain properties of the stationary distribution of each policy. Building on their work, more recently \citet{combes2017minimal} establish instance dependent lower bounds for the Structured Stochastic Bandit problem. Very recently, in the stochastic MAB, \citet{garivier2019explore} generalize and simplify the techniques of \citet{lai1985asymptotically} to completely characterize the behavior of uniformly good algorithms. The work of \citet{ok2018exploration} builds on these ideas to provide an instance dependent lower bound for infinite horizon MDPs, again under assumptions of how the stationary distributions of each policy will behave and irreducibility of the Markov chain. The idea behind deriving the above bounds is to use the uniform goodness of the studied algorithm to argue that the algorithm must select a certain policy or action at least a fixed number of times. This number is governed by a change of environment under which said policy/action is now the best overall. The reasoning now is that unless the algorithm is able to distinguish between these two environments it will have to incur linear regret asymptotically. Since the algorithm is uniformly good this can not happen.

For infinite horizon MDPs with additional assumptions the works of \citet{auer2007logarithmic,tewari2008optimistic,auer2009near,filippi2010optimism,ok2018exploration} establish logarithmic in horizon regret bounds of the form $O(D^2S^2A\log(T)/\delta)$, where $\delta$ is a gap-like quantity and $D$ is a diameter measure. We now discuss the works of \citep{tewari2008optimistic,ok2018exploration}, which should give more intuition about how the infinite horizon setting differs from our setting. Both works consider the non-episodic problem and therefore make some assumptions about the MDP $\Mcal$. The main assumption, which allows for computationally tractable algorithms is that of irreducibility. Formally both works require that under any policy the induced Markov chain is irreducible. Intuitively, the notion of irreducibility allows for coming up with exploration strategies, which are close to min-max optimal and are easy to compute. In \citep{ok2018exploration} this is done by considering the same semi-infinite LP~\ref{eq:opt_prob} as in our work. Unlike our work, however, assuming that the Markov chain induced by the optimal policy $\pi^*$ is irreducible allows for a nice characterization of the set $\Lambda(\theta)$ of "confusing" environments. In particular the authors manage to show that at every state $s$ it is enough to consider the change of environment which makes the reward of any action $a : (s,a) \not\in \pi^*$ equal to the reward of $a' : (s,a') \in \pi^*$. Because of the irreducability assumption we know that the support of $P(\cdot|s,a)$ is the same as the support of $P(\cdot |s,a')$ and this implies that the above change of environment makes the policy $\pi$ which plays $(s,a)$ and then coincides with $\pi^*$ optimal. Some more work shows that considering only such changes of environment is sufficient for an equivalent formulation to the LP\ref{eq:opt_prob}. Since this is an LP with at most $S\times A$ constraints it is solvable in polynomial time and hence a version of the algorithm in \citep{combes2017minimal} results in asymptotic min-max rates for the problem. The exploration in \citep{tewari2008optimistic} is also based on a similar LP, however, slightly more sophisticated.

Very recently there has been a renewed interest in proposing instance dependent regret bounds for finite horizon tabular MDPs~\citep{simchowitz2019non,lykouris2019corruption,jin2020simultaneously}. The works of \citep{simchowitz2019non,lykouris2019corruption} are based on the OFU principle and the proposed regret bounds scale as $O(\sum_{(s,a) \not\in \pi^*} H\log(T)/\gap(s,a) + SH\log(T)/\gap_{\min})$, disregarding variance terms and terms depending only poli-logarithmically on the gaps. The setting in \citep{lykouris2019corruption} also considers adversarial corruptions to the MDP, unknown to the algorithm, and their bound scales with the amount of corruption. \citet{jin2020simultaneously} derive similar upper bounds, however, the authors assume a known transition kernel and take the approach of modelling the problem as an instance of Online Linear Optimization, through using occupancy measures~\citep{zimin2013online}. For the problem of $Q$-learning, \citet{yang2020q,du2020agnostic}, also propose algorithms with regret scaling as $O(SAH^6\log(T)/\gap_{\min})$.
All of these bounds scale at least as $\Omega(SH\log(T)/\gap_{\min})$. 
\citet{simchowitz2019non} show an MDP instance on which no optimistic algorithm can hope to do better.

\section{Model-based optimistic algorithms for tabular RL}
\label{app:opt_algs}

This section is a general discussion of optimistic algorithms for the tabular setting. Our regret upper bounds can be extended to other model based optimistic algorithms or in general any optimistic algorithm for which we can show a meaningful bound on the surpluses in terms of the number of times a state-action pair has been visited throughout the $K$ episodes.
\begin{algorithm}[t]
\caption{Generic Model-Based Optimistic Algorithm for Tabular RL}
\label{alg:model_based_alg}
\begin{algorithmic}[1]
\REQUIRE{Number of episodes $K$, horizon $H$, number of states $S$, number of actions $A$, probability of failure $\delta$.}
\ENSURE{A sequence of policies $(\pi_k)_{k=1}^K$ with low regret.}
\STATE Initialize empirical transition kernel $\hat P \in [0,1]^{S\times A\times S}$, empirical reward kernel $\hat r \in [0,1]^{S\times A}$, bonuses $b \in [0,1]^{S\times A}$.
\FOR{$k\in [K]$}
\STATE $h=H$, $Q_k(s_{H+1},a_{H+1}) = 0,\forall (s,a) \in \Scal\times\Acal$.
\WHILE{$h>0$} 
\STATE $Q_k(s,a) = \hat r(s,a) + \langle \hat P(\cdot|s,a), V_k \rangle + b(s,a)$.
\STATE $\pi_k(s) := \argmax_{a} Q_k(s,a)$.
\STATE $h-=1$
\ENDWHILE
\STATE Play $\pi_k$, collect observations from transition kernel $P$ and reward kernel $r$ and update $\hat P$, $\hat r$, $b$.
\ENDFOR
\end{algorithmic}
\end{algorithm}

Pseudo-code for a generic algorithm can be found in Algorithm~\ref{alg:model_based_alg}. The algorithm begins by initializing an empirical transition kernel $\hat P \in [0,1]^{S\times A\times S}$, empirical reward kernel $\hat r \in [0,1]^{S\times A}$, and bonuses $b \in [0,1]^{S\times A}$. If we let $n_k(s,a)$ be the number of times we have observed state-action pair $(s,a)$ up to episode $k$ and $n_k(s',s,a)$ the number of times we have observed state $s'$ after visiting $(s,a)$ then one standard way to define the empirical kernels at episode $k$ are as follows:
\begin{equation}
    \hat r(s,a) = \frac{1}{n_k(s,a)}\sum_{j=1}^k R_j(s,a), \qquad
    \hat P(s'|s,a) = 
    \begin{cases*}
        \frac{n_k(s',s,a)}{n_k(s,a)} &\text{if} $n_k(s,a)>0$ \\
        0 &otherwise
    \end{cases*}\\
\end{equation}
where $R_j(s,a)$ is a sample from $r(s,a)$ at episode $j$ if $(s,a)$ was visited and $0$ otherwise.
At every episode the generic algorithm constructs an policy $\pi_k$ using the empirical model together with bonus terms $b(s,a),\forall (s,a)\in\Scal\times\Acal.$ Bonuses are constructed by using concentration of measure results relating $\hat r(s,a)$ to $r(s,a)$ and $\hat P(\cdot|s,a)$ to $P(\cdot|s,a)$. These bonuses usually scale inversely with the empirical visitations $n_k(s,a), \forall (s,a) \in \Scal\times \Acal$, as $O(1/\sqrt{n_k(s,a)})$. Further, depending on the type of concentration of measure result, the bonuses could either have a direct dependence on $K,H,S,A,\delta$ (following from Azuma-Hoeffding style concentration bounds) or replace $H$ with the empirical estimator (following Freedman style concentration bounds). 
The bonus terms ensure that optimism is satisfied for $\pi_k$, that is $Q_k(s,a) \geq Q^{\pi_k}(s,a)$ for all $(s,a)\in \Scal\times\Acal$ and all episodes $k \in [K]$ with probability at least $1-\delta$. Algorithms such as \textsc{UCBVI}~\citep{azar2017minimax}, \textsc{Euler}~\citep{zanette2019tighter} and \textsc{StrongEuler}~\citep{simchowitz2019non} are all versions of Algorithm~\ref{alg:model_based_alg} with different instantiations of the bonus terms.

The greedy choice of $\pi_k$ together with optimism also ensures that $V_k(s) \geq V^*(s)$. This has been key in prior work as it is what allows to bound the instantaneous regret by the sum of surpluses and ultimately relate the regret upper bound back to the bonus terms and the number of visits of each state-action pair respectively.
Our regret upper bounds are also based on this decomposition and as such are not really tied to the \textsc{StrongEuler} algorithm but would work with any model-based optimistic algorithm for the tabular setting. The main novelty in this work is a way to control the surpluses by clipping them to a gap-like quantity which better captures the sub-optimality of $\pi_k$ compared to $\pi^*$. We remark that our analysis can be extended to any algorithm which follows Algorithm~\ref{alg:model_based_alg} so as long as we can control the bonus terms sufficiently well.
\section{Experimental results}
\label{app:experiments}

In this section we present experiments based on the following deterministic LP which can be found in \pref{fig:mdp_experiments}. 
\begin{figure}[ht]
    \centering
    \includegraphics[width=.6\textwidth]{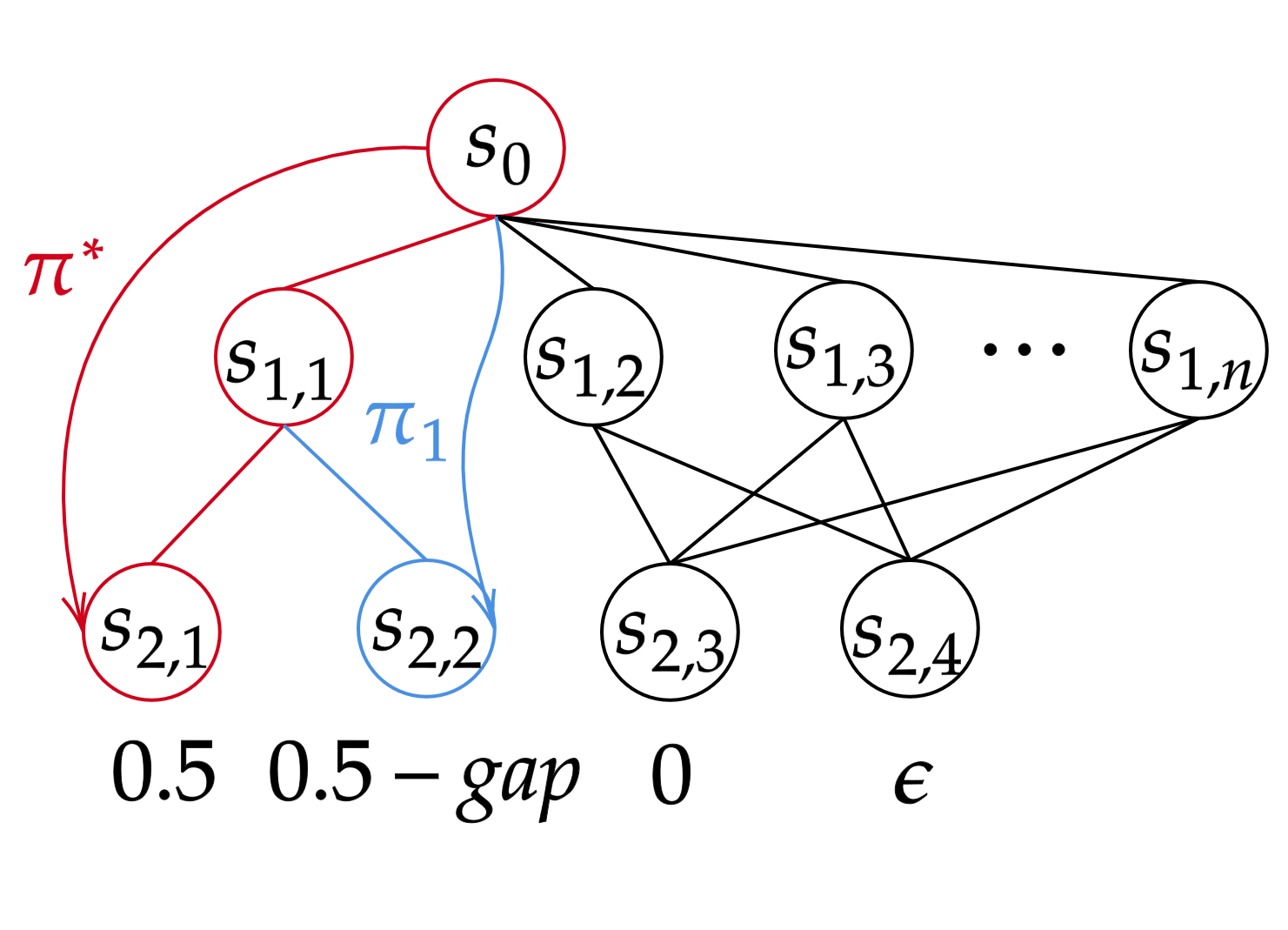}
    \caption{Deterministic MDP used in experiments}
    \label{fig:mdp_experiments}
\end{figure}
In short the MDP has only deterministic transitions and 3 layers. The starting state is denoted by $s_0$ and the $j$-th state at layer $i$ by $s_{i,j}$. There are $n+1$ possible actions at $s_0$, two possible actions at $s_{1,j},\forall j\in [n+1]$, and a single possible action at $s_{2,j}, \forall j\in[4]$. The only non-negative rewards are at state-action pairs in the final layer. The unique optimal policy reaches state $s_{2,1}$ and has return equal to $0.5$. We distinguish between two types of sub-optimal policies given by $\pi_1$ which visists $s_{1,1}$ and all other sub-optimal policies which visit $s_{1,j},j\geq 2$. The return of policy $\pi_1$ determines the $gap$ parameter in our experiments and the reward at state $s_{2,4}$ determines the $\epsilon$ parameter.

We run two sets of experiments using the \textsc{UCBVI} algorithm~\citep{azar2017minimax}. We have chosen this algorithm over Strong-\textsc{Euler} since \textsc{UCBVI} is slightly easier to implement and their differences are orthogonal to the issues studied here.  The rewards in both experiments are Bernoulli with the respective mean provided below the state in \pref{fig:mdp_experiments}. In the first set of experiments we let the gap parameter to be equal to $0.5$ and in the second set of experiments we let the gap parameter to be $\sqrt{\frac{S}{K}}$. We let $\epsilon = \frac{4^{\epsilon_{pow}}}{\sqrt{K}}$, where $\epsilon_{pow}$ takes integer values between $0$ and $\lfloor0.5*\log_4(K)\rfloor$. We have two settings for $n$ (respectively $S$) which are $n=1$ and $n=250$. In all experiments we have set $K=500000$ and the topology of the MDP implies $H=3$. Each experiment is repeated $5$ times and we report the average regret of the algorithm, together with standard deviation of the regret. We note that in the first set of experiments we should observe regret which is close to $\Theta(\frac{SA\log(T)}{gap})$, this is because with our parameter choices the return gap is $gap/2$ for all settings of $\epsilon$. In the second set of experiments we should observe regret which is close to $\Theta(\sqrt{SAK})$ as the min-max regret bounds dominate.
\begin{figure}[ht]
    \centering
    \begin{subfigure}[b]{0.45\textwidth}
        \centering
        \includegraphics[width=1.\textwidth]{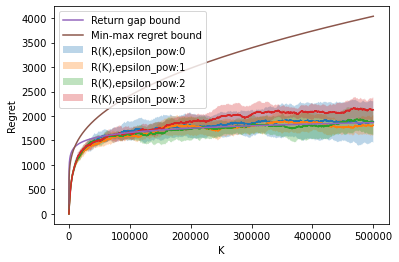}
        \caption{$n=1$}
    \end{subfigure}
    \begin{subfigure}[b]{0.45\textwidth}
        \centering
        \includegraphics[width=1.\textwidth]{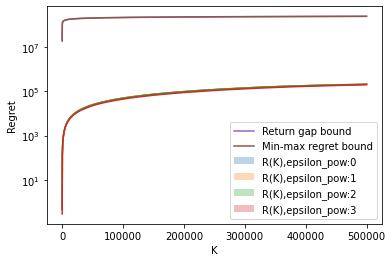}
        \caption{$n=250$}
    \end{subfigure}
    \caption{Large gap experiments}
    \label{fig:large_gap}
\end{figure}

The first set of experiments can be found in Figure~\ref{fig:large_gap}. We plot $S^2A + \frac{SA\log(T)}{gap}$ in purple and $S^2A + \sqrt{SAK}$ in brown for reference. We include the additive term of $S^2A$ as this is what the theoretical regret bounds suggest. We see that for $n=1$ our experiments almost perfectly match theory, including the observations made regarding Opportunity~\ref{enum_prob_1} and Opportunity~\ref{enum_prob_3}. In particular there is no obvious dependence on $1/\gap_{\min} = 1/\epsilon$, especially when $\epsilon = O(1/\sqrt{K})$, which in the plot is reflected by $\epsilon_{pow} = 0$.
In the case for $n=250$ the algorithm performs better than what our theory suggests. We expect that our bounds do not accurately capture the dependence on $S$ and $A$, at least for deterministic transition MDPs.
\begin{figure}[ht]
    \centering
    \begin{subfigure}[b]{0.45\textwidth}
        \centering
        \includegraphics[width=1.\textwidth]{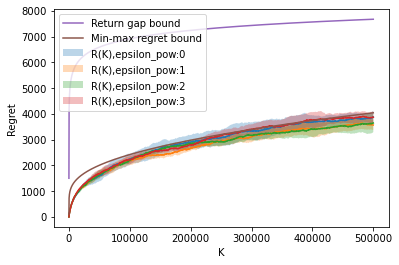}
        \caption{$n=1$}
    \end{subfigure}
    \begin{subfigure}[b]{0.45\textwidth}
        \centering
        \includegraphics[width=1.\textwidth]{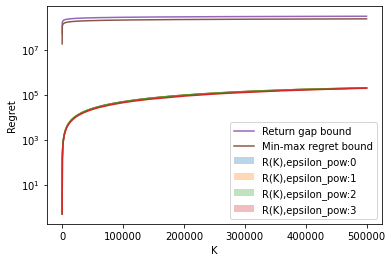}
        \caption{$n=250$}
    \end{subfigure}
    \caption{Small gap experiments}
    \label{fig:small_gap}
\end{figure}
The second set of experiments can be found in Figure~\ref{fig:small_gap}. Similar observations hold as in the large gap experiment.

\section{Additional Notation}
We use the shorthand $(s, a) \in \pi$ to indicate that $\pi$ admits a non-zero probability of visiting the state-action pair $(s,a)$ and abusively use $\pi$ as the set of such state-action pairs, when convenient.

\section{Proofs and extended discussion for regret lower-bounds}
\label{app:lower_bounds}

Let $N_{\psi,\pi}(k)$ be the random variable denoting the number of times policy $\pi$ has been chosen by the strategy $\psi$. Let $N_{\psi, (s,a)}(k)$ be the number of times the state-action pair has been visited up to time $k$ by the strategy $\psi$.

\subsection{Lower bound as an optimization problem}
\label{sec:opt_lower_bound}

We begin by formulating an LP characterizing the minimum regret incurred by any uniformly good algorithm $\psi.$
\begin{theorem}
\label{thm:lower_bound_gen1}
Let $\psi$ be a uniformly good RL algorithm for $\Theta$, that is, for all problem instances $\theta \in \Theta$ and exponents $\alpha > 0$, the regret of $\psi$ is bounded as $\EE[\regret_\theta(K)] \leq o(K^{\alpha})$. Then, for any $\theta \in \Theta$, the regret of $\psi$ satisfies
\begin{align*}
    \liminf_{K \to \infty} \frac{\EE[\regret_\theta(K)]}{\log{K}} \geq C(\theta),
\end{align*}
where $C(\theta)$ is the optimal value of the following optimization problem
\begin{equation}
\label{eq:opt_prob1}
\begin{aligned}
    \minimize{\eta(\pi)\geq 0}{\sum_{\pi \in \Pi} \eta(\pi)\left(\return{*}_{\theta} - \return{\pi}_{\theta}\right)}
    {
    \sum_{\pi \in \Pi} \eta(\pi) KL(\PP_\theta^\pi,\PP_\lambda^\pi) \geq 1 \qquad \textrm{for all } \,\,\lambda \in \Lambda(\theta)
    },
\end{aligned}
\end{equation}
where $\Lambda'(\theta) = \{ \lambda \in \Theta \colon \Pi^{*}_{\lambda} \cap \Pi^{*}_{\theta} = \varnothing, KL(\PP_\theta^{\pi^*_\theta}, \PP_\lambda^{\pi^*_\theta}) = 0\}$ are all environments that share no optimal policy with $\theta$ and do not change the rewards or transition kernel on $\pi^*$.
\end{theorem}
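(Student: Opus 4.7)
The plan is to follow the classical Lai-Robbins change-of-measure scheme, extended to controlled processes by Graves-Lai and recently reformulated in clean form by \citet{garivier2019explore,combes2017minimal,ok2018exploration}. I start from the per-policy decomposition $\EE_\theta[\regret_\theta(K)] = \sum_{\pi \in \Pi} \EE_\theta[N_{\psi,\pi}(K)]\,(v^*_\theta - v^\pi_\theta)$. Setting $\eta_K(\pi) := \EE_\theta[N_{\psi,\pi}(K)]/\log K$, the ratio $\EE_\theta[\regret_\theta(K)]/\log K$ equals exactly the objective of \pref{eq:opt_prob1} evaluated at $\eta_K$. Hence the theorem reduces to showing that for every $\lambda \in \Lambda(\theta)$ the constraint $\sum_\pi \eta_K(\pi) KL(\PP_\theta^\pi, \PP_\lambda^\pi) \geq 1 - o(1)$ holds asymptotically; a standard compactness / diagonal argument over the (possibly uncountable) constraint set then yields $\liminf_K \EE_\theta[\regret_\theta(K)]/\log K \geq C(\theta)$.

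The constraint verification rests on the fundamental change-of-measure inequality obtained by applying the KL chain rule to the laws $\PP_\theta^\psi,\,\PP_\lambda^\psi$ of the first $K$ episodes and conditioning on the policy chosen in each round:
\begin{align*}
\sum_{\pi \in \Pi} \EE_\theta[N_{\psi,\pi}(K)]\,KL(\PP_\theta^\pi, \PP_\lambda^\pi) \;\geq\; kl\bigl(\PP_\theta(A_K),\,\PP_\lambda(A_K)\bigr),
\end{align*}
valid for any history-measurable event $A_K$, with $kl$ the binary relative entropy. It then suffices to produce an $A_K$ with $kl(\PP_\theta(A_K),\PP_\lambda(A_K)) \geq (1-o(1))\log K$. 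For this I would take $A_K = \{N_{\psi,\Pi^*_\theta}(K) \geq K - K^{1/2}\}$. Under $\theta$, uniform goodness and Markov's inequality give $\PP_\theta(A_K^c) = o(1)$. Under $\lambda$, the zero-KL condition on $\Pi^*_\theta$ forces $v^\pi_\lambda = v^*_\theta$ for every $\pi \in \Pi^*_\theta$, and combined with $\Pi^*_\lambda \setminus \Pi^*_\theta \neq \varnothing$ one obtains $v^*_\lambda > v^*_\theta$, so playing $\Pi^*_\theta$ almost always in $\lambda$ accumulates $\Omega(K)$ regret; uniform goodness of $\psi$ on $\lambda$ then forces $\PP_\lambda(A_K) = o(K^{\alpha-1})$ for every $\alpha > 0$. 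Plugging these probabilities into $kl(p,q) \geq p \log(1/q) - \log 2$ and dividing by $\log K$ completes the constraint verification.

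The main obstacle is the strict inequality $v^*_\lambda > v^*_\theta$ invoked above: Bellman-disjointness of the optimal policy sets does not in general preclude some $\pi \in \Pi^*_\theta$ from being merely return-optimal (but not Bellman-optimal) in $\lambda$, in which case playing $\Pi^*_\theta$ in $\lambda$ incurs no regret and the separation collapses. In that degenerate regime I would refine $A_K$ to track visit counts of the particular state-action pair at which some $\pi' \in \Pi^*_\lambda \setminus \Pi^*_\theta$ witnesses a Bellman-violation in $\theta$: Bellman-optimality of $\pi'$ in $\lambda$ makes this pair reached with positive probability under $\pi'$, so uniform goodness on $\lambda$ forces its visit count to grow linearly, while uniform goodness on $\theta$ keeps it $o(K^\alpha)$, reproducing the change-of-measure separation locally. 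Once the constraint is verified for every $\lambda \in \Lambda(\theta)$, taking $\liminf_K$ and using lower semicontinuity of the linear LP objective closes the argument.
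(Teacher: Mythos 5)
Your proposal follows essentially the same route as the paper's proof: the per-policy regret decomposition, feasibility of $\eta_K(\pi) = \EE_\theta[N_{\psi,\pi}(K)]/\log K$, the KL chain rule over episodes, data-processing down to a binary $kl$, and uniform goodness on both $\theta$ and $\lambda$ to separate the two probabilities. The only difference is the mirror-image choice of test statistic --- the paper uses $Z = N_{\psi,\Pi^*_\lambda}(K)/K$ (plays of $\lambda$-optimal policies) rather than your event $\{N_{\psi,\Pi^*_\theta}(K) \geq K - \sqrt{K}\}$ --- and the subtlety you correctly flag about return-optimal but non-Bellman-optimal policies (which threatens both $\PP_\theta(A_K^c) = o(1)$ on your side and $v^*_\lambda > v^*_\theta$) appears in symmetric form in the paper's own argument, which likewise does not dwell on it.
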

\begin{proof}
 We can write the expected regret as $\EE[\regret_\theta(K)] = \sum_{\pi \in \Pi} \mathbb{E}_\theta[N_{\psi,\pi}(K)](\return{*}_\theta - \return{\pi}_\theta)$. We will show that $\eta(\pi) = \mathbb{E}_\theta[N_{\psi,\pi}(K)]/\log{K}$ is feasible for the optimization problem in \pref{eq:opt_prob}. This is sufficient to prove the theorem. To do so we follow the techniques of~\cite{garivier2019explore}.
 With slight abuse of notation, let $\mathbb{P}_\theta^{I_k}$ be the law of all trajectories up to episode $k$, where $I_k$ is the history up to and including time $k$. Let $Y_k$ be the random variable which is the value function of the policy, $\psi(I_k)$, selected at episode $k$. We have
\begin{equation}
\label{eq:kl-expand}
    \begin{aligned}
        KL(\mathbb{P}_\theta^{I_{k+1}},\mathbb{P}_\lambda^{I_{k+1}}) &= KL(\mathbb{P}_\theta^{Y_{k+1},I_{k}},\mathbb{P}_\lambda^{Y_{k+1},I_{k}})\\
        &=
        KL(\mathbb{P}_\theta^{I_{k}},\mathbb{P}_\lambda^{I_{k}})
        +\mathbb{E}\left[ \mathbb{E}_{\PP^{\psi(I_k)}_\theta}\left[\log{\frac{\PP^{\psi(I_k)}_\theta(Y_{k+1})}{\PP^{\psi(I_k)}_\lambda(Y_{k+1})}} ~\bigg|~I_k\right]\right]\\
        &=KL(\mathbb{P}_\theta^{I_{k}},\mathbb{P}_\lambda^{I_{k}})
        + \mathbb{E}\left[\sum_{\pi \in \Pi}\chi(\psi(I_k) = \pi)KL(\PP_\theta^{\pi},\PP_\lambda^\pi)\right].
    \end{aligned}
\end{equation}
Iterating the argument we arrive at $\sum_{\pi\in\Pi}\mathbb{E}_\theta[N_{\psi,\pi}(K)]
KL(\PP_\theta^\pi,\PP_\lambda^\pi) = KL(\PP_{\theta}^{I_K},\PP_{\lambda}^{I_K})$ where $\EE_\theta$ denotes expectation in problem instance $\theta$.
Next one shows that for any measurable $Z \in [0,1]$, with respect to the natural sigma-algebra induced by $I_K$, it holds that $KL(\PP_{\theta}^{I_K},\PP_{\lambda}^{I_K}) \geq kl(\mathbb{E}_\theta[Z],\mathbb{E}_\lambda[Z])$ where $kl(p,q) = p\log{p/q} + (1-p)\log{(1-p)/(1-q)}$ denotes the KL-divergence between two Bernoulli random variables $p$ and $q$. This follows directly from Lemma~1 by \citet{garivier2019explore}.
Finally we choose $Z = N_{\psi,\Pi^*_\lambda}(K) / K$ as the fraction of episodes where an optimal policy for $\lambda$ was played (here we use the short-hand notation $N_{\psi,\Pi^*_\lambda}(K) = \sum_{\pi \in \Pi^*_\lambda} N_{\psi,\pi}(K)$). Evaluating the $kl$-term we have
\begin{align*}
    &kl\left(\frac{\mathbb{E}_\theta[  N_{\psi,\Pi^*_\lambda}(K)]}{K},\frac{\mathbb{E}_\lambda[ N_{\psi,\Pi^*_\lambda}(K)]}{K}\right) \geq \left(1 - \frac{\mathbb{E}_\theta[ N_{\psi,\Pi^*_\lambda}(K)]}{K}\right)\log{\frac{K}{K - \mathbb{E}_\lambda[N_{\psi,\Pi^*_\lambda}(K)]}}
    -\log{2}.
\end{align*}
Since $\psi$ is a uniformly good algorithm it follows that for any $\alpha>0$, $K - \mathbb{E}_\lambda[N_{\psi,\Pi^*_\lambda}(K)] = o(K^\alpha)$. By assuming that $\Pi^*_\theta \cap \Pi^*_\lambda = \varnothing$, we get $\mathbb{E}_\theta[N_{\psi,\Pi^*_\lambda}(K)] = o(K)$. This implies that for $K$ sufficiently large and all $1\geq\alpha>0$
\begin{align*}
    kl\left(\frac{\mathbb{E}_\theta[N_{\psi,\Pi^*_\lambda}(K)]}{K},\frac{\mathbb{E}_\lambda[N_{\psi,\Pi^*_\lambda}(K)]}{K}\right) \geq \log{K} - \log{K^\alpha} = (1-\alpha)\log{K} \xrightarrow{\alpha \rightarrow 0} \log{K}.
\end{align*}
\end{proof}

The set $\Lambda'(\theta)$ is uncountably infinite for any reasonable $\Theta$ we consider. What is worse the constraints of LP~\ref{eq:opt_prob} will not form a closed set and thus the value of the optimization problem will actually be obtained on the boundary of the constraints. To deal with this issue it is possible to show the following.
\generallb*
\begin{proof}
For the rest of this proof we identify $\Lambda'(\theta) = \{\lambda \in \Theta : \Pi^*_\lambda \cap \Pi^*_\theta = \emptyset, KL(\PP^{\pi^*_\theta}_\theta,\PP^{\pi^*_\theta}_\lambda) = 0,\forall \pi^*_\theta \in \Pi^*_\theta\}$ as the set from Theorem~\ref{thm:lower_bound_gen1} and $\tilde \Lambda(\theta) = \{\lambda \in \Theta : \return{\pi^*_\lambda}_\lambda \geq \return{\pi^*_\theta}_\theta, \pi^*_\lambda \not\in \Pi^*_\theta, KL(\PP^{\pi^*_\theta}_\theta,\PP^{\pi^*_\theta}_\lambda) = 0\}$. From the proof of Theorem~\ref{thm:lower_bound_gen1} it is clear that we can rewrite $\Lambda'(\theta)$ as the union $\bigcup_{\pi\in\Pi} \Lambda_{\pi}(\theta)$, where $\Lambda_{\pi}(\theta) = \{\lambda \in \Theta: KL(\PP_\theta^{\pi^*_\theta},\PP_\lambda^{\pi^*_\theta}) = 0, \return{\pi^*_\lambda} > \return{\pi^*_\theta}_\theta, \pi^*_\lambda = \pi\}$ is the set of all environments which make $\pi$ the optimal policy. This implies that we can equivalently write LP~\ref{eq:opt_prob} as 
\begin{equation}
\label{eq:opt_prob_equiv}
\begin{aligned}
    \minimize{\eta(\pi)\geq 0}{\sum_{\pi \in \Pi} \eta(\pi)\left(\return{*}_{\theta} - \return{\pi}_{\theta}\right)}
    {
    \inf_{\lambda \in \Lambda_{\pi'}(\theta)}\sum_{\pi \in \Pi} \eta(\pi) KL(\PP_\theta^\pi,\PP_\lambda^\pi) \geq 1 \qquad \textrm{for all } \,\,\pi' \in \Pi
    }.
\end{aligned}
\end{equation}
The above formulation now minimizes a linear function over a finite intersection of sets, however, these sets are still slightly inconvenient to work with. We are now going to try to make these sets more amenable to the proof techniques we would like to use for deriving specific lower bounds. We begin by noting that $\Lambda_{\pi}(\theta)$ is bounded in the following sense. We identify each $\lambda$ with a vector in $[0,1]^{S^2A}\times[0,1]^{SA}$ where the first $S^2A$ coordinates are transition probabilities and the last $SA$ coordinates are the expected rewards. From now on we work with the natural topology on $[0,1]^{S^2A}\times[0,1]^{SA}$, induced by the $\ell_1$ norm. Further, we claim that we can assume that $KL(\PP_\theta^\pi,\PP_\lambda^\pi)$ is a continuous function over $\Lambda_{\pi'}(\theta)$. The only points of discontinuity are at $\lambda$ for which the support of the transition kernel induced by $\lambda$ does not match the support of the transition kernel induced by $\theta$. At such points the $KL(\PP_\theta^\pi,\PP_\lambda^\pi) = \infty$. This implies that such $\lambda$ does not achieve the infimum in the set of constraints so we can just restrict $\Lambda_{\pi'}(\theta)$ to contain only $\lambda$ for which $KL(\PP_\theta^\pi,\PP_\lambda^\pi) < \infty$. With this restriction in hand the KL-divergence is continuous in $\lambda$.

Fix a $\pi'$ and consider the set $\{ \eta:\inf_{\lambda \in \Lambda_{\pi'}(\theta)}\sum_{\pi \in \Pi} \eta(\pi) KL(\PP_\theta^\pi,\PP_\lambda^\pi) \geq 1\}$ corresponding to one of the constraints in LP~\ref{eq:opt_prob_equiv}. Denote $\tilde \Lambda_{\pi'}(\theta) = \{\lambda \in \Theta: KL(\PP_\theta^{\pi^*_\theta},\PP_\lambda^{\pi^*_\theta}) = 0, \return{\pi^*_\lambda}_\lambda \geq \return{\pi^*_\theta}_\theta, \pi^*_\lambda \not\in \Pi^*_\theta, \pi^*_\lambda = \pi'\}$. $\tilde \Lambda_{\pi'}(\theta)$ is closed as $KL(\PP_\theta^{\pi^*_\theta},\PP_\lambda^{\pi^*_\theta})$ and $\return{\pi^*_\lambda}_\lambda - \return{\pi^*_\theta}_\theta$ are both continuous in $\lambda$. To see the statement for $\return{\pi^*_\lambda}_\lambda$, notice that this is the maximum over the continuous functions $\return{\pi}_\lambda$ over $\pi\in\Pi$. Take any $\eta \in \Lambda_{\pi'}(\theta)$ and let $\{\lambda_{j}\}_{j=1}^\infty, \lambda_j \in \Lambda_{\pi'}(\theta)$ be a sequence of environments such that $\sum_{\pi\in \Pi}\eta(\pi)KL(\PP_\theta^{\pi}, \PP_{\lambda_j}^{\pi}) \geq 1+ 2^{-j}$. If there is no convergent subsequence of $\{\lambda_{j}\}_{j=1}^\infty$ in $\Lambda_{\pi'}(\theta)$ we claim it is because of the constraint $\return{\pi^*_\lambda}_\lambda > \return{\pi^*_\theta}_\theta$. Take the limit $\lambda$ of any convergent subsequence of $\{\lambda_{j}\}_{j=1}^\infty$ in the closure of $\Lambda_{\pi'}(\theta)$. Then by continuity of the divergence we have $0=\lim_{j\rightarrow\infty}KL(\PP_\theta^{\pi^*_\theta},\PP_{\lambda_j}^{\pi^*_\theta}) = KL(\PP_\theta^{\pi^*_\theta},\PP_\lambda^{\pi^*_\theta})$, thus it must be the case that $\return{\pi^*_\lambda}_\lambda \leq \return{\pi^*_\theta}_\theta$. This shows that $\tilde \Lambda_{\pi'}(\theta)$ is a subset of the closure of $\Lambda_{\pi'}(\theta)$ which implies it is the closure of $\Lambda_{\pi'}(\theta)$, i.e., $\bar \Lambda_{\pi'}(\theta) = \tilde \Lambda_{\pi'}(\theta)$.

Next, take $\eta \in \{ \eta:\min_{\lambda \in \bar\Lambda_{\pi'}(\theta)}\sum_{\pi \in \Pi} \eta(\pi) KL(\PP_\theta^\pi,\PP_\lambda^\pi) \geq 1\}$ and let $\lambda_{\pi',\eta}$ be the environment on which the minimum is achieved. Such $\lambda_{\pi',\eta}$ exists because we just showed that $\bar \Lambda_{\pi'}(\theta)$ is closed and bounded and hence compact and the sum consists of a finite number of continuous functions. If $\lambda_{\pi',\eta} \in \Lambda_{\pi'}(\theta)$ then $\eta \in \{ \eta:\inf_{\lambda \in \Lambda_{\pi'}(\theta)}\sum_{\pi \in \Pi} \eta(\pi) KL(\PP_\theta^\pi,\PP_\lambda^\pi) \geq 1\}$. If $\lambda_{\pi',\eta} \not\in \Lambda_{\pi'}(\theta)$ then $\lambda_{\pi',\eta}$ must be a limit point of $\Lambda_{\pi'}(\theta)$. By definition we can construct a convergent sequence of $\{\lambda_j\}_{j=1}^\infty,\lambda_j \in \Lambda_{\pi'}(\theta)$ to $\lambda_{\pi',\eta}$ such that $\sum_{\pi\in\Pi} \eta(\pi) KL(\PP_\theta^{\pi}, \PP_{\lambda_{j}}^\pi) \geq 1$.  This implies
$\sum_{\pi\in\Pi} \eta(\pi) KL(\PP_\theta^{\pi}, \PP_{\lambda_{j}}^\pi) \geq \inf_{\lambda \in \Lambda_{\pi'}(\theta)}\sum_{\pi \in \Pi} \eta(\pi) KL(\PP_\theta^\pi,\PP_\lambda^\pi)$. Using the continuity of the KL term and taking limits, the above implies that the minimum upper bounds the infimum.
Since we argued that $\Lambda_{\pi'}(\theta)$ is bounded and $\sum_{\pi \in \Pi} \eta(\pi) KL(\PP_\theta^{\pi},\PP_{\lambda_j}^{\pi})$ is also bounded from below this implies $\bar \Lambda_{\pi'}(\theta)$ contains the infimum $\inf_{\lambda \in \Lambda_{\pi'}(\theta)}\sum_{\pi \in \Pi} \eta(\pi) KL(\PP_\theta^\pi,\PP_\lambda^\pi)$. This implies $\inf_{\lambda \in \Lambda_{\pi'}(\theta)}\sum_{\pi \in \Pi} \eta(\pi) KL(\PP_\theta^\pi,\PP_\lambda^\pi) \geq \min_{\lambda \in \bar \Lambda_{\pi'}(\theta)}\sum_{\pi \in \Pi} \eta(\pi) KL(\PP_\theta^\pi,\PP_\lambda^\pi)$
, and so the infimum over $\Lambda_{\pi}(\theta)$ equals the minimum over $\bar\Lambda_{\pi}(\theta)$. Which finally implies that $\eta \in \{ \eta:\inf_{\lambda \in \Lambda_{\pi'}(\theta)}\sum_{\pi \in \Pi} \eta(\pi) KL(\PP_\theta^\pi,\PP_\lambda^\pi) \geq 1\}$. This shows that LP~\ref{eq:opt_prob_equiv} is equivalent to 
\begin{align*}
    \minimize{\eta(\pi)\geq 0}{\sum_{\pi \in \Pi} \eta(\pi)\left(\return{*}_{\theta} - \return{\pi}_{\theta}\right)}
    {
    \min_{\lambda \in \bar\Lambda_{\pi'}(\theta)}\sum_{\pi \in \Pi} \eta(\pi) KL(\PP_\theta^\pi,\PP_\lambda^\pi) \geq 1 \qquad \textrm{for all } \,\,\pi' \in \Pi
    },
\end{align*}
or equivalently that we can consider the closure of $\Lambda(\theta)$ in LP~\ref{eq:opt_prob}, $\bar\Lambda(\theta) = \{ \lambda \in \Theta \colon \return{\pi^*_\lambda}_\lambda \geq \return{\pi^*_\theta}_\theta,\pi^*_\lambda \not\in \Pi^*_\theta, KL(\PP_\theta^{\pi^*_\theta}, \PP_\lambda^{\pi^*_\theta}) = 0\}$ i.e. the set of environments which makes any $\pi$ optimal without changing the environment on state-action pairs in $\pi^*_\theta$.
\end{proof}


\subsection{Lower bounds for full support optimal policy}
\label{app:lower_bounds_full_supp}

\gaplemmafullsupp*

\begin{proof}Let 
 $\lambda$ be the environment that is identical to $\theta$ except for the immediate reward for state-action pair for $(s,a)$. Specifically, let $R_{\lambda}(s,a)$ so that $r_{\lambda}(s,a) = r_\theta(s,a) + \Delta$ with $\Delta = \gap_\theta(s,a)$ . Since we assume that rewards are Gaussian, it follows that
 \begin{align*}
     KL(\PP_\theta^{\pi}, \PP_\lambda^{\pi}) 
     &= w^{\pi}_\lambda(s, a) KL(R_\theta(s,a), R_\lambda(s,a))
     \leq KL(R_\theta(s,a), R_\lambda(s,a))\\
     &\leq \gap_\theta(s,a)^2
 \end{align*}
 for any policy $\pi \in \Pi$.
We now show that the optimal value function (and thus return) of $\lambda$ is uniformly upper-bounded by the optimal value function of $\theta$. To that end, consider their difference in any state $s'$, which we will upper-bound by their difference in $s$ as
\begin{align*}
    V_\lambda^*(s') - V_\theta^*(s')
    &\leq
    \chi(\kappa(s) > \kappa(s')) \PP_\theta^{\pi^*_\lambda}(s_{\kappa(s)} = s | s_{\kappa(s')} = s')[V_\lambda^*(s) - V_\theta^*(s)] \\
    &\leq V_\lambda^*(s) - V_\theta^*(s).
\end{align*}
Further, the difference in $s$ is exactly
\begin{align*}
    V_\lambda^*(s) - V_\theta^*(s)
    &= r_{\lambda}(s,a) + \langle P_\theta(\cdot | s,a), V_\theta^*\rangle - V_\theta^*(s)\\
    &= r_{\theta}(s,a) + \langle P_\theta(\cdot | s,a), V_\theta^*\rangle +\gap_\theta(s,a)  - V_\theta^*(s)
    = 0.
\end{align*}
Hence, $V^*_\lambda = V^*_\theta \leq 1$ and thus $\lambda \in \Theta$.
 We will now show that there is a policy that is optimal in $\lambda$ but not in $\theta$.
 Let $\pi^* \in \Pi^*_\theta$ be any optimal policy for $\theta$ that has non-zero probability of visiting $s$ and consider the policy 
 \begin{align*}
     \tilde \pi(\tilde s) = 
     \begin{cases}
        \pi^*(\tilde s) & \textrm{if } s \neq \tilde s
        \\
        a & \textrm{if } s = \tilde s
     \end{cases}
 \end{align*}
 that matches $\pi^*$ on all states except $s$. We will now show that $\tilde \pi$ achieves the same return as $\pi^*$ in $\lambda$. Consider their difference
 \begin{align*}
     \return{\tilde \pi}_\lambda - 
     \return{\pi^*}_\lambda
     \overset{(i)}{=} &~
     w_\lambda^{\tilde \pi}(s, \tilde \pi(s)) [r_\lambda(s, \tilde \pi(s)) + \langle P_\lambda(\cdot | s, \tilde \pi(s)), V_\lambda^{\tilde \pi} \rangle ]\\
     &~- 
          w_\lambda^{\pi^*}(s, \pi^*(s)) [r_\lambda(s, \pi^*(s)) + \langle P_\lambda(\cdot | s,  \pi^*(s)), V_\lambda^{\pi^*} \rangle ]\\
\overset{(ii)}{=} &~
     w_\lambda^{\pi^*}(s, \pi^*(s)) [r_\lambda(s, \tilde \pi(s)) - r_\lambda(s, \pi^*(s)) + \langle P_\lambda(\cdot | s, \tilde \pi(s)) - P_\lambda(\cdot | s,  \pi^*(s)), V_\lambda^{\pi^*} \rangle ]\\
\overset{(iii)}{=} &~
     w_\theta^{\pi^*}(s, \pi^*(s)) [\Delta + r_\theta(s, \tilde \pi(s)) - r_\theta(s, \pi^*(s)) + \langle P_\theta(\cdot | s, \tilde \pi(s)) - P_\theta(\cdot | s,  \pi^*(s)), V_\theta^* \rangle ]\\
\overset{(iv)}{=} &~
          w_\theta^{\pi^*}(s, \pi^*(s)) [\Delta - \gap_\theta(s, \tilde \pi(s)) ] 
 \end{align*}
 where $(i)$ and $(ii)$ follow from the fact that $\tilde \pi$ and $\pi^*$ only differ on $s$ and hence, their probability at arriving at $s$ and their value for any successor state of $s$ is identical.
 Step $(iii)$ follows from the fact that $\lambda$ and $\theta$ only differ on $(s,a)$ which is not visited by $\pi^*$. Finally, step $(iv)$ applies the definition of optimal value functions and value-function gaps.
 Since $\Delta = \gap_\theta(s, \tilde \pi(s))$, it follows that $\return{\tilde \pi}_\lambda = \return{\pi^*}_\lambda = \return{\pi^*}_\theta = \return{*}_\theta$. As we have seen above, the optimal value function (and return) is identical in $\theta$ and $\lambda$ and, hence, $\tilde \pi$ is optimal in $\lambda$. 
 
 Note that the we can apply the chain of equalities above in the same manner to $\return{\tilde \pi}_\theta - 
     \return{\pi^*}_\theta$ if we consider $\Delta = 0$. This yields
     \begin{align*}
         \return{\tilde \pi}_\theta - 
     \return{\pi^*}_\theta = - w_\theta^{\pi^*}(s, \pi^*(s)) \gap_\theta(s, a) < 0 
     \end{align*}
     because $w_\theta^{\pi^*}(s, \pi^*(s)) > 0$ and $\gap_\theta(s, a) < 0$ by assumption. Hence $\tilde \pi$ is not optimal in $\theta$, which completes the proof.
\end{proof}

\begin{lemma}[Optimization problem over $\Scal \times \Acal$ instead of $\Pi$]
\label{lem:LP_(s,a)_relaxed}
Let optimal value $C(\theta)$ of the optimization problem \pref{eq:opt_prob} in \pref{thm:lower_bound_gen} is lower-bound by the optimal value of the problem 
\begin{equation}
\label{eq:LP_(s,a)_relaxed}
    \begin{aligned}
        \underset{\eta(s,a) \geq 0}{\operatorname{minimize}} \quad
        & 
        \sum_{s,a} \eta(s,a)\gap_{\theta}(s,a)
        \\
        \textrm{s.t.} \quad &
        \sum_{s,a} 
        \eta(s,a)
        KL(R_\theta(s,a), R_\lambda(s,a)) \\
        & + \sum_{s,a} 
        \eta(s,a) KL(P_\theta(\cdot|s,a), P_\lambda(\cdot|s,a)) 
        \geq 1\qquad \textrm{for all } \,\,\lambda \in \Lambda(\theta)
    \end{aligned}
\end{equation}
\end{lemma}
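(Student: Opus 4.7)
The strategy is to show that any feasible solution to the policy-indexed LP \pref{eq:opt_prob} of \pref{thm:lower_bound_gen} induces, via aggregation over state-action visitations, a feasible solution to the state-action-indexed LP \pref{eq:LP_(s,a)_relaxed} with the same objective value. Since the latter LP then admits at least as many feasible points (identified through this map), its minimum is no larger than $C(\theta)$, giving the claimed lower bound.

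\textbf{Key identities.} Two decompositions drive the argument. First, the regret decomposition (analogous to \pref{eq:reg_decomp}, or \pref{lem:gap_decomp_pi} in the appendix) gives
\begin{align*}
    v^*_\theta - v^\pi_\theta \;=\; \sum_{s,a} w^\pi_\theta(s,a)\,\gap_\theta(s,a).
\end{align*}
Second, by the chain rule for KL divergence applied to the law of a trajectory $(S_1,A_1,R_1,\dots,S_{H+1})$ under $\pi$, together with the fact that the action distribution at each step is deterministic and identical under $\theta$ and $\lambda$,
\begin{align*}
    KL(\PP_\theta^\pi,\PP_\lambda^\pi) \;=\; \sum_{s,a} w^\pi_\theta(s,a)\,\bigl[KL(R_\theta(s,a),R_\lambda(s,a)) + KL(P_\theta(\cdot|s,a),P_\lambda(\cdot|s,a))\bigr].
\end{align*}
This follows because at each step $h$, conditioning on reaching $(s,a)$, the only sources of divergence are the reward and next-state distributions at $(s,a)$, and the probability of reaching $(s,a)$ is the same under both MDPs on the sample paths that contribute.

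\textbf{Aggregation step.} Given any feasible $\{\eta(\pi)\}_{\pi\in\Pi}$ of \pref{eq:opt_prob}, define
\begin{align*}
    \eta(s,a) \;:=\; \sum_{\pi \in \Pi} \eta(\pi)\, w^\pi_\theta(s,a) \;\geq\; 0.
\end{align*}
Swapping the order of summation and applying the first identity above, the objective of \pref{eq:LP_(s,a)_relaxed} evaluates to
\begin{align*}
    \sum_{s,a}\eta(s,a)\gap_\theta(s,a)
    \;=\; \sum_{\pi} \eta(\pi) \sum_{s,a} w^\pi_\theta(s,a)\gap_\theta(s,a)
    \;=\; \sum_\pi \eta(\pi)\bigl(v^*_\theta - v^\pi_\theta\bigr),
\end{align*}
matching the objective of \pref{eq:opt_prob}. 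Similarly, by the chain-rule identity, for every $\lambda \in \Lambda(\theta)$ the constraint of \pref{eq:LP_(s,a)_relaxed} satisfies
\begin{align*}
    \sum_{s,a} \eta(s,a)\bigl[KL(R_\theta(s,a),R_\lambda(s,a)) + KL(P_\theta(\cdot|s,a),P_\lambda(\cdot|s,a))\bigr]
    \;=\; \sum_\pi \eta(\pi)\,KL(\PP_\theta^\pi,\PP_\lambda^\pi) \;\geq\; 1,
\end{align*}
so $\{\eta(s,a)\}$ is feasible for \pref{eq:LP_(s,a)_relaxed}. Taking the infimum over feasible $\{\eta(\pi)\}$ yields the desired inequality.

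\textbf{Anticipated obstacle.} The only non-trivial point is establishing the chain-rule decomposition of $KL(\PP_\theta^\pi,\PP_\lambda^\pi)$ cleanly, making sure the visitation weights are with respect to $\theta$ (not $\lambda$). Since each confusing $\lambda \in \Lambda(\theta)$ may have $w^\pi_\lambda \neq w^\pi_\theta$, one has to carefully unroll the trajectory KL step by step, conditioning on the prefix under $\theta$, to see that the correct weighting is $w^\pi_\theta(s,a)$. Once this is in hand, the rest is bookkeeping.
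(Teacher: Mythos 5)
Your proposal is correct and follows essentially the same route as the paper's proof: define $\eta(s,a) = \sum_{\pi}\eta(\pi)w^\pi_\theta(s,a)$, use the gap decomposition of \pref{lem:gap_decomp_pi} to match the objectives, and the trajectory-level KL chain rule (with visitation weights under $\theta$, the first argument of the KL) to carry the constraints over, concluding that the relaxed minimization can only have a smaller optimal value. The paper's proof is the same aggregation argument, stated in the same order.
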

\begin{proof}
First, we rewrite the objective of \pref{eq:opt_prob} as
\begin{align*}
    \sum_{\pi \in \Pi} \eta(\pi) (\return{*}_\theta - \return{\pi}_\theta)
    \overset{(i)}{=} \sum_{\pi \in \Pi} \eta(\pi) \sum_{s,a} w^\pi_\theta(s,a) \gap_\theta(s,a)
    = \sum_{s,a} \left(\sum_{\pi \in \Pi} \eta(\pi) w^\pi_\theta(s,a)\right) \gap_\theta(s,a)
\end{align*} where step $(i)$ applies \pref{lem:gap_decomp_pi} proved in Appendix~\ref{app:upper_bounds}. Here, $w^\pi_\theta(s,a)$ is the probability of reaching $s$ and taking $a$ when playing policy $\pi$ in MDP $\theta$.
Similarly, the LHS of the constraints of \pref{eq:opt_prob} can be decomposed as
\begin{align*}
    &\sum_{\pi \in \Pi} \eta(\pi) KL(\PP_\theta^\pi, \PP_\lambda^\pi)\\
    &= \sum_{\pi \in \Pi} \eta(\pi) \sum_{s,a} w_\theta^\pi(s,a) 
    \left( KL(R_\theta(s,a), R_\lambda(s,a)) + KL(P_\theta(\cdot|s,a), P_\lambda(\cdot|s,a))
    \right) \\
        &= \sum_{s,a} \left[\sum_{\pi \in \Pi} \eta(\pi)  w_\theta^\pi(s,a) \right]
    \left( KL(R_\theta(s,a), R_\lambda(s,a)) + KL(P_\theta(\cdot|s,a), P_\lambda(\cdot|s,a))
    \right)
\end{align*}
where the first equality follows from writing out the definition of the KL divergence.
Let now $\eta(\pi)$ be a feasible solution to the original problem \pref{eq:opt_prob}. Then the two equalities we just proved show that $\eta(s,a) = \sum_{\pi \in \Pi} \eta(\pi)  w_\theta^\pi(s,a)$ is a feasible solution for the problem in \pref{eq:LP_(s,a)_relaxed} with the same value. Hence, since \pref{eq:LP_(s,a)_relaxed} is a minimization problem, its optimal value cannot be larger than $C(\theta)$, the optimal value of \pref{eq:opt_prob}.
\end{proof}

\fullsupportlb*
\begin{proof}
Let $\bar \Lambda(\theta)$ be a set of all confusing MDPs from \pref{lem:non-empty_change_env}, that is, for every suboptimal $(s,a)$, $\bar \Lambda(\theta)$ contains exactly one confusing MDP that differs with $\theta$ only in the immediate reward at $(s,a)$. 
Consider now the relaxation of \pref{thm:lower_bound_gen} from \pref{lem:LP_(s,a)_relaxed} and further relax it by reducing the set of constraints induced by $\Lambda(\theta)$ to only the set of constraints induced by $\bar \Lambda(\theta)$:
\begin{equation*}
    \begin{aligned}
        \underset{\eta(s,a) \geq 0}{\operatorname{minimize}} \quad
        & 
        \sum_{s,a} \eta(s,a)\gap_{\theta}(s,a)
        \\
        \textrm{s.t.} \quad &
        \sum_{s,a} 
        \eta(s,a)
        KL(R_\theta(s,a), R_\lambda(s,a)) 
        \geq 1\qquad \textrm{for all } \,\,\lambda \in \bar \Lambda(\theta)
    \end{aligned}
\end{equation*}
Since all confusing MDPs only differ in rewards, we dropped the KL-term for the transition probabilities. We can simplify the constraints by noting that for each $\lambda$, only one KL-term is non-zero and it has value $\gap_{\theta}(s,a)^2$. Hence, we can write the problem above equivalently as
\begin{equation*}
    \begin{aligned}
        \underset{\eta(s,a) \geq 0}{\operatorname{minimize}} \quad
        & 
        \sum_{s,a} \eta(s,a)\gap_{\theta}(s,a)
        \\
        \textrm{s.t.} \quad &
        \eta(s,a)\gap_{\theta}(s,a)^2 \geq 1 \qquad \textrm{for all } \,\,(s,a) \in \Scal \times \Acal \textrm{ with }\,\, \gap_{\theta}(s,a) > 0 
    \end{aligned}
\end{equation*}
Rearranging the constraint as $\eta(s,a) \geq 1 / \gap_{\theta}(s,a)^2$, we see that the value is lower-bounded by \begin{align*}
   \sum_{s,a} \eta(s,a)\gap_{\theta}(s,a) \geq
   \sum_{s,a \colon \gap_\theta(s,a) > 0} \eta(s,a)\gap_{\theta}(s,a) \geq
   \sum_{s,a \colon \gap_\theta(s,a) > 0} \frac{1}{\gap_\theta(s,a)},
\end{align*}
which completes the proof.
\end{proof}

We note that because the relaxation in \pref{lem:LP_(s,a)_relaxed} essentially allows the algorithm to choose which state-action pairs to play instead of just policies, the final lower bound in \pref{thm:lower_bound_all_states_supp} may be loose, especially in factors of $H$. However, it is unlikely that the $\gap_{\min}$ term arising in the upper bound of \citet{simchowitz2019non} can be recovered. We conjecture that such a term can be avoided by algorithms, which do not construct optimistic estimators for the $Q$-function at each state-action pair but rather just work with a class of policies and construct only optimistic estimators of the return.

\subsection{Lower bounds for deterministic MDPs}
\label{app:lower_bounds_det}
We will show that we can derive lower bounds in two cases:
\begin{enumerate}
    \item We show that if the graph induced by the MDP is a tree, then we can formulate a finite LP which has value at most a polynomial factor of $H$ away from the value of LP~\ref{eq:opt_prob}.
    \item We show that if we assume that the value function for any policy is at most $1$ and the rewards of each state-action pair are at most $1$, then we can derive a closed form lower bound. This lower bound is also at most a polynomial factor of $H$ away from the solution to LP~\ref{eq:opt_prob}.
\end{enumerate} 

We begin by stating a helpful lemma, which upper and lower bounds the $KL$-divergence between two environments on any policy $\pi$. Since we consider Gaussian rewards with $\sigma =1/\sqrt{2}$ it holds that $KL(R_\theta(s,a),R_{\lambda}(s,a)) = (r_\theta(s,a) - r_\lambda(s,a))^2$. Further for any $\pi$ and $\lambda$ it holds that $KL(\theta(\pi),\lambda(\pi)) = \sum_{(s,a)\in\pi} KL(R_\theta(s,a),R_\lambda(s,a)) = \sum_{(s,a)\in\pi}(r_\theta(s,a) - r_\lambda(s,a))^2$. We can now show the following lower bound on $KL(\theta(\pi),\lambda(\pi))$.
\begin{lemma}
\label{lem:kl_lower_bound}
Fix $\pi$ and suppose $\lambda$ is such that $\pi^*_\lambda = \pi$. Then $(\return{*}-\return{\pi})^2 \geq KL(\theta(\pi),\lambda(\pi)) \geq \frac{(\return{*} - \return{\pi})^2}{H}$. 
\end{lemma}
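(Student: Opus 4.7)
The plan leverages that for deterministic transitions and Gaussian rewards of variance $1/2$, the per-episode KL divergence along the deterministic trajectory $(s_1^\pi, a_1^\pi), \ldots, (s_H^\pi, a_H^\pi)$ of $\pi$ factorizes by the chain rule into a sum of per-step reward KLs:
\[
KL(\PP_\theta^\pi, \PP_\lambda^\pi) \;=\; \sum_{i=1}^H \bigl(r_\theta(s_i^\pi, a_i^\pi) - r_\lambda(s_i^\pi, a_i^\pi)\bigr)^2 \;=\; \sum_{i=1}^H \Delta_i^2,
\]
where $\Delta_i := r_\lambda(s_i^\pi, a_i^\pi) - r_\theta(s_i^\pi, a_i^\pi)$, while the return satisfies $v_\lambda^\pi - v^\pi = \sum_i \Delta_i$ since rewards add up along a deterministic path.

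First I would extract a linear constraint on $\sum_i \Delta_i$. Because $\lambda \in \Lambda(\theta)$ satisfies $KL(\PP_\theta^{\pi^*_\theta}, \PP_\lambda^{\pi^*_\theta}) = 0$, the rewards of $\theta$ and $\lambda$ agree pointwise along $\pi^*_\theta$, hence $v_\lambda^{\pi^*_\theta} = v_\theta^{\pi^*_\theta} = v^*$. The hypothesis $\pi^*_\lambda = \pi$ then gives $v_\lambda^\pi \geq v_\lambda^{\pi^*_\theta} = v^*$, so $\sum_i \Delta_i \geq v^* - v^\pi$. The lower bound is immediate from Cauchy--Schwarz applied to $(\Delta_1, \ldots, \Delta_H)$ and $(1, \ldots, 1)$: $\sum_i \Delta_i^2 \geq \tfrac{1}{H}(\sum_i \Delta_i)^2 \geq \tfrac{(v^* - v^\pi)^2}{H}$.

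For the upper bound I would argue that $\lambda$ is (or may without loss be taken to be) a \emph{minimal} confusing modification in line with the construction preceding \pref{thm:lower_bound_deterministic}: raise the reward on a single edge exclusive to $\pi$ by exactly $v^* - v^\pi$, so that a single $\Delta_{i_0} = v^* - v^\pi$ is nonzero and $KL = (v^*-v^\pi)^2$. More generally, whenever $\Delta_i \geq 0$ with $\sum_i \Delta_i = v^*-v^\pi$, the inequality $\sum_i \Delta_i^2 \leq (\sum_i \Delta_i)^2 = (v^*-v^\pi)^2$ is elementary. The main subtlety, and the reason to be careful in the writeup, is that the upper bound fails for an arbitrary $\lambda$ with $\pi^*_\lambda = \pi$ (one can always inflate rewards further), so the proof must explicitly commit to this minimal confusing construction; once this choice of $\lambda$ is made, both directions drop out from the algebra above.
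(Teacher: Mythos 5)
Your proof is correct and follows essentially the same route as the paper's: the lower bound is the paper's observation that, subject to $\sum_i \Delta_i \geq \return{*} - \return{\pi}$ (obtained exactly as you do, from $\lambda$ agreeing with $\theta$ on $\pi^*_\theta$ and $\pi$ being optimal in $\lambda$), the sum of squared reward increments is minimized by the uniform spread $\Delta_i = (\return{*} - \return{\pi})/H$ — your Cauchy--Schwarz step makes this rigorous — while the upper bound corresponds to concentrating the whole increment on a single state--action pair. Your caveat is well taken: the upper inequality as literally stated fails for an arbitrary $\lambda$ with $\pi^*_\lambda = \pi$, and the paper's own proof likewise implicitly restricts to the minimal modification with nonnegative increments summing to exactly $\return{*} - \return{\pi}$, so making that restriction explicit, as you do, is the right call.
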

\begin{proof}
The second inequality follows from the fact that the optimization problem
\begin{align*}
    \minimize{\theta,\lambda \in \Lambda(\theta) : \pi^*_\lambda = \pi}{\sum_{(s,a)\in\pi}(r_\theta(s,a) - r_\lambda(s,a))^2}{\sum_{(s,a)\in\pi} r_{\lambda}(s,a) - r_\theta(s,a) \geq \return{*} - \return{\pi}},
\end{align*}
admits a solution at $\theta,\lambda$ for which $r_\lambda(s,a) - r_\theta(s,a) = \frac{\return{*} - \return{\pi}}{H}, \forall (s,a) \in \pi$. The first inequality follows from considering the optimization problem
\begin{align*}
    \maximize{\theta,\lambda \in \Lambda(\theta) : \pi^*_\lambda = \pi}{\sum_{(s,a)\in\pi}(r_\theta(s,a) - r_\lambda(s,a))^2}{\sum_{(s,a)\in\pi} r_{\lambda}(s,a) - r_\theta(s,a) \geq \return{*} - \return{\pi}},
\end{align*}
and the fact that it admits a solution at $\theta,\lambda$ for which there exists a single state-action pair $(s,a) \in \pi$ such that $r_\theta(s,a)-r_\lambda(s,a) = \return{*} - \return{\pi}$ and for all other $(s,a)$ it holds that $r_\lambda(s,a) = r_\theta(s,a)$.
\end{proof}

Using the above Lemma~\ref{lem:kl_lower_bound} we now show that we can restrict our attention only to environments $\lambda \in \Lambda(\theta)$ which make one of $\pi^*_{(s,a)}$ optimal and derive an upper bound on $C(\theta)$ which we will try to match, up to factors of $H$, later. Define the set $\tilde\Lambda(\theta) = \{\lambda \in \Lambda(\theta) : \exists (s,a)\in \Scal\times\Acal, \pi^*_{\lambda} = \pi^*_{(s,a)}\}$ and $\Pi^* = \{\pi \in \Pi,\pi\neq\pi^*_\theta: \exists (s,a) \in \Scal\times\Acal, \pi = \pi^*_{(s,a)}\}$. We have
\begin{lemma}
\label{lem:primal_opt_upper_bound}
Let $\tilde C(\theta)$ be the value of the optimization problem
\begin{equation}
\label{eq:opt_primal_relaxed}
    \begin{aligned}
        \minimize{\eta(\pi)\geq 0}{\sum_{\pi\in \Pi^*} \eta(\pi)(\return{*} - \return{\pi})}{ \sum_{\pi \in \Pi^*} \eta(\pi)KL(\theta(\pi),\lambda(\pi)) \geq 1,\forall \lambda \in \tilde\Lambda(\theta)}.
    \end{aligned}
\end{equation}
Then $\sum_{\pi \in \Pi^*} \frac{H}{\return{*} - \return{\pi}} \geq C(\theta) \geq \frac{\tilde C(\theta)}{H}$.
\end{lemma}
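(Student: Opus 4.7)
}
The two inequalities are proved independently. For the \emph{upper bound} $C(\theta)\leq \sum_{\pi\in\Pi^*}\frac{H}{\return{*}-\return{\pi}}$, I would exhibit a primal-feasible $\eta$ for LP~\ref{eq:opt_prob} realising this objective. The natural candidate is $\eta(\pi)=\frac{H}{(\return{*}-\return{\pi})^2}$ on $\Pi^*$ and zero otherwise, whose objective telescopes to the claimed value. To verify feasibility, fix $\lambda\in\Lambda(\theta)$ and an optimal $\pi^*_\lambda\in\Pi^*_\lambda\setminus\Pi^*_\theta$. Since rewards are Gaussian with variance $1/2$, $KL(\theta(\pi),\lambda(\pi))=\sum_{(s,a)\in\pi}(r_\theta-r_\lambda)^2$, and swapping the order of summation gives
\begin{align*}
\sum_{\pi^*\in\Pi^*}\eta(\pi^*)KL(\theta(\pi^*),\lambda(\pi^*))\geq \sum_{(s,a)\in\pi^*_\lambda\cap\Zcal_\theta}\eta(\pi^*_{(s,a)})\,(r_\theta(s,a)-r_\lambda(s,a))^2,
\end{align*}
where we used that $\pi^*_{(s,a)}\in\Pi^*$ visits $(s,a)$ whenever $(s,a)\in\Zcal_\theta$. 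Since $\pi^*_\lambda$ visits $(s,a)$ and $\pi^*_{(s,a)}$ is the best $\theta$-policy through $(s,a)$, $\return{\pi^*_{(s,a)}}\geq \return{\pi^*_\lambda}$, so $\eta(\pi^*_{(s,a)})\geq \frac{H}{(\return{*}-\return{\pi^*_\lambda})^2}$. Because $\lambda\in\Lambda(\theta)$ keeps rewards unchanged on every optimal $\theta$-trajectory, $\sum_{(s,a)\in\pi^*_\lambda\cap\Zcal_\theta}(r_\lambda-r_\theta)=\return{\pi^*_\lambda}_\lambda-\return{\pi^*_\lambda}_\theta\geq \return{*}-\return{\pi^*_\lambda}$, and Cauchy-Schwarz on at most $H$ terms then yields $\sum_{(s,a)\in\pi^*_\lambda\cap\Zcal_\theta}(r_\theta-r_\lambda)^2\geq \frac{(\return{*}-\return{\pi^*_\lambda})^2}{H}$. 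Multiplying the two lower bounds gives exactly $1$, so feasibility holds.

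For the \emph{lower bound} $C(\theta)\geq \tilde C(\theta)/H$, the plan is to pass through LP duality. The dual of LP~\ref{eq:opt_prob} is $\max\sum_\lambda\beta(\lambda)$ subject to $\sum_\lambda\beta(\lambda)KL(\theta(\pi),\lambda(\pi))\leq \return{*}-\return{\pi}$ for every $\pi\in\Pi$ and $\beta\geq 0$; the dual of LP~\ref{eq:opt_primal_relaxed} is analogous but restricted to $\tilde\Lambda(\theta)$ and $\Pi^*$. Strong duality for these semi-infinite LPs follows from the same compactness reasoning that legitimises \pref{thm:lower_bound_gen}, so both LPs equal their duals. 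Given an optimal dual $\tilde\beta$ of LP~\ref{eq:opt_primal_relaxed}, I would set $\beta(\lambda)=\tilde\beta(\lambda)/H$ on $\tilde\Lambda(\theta)$ and zero elsewhere, with objective $\tilde C(\theta)/H$. Dual feasibility of $\beta$ for LP~\ref{eq:opt_prob} is trivial for $\pi\in\Pi^*_\theta$ (zero KL) and immediate for $\pi\in\Pi^*$ (divide the $\tilde\beta$-constraint by $H$). For $\pi\in\Pi\setminus(\Pi^*\cup\Pi^*_\theta)$, writing $KL(\theta(\pi),\lambda(\pi))=\sum_{(s,a)\in\pi\cap\Zcal_\theta}(r_\theta-r_\lambda)^2$ and bounding each summand by $KL(\theta(\pi^*_{(s,a)}),\lambda(\pi^*_{(s,a)}))$ (since $(s,a)\in\pi^*_{(s,a)}$) yields
\begin{align*}
\sum_\lambda\tilde\beta(\lambda)KL(\theta(\pi),\lambda(\pi))\leq \sum_{(s,a)\in\pi\cap\Zcal_\theta}(\return{*}-\return{\pi^*_{(s,a)}})\leq H(\return{*}-\return{\pi}),
\end{align*}
where the last step uses $|\pi\cap\Zcal_\theta|\leq H$ together with $\return{\pi^*_{(s,a)}}\geq \return{\pi}$ (as $\pi$ itself visits $(s,a)$). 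Dividing by $H$ confirms the remaining LP~\ref{eq:opt_prob} dual constraint, so $\beta$ is feasible and $C(\theta)\geq \tilde C(\theta)/H$ follows.

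The main obstacle I anticipate is handling the LP~\ref{eq:opt_prob} dual constraint at policies $\pi\notin\Pi^*\cup\Pi^*_\theta$: a priori one might fear such $\pi$ require a blow-up larger than $H$, but the argument resolves this via the observation that in the deterministic-transition setting every $(s,a)\in\Zcal_\theta$ determines a policy $\pi^*_{(s,a)}\in\Pi^*$ whose return is at least $\return{\pi}$ whenever $\pi$ visits $(s,a)$, and $\pi$ contains at most $H$ such pairs. A secondary technical point is justifying strong duality for the semi-infinite LPs, which is handled by the compactness reduction already used in the proof of \pref{thm:lower_bound_gen}. Once these two ingredients are in place the rest of the argument is elementary Cauchy-Schwarz and term-by-term KL bounds.
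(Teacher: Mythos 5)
Your first inequality, $\sum_{\pi\in\Pi^*}\frac{H}{\return{*}-\return{\pi}}\geq C(\theta)$, is sound and essentially matches the paper's argument: the paper exhibits the same feasible point $\eta(\pi)=H/(\return{*}-\return{\pi})^2$ on $\Pi^*$ (zero elsewhere) and verifies the constraints via the bound $KL(\theta(\pi),\lambda(\pi))\geq(\return{*}-\return{\pi})^2/H$ for any $\lambda$ making $\pi$ optimal (its Lemma~\ref{lem:kl_lower_bound}), which is exactly your Cauchy--Schwarz step. Your version, which works term-by-term over state-action pairs and does not split into cases according to whether $\pi^*_\lambda\in\Pi^*$, is if anything slightly cleaner.

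The gap is in the second inequality, $C(\theta)\geq\tilde C(\theta)/H$. Your chain is $C(\theta)\geq D(\theta)\geq \tilde D(\theta)/H$ for the dual values, followed by $\tilde D(\theta)=\tilde C(\theta)$, and that last step is strong duality for the semi-infinite program \pref{eq:opt_primal_relaxed}. You assert it "follows from the same compactness reasoning that legitimises \pref{thm:lower_bound_gen}", but that reasoning only shows the infima defining the constraints are attained on the closures of the constraint-index sets; it says nothing about a zero duality gap. Weak duality alone gives $\tilde D(\theta)\leq\tilde C(\theta)$, which points the wrong way for your chain, and since $\tilde\Lambda(\theta)$ is in general an infinite index set, strong duality would need a constraint qualification you have not supplied. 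The paper sidesteps this entirely with a primal argument: given any feasible $\eta$ for \pref{eq:opt_prob}, move the mass $\eta(\pi)$ of each $\pi\notin\Pi^*$ onto the at most $H$ policies $\pi^*_{(s,a)}$ with $(s,a)\in\pi$; feasibility for \pref{eq:opt_primal_relaxed} follows from $KL(\theta(\pi),\lambda(\pi))\leq\sum_{(s,a)\in\pi}KL(\theta(\pi^*_{(s,a)}),\lambda(\pi^*_{(s,a)}))$, and the objective grows by at most a factor $H$ because $\return{*}-\return{\pi^*_{(s,a)}}\leq\return{*}-\return{\pi}$, whence $\tilde C(\theta)\leq H\,C(\theta)$. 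These are precisely the two inequalities you already prove in your dual-feasibility check, so the repair is simply to deploy them on the primal side, after which no duality theory is needed.
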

\begin{proof}
We begin by showing $C(\theta) \geq \frac{\tilde C(\theta)}{H}$ holds. Fix a $\pi \not\in\Pi^*$ s.t. the solution of LP~\ref{eq:opt_prob} implies $\eta(\pi)>0$. Let $\lambda \in \tilde\Lambda(\theta)$ be a change of environment for which $KL(\theta(\pi),\lambda(\pi))>0$. We can now shift all of the weight of $\eta(\pi)$ to $\eta(\pi^*_\lambda)$ while still preserving the validity of the constraint. Further doing so to all $\pi^*_{(s,a)}$ for which $\pi^*_{(s,a)} \cap \pi \neq \emptyset$ will not increase the objective by more than a factor of $H$ as $\return{*} - \return{\pi} \geq \frac{1}{H}\sum_{(s,a) \in \pi} \return{*} - \return{\pi^*_{(s,a)}}$. Thus, we have converted the solution to LP~\ref{eq:opt_prob} to a feasible solution to LP~\ref{eq:opt_primal_relaxed} which is only a factor of $H$ larger.

Next we show that $\sum_{\pi \in \Pi^*}\frac{H}{\return{*} - \return{\pi}} \geq C(\theta)$. Set $\eta(\pi) = 0,\forall \pi \in \Pi\setminus \Pi^*$ and set $\eta(\pi) = \frac{H}{(\return{*} - \return{\pi})^2}, \forall \pi \in \Pi^*$.
If $\pi$ is s.t. $\eta(\pi) > 0$ then for any $\lambda$ which makes $\pi$ optimal it holds that 
\begin{align*}
    1 &\leq \frac{H}{(\return{*} - \return{\pi^*_\lambda})^2} \times \frac{(\return{*} - \return{\pi^*_\lambda})^2}{H} \leq \frac{H}{(\return{*} - \return{\pi^*_\lambda})^2} KL(\theta(\pi^*_\lambda),\lambda(\pi^*_\lambda))\\
    &= \eta(\pi^*_\lambda) KL(\theta(\pi^*_\lambda),\lambda(\pi^*_\lambda)) \leq \sum_{\pi' \in \Pi} \eta(\pi')KL(\theta(\pi'),\lambda(\pi')),
\end{align*}
where the second inequality follows from Lemma~\ref{lem:kl_lower_bound}.
Next, if $\pi$ is s.t. $\eta(\pi) = 0$ then for any $\lambda$ which makes $\pi$ optimal it holds that
\begin{align*}
    \sum_{\pi' \in \Pi} \eta(\pi')KL(\theta(\pi'),\lambda(\pi')) &\geq \sum_{(s,a) \in \pi^*_\lambda} \eta(\pi^*_{(s,a)}) KL(\theta(\pi^*_{(s,a)}),\lambda(\pi^*_{(s,a)}))\\
    &=\sum_{(s,a) \in \pi^*_\lambda} \frac{H}{(\return{*} - \return{\pi^*_{(s,a)}})^2}KL(\theta(\pi^*_{(s,a)}),\lambda(\pi^*_{(s,a)}))\\
    &\geq \frac{H}{(\return{*} - \return{\pi^*_{\lambda}})^2}\sum_{(s,a) \in \pi^*_\lambda} KL(\theta(\pi^*_{(s,a)}),\lambda(\pi^*_{(s,a)}))\\
    &\geq \frac{H}{(\return{*} - \return{\pi^*_{\lambda}})^2}\sum_{(s,a) \in \pi^*_\lambda} KL(R_\theta(s,a),R_\lambda(s,a))\\
    &= \frac{H}{(\return{*} - \return{\pi^*_{\lambda}})^2} KL(\theta(\pi^*_{\lambda}),\lambda(\pi^*_\lambda))\geq 1,
\end{align*}
where the second inequality follows from the fact that $\return{\pi^*_\lambda} \leq \return{\pi^*_{(s,a)}},\forall (s,a) \in \pi^*_{\lambda}$.
\end{proof}

\subsubsection{Lower bound for Markov decision processes with bounded value function}

\begin{lemma}
\label{lem:confusing_mdps_det}
Let $\Theta$ be the set of all episodic MDPs with Gaussian immediate rewards and optimal value function uniformly bounded by $1$. Consider an MDP $\theta \in \Theta$ with deterministic transitions. Then, for any reachable state-action pair $(s,a)$ that is not visited by any optimal policy, there exists a confusing MDP $\lambda \in \Lambda(\theta)$ with
\begin{itemize}
    \item $\lambda$ and $\theta$ only differ in the immediate reward at $(s,a)$
    \item $KL(\PP_\theta^\pi, \PP_\lambda^\pi) = w_\theta^\pi(s,a) (\return{*}_\theta - \return{\pi_{(s,a)}^*}_\theta)^2$ for all $\pi \in \Pi$ where $\return{\pi_{(s,a)}^*}_\theta = \max_{\pi \colon w^\pi(s,a) > 0} \return{\pi}_\theta$.
\end{itemize}
\end{lemma}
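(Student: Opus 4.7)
The plan is to construct $\lambda$ by copying $\theta$ and raising the mean of the reward distribution at the single state-action pair $(s,a)$ by $\Delta := \return{*}_\theta - \return{\pi^*_{(s,a)}}_\theta > 0$, where $\pi^*_{(s,a)} \in \argmax_{\pi \colon w^\pi_\theta(s,a)>0} \return{\pi}_\theta$. Because transitions are deterministic ($w^\pi(s,a) \in \{0,1\}$), modifying only this reward shifts $\return{\pi}$ by exactly $w^\pi_\theta(s,a)\cdot\Delta$. Hence $\return{\pi^*_{(s,a)}}_\lambda = \return{*}_\theta$; any other policy that visits $(s,a)$ has return at most $\return{\pi^*_{(s,a)}}_\theta + \Delta = \return{*}_\theta$ in $\lambda$; and any policy that avoids $(s,a)$ keeps its return unchanged. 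Thus $\pi^*_{(s,a)}$ attains the maximum return in $\lambda$. Following the greedification trick used in \pfref{lem:non-empty_change_env}, I would redefine $\pi^*_{(s,a)}$ on states it does not visit so that the Bellman equation holds everywhere; the resulting policy lies in $\Pi^*_\lambda$, and since it visits $(s,a)$, it lies outside $\Pi^*_\theta$ by the hypothesis on $(s,a)$.

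Next, I would check that $\lambda \in \Theta$, i.e., $V^*_\lambda < 1$ uniformly. For states $s'$ that are not ancestors of $s$, we have $V^*_\lambda(s') = V^*_\theta(s') < 1$ trivially. For an ancestor $s'$, I write $V^*_\lambda(s') \leq \max(V^*_\theta(s'),\, u(s') + \Delta)$, where $u(s')$ denotes the best $\theta$-value attainable from $s'$ by policies visiting $(s,a)$. Determinism of transitions gives a unique path from $s_1$ to $s'$ accumulating some reward $\rho \geq 0$; appending any maximizer for $u(s')$ to this path yields an $(s,a)$-visiting policy from $s_1$ with return $\rho + u(s') \leq \return{\pi^*_{(s,a)}}_\theta = \return{*}_\theta - \Delta$. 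Hence $u(s') + \Delta \leq \return{*}_\theta - \rho < 1$, so $V^*_\lambda(s') < 1$.

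Third, since $\theta$ and $\lambda$ share transition kernels and share reward distributions everywhere except at $(s,a)$, the standard episodic KL decomposition (exactly as in the derivation of \pref{eq:kl-expand}) reduces to
\begin{align*}
KL(\PP_\theta^\pi, \PP_\lambda^\pi) \;=\; w^\pi_\theta(s,a) \cdot KL(R_\theta(s,a), R_\lambda(s,a)) \;=\; w^\pi_\theta(s,a)\cdot \Delta^2,
\end{align*}
where the second equality uses that both rewards are Gaussians with variance $1/2$ whose means differ by $\Delta$. This is exactly the claimed formula; it vanishes for every $\pi \in \Pi^*_\theta$ (which cannot visit $(s,a)$), so $\lambda \in \Lambda(\theta)$.

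The main obstacle I anticipate is the bookkeeping in the second paragraph: determinism has to be exploited carefully to rule out $V^*_\lambda(s')$ exceeding $1$ at ancestors of $s$, even though the reward boost $\Delta$ is applied along an $(s,a)$-visiting trajectory. The key fact is that prefixing the unique $\theta$-path from $s_1$ to $s'$ in front of any $(s,a)$-visiting sub-policy out of $s'$ yields a full-length $(s,a)$-visiting policy from $s_1$ whose return is at most $\return{\pi^*_{(s,a)}}_\theta$; once this is established, everything else is a direct verification that the proposed $\lambda$ satisfies the properties asserted in the lemma.
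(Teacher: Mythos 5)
Your proposal is correct and follows essentially the same route as the paper: raise the mean reward at $(s,a)$ by $\Delta = \return{*}_\theta - \return{\pi^*_{(s,a)}}_\theta$, use determinism to get $\return{\pi}_\lambda = \return{\pi}_\theta + w^\pi_\theta(s,a)\Delta$, verify $V^*_\lambda \leq V^*_\theta < 1$ via the prefix-path argument, and compute the KL from the Gaussian reward at the single modified pair. Your explicit greedification step (to ensure the new optimal policy satisfies the Bellman equation everywhere, hence lies in $\Pi^*_\lambda \setminus \Pi^*_\theta$) is slightly more careful than the paper's write-up but does not change the argument.
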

\begin{proof}
Let $(s,a) \in \Scal \times \Acal$ be any state-action pair that is not visited by any optimal policy. Then $\return{\pi_{(s,a)}^*}_\theta = \max_{\pi \colon w^\pi(s,a) > 0} \return{\pi}_\theta \leq \return{*}_\theta$ is strictly suboptimal in $\theta$. Let $\tilde \pi$ be any policy that visits $(s,a)$ and achieves the highest return $\return{\pi_{(s,a)}^*}_\theta$ in $\theta$ possible among such policies. 

Define $\lambda$ to be the MDP that matches $\theta$ except in the immediate reward at $(s,a)$, which we set as $R_\lambda(s,a) = \Ncal(r_\theta(s,a) + \Delta, 1/2)$ with $\Delta = \return{*}_\theta - \return{\pi_{(s,a)}^*}_\theta$. That is, the expected reward of $\lambda$ in $(s,a)$ is raised by $\Delta$.
For any policy $\pi$, it then holds
\begin{align*}
    KL(\PP_\theta^\pi, \PP_\lambda^\pi) &= w_\theta^\pi(s,a) KL(R_\theta(s,a), R_\lambda(s,a))\\
    \return{\pi}_\lambda &=  w_\theta^\pi(s,a) \Delta + \return{\pi}_\theta
\end{align*}
due to the deterministic transitions. Hence, while $\return{*}_\lambda = \return{*}_\theta$ and all optimal policies of $\theta$ are still optimal in $\lambda$, now policy $\tilde \pi$, which is not optimal in $\theta$ is optimal in $\lambda$.

By the choice of Gaussian rewards with variance $1/2$, we have $KL(R_\theta(s,a), R_\lambda(s,a)) = (\return{*}_\theta - \return{\pi_{(s,a)}^*}_\theta)^2$ and thus $KL(\PP_\theta^\pi, \PP_\lambda^\pi) = w_\theta^\pi(s,a) (\return{*}_\theta - \return{\pi_{(s,a)}^*}_\theta)^2$ for all $\pi \in \Pi$.

It only remains to show that $\lambda \in \Theta$, i.e., that all immediate rewards and optimal value function is bounded by $1$. For rewards, we have
\begin{align*}
    r_\lambda(s,a) = r_\theta(s,a) + \Delta = r_\theta(s,a) + \return{*}_\theta - \return{\pi_{(s,a)}^*}_\theta
    = \return{*}_\theta - \underset{\geq 0}{\underbrace{(\return{\pi_{(s,a)}^*}_\theta - r_\theta(s,a))}} \leq \return{*}_\theta \leq 1
\end{align*}
for $(s,a)$ and for all other $(s', a')$, $r_\lambda(s', a') = r_\theta(s', a') \leq 1$. Finally, the value function at any reachable state is bounded by the optimal return $\return{*}_\lambda = \return{*}_\theta \leq 1$ and for any unreachable state, the optimal value function of $\lambda$ is identical to the optimal value function of $\theta$. Hence, $\lambda \in \Theta$.
\end{proof}

\lowerbounddeterministic*
\begin{proof}
The proof 
works by first relaxing the general LP~\ref{eq:opt_prob} and then considering its dual. We now define the set $\check\Lambda(\theta)$ which consists of all changes of environment which make $\pi^*_{(s,a)}$ optimal by only changing the distribution of the reward at $(s,a)$ by making it $\return{*}_\theta - \return{\pi^*_{(s,a)}}_\theta$ larger.
Formally, the set is defined as 
\begin{align*}
    \check\Lambda(\theta) = \big\{\lambda_{(s,a)}\colon \lambda \in \Lambda(\theta), KL(R_\theta(s,a), R_{\lambda}(s,a)) = (\return{*}_\lambda - \return{\pi^*_{(s,a)}})^2,\\ KL(R_\theta(s',a'), R_{\lambda}(s',a')) = 0,KL(P_\theta(s',a'), P_{\lambda}(s',a')) = 0,\forall (s',a')\neq (s,a)\big\}.
\end{align*}
This set is guaranteed to be non-empty (for any reasonable MDP) by Lemma~\ref{lem:confusing_mdps_det}.
The relaxed LP is now give by
\begin{equation}
    \begin{aligned}
        \minimize
        {\eta(\pi)\geq 0}
        {
        \sum_{\pi \in \Pi} \eta(\pi)(\return{*}_\theta - \return{\pi}_\lambda)
        }
        {
        \sum_{\pi \in \Pi} \eta(\pi)KL(\PP^\pi_\theta,\PP^\pi_\lambda) \geq 1
        \qquad \textrm{for all }  \lambda \in \check\Lambda(\theta)
        }.
    \end{aligned}
\end{equation}
The dual of the above LP is given by
\begin{equation}
\label{eq:opt_dual_gen}
    \begin{aligned}
        \maximize
        {\mu(\lambda)\geq 0}
        {
        \sum_{\lambda \in \check\Lambda(\theta)} \mu(\lambda)
        }
        {
        \sum_{\lambda \in \check \Lambda(\theta)} \mu(\lambda)KL(\PP^\pi_\theta,\PP^\pi_\lambda) \leq \return{*}_\theta - \return{\pi}_\theta
        \qquad \textrm{for all }  \pi \in \Pi
        }.
    \end{aligned}
\end{equation}
By weak duality, the value of any feasible solution to \pref{eq:opt_dual_gen} produces a lower bound on $C(\theta)$ in \pref{thm:lower_bound_gen}.
Let 
\begin{align*}
    \Xcal = \{ (s,a) \in \Scal \times \Acal \colon  w^\pi_\theta(s,a) = 0 \textrm{ for all }\pi \in \Pi^*_\theta \textrm{ and }  w^\pi_\theta(s,a) > 0 \textrm{ for some } \pi \in \Pi \setminus \Pi^*_\theta \}
\end{align*}
be the set of state-action pairs that are reachable in $\theta$ but no optimal policy visits.
Then consider a dual solution $\mu$ that puts $0$ on all confusing MDPs except on the $|\Xcal|$ many MDPs from \pref{lem:confusing_mdps_det}. Since each such confusing MDP is associated with an $(s,a) \in \Xcal$, we can rewrite $\mu$ as a mapping from $\Xcal$ to $\RR$ sending $(s,a) \rightarrow \lambda_{(s,a)}$. Specifically, we set
\begin{align*}
    \mu(s,a) &= \frac{1}{H} \left(\return{*}_\theta - \return{\pi_{(s,a)}^*}_\theta\right)^{-1} & \textrm{for all } & (s,a) \in \Xcal.
\end{align*}
To show that this $\mu$ is feasible, consider the LHS of the constraints in \pref{eq:opt_dual_gen}
\begin{align*}
    \sum_{\lambda \in \check \Lambda(\theta)} \mu(\lambda)KL(\PP^\pi_\theta,\PP^\pi_\lambda)
    &= \sum_{(s,a) \in \Xcal} \frac{1}{H} \left(\return{*}_\theta - \return{\pi_{(s,a)}^*}_\theta\right)^{-1} KL(\PP^\pi_\theta,\PP^\pi_{(s,a)})\\
      &= \sum_{(s,a) \in \Xcal} \frac{1}{H} \left(\return{*}_\theta - \return{\pi_{(s,a)}^*}_\theta\right)^{-1} w_\theta^\pi(s,a) (\return{*}_\theta - \return{\pi_{(s,a)}^*}_\theta)^2\\
    &= \sum_{(s,a) \in \Xcal} \frac{1}{H} w_\theta^\pi(s,a) (\return{*}_\theta - \return{\pi_{(s,a)}^*}_\theta)
\end{align*}
where the first equality applies our definition of $\mu$ and the second uses the expression for the KL-divergence from \pref{lem:confusing_mdps_det}. By definition of $\return{\pi_{(s,a)}^*}_\theta$, we have $\return{\pi_{(s,a)}^*}_\theta \geq \return{\pi}_\theta$ for all policies $\pi$ with $w_\theta^\pi(s,a) > 0$. Thus, 
\begin{align*}
\sum_{(s,a) \in \Xcal} \frac{1}{H} w_\theta^\pi(s,a) (\return{*}_\theta - \return{\pi_{(s,a)}^*}_\theta)
&\leq 
\sum_{(s,a) \in \Xcal} \frac{1}{H} w_\theta^\pi(s,a) (\return{*}_\theta - \return{\pi}_\theta)
\\
& \leq \return{*}_\theta - \return{\pi}_\theta
\end{align*}
where the second inequality holds because each policy visits at most $H$ states. Thus proves that $\mu$ defined above is indeed feasible. Hence, its objective value
\begin{align*}
    \sum_{\lambda \in \Lambda(\theta)} \mu(\lambda)
    = \sum_{(s,a) \in \Xcal} \frac{1}{H} \left(\return{*}_\theta - \return{\pi_{(s,a)}^*}_\theta\right)
\end{align*}
is a lower-bound for $C(\theta)$ from \pref{thm:lower_bound_gen} which finishes the proof.
\end{proof}

\subsubsection{Tree-structured MDPs}
Even though Lemma~\ref{lem:primal_opt_upper_bound} restricts the set of confusing environments from $\Lambda(\theta)$ to $\tilde\Lambda(\theta)$, this set could still have exponential or even infinite cardinality. In this section we show that for a type of special MDPs we can restrict ourselves to a finite subset of $\tilde\Lambda(\theta)$ of size at most $SA$. 

Arrange $\pi^*_{(s,a)},(s,a)\in\Scal\times\Acal$ according to the value functions $\return{\pi^*_{(s,a)}}$. Under this arrangement let $\pi_1 \succeq \pi_2 \succeq,\ldots,\succeq \pi_m$. Let $\pi_0 = \pi_\theta^*$. We will now construct $m$ environments $\lambda_1,\ldots,\lambda_m$, which will constitute the finite subset. We begin by constructing $\lambda_1$ as follows. Let $\mathcal{B}_1$ be the set of all $(s_h,a_h) \in \pi_1$ and $(s_h,a_h)\not\in \pi_0$. Arrange the elements in $\mathcal{B}_1$ in inverse dependence on horizon $(s_{h_1},a_{h_1}) \preceq (s_{h_2},a_{h_2}) \preceq \ldots \preceq (s_{h_{H_1}},a_{h_{H_1}})$, where $H_1 = |\mathcal{B}_1|$, so that $h_1 > h_2 >,\ldots, h_{H_1}$. Let $\lambda_1$ be the environment which sets 
\begin{align*}
    R_{\lambda_1}(s_{h_1},a_{h_1}) &= \min(1,\return{\pi_0} - \return{\pi_1})\\
    R_{\lambda_1}(s_{h_2},a_{h_2}) &= \min(1,\max(R_{\theta}(s_{h_2},a_{h_2}),R_{\theta}(s_{h_2},a_{h_2})+\return{\pi_0} - (\return{\pi_1} - R_{\theta}(s_{h_1},a_{h_1})) - 1)))\\
    &\vdots\\
    R_{\lambda_1}(s_{h_i},a_{h_i}) &= \min(1,\max(R_{\theta}(s_{h_i},a_{h_i}),R_{\theta}(s_{h_i},a_{h_i})+\return{\pi_0} - (\return{\pi_1} - \sum_{\ell=1}^i R_{\theta}(s_{h_\ell},a_{h_\ell})) - i))\\
    &\vdots
\end{align*}
Clearly $\lambda_1$ makes $\pi_1$ optimal and also does not change the value of any state-action pair which belongs to $\pi_0$ so it agrees with $\theta$ on $\pi_0$. Further $\pi_2,\pi_3,\ldots,\pi_m$ are still suboptimal policies under $\lambda_1$. This follows from the fact that for any $i>1$, $\return{\pi_1} > \return{\pi_i}$ and there exists $(s,a)$ such that $(s,a)\in \pi_i$ but $(s,a)\not \in \pi_1$ so $R_{\lambda_1}(s,a) = R_{\theta}(s,a)$. Further $\lambda_1$ only increases the rewards for state-action pairs in $\pi_1$ and hence $\return{\pi_1}_{\lambda_1} > \return{\pi_i}_{\lambda_1}$. Notice that there exists an index $\tilde H_1$ at which $R_{\lambda_1}(s_{h_{\tilde H_1}}, a_{h_{\tilde H_1}}) = \return{\pi_0} - (\return{\pi_1} - \sum_{\ell=1}^{\tilde H_1} R_{\theta}(s_{h_\ell},a_{h_\ell})) - \tilde H_1) \geq R_{\theta}(a_{\tilde H_1},s_{\tilde H_1})$. For this index it holds that for $h < \tilde H_1$, $R_{\lambda_1}(s_h,a_h) = 1$ and for $h> \tilde H_1$, $R_{\lambda_1}(s_h,a_h) = R_{\theta}(s_h,a_h)$.

Let 
\begin{align*}
    \Bcal_i &= \{(s,a) \in \pi_i : (s,a)\not\in \bigcup_{\ell < i} \pi_\ell\}\\
    \tilde\Bcal_i &= \{(s,a) \in \pi_i : (s,a)\in \bigcup_{\ell < i} \pi_\ell\}.
\end{align*} 
We first define an environment $\tilde\lambda_i$ on $(s,a)\in \tilde\Bcal_i$ as follows. $R_{\lambda_{i}}(s,a) = R_{\lambda_{\ell}}(s,a)$, where $\ell < i$ is such that $(s,a) \in \mathcal{B}_\ell$.
Let $\return{\pi_i}_{\tilde\lambda_i}$ be the value function of $\pi_i$ with respect to $\tilde\lambda_i$.
\begin{lemma}
\label{lem:val_func_comp}
It holds that $\return{\pi_i}_{\tilde\lambda_i} \leq \return{\pi_0}$.
\end{lemma}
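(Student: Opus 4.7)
The plan is to write
\[
\return{\pi_i}_{\tilde\lambda_i} \;=\; \sum_{(s,a)\in\pi_i} R_{\tilde\lambda_i}(s,a)
\]
and use the partition $\pi_i = \tilde\Bcal_i \sqcup \Bcal_i$, combined with the tree structure of the MDP, to show that $R_{\tilde\lambda_i}$ actually coincides with $R_\theta$ on every $(s,a) \in \pi_i$. On $\Bcal_i$ this holds by the convention that $\tilde\lambda_i$ defaults to $\theta$ outside $\tilde\Bcal_i$; on $\tilde\Bcal_i$ the task reduces to showing that, for each $\ell<i$, the modification made by $\lambda_\ell$ does not touch any state-action pair lying on $\pi_i$'s trajectory.

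The key quantitative input is a direct inspection of the construction of $\lambda_\ell$ under the section's standing hypothesis that the optimal value function is bounded by $1$ and rewards lie in $[0,1]$. Because all rewards are non-negative, $\return{\pi_\ell} \geq r_\theta(s^\ell_H, a^\ell_H)$, and therefore the required boost satisfies
\[
\return{\pi_0} - \return{\pi_\ell} \;\leq\; \return{\pi_0} - r_\theta(s^\ell_H, a^\ell_H) \;\leq\; 1 - r_\theta(s^\ell_H, a^\ell_H),
\]
which is precisely the remaining capacity of the cap-at-one at the terminal position $(s^\ell_H, a^\ell_H)$ of $\pi_\ell$. Consequently the construction places the entire boost at that single pair and leaves $R_{\lambda_\ell} = R_\theta$ everywhere else on $\Bcal_\ell$. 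The tree property finishes the argument: since the transition graph is a tree and $\pi_i \neq \pi_\ell$, the two trajectories must diverge at some layer strictly less than $H$ and thereafter lie in disjoint subtrees, so $(s^\ell_H, a^\ell_H) \notin \pi_i$.

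Combining these observations, $R_{\tilde\lambda_i}(s,a) = R_\theta(s,a)$ for every $(s,a) \in \pi_i$, and hence $\return{\pi_i}_{\tilde\lambda_i} = \return{\pi_i} \leq \return{\pi_0}$, which is the desired inequality. The main obstacle, should the construction of $\lambda_\ell$ be interpreted to distribute the boost across multiple positions of $\Bcal_\ell$ (as would be needed if one dropped the normalization $V^*\leq 1$), is controlling the cumulative boost that several $\lambda_\ell$'s contribute to contiguous pieces of $\pi_i$'s prefix. I would then proceed by induction on $i$, using the hypothesis $\return{\pi_j}_{\tilde\lambda_j} \leq \return{\pi_0}$ for $j<i$ together with the observation that, in a tree, the index $\ell(k) = \min\{\ell : (s^{\pi_i}_k, a^{\pi_i}_k) \in \pi_\ell\}$ is monotone non-decreasing along $\pi_i$'s trajectory; this monotonicity organizes $\tilde\Bcal_i$ into contiguous segments $\pi_i \cap \Bcal_\ell$ on which a telescoping argument bounds each contribution by the slack that $\lambda_\ell$ cannot absorb at positions of $\Bcal_\ell$ past $\pi_i$'s divergence from $\pi_\ell$.
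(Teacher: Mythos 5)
There is a genuine gap, and it sits in the step you yourself flag as the crux. Your main argument hinges on the bound $\return{\pi_0}-\return{\pi_\ell}\leq 1-r_\theta(s^\ell_H,a^\ell_H)$, which requires $\return{\pi_0}\leq 1$. But the tree-structured case is explicitly presented as an \emph{alternative} to the bounded-value-function case: the appendix on deterministic MDPs lists ``the graph induced by the MDP is a tree'' and ``the value function for any policy is at most $1$'' as two separate settings, and the normalization $V^*\leq 1$ is only invoked in the second. Indeed, if $\return{\pi_0}\leq 1$ were in force here, the entire spillover construction of $\lambda_1$ (the index $\tilde H_1$, the rewards saturated at $1$ for $h<\tilde H_1$) would be vacuous — the construction is written precisely because the boost $\return{\pi_0}-\return{\pi_\ell}$ can exceed the unit capacity of a single reward. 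In that regime your claim that $R_{\tilde\lambda_i}=R_\theta$ on all of $\pi_i$ is simply false: earlier environments $\lambda_\ell$ do modify rewards on the prefix that $\pi_i$ shares with $\pi_\ell$, and the lemma is exactly the statement that these inherited increases do not push $\pi_i$'s return past $\return{\pi_0}$. Your fallback paragraph acknowledges this but only gestures at an induction with a ``telescoping argument''; no inequality is actually derived, so it cannot be credited as a proof.

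For comparison, the paper's argument handles the general case as follows: it identifies the earlier policy $\pi_j$ ($j<i$, so $\return{\pi_j}\geq\return{\pi_i}$) sharing the longest prefix with $\pi_i$, so that $\tilde\lambda_i$ agrees with $\lambda_j$ on that shared prefix while $\pi_i$'s remaining suffix lies in $\Bcal_i$ and keeps its $\theta$-rewards. The suffix of $\pi_i$ under $\theta$ is dominated by the corresponding suffix of $\pi_j$ (since the prefixes coincide and $\return{\pi_j}_\theta\geq\return{\pi_i}_\theta$), which in turn is dominated by $\pi_j$'s suffix under $\lambda_j$; summing with the common prefix gives $\return{\pi_i}_{\tilde\lambda_i}\leq\return{\pi_j}_{\lambda_j}=\return{\pi_0}$, the last equality because $\lambda_j$ was constructed to make $\pi_j$ optimal with return $\return{\pi_0}$. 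Your monotonicity observation about $\ell(k)=\min\{\ell:(s^{\pi_i}_k,a^{\pi_i}_k)\in\pi_\ell\}$ is correct and is essentially the tree fact the paper also uses, but to complete your route you would still need to carry out the comparison against this maximal-overlap $\pi_j$ rather than bounding the contributions of each $\lambda_\ell$ separately.
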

\begin{proof}
Let $\tilde H_i$ be the index for which it holds that for $\ell \leq \tilde H_i$, $(s_{h_\ell},a_{h_\ell}) \in \pi_i \iff (s_{h_\ell},a_{h_\ell}) \in \Bcal_i$. Such a $\tilde H_i$ exists as there is a unique sub-tree $\Mcal_i$, of maximal depth, for which it holds that if $\pi_j \bigcap \Mcal_i \neq \emptyset \iff \pi_i \succeq \pi_j$. The root of this subtree is exactly at depth $H - h_{\tilde H_i}$. Let $\pi_j$ be any policy such that $\pi_j \succeq \pi_i$ and $\exists (s_{h_{\tilde H_i}},a_{h_{\tilde H_i}}) \in \pi_j$. By the maximality of $\Mcal_i$ such a $\pi_j$ exists. Because of the tree structure it holds that for any $h' > h_{\tilde H_i}$ if $(s_{h'},a_{h'}) \in \pi_{i} \implies (s_{h'},a_{h'}) \in \pi_{j}$ and hence $\tilde\lambda_i = \lambda_j$ up to depth $h_{\tilde H_i}$. Since $\pi_i$ and $\pi_j$ match up to depth $H - h_{\tilde H_i}$ and $\pi_j \succeq \pi_i$ it also holds that 
\begin{align*}
    \sum_{\ell \leq \tilde H_i} R_{\lambda_j}(s_{h_\ell}^{\pi_j},a_{h_\ell}^{\pi_j}) \geq \sum_{\ell \leq \tilde H_i} R_{\theta}(s_{h_\ell}^{\pi_j},a_{h_\ell}^{\pi_j}) \geq \sum_{\ell \leq \tilde H_i} R_{\theta}(s_{h_\ell}^{\pi_i},a_{h_\ell}^{\pi_i}) = \sum_{\ell \leq \tilde H_i} R_{\tilde\lambda_i}(s_{h_\ell}^{\pi_i},a_{h_\ell}^{\pi_i}).
\end{align*}
Since $\pi_j$ is optimal under $\lambda_j$ the claim holds.
\end{proof}
For all $(s_{h_j},a_{h_j}) \in \Bcal_i$ we now set
\begin{align}
\label{eq:ith_env_constr}
    R_{\lambda_i}(s_{h_j},a_{h_j}) = \min(1,\max(R_{\theta}(s_{h_j},a_{h_j}), R_{\theta}(s_{h_j},a_{h_j}) + \return{\pi_0} - (\return{\pi_i}_{\tilde\lambda_i} - \sum_{\ell=1}^j R_{\tilde\lambda_i}(s_{h_\ell},a_{h_\ell})) -j )),
\end{align}
and for all $(s_h,a_h) \in \tilde\Bcal_i$ we set $R_{\lambda_i}(s_h,a_h) = R_{\tilde\lambda_i}(s_h,a_h)$. From the definition of $\tilde\Bcal_i$ it follows that $\lambda_i$ agrees with all $\lambda_j$ for $j\leq i$ on state-action pairs in $\pi_i$. Finally we need to show that the construction in Equation~\ref{eq:ith_env_constr} yields an environment $\lambda_i$ for which $\pi_i$ is optimal.
\begin{lemma}
\label{lem:opt_of_lambdai}
Under $\lambda_i$ it holds that $\pi_i$ is optimal.
\end{lemma}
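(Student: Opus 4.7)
The plan is to show $\return{\pi_i}_{\lambda_i} = \return{*}_\theta$ and that every other policy has return at most $\return{*}_\theta$ under $\lambda_i$; together these give optimality of $\pi_i$ under $\lambda_i$.

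First I would compute $\return{\pi_i}_{\lambda_i}$ by splitting the sum along $\pi_i$ into the contributions from $\tilde{\mathcal{B}}_i$ and $\mathcal{B}_i$:
\begin{align*}
\return{\pi_i}_{\lambda_i} = \sum_{(s,a)\in \tilde{\mathcal{B}}_i} R_{\tilde\lambda_i}(s,a) + \sum_{(s,a) \in \mathcal{B}_i} R_{\lambda_i}(s,a).
\end{align*}
The first sum equals $\return{\pi_i}_{\tilde\lambda_i}$ by construction of $\tilde\lambda_i$, and \pref{lem:val_func_comp} bounds it by $\return{\pi_0} = \return{*}_\theta$; hence the deficit $\Delta_i := \return{*}_\theta - \return{\pi_i}_{\tilde\lambda_i}$ is nonnegative. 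The recurrence \pref{eq:ith_env_constr} is precisely designed to absorb $\Delta_i$ by greedy, deepest-first reallocation on $\mathcal{B}_i$: it raises the deepest pair up to at most $1$ and carries any residual to the next pair. An induction over $(s_{h_j},a_{h_j})\in\mathcal{B}_i$ ordered by inverse depth will then show that the total added reward equals $\Delta_i$, provided $\sum_{(s,a)\in\mathcal{B}_i}(1-R_\theta(s,a)) \geq \Delta_i$; this slack condition is automatic because $\Delta_i \leq \return{*}_\theta \leq 1$ and $\mathcal{B}_i$ is nonempty (otherwise $\pi_i$ would coincide state-for-state with some earlier $\pi_\ell$, contradicting the strict ordering of returns).

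Second, for any $\pi'\in\Pi$, I would use that $R_{\lambda_i} = R_\theta$ off of $\pi_i$ and $R_{\lambda_i} \geq R_\theta$ on $\pi_i$ to write
\begin{align*}
\return{\pi'}_{\lambda_i} = \return{\pi'}_\theta + \sum_{(s,a)\in \pi'\cap \pi_i}\bigl(R_{\lambda_i}(s,a) - R_\theta(s,a)\bigr).
\end{align*}
Because transitions are deterministic and the underlying graph is a tree, $\pi'\cap \pi_i$ forms a prefix of $\pi_i$ from the root up to the first divergence point of the two policies. I would bound the prefix raises by the total raise $\Delta_i$ applied along $\pi_i$ in the first step, combine with $\return{\pi'}_\theta \leq \return{*}_\theta$, and conclude $\return{\pi'}_{\lambda_i} \leq \return{*}_\theta = \return{\pi_i}_{\lambda_i}$, which finishes the proof.

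The main obstacle is the prefix accounting in the second step. If $\pi'$ diverges from $\pi_i$ before reaching $\mathcal{B}_i$, no raise is inherited and the bound is immediate, but if $\pi'$ inherits a raise on some $(s,a)\in\mathcal{B}_i$ before diverging, one must offset it against the $\theta$-gap $\return{*}_\theta - \return{\pi'}_\theta$. The key structural property enabling this is the deepest-first order in \pref{eq:ith_env_constr}: a raise at an internal pair of $\mathcal{B}_i$ can only be nonzero after every strictly deeper pair in $\mathcal{B}_i$ has been saturated to reward $1$; any $\pi'$ that does not visit those saturated pairs must therefore incur a proportionally larger original gap in $\theta$, which yields the desired cancellation. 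Formalising this monotonicity argument is the only nontrivial piece of the proof.
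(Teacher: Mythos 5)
Your overall plan---show $\return{\pi_i}_{\lambda_i}=\return{*}_\theta$ and that no other policy exceeds this under $\lambda_i$---is reasonable, but the proposal has a genuine gap: the second step, which is the entire content of the lemma, is left as an unformalized sketch, and the sketch does not cover all cases. Concretely, a competing policy $\pi'$ can diverge from $\pi_i$ already inside the shared prefix $\tilde\Bcal_i$, where the raises it inherits come not from the greedy allocation on $\Bcal_i$ but from the earlier environments $\lambda_\ell$ (recall $R_{\lambda_i}=R_{\lambda_\ell}$ on $\tilde\Bcal_i$); your ``deepest-first saturation'' argument says nothing about these inherited raises, and controlling them requires an induction over the ordering $\pi_1\succeq\dots\succeq\pi_m$. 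The paper sidesteps a from-scratch comparison against every $\pi'$ altogether: it compares $\pi_i$ only with the distinguished earlier policy $\pi_j$ entering the same maximal subtree $\Mcal_i$ (as in the proof of \pref{lem:val_func_comp}), and shows $\sum_{\ell\le\tilde H_i}R_{\lambda_i}(s^{\pi_i}_{h_\ell},a^{\pi_i}_{h_\ell})\ge\sum_{\ell\le\tilde H_i}R_{\lambda_j}(s^{\pi_j}_{h_\ell},a^{\pi_j}_{h_\ell})$ via a dichotomy: either the deficit $\return{\pi_0}-\return{\pi_i}_{\tilde\lambda_i}$ is fully absorbed at some index of $\Bcal_i$, or every reward in $\Bcal_i$ saturates at $1$ and hence pointwise dominates $\pi_j$'s rewards. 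Optimality then follows from the (inductive) optimality of $\pi_j$ under $\lambda_j$ together with the agreement of $\lambda_i$ and $\lambda_j$ on the shared prefix. Your proof has no substitute for this reduction.

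Separately, your justification of the slack condition $\sum_{(s,a)\in\Bcal_i}(1-R_\theta(s,a))\ge\Delta_i$ is wrong as stated: $\Delta_i\le 1$ together with $\Bcal_i\neq\varnothing$ gives nothing, since a single pair with $R_\theta(s,a)$ close to $1$ leaves slack $1-R_\theta(s,a)\ll 1$. The condition does hold, but for a different reason: since rewards are nonnegative, $\return{\pi_i}_{\tilde\lambda_i}\ge\sum_{(s,a)\in\Bcal_i}R_\theta(s,a)$, hence $\Delta_i\le \return{\pi_0}-\sum_{(s,a)\in\Bcal_i}R_\theta(s,a)\le|\Bcal_i|-\sum_{(s,a)\in\Bcal_i}R_\theta(s,a)$. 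Relatedly, your first display is off: the $\tilde\Bcal_i$-sum is \emph{not} $\return{\pi_i}_{\tilde\lambda_i}$, which also contains the unmodified $\theta$-rewards on $\Bcal_i$, so adding $\Delta_i$ on top of the decomposition as written would overcount by $\sum_{(s,a)\in\Bcal_i}R_\theta(s,a)$. These two points are fixable bookkeeping; the missing case analysis in your second step is not a detail---it is the proof.
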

\begin{proof}
Let $\tilde H_i$ and $\pi_j$ be as in the proof of Lemma~\ref{lem:val_func_comp}. We now show that $\sum_{\ell \leq \tilde H_i} R_{\lambda_j}(s_{h_\ell}^{\pi_j},a_{h_\ell}^{\pi_j}) \leq \sum_{\ell \leq \tilde H_i} R_{\lambda_i}(s_{h_\ell}^{\pi_i},a_{h_\ell}^{\pi_i})$.
We only need to show that $\sum_{\ell \leq \tilde H_i} R_{\lambda_i}(s_{h_\ell}^{\pi_i},a_{h_\ell}^{\pi_i}) \geq \return{\pi_0} - \return{\pi_i}_{\tilde\lambda_i}$. From Equation~\ref{eq:ith_env_constr} we have $R_{\lambda_i}(s_{h_1},a_{h_1}) = \min(1,\return{\pi_0} - \return{\pi_i}_{\tilde\lambda_i})$. If $R_{\lambda_i}(s_{h_1},a_{h_1}) = \return{\pi_0} - \return{\pi_i}_{\tilde\lambda_i}$ then the claim is complete. Suppose $R_{\lambda_i}(s_{h_1},a_{h_1}) = 1$. This implies $\return{\pi_0} - \return{\pi_i}_{\tilde\lambda_i} \geq 1 - R_{\theta}(s_{h_1},a_{h_1})$. Next the construction adds the remaining gap of $\return{\pi_0} - \return{\pi_i}_{\tilde\lambda_i} + R_{\theta}(s_{h_1},a_{h_1}) - 1$ to $R_\theta(s_{h_2},a_{h_2})$ and clips $R_{\lambda_i}(s_{h_2},a_{h_2})$ to $1$ if necessary. Continuing in this way we see that if ever $R_{\lambda_i}(s_{h_j},a_{h_j}) = R_{\theta}(s_{h_j},a_{h_j}) + \return{\pi_0} - (\return{\pi_i}_{\tilde\lambda_i} - \sum_{\ell=1}^j R_{\tilde\lambda_i}(s_{h_\ell},a_{h_\ell})) -j$ then $\return{\pi_0} - V_{\tilde\lambda_i}^{\pi_i} \leq \sum_{\ell \leq \tilde H_i} R_{\lambda_i}(s_{h_\ell}^{\pi_i},a_{h_\ell}^{\pi_i})$. On the other hand if this never occurs, we must have $R_{\lambda_i}(s_{h_\ell}^{\pi_i},a_{h_\ell}^{\pi_i}) = 1 \geq R_{\lambda_j}(s_{h_\ell}^{\pi_j},a_{h_\ell}^{\pi_j})$ which concludes the claim.
\end{proof}

Let $\hat\Lambda(\theta) = \{\lambda_1,\ldots,\lambda_m\}$ be the set of the environments constructed above. We now show that the value of the optimization problem is not too much smaller than the value of Problem~\ref{eq:opt_prob}.

\begin{theorem}
\label{thm:tree_mdp_bound}
The value $\hat C(\theta)$ of the LP
\begin{align*}
    \minimize{\eta(\pi)\geq 0}{\sum_{\pi\in \Pi^*} \eta(\pi)(\return{*} - \return{\pi})}{ \sum_{\pi \in \Pi^*} \eta(\pi)KL(\theta(\pi),\lambda(\pi)) \geq 1,\forall \lambda \in \hat\Lambda(\theta)},
\end{align*}
satisfies $\hat C(\theta) \geq \frac{C(\theta)}{H^2}$ and $C(\theta) \geq \frac{\hat C(\theta)}{H}$.
\end{theorem}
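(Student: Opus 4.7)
The two inequalities are very different in character, so I would handle them separately.

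\emph{Easy direction $C(\theta) \ge \hat C(\theta)/H$.} The construction of $\hat\Lambda(\theta)=\{\lambda_1,\dots,\lambda_m\}$ guarantees, via \pref{lem:opt_of_lambdai} and the observation just after its proof, that each $\lambda_i$ lies in $\tilde\Lambda(\theta)$: it keeps the distribution on $\pi_0=\pi^*_\theta$ unchanged and makes $\pi_i\in\Pi^*$ uniquely optimal. Hence the feasible region of the $\hat C$ LP contains the feasible region of the $\tilde C$ LP (fewer constraints), and therefore $\hat C(\theta)\le \tilde C(\theta)$. Chaining with \pref{lem:primal_opt_upper_bound}, which gives $C(\theta)\ge \tilde C(\theta)/H$, yields the claim.

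\emph{Hard direction $\hat C(\theta)\ge C(\theta)/H^2$.} Here I would argue through LP duality. Since $\hat C$ is a finite LP in $|\Pi^*|=m$ primal variables and $m$ constraints, strong duality gives
\[
\hat C(\theta)\;=\;\max\Bigl\{\,\textstyle\sum_{i=1}^m \mu_i \;:\; \mu_i\ge 0,\;\sum_{i=1}^m \mu_i\, KL(\theta(\pi),\lambda_i(\pi))\le \return{*}_\theta-\return{\pi}_\theta\;\forall\,\pi\in\Pi^*\Bigr\}.
\]
Combining with the upper bound $C(\theta)\le \sum_{\pi\in\Pi^*} H/(\return{*}_\theta-\return{\pi}_\theta)$ from \pref{lem:primal_opt_upper_bound}, it suffices to exhibit a dual-feasible $\mu$ with $\sum_i \mu_i \ge \sum_{i}\frac{1}{H(\return{*}_\theta-\return{\pi_i}_\theta)}$. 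The natural candidate is $\mu_i = \frac{1}{H\,\delta_i}$ with $\delta_i = \return{*}_\theta-\return{\pi_i}_\theta$, and the crux is verifying the $m$ feasibility constraints.

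\emph{Verifying feasibility, and where the main obstacle lies.} For a fixed $\pi_j\in\Pi^*$, I would write
\[
\sum_i \mu_i\,KL(\theta(\pi_j),\lambda_i(\pi_j))\;=\;\frac{1}{H}\sum_{(s,a)\in\pi_j\setminus\pi_0}\;\sum_{i:(s,a)\in\pi_i}\frac{(r_{\lambda_i}(s,a)-r_\theta(s,a))^2}{\delta_i},
\]
and exploit the crucial structural facts of the tree construction: (i)~for each $(s,a)\notin\pi_0$ the value $r_{\lambda_i}(s,a)$ is identical across all $i$ with $(s,a)\in\pi_i$, being fixed by the smallest such index $\ell^*(s,a)$ (this is exactly the role of the set $\tilde\Bcal_i$ in the construction); (ii)~in a tree the intersection $\pi_i\cap\pi_j$ is a common prefix, so modifications of $\lambda_i$ seen by $\pi_j$ are concentrated on a short segment; and (iii)~the greedy bottom-up construction clipped at $1$ ensures that for each $\pi_i$ the per-step increments $r_{\lambda_i}(s,a)-r_\theta(s,a)\in[0,1]$ sum to at most $\delta_i$, so $(r_{\lambda_i}(s,a)-r_\theta(s,a))^2\le r_{\lambda_i}(s,a)-r_\theta(s,a)$. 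The plan is to combine (i)--(iii) with a telescoping argument along the tree path $\pi_j$: since each on-path state-action $(s,a)$ of $\pi_j$ can contribute to at most the policies $\pi_i$ that branch below $(s,a)$, and since the total reward raise any path to $(s,a)$ ever receives is $\le \delta_j$ along $\pi_j$, the double sum can be bounded by $H\cdot \delta_j$, yielding the required $\sum_i \mu_i\,KL\le \delta_j$. The main obstacle is precisely this combinatorial/telescoping step: one has to keep careful track of how the clipping at $1$ and the dependence of $\lambda_i$ on earlier $\lambda_\ell$ ($\ell<i$) propagate along overlapping paths, and squeeze out the extra factor of $H$ from the fact that $\pi_j$ traverses at most $H$ layers.
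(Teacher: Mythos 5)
Your first inequality, $C(\theta) \geq \hat C(\theta)/H$, is correct and is exactly the paper's argument: each $\lambda_i \in \hat\Lambda(\theta)$ lies in $\tilde\Lambda(\theta)$ (by \pref{lem:opt_of_lambdai} and the construction of $\Pi^*$), so the finite LP is a relaxation of \pref{eq:opt_primal_relaxed}, and you chain with \pref{lem:primal_opt_upper_bound}. No issues there.

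For the second inequality your route is genuinely different from the paper's, and it is where the gap lies. The paper does not dualize: it first relaxes the finite LP to range over all $\pi\in\Pi$, and then proves a pointwise KL-domination statement --- for each $\lambda\in\hat\Lambda(\theta)$ and every $\lambda'\in\Lambda(\theta)$ with $\pi^*_{\lambda'}=\pi^*_\lambda$, one has $KL(\theta(\pi),\lambda(\pi))\leq H^2\,KL(\theta(\pi),\lambda'(\pi))$ for all $\pi$, because the minimal-KL way to raise a policy's return by a fixed amount spreads the increase over at most $h_{\tilde H_i}\leq H$ coordinates while the constructed $\lambda_i$ concentrates it. This lets them multiply any feasible $\eta$ of the relaxed LP by $H^2$ to obtain a feasible point of the semi-infinite LP \pref{eq:opt_prob}, which gives $\hat C(\theta)\geq C(\theta)/H^2$ without ever touching the dual. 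Your plan instead exhibits the dual certificate $\mu_i = 1/(H\delta_i)$ and reduces everything to the feasibility check
\begin{align*}
\sum_{(s,a)\in\pi_j\setminus\pi_0}\ (r_{\lambda_{\ell^*(s,a)}}(s,a)-r_\theta(s,a))^2 \sum_{i:(s,a)\in\pi_i}\frac{1}{H\,\delta_i}\ \leq\ \delta_j .
\end{align*}
This check is the entire mathematical content of the hard direction, and you have not carried it out; the facts (i)--(iii) you list do not suffice as stated. The danger is concrete: the inner sum $\sum_{i:(s,a)\in\pi_i}1/\delta_i$ ranges over all policies of $\Pi^*$ passing through $(s,a)$, all of which satisfy $\delta_i\geq\delta_{\pi^*_{(s,a)}}$ but may number up to $m=|\Pi^*|$, so a crude bound produces an extra factor of $m$ rather than $H$. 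Ruling this out requires showing that the greedy bottom-up, clip-at-one construction places positive increments only on edges that are deep enough to be shared by few of the $\pi_i$ with small $\delta_i$ --- precisely the telescoping step you flag as the obstacle. Until that step is supplied (or you switch to the paper's primal rescaling via the KL-domination lemma, which sidesteps the combinatorics entirely), the proof of $\hat C(\theta)\geq C(\theta)/H^2$ is incomplete.
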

\begin{proof}
The inequality $C(\theta) \geq \frac{\hat C(\theta)}{H}$ follows from Lemma~\ref{lem:primal_opt_upper_bound} and the fact that the above optimization problem is a relaxation to LP~\ref{eq:opt_primal_relaxed}.

To show the first inequality we consider the following relaxed LP
\begin{align*}
    \minimize{\eta(\pi)\geq 0}{\sum_{\pi\in \Pi} \eta(\pi)(\return* - \return{\pi})}{ \sum_{\pi \in \Pi} \eta(\pi)KL(\theta(\pi),\lambda(\pi)) \geq 1,\forall \lambda \in \hat\Lambda(\theta)}.
\end{align*}
Any solution to the LP in the statement of the theorem is feasible for the above LP and thus the value of the above LP is no larger. We now show that the value of the above LP is greater than or equal to $\frac{C(\theta)}{H^2}$. Fix $\lambda \in \hat \Lambda(\theta)$. We show that for any $\lambda' \in \Lambda(\theta)$ such that $\pi^*_{\lambda} = \pi^*_{\lambda'}$ it holds that $KL(\theta(\pi),\lambda(\pi)) \leq H^2 KL(\theta(\pi),\lambda'(\pi)),\forall \pi \in \Pi$. This would imply that if $\eta$ is a solution to the above LP, then $H^2 \eta$ is feasible for LP~\ref{eq:opt_prob} and therefore $\hat C(\theta) \geq \frac{C(\theta)}{H^2}$.

Arrange $\pi \in \Pi : KL(\theta(\pi),\lambda(\pi)) > 0$ according to $KL(\theta(\pi),\lambda(\pi))$ so that 
\begin{align*}
    \pi_i \preceq \pi_j \iff KL(\theta(\pi_i),\lambda(\pi_i)) \geq KL(\theta(\pi_j),\lambda(\pi_j)).
\end{align*} 
Consider the optimization problem
\begin{align*}
    \minimize{\lambda' \in \Lambda(\theta)}{KL(\theta(\pi_i),\lambda'(\pi_i))}{\pi^*_{\lambda'} = \pi^*_\lambda}.
\end{align*}
If we let $\Delta_{\lambda'}(s_h,a_h),(s_h,a_h) \in \pi^*_\lambda$ denote the change of reward for $(s_h,a_h)$ under environment $\lambda'$, then the above optimization problem can be equivalently written as
\begin{align*}
    \minimize{\lambda' \in \Lambda(\theta)}{\sum_{h=1}^{h_{\tilde H_i}} \Delta_{\lambda'}(s_h,a_h)^2}{\sum_{h=1}^H r(s_h,a_h) + \Delta_{\lambda'}(s_h,a_h) \geq \return*}.
\end{align*}
It is easy to see that the solution to the above optimization problem is to set $r(s_h,a_h) + \Delta_{\lambda'}(s_h,a_h) = 1$ for all $h \in [h_{\tilde H_i}+1,H]$ and spread the remaining mass of $\return* - \tilde H_i - (\return{\pi^*_\lambda} - \sum_{\ell=1}^{\tilde H_i}) R_\theta(s_{h_\ell},a_{h_\ell})$ as uniformly as possible on $\Delta_{\lambda'}(s_h,a_h)$, $h \in [1, h_{\tilde H_i}]$. Notice that under this construction the solution to the above optimization problem and $\lambda$ match for $h \in [h_{\tilde H_i}+1,H]$. Since the remaining mass is now the same it now holds that for any $\lambda'$, $\sum_{h=1}^{h_{\tilde H_i}} \Delta_{\lambda'}(s_h,a_h)^2 \geq \frac{1}{h_{\tilde H_i}^2} \sum_{h=1}^{h_{\tilde H_i}} \Delta_{\lambda}(s_h,a_h)^2$. This implies $KL(\theta(\pi_i),\lambda'(\pi_i)) \geq \frac{1}{\tilde H_i ^2} KL(\theta(\pi),\lambda(\pi))$ and the result follows as $\tilde H_i \leq H,\forall i \in [H]$.
\end{proof}

\subsubsection{Issue with deriving a general bound}
\label{app:lower_bounds_issues}
We now try to give some intuition regarding why we could not derive a generic lower bound for deterministic transition MDPs. We have already outlined our general approach of restricting the set $\Pi$ and $\Lambda(\theta)$ to finite subsets of manageable size and then showing that the value of the LP on these restricted sets is not much smaller than the value of the original LP. One natural restriction of $\Pi$ is the set $\Pi^*$ from Theorem~\ref{thm:lower_bound_deterministic}. Suppose we restrict ourselves to the same set and consider only environments making policies in $\Pi^*$ optimal as the restriction for $\Lambda(\theta)$. We now give an example of an MDP for which such a restriction will lead to an $\Omega(SA)$ multiplicative discrepancy between the value of the original semi-infinite LP and the restricted LP.
\begin{figure}
    \centering
    \includegraphics{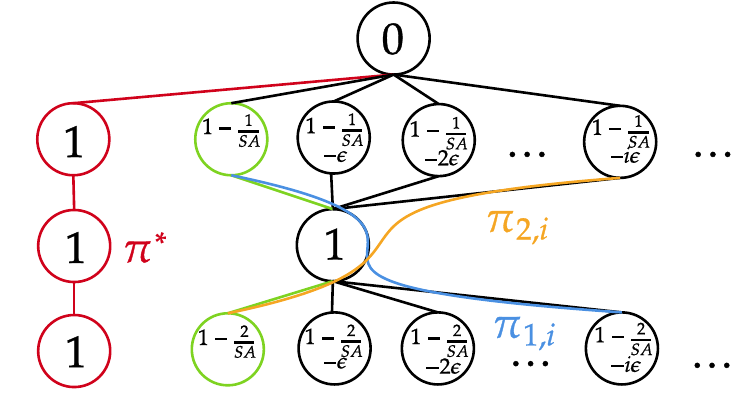}
    \caption{Issue with restricting LP to $\Pi^*$}
    \label{fig:lower_bound_counterexample}
\end{figure}
The MDP can be found in Figure~\ref{fig:lower_bound_counterexample}. The rewards for each action for a fixed state $s$ are equal and are shown in the vertices corresponding to the states. The number of states in the second and last layer of the MDP are equal to $(SA-3)/2$. The optimal policy takes the red path and has value $V^{\pi^*} = 3$. The set $\Pi^*$ consists of all policies $\pi_{j,i}$ which visit one of the states in green. The policies $\pi_{1,i}$, in blue, visit the green state in the second layer of the MDP and one of the states in the final layer, following the paths in blue. Similarly the policies $\pi_{2,i}$, in orange, visit one of the state in the second layer and the green state in the last layer, following the orange paths. The value function of $\pi_{j,i}$ is $V^{\pi_{j,i}} = 3 - \frac{3}{SA} - i\epsilon$, where $0\leq i \leq (SA-4)/2$. We claim that playing each $\pi_{j,i}$ $\eta(\pi_{j,i}) = \Omega(SA)$ times is a feasible solution to the LP restricted to $\Pi^*$. Fix $i$, the $\lambda_{\pi_{1,i}}$ must put weight at least $1/SA$ on the green state in layer 2. Coupling with the fact that for all $i'$ the rewards $\pi_{1,i'}$ are also changed under this environment we know that the constraint of the restricted LP with respect to $\lambda_{\pi_{1,i}}$ is lower bounded by
$\sum_{i'} \eta(\pi_{1,i'})/(SA)^2$. Since there are $\Omega(SA)$ policies $\{\pi_{1,i'}\}_{i'}$, this implies that $\eta(\pi_{1,i}) = \Omega(SA)$ is feasible. A similar argument holds for any $\pi_{2,i}$. Thus the value of the restricted LP is at most $O(SA)$, for any $\epsilon \ll SA$.

However, we claim that the value of the semi-infinite LP which actually characterizes the regret is at least $\Omega(S^2A^2)$. First, to see that the above assignment of $\eta$ is not feasible for the semi-infinite LP, consider any policy $\pi \not \in \Pi^*$, e.g. take the policy which visits the state in layer $2$ with reward $1-1/SA - \epsilon$ and the state in layer $4$ with reward $1-2/SA - \epsilon$. Each of these states have been visited $O(SA)$ times and $\eta(\pi) = 0$ hence the constraint for the environment $\lambda_{\pi}$ is upper bounded by $SA\left(\left(\frac{1}{SA} + \epsilon\right)^2 + \left(\left(\frac{2}{SA} + \epsilon\right)^2\right)\right) \approx 1/SA$. In general each of the states in black in the second layer and the fourth layer have been visited $1/SA$ times less than what is necessary to distinguish any $\pi \not \in \Pi^*$ as sub-optimal. If we define the $i$-th column of the MDP as the pair consisting of the states with rewards $1-1/SA - i\epsilon$ and $1-2/SA - i\epsilon$ then to distinguish the policy visiting both of these states as sub-optimal we need to visit at least one of these $\Omega(S^2A^2)$ times. This implies we need to visit each column of the MDP $\Omega(S^2A^2)$ times and thus any strategy must incur regret at least $\Omega\left(\sum_{i} S^2A^2 \frac{1}{SA}\right) = \Omega(S^2A^2)$, leading to the promised multiplicative gap of $\Omega(SA)$ between the values of the two LPs.

Why does such a gap arise and how can we hope to fix it this issue? Any feasible solution to the LP restricted to $\Pi^*$ essentially needs to visit the states in green $\Theta(S^2A^2)$ times. This is sufficient to distinguish the green states as sub-optimal to visit and hence any strategy visiting these states would be also deemed sub-optimal. This is achievable by playing each strategy in $\Pi^*$ in the order of $\Theta(SA)$ times as already discussed. Now, even though $\Pi^*$ covers all other states, from our argument above we see that we need to play each $\pi \in \Pi^*$ in the order of $\Theta(S^2A^2)$ times to be able to determine all sub-optimal states. To solve this issue, we either have to increase the size of $\Pi^*$ to include for example all policies visiting each column of the MDP or at the very least include changes of environments in the constraint set which make such policies optimal. This is clearly computationally feasible for the MDP in Figure~\ref{fig:lower_bound_counterexample}, however, it is not clear how to proceed for general MDPs, without having to include exponentially many constraints. This begs the question about the computational hardness of achieving both upper and lower regret bounds in a factor of $o(SA)$ from what is optimal.

\subsection{Lower bounds for optimistic algorithms in MDPs with deterministic transitions}
In this section we prove a lower bound on the regret of optimistic algorithms, demonstrating that optimistic algorithms can not hope to achieve the information-theoretic lower bounds even if the MDPs have deterministic transitions. While the result might seem similar to the one proposed by \citet{simchowitz2019non} (Theorem 2.3) we would like to emphasize that the construction of \citet{simchowitz2019non} does not apply to MDPs with deterministic transitions, and that the idea behind our construction is significantly different. 

\begin{figure}
\centering
\includegraphics[scale=0.5]{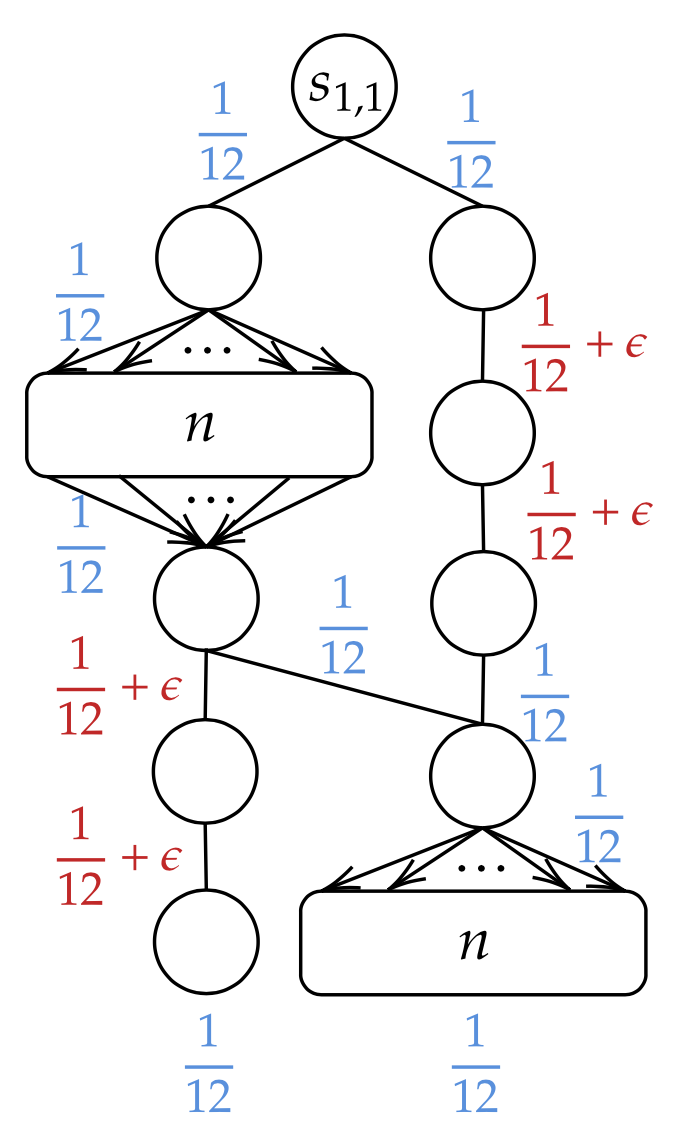}
\caption{Deterministic MDP instance for optimistic lower bound}
\label{fig:mdp_det_lower}
\end{figure}
Consider the MDP in Figure~\ref{fig:mdp_det_lower}. This MDP has $2n+9$ states and $4n+8$ actions. The rewards for each action are either $1/12$ or $1/12+\epsilon/2$ and can be found next to the transitions from the respective states. We are going to label the states according to their layer and their position in the layer so that the first state is $s_{1,1}$ the state which is to the left of $s_{1,1}$ in layer 2 is $s_{2,1}$ and to the right $s_{2,2}$. In general the $i$-th state in layer $h$ is denoted as $s_{h,i}$. The rewards in all states are deterministic, with a single exception of a Bernoulli reward from state $s_{4,1}$ to $s_{5,2}$ with mean $1/12$. From the construction it is clear that $V^*(s_{1,1}) = 1/2+\epsilon$. Further there are two sets of optimal policies with the above value function -- the $n$ optimal policies which visit state $s_{2,2}$ and the $n$ optimal policies which visit $s_{5,1}$. Notice that the information-theoretic lower bound for this MDP is in $O(\log(K)/\epsilon)$ as only the transition from state $s_{4,1}$ to $s_{5,2}$ does not belong to an optimal policy. In particular, there is no dependence on $n$. Next we try to show that the class of optimistic algorithms will incur regret at least $\Omega(n\log(\delta^{-1})/\epsilon)$.
\paragraph{Class of algorithms.}
We adopt the class of algorithms from Section G.2 in \citep{simchowitz2019non} with an additional assumption which we clarify momentarily. Recall that the class of algorithms assumes access to an optimistic value function $\bar V_k(s) \geq V^*(s)$ and optimistic Q-functions. 
In particular the algorithms construct optimistic Q and value functions as
\begin{align*}
    \bar V_k(s) &= \max_{a\in\Acal} \bar Q_k(s,a)\\
    Q_k(s,a) &= \hat r_k(s,a) + b_k^{rw}(s,a) + \hat p_k(s,a)^\top \bar V_k + b_k(s,a).
\end{align*}
We assume that there exists a $c\geq 1$ such that 
\begin{align*}
    \frac{c}{2}\sqrt{\frac{\log(M(1\lor n_k(s,a)))/\delta}{(1\lor n_k(s,a))}}\leq b_k^{rw}(s,a) \leq c\sqrt{\frac{\log(M(1\lor n_k(s,a)))/\delta}{(1\lor n_k(s,a))}}\,,
\end{align*}
where $M = \theta(n)$ and $b_k(s,a) \sim \sqrt{S}f_k(s,a)b_k^{rw}(s,a)$, where $f_k$ is a decreasing function in the number of visits to $(s,a)$ given by $n_k(s,a)$. 
For $n_k(s,a) = \Omega(n\log(n))$, we assume $b_k(s,a) \leq b_k^{rw}(s,a)$.
One can verify that this is true for the the Q and value functions of StrongEuler.
\paragraph{Lower bound.}
Let $\epsilon>0$ be sufficiently small to be specified later and let $N$ be such that 
\begin{align*}
    N = \lfloor\frac{c^2n\log(MN/(n\delta))}{16\epsilon^2}\rfloor\,.
\end{align*}
\begin{lemma}
\label{lem:det_low_bound1}
There exists $n_0,\epsilon_0$ such that for any pair of $n\geq n_0$ and $\epsilon\leq\epsilon_0$ and any $k\leq N$, with probability at least $1-\delta$, it holds that either $n_{k}(s_{5,1}) < N/4$, or $\bar Q_k(s_{4,1},1)<\bar Q_k(s_{4,1},2)$.
\end{lemma}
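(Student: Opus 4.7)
The plan is to fix any episode $k$ for which the bad event $n_k(s_{5,1}) \geq N/4$ holds, and show $\bar Q_k(s_{4,1},1) < \bar Q_k(s_{4,1},2)$ with probability at least $1-\delta/N$; a union bound over $k \leq N$ then delivers the lemma. The structural feature driving the argument is the asymmetry at $s_{4,1}$: the reward at $(s_{4,1},1)$, which leads to the on-optimal-path state $s_{5,1}$, is deterministic, while the reward at $(s_{4,1},2)$, which leads to the off-optimal-path state $s_{5,2}$, is Bernoulli with the same mean $1/12$. Because $s_{5,1}$ is reachable only through action~$1$, the hypothesis forces $n_k(s_{4,1},1) \geq N/4$, so action~$1$'s bonus contributions shrink with $N$ while Bernoulli fluctuations on action~$2$ keep its optimistic Q-value inflated.

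The main calculation proceeds from the decomposition $\bar Q_k(s_{4,1},a) = \hat r_k(s_{4,1},a) + b_k^{rw}(s_{4,1},a) + \bar V_k(s_{5,a}) + b_k(s_{4,1},a)$. For action~$1$: $\hat r_k(s_{4,1},1) = 1/12$ exactly, and by the definition $N = \Theta(c^2 n \log(MN/(n\delta))/\epsilon^2)$ together with the upper bound on the bonuses, both $b_k^{rw}(s_{4,1},1)$ and $b_k(s_{4,1},1)$ are $O(\epsilon/\sqrt{n})$. An upper bound $\bar V_k(s_{5,1}) \leq V^*(s_{5,1}) + O(\epsilon/\sqrt{n})$ follows by a reverse induction on the subtree rooted at $s_{5,1}$, using the assumption $b_k \leq b_k^{rw}$ once $n_k(s,a) = \Omega(n \log n)$ and the fact that every descendant of $s_{5,1}$ is visited together with $s_{5,1}$. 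For action~$2$: a one-sided anti-concentration / peeling argument on the Bernoulli samples gives, uniformly over $k \leq N$, $\hat r_k(s_{4,1},2) + b_k^{rw}(s_{4,1},2) \geq 1/12 + \Omega(\sqrt{\log(MN/\delta)/n_k(s_{4,1},2)})$, while optimism yields $\bar V_k(s_{5,2}) \geq V^*(s_{5,2})$. Using $V^*(s_{5,1}) - V^*(s_{5,2}) \leq \epsilon$ (since the transition to $s_{5,2}$ is the only suboptimal edge in the MDP) and $n_k(s_{4,1},2) \leq 3N/4$ (from $k \leq N$ and the hypothesis), the margin on action~$2$ dominates for sufficiently large $n_0$ and sufficiently small $\epsilon_0$, yielding the strict inequality.

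The main technical obstacle is the uniform-in-$k$ one-sided anti-concentration lower bound on the Bernoulli empirical mean: since $n_k(s_{4,1},2)$ is a stopping time determined by the algorithm's adaptive choices, standard Hoeffding tail bounds do not suffice, and a maximal-inequality or dyadic peeling argument on the centered partial sums $\sum_{i} (X_i - 1/12)$ is required so that the margin survives for all realized sample sizes simultaneously. A secondary difficulty is the upper bound on $\bar V_k(s_{5,1})$: the usual optimism analysis only yields the lower bound $\bar V_k \geq V^*$, so this reverse direction relies crucially on the bonus-domination assumption $b_k(s,a) \leq b_k^{rw}(s,a)$ after $\Omega(n \log n)$ visits and on the symmetric structure of the subtree beneath $s_{5,1}$, and it dictates how $n_0$ must be chosen.
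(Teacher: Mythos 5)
There is a genuine gap in the mechanism you use to keep $\bar Q_k(s_{4,1},2)$ large. You attribute the inflation of action $2$ to the Bernoulli reward at $(s_{4,1},2)$, via an anti-concentration/peeling bound giving $\hat r_k(s_{4,1},2)+b_k^{rw}(s_{4,1},2)\geq 1/12+\Omega\left(\sqrt{\log(MN/\delta)/n_k(s_{4,1},2)}\right)$ together with $n_k(s_{4,1},2)\leq 3N/4$, and you take only the plain optimism bound $\bar V_k(s_{5,2})\geq V^*(s_{5,2})$ downstream. But since $N=\Theta\left(c^2 n\log(MN/(n\delta))/\epsilon^2\right)$, the margin $\sqrt{\log(MN/\delta)/N}$ is of order $\epsilon/\sqrt{n}$ --- the same order as the bonus terms you bound away on action $1$ --- whereas action $1$ carries a \emph{true} value advantage of $\epsilon$ (the path through $s_{5,1}$ has return larger by $\epsilon$ than the path through $s_{5,2}$). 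So your estimates give $\bar Q_k(s_{4,1},2)\leq 1/4+O(\epsilon/\sqrt n)+(\bar V_k(s_{5,2})-V^*(s_{5,2}))$ with no lower bound on the last term beyond $0$, against $\bar Q_k(s_{4,1},1)\geq 1/4+\epsilon$; the desired strict inequality does not follow, and shrinking $\epsilon_0$ or growing $n_0$ only makes the comparison worse, since $N$ rescales with $1/\epsilon^2$ and the $1/\sqrt{n}$ factor degrades with $n$. The anti-concentration machinery is also unnecessary: the paper only needs the ordinary one-sided good event $\hat r_k(s_{4,1},2)+b_k^{rw}(s_{4,1},2)\geq 1/12$ for all $k$, which holds with probability $1-\delta$.

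The missing idea is a pigeonhole argument over the $n$ successor paths of $s_{5,2}$. Because $s_{5,2}$ is visited at most $N$ times and fans out into $n$ paths indexed $j\in\{2,\dots,n+1\}$, at least one path has at most $N/n$ visits, so the optimistic value $\bar V_k(s_{5,2})$, being a maximum over these paths, retains a bonus of at least $c\sqrt{\log(MN/(n\delta))/(N/n)}=4\epsilon$ by the choice of $N$. It is this extra $\sqrt{n}$ gained from dividing the visit budget among $n$ under-explored branches that lets $\bar Q_k(s_{4,1},2)\geq 1/4+4\epsilon$ dominate $\bar Q_k(s_{4,1},1)\leq 1/4+\epsilon+48\epsilon/\sqrt{n}$, and it is also the structural reason the final regret bound scales with $S$ (equivalently $n$). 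Your treatment of action $1$ --- forcing $n_k(s_{4,1},1)\geq N/4$ from the hypothesis and bounding each bonus along the deterministic chain below $s_{5,1}$ by $O(\epsilon/\sqrt{n})$ --- does match the paper's.
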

\begin{proof}
Assume $n_{k}(s_{5,1}) \geq N/4$, then we have
\begin{align*}
    \bar Q_k(s_{4,1},1) &= \frac{1}{4}+\epsilon +\sum_{i=4}^6 b_k^{rw}(s_{i,1},1)+b_k(s_{i,1})\\
    &\leq \frac{1}{4}+\epsilon + 6c\sqrt{\frac{\log(MN/(4\delta))}{N/4}}\leq \frac{1}{4}+\epsilon+\frac{48\epsilon}{\sqrt{n}}\,,
\end{align*}
where we assume $\epsilon$ is sufficiently small such that $b_k(s,a) \leq b_k^{rw}(s,a)$ for $n_k(s,a)\geq N/4$.

On the other hand, we have have with probability at least 1-$\delta$, that
$\forall k:\hat r_k(s_{4,1},2)+b_k^{rw}(s_{4,1},2)\geq 1/12$. Hence conditioned under that event, we have
\begin{align*}
    \bar Q_k(s_{4,1},2) &= \frac{1}{4} +b_k^{rw}(s_{4,1},2)+b_k(s_{4,1},2)+\max_{j\in\{2,\dots n+1}\sum_{i=5}^6 b_k^{rw}(s_{i,j},1)+b_k(s_{i,j},1)\\
    &\geq \frac{1}{4}+ c\sqrt{\frac{\log(MN/(n\delta))}{N/n}}
    \geq \frac{1}{4} + 4 \epsilon\,.
\end{align*}
The proof is completed for $n_0 = 48^2$.
\end{proof}
We can show the same for the upper part of the MDP.
\begin{lemma}
\label{lem:det_low_bound2}
There exists $n_0,\epsilon_0$ such that for any pair of $n\geq n_0$ and $\epsilon\leq\epsilon_0$ and any $k\leq N$, with probability at least $1-\delta$, it holds that either $n_{k}(s_{1,2}) < N/4$, or $\bar Q_k(s_{1,1},2)<\bar Q_k(s_{1,1},1)$.
\end{lemma}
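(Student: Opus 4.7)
The plan is to mirror the three-step argument in the proof of Lemma~\ref{lem:det_low_bound1} using the top-bottom symmetry of the MDP in Figure~\ref{fig:mdp_det_lower}. Action $2$ at $s_{1,1}$ plays the role that action $1$ at $s_{4,1}$ played previously: it leads along a short deterministic chain passing through $s_{1,2}$ and its layered successors in the upper branch. Action $1$ at $s_{1,1}$ plays the role of action $2$ at $s_{4,1}$: it opens into a fan of $n$ downstream trajectories (the analog of $\{s_{5,j}\}_{j=2}^{n+1}$).

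First I would upper-bound $\bar Q_k(s_{1,1},2)$. Assuming $n_k(s_{1,2})\geq N/4$, deterministic transitions imply that every state-action pair along the upper deterministic chain has been visited at least $N/4$ times in episode $k$. Hence each $b_k^{rw}$ along the chain is at most $c\sqrt{\log(MN/(4\delta))/(N/4)}$; since $N/4=\Omega(n\log n)$ for $n$ large, the assumed shape of $f_k$ gives $b_k(s,a)\leq b_k^{rw}(s,a)$ along that chain. Summing at most six bonus contributions and plugging in the definition of $N$ yields
\[
\bar Q_k(s_{1,1},2)\leq \tfrac14+\epsilon+6c\sqrt{\tfrac{\log(MN/(4\delta))}{N/4}}\leq \tfrac14+\epsilon+\tfrac{48\epsilon}{\sqrt n},
\]
which exactly mirrors the bound on $\bar Q_k(s_{4,1},1)$ in the previous lemma.

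Second, I would lower-bound $\bar Q_k(s_{1,1},1)$. By pigeonhole across the $n$ branches of the fan-out, at least one branch $j^\star$ receives at most $N/n$ visits by episode $k\leq N$. On the $1-\delta$ event, obtained by a union bound over $k\leq N$ and absorbed into the constant $M$, that $\hat r_k(s,a)+b_k^{rw}(s,a)\geq 1/12$ throughout the relevant state-actions, maximization inside $\bar V_k$ picks up the bonus at $j^\star$, giving
\[
\bar Q_k(s_{1,1},1)\geq \tfrac14+c\sqrt{\tfrac{\log(MN/(n\delta))}{N/n}}\geq \tfrac14+4\epsilon,
\]
by the definition of $N$.

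Combining the two inequalities yields a strict separation of at least $3\epsilon-48\epsilon/\sqrt n>0$ for every $n\geq n_0:=48^2$ and every $\epsilon\leq \epsilon_0$ small enough that $b_k\leq b_k^{rw}$ holds on the relevant states. The only obstacle I anticipate is bookkeeping: verifying that the upper deterministic chain has the same (at most six) layers as the lower chain and that the fan-out is over exactly $n$ downstream states, so that the numerical constants transfer verbatim. Both facts are immediate from the explicit symmetry of Figure~\ref{fig:mdp_det_lower}, and the union bound budget matches Lemma~\ref{lem:det_low_bound1} up to a constant factor absorbed in $\delta$.
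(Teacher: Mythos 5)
Your proof assumes the two actions at $s_{1,1}$ behave exactly like the two actions at $s_{4,1}$ in Lemma~\ref{lem:det_low_bound1}: one leading into a short, fully-explored deterministic chain and the other into a disjoint fan of $n$ under-explored branches, with both subtrees terminating at the end of the episode. That is not the structure of the upper half of the MDP in Figure~\ref{fig:mdp_det_lower}. Every trajectory from $s_{1,1}$ --- whichever action is taken --- continues through the lower half of the MDP (the subtree around $s_{4,1}$ and $s_{5,2}$), so both $\bar Q_k(s_{1,1},1)$ and $\bar Q_k(s_{1,1},2)$ contain the optimistic continuation value of that lower half. Your claimed bound $\bar Q_k(s_{1,1},2)\leq \tfrac14+\epsilon+48\epsilon/\sqrt n$ is therefore false as stated: it accounts only for the upper layers ($3\times\tfrac1{12}=\tfrac14$) and omits the remaining return-to-go, which contributes roughly another $\tfrac14$ and, crucially, is inflated by the large bonuses on the under-visited lower branches --- exactly the bonuses that Lemma~\ref{lem:det_low_bound1} exploits. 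The same omission affects your lower bound on $\bar Q_k(s_{1,1},1)$.

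The missing idea is that this continuation is \emph{common} to both actions and cancels in the comparison. The paper's proof makes this explicit: it splits $\bar Q_k(s_{1,1},2)$ into the observed mean rewards and bonuses along the segment from $s_{1,1}$ to $s_{5,2}$ plus the value $\bar V_k(s_{5,2})$, and bounds $\bar Q_k(s_{1,1},1)$ by the same $\bar V_k(s_{5,2})$ plus the maximum of the observed mean rewards and bonuses over the paths passing through $s_{3,j}$, $j\in[n]$. Only after the $\bar V_k(s_{5,2})$ terms cancel does the comparison reduce to the prefix calculation you carry out (well-visited chain versus pigeonhole over the $n$-fold fan), at which point your constants do transfer as in Lemma~\ref{lem:det_low_bound1}. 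Without identifying and cancelling the shared suffix, the separation $3\epsilon-48\epsilon/\sqrt n$ cannot be concluded from your two displayed inequalities. (A minor point: the relevant visitation count is that of $s_{2,2}$, the successor of action $2$ at $s_{1,1}$; in the layered labelling of the construction there is no state $s_{1,2}$.)
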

\begin{proof}
First we split $\bar Q_k(s_{1,1},2)$ into the observed sum of mean rewards and bonuses from $s_{1,1}$ to $s_{5,2}$ and the value $\bar V_k(s_{5,2})$.
Then we upper bound $\bar Q_k(s_{1,1},1)$ by $\bar V_k(s_{5,2})$ and the maximum observed sum of mean rewards and bonuses along the paths passing by $s_{3,j}$ for $j\in[n]$.
Finally analogous to the proof of Lemma~\ref{lem:det_low_bound1}, it is straightforward show that the latter is always larger as long as the visitation count for $s_{2,2}$ exceeds $N/4$.
\end{proof}

\begin{theorem}
\label{thm:det_lower_bound}
There exists an MDP instance with deterministic transitions on which any optimistic algorithm with confidence parameter $\delta$ will incur expected regret of at least $\Omega(S\log(\delta^{-1})/\epsilon))$ while it is asymptotically possible to achieve  $\Omega(\log(K)/\epsilon)$ regret.
\end{theorem}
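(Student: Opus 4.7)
The strategy is to combine \pref{lem:det_low_bound1} and \pref{lem:det_low_bound2} to show that the optimistic algorithm misbehaves for $\Omega(N)$ episodes out of the first $N$, where $N = \Theta(n\log(1/\delta)/\epsilon^2)$, and that each such episode costs $\Omega(\epsilon)$ regret. Multiplying gives $\Omega(N\epsilon) = \Omega(n\log(\delta^{-1})/\epsilon) = \Omega(S\log(\delta^{-1})/\epsilon)$ since $S = \Theta(n)$ in this instance. The matching $O(\log(K)/\epsilon)$ information-theoretic rate then follows from \pref{thm:lower_bound_deterministic}, because the set $\Zcal_\theta$ of state-action pairs avoided by every optimal policy has constant size here (consisting essentially of the actions through the Bernoulli transition $s_{4,1}\to s_{5,2}$ and its upper-branch counterpart), so the LP~\eqref{eq:opt_prob} admits a solution of order $1/\epsilon$.

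The core step is bounding the visits. Because $n_k(s_{5,1})$ is monotone non-decreasing and the algorithm selects $\pi_k(s) \in \arg\max_a \bar Q_k(s,a)$, the first episode $k^\star \leq N$ at which $n_{k^\star}(s_{5,1}) \geq N/4$ triggers the second branch of \pref{lem:det_low_bound1}, so $\bar Q_k(s_{4,1},1) < \bar Q_k(s_{4,1},2)$ for every $k \geq k^\star$, and action $1$ at $s_{4,1}$ is never chosen again; hence $s_{5,1}$ is never visited again and $n_{N+1}(s_{5,1}) \leq N/4 + 1$ on the $(1-\delta)$-event of that lemma. A symmetric argument using \pref{lem:det_low_bound2} yields $n_{N+1}(s_{1,2}) \leq N/4 + 1$ on another $(1-\delta)$-event. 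Since every optimal policy visits at least one of $s_{5,1}$ or $s_{1,2}$, a union bound produces at least $N - 2(N/4+1) \geq N/2 - 2$ episodes in $[1,N]$ whose trajectories visit neither critical state and are therefore generated by strictly suboptimal policies. Combining with a per-episode gap of at least $\epsilon/2$ (verified directly from the reward labelling in \pref{fig:mdp_det_lower}) gives
\[
\EE[\regret(K)] \;\geq\; (1-2\delta)\,(N/2 - 2)\,(\epsilon/2) \;=\; \Omega(N\epsilon) \qquad \text{for all } K \geq N,
\]
and substituting $N = \lfloor c^2 n\log(MN/(n\delta))/(16\epsilon^2)\rfloor$ completes the lower bound.

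The main obstacle is the per-episode gap check: one needs to verify that \emph{every} trajectory that avoids both $s_{5,1}$ and $s_{1,2}$ loses at least one $\epsilon/2$-bonus reward, regardless of which of the $n$ parallel branches on each side is taken. This requires a careful case analysis of the $1/12$ versus $1/12 + \epsilon/2$ reward placements in \pref{fig:mdp_det_lower} to rule out the possibility that some ``sideways'' trajectory accumulates most of the optimal bonuses without passing through either corridor. A minor subtlety is that the algorithm may not always reach $s_{4,1}$ (or its upper counterpart) in a given episode, but this only strengthens the visit-count bound in Step~1 since such trajectories automatically satisfy the ``avoids critical state'' condition.
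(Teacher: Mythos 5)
Your proposal follows essentially the same route as the paper's own proof: it applies \pref{lem:det_low_bound1} and \pref{lem:det_low_bound2} to the MDP of \pref{fig:mdp_det_lower} to cap the visit counts of the two critical states at roughly $N/4$ each, concludes that at least about $N/2$ of the first $N$ episodes follow an $\Omega(\epsilon)$-suboptimal policy, and multiplies to obtain $\Omega(S\log(\delta^{-1})/\epsilon)$, with the achievability side justified in the same way (only a constant number of state-action pairs lie outside every optimal policy, so the LP value is $O(1/\epsilon)$). The additional detail you supply --- the monotonicity argument for why the counts cannot grow past $N/4$ once the optimistic $Q$-comparison flips, and the explicit per-episode gap check --- merely fleshes out steps the paper leaves implicit.
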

\begin{proof}
Taking the MDP from Figure~\ref{fig:mdp_det_lower}.
Applying Lemma~\ref{lem:det_low_bound1} and \ref{lem:det_low_bound2} shows that after $N$ episodes with probability at least $1-2\delta$, the visitation count of $s_{2,2}$ and $s_{5,1}$ each do not exceed $N/4$.
Hence there are at least $N/2$ episodes in which neither of them is visited, which means an $\epsilon$-suboptimal policy is taken. Hence the expected regret after $N$ episodes is at least
\begin{align*}
    (1-2\delta)\epsilon N/2 = \Omega\left(\frac{S\log(\delta^{-1})}{\epsilon}\right)\,.
\end{align*}
\end{proof}
Theorem~\ref{thm:det_lower_bound} has two implications for optimistic algorithms in MDPs with deterministic transitions.
\begin{itemize}
    \item It is impossible to be asymptotically optimal if the confidence parameter $\delta$ is tuned to the time horizon $K$.
    \item It is impossible to have an anytime bound matching the information-theoretic lower bound.
\end{itemize}

\section{Proofs and extended discussion for regret upper-bounds}
\label{app:upper_bounds}

\subsection{Further discussion on Opportunity~\ref{enum_prob_3}}
The example in \pref{fig:summary} does not illustrate \ref{enum_prob_3} to its fullest extent. We now expand this example and elaborate why it is important to address Opportunity~\ref{enum_prob_3}.
\begin{figure}[h]
    \centering
    \includegraphics[scale=0.4]{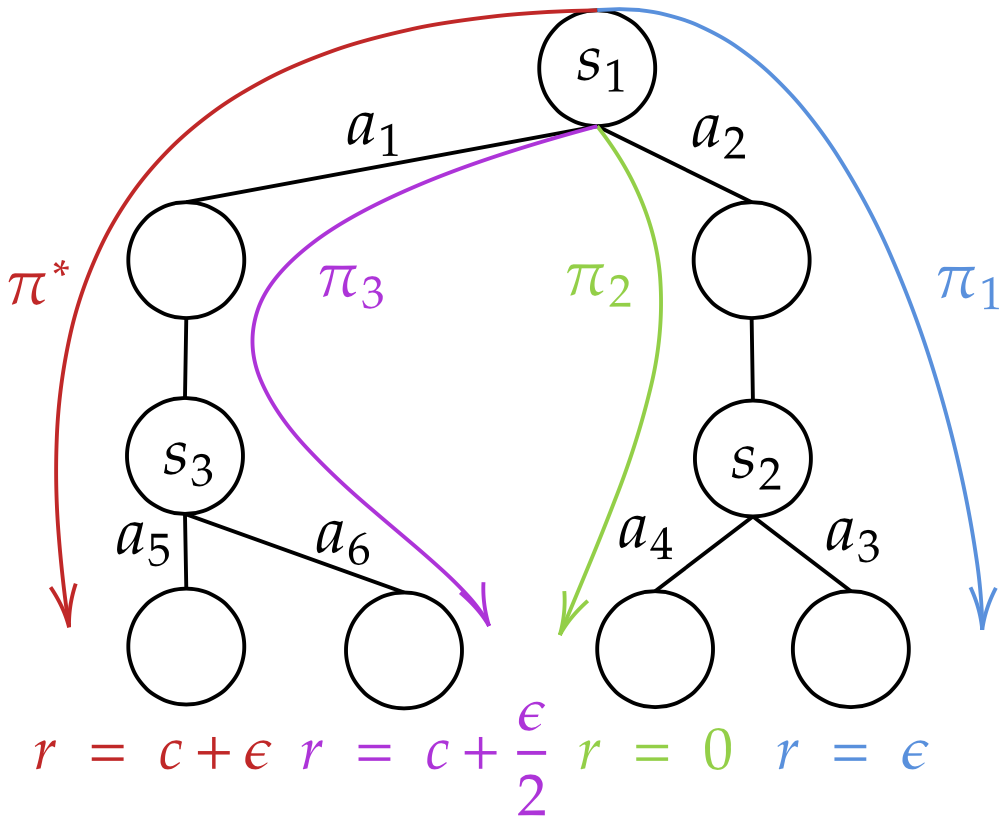}
    \caption{Example for Opportunity~\ref{enum_prob_3}}
    \label{fig:fail_gap2}
\end{figure}
Our example can be found in \pref{fig:fail_gap2}. The MDP is an extension of the one presented in Figure~\ref{fig:summary} with the new addition of actions $a_5$ and $a_6$ in state $s_3$ and the new state following action $a_6$. Again there is only a single action available at all other states than $s_1,s_2, s_3$. The reward of the state following action $a_6$ is set as $r=c+\epsilon/2$. This defines a new sub-optimal policy $\pi_3$ and the gap $\gap(s_3,a_6) = \frac{\epsilon}{2}$. Information theoretically it is impossible to distinguish $\pi_3$ as sub-optimal in less than $\Omega(\log(K)/\epsilon^2)$ rounds and so any uniformly good algorithm would have to pay at least $O(\log(K)/\epsilon)$ regret. However, what we observed previously still holds true, i.e., we should not have to play more than $\log(K)/c^2$ rounds to eliminate both $\pi_1$ and $\pi_2$ as sub-optimal policies. Prior work now suffers Opportunity~\ref{enum_prob_3} as it would pay $\log(K)/\epsilon$ regret for all zero gap state-action pairs belonging to either $\pi_1$ or $\pi_2$, essentially evaluating to $SA\log(K)/\epsilon$. On the other hand our bounds will only pay $\log(K)/\epsilon$ regret for zero gap state-action pairs belonging to $\pi_3$.

\subsection{Useful decomposition lemmas}
We start by providing the following lemma that establishes that the instantaneous regret can be decomposed into gaps defined w.r.t. any optimal (and not necessarily Bellman optimal) policy.
\begin{lemma}[General policy gap decomposition]
\label{lem:gap_decomp_pi}
Let $\gap^{\hat \pi}(s,a) = V^{\hat \pi}(s) - Q^{\hat \pi}(s,a)$ for any optimal policy $\hat \pi \in \Pi^*$. Then the difference in values of $\hat \pi$ and any policy $\pi \in \Pi$ is
\begin{align}
    V^{\hat \pi}(s) - V^{\pi}(s)
 = \EE_{\pi}\left[\sum_{h=\kappa(s)}^H \gap^{\hat \pi}(S_h, A_h) ~ \bigg| ~ S_{\kappa(s)} = s\right]
\end{align}
and, further, the instantaneous regret of $\pi$ is
\begin{align}
\label{eq:gap_decomp_pi}
    \return{*} - \return{\pi} = \sum_{s,a} w^{\pi}(s,a) \gap^{\hat \pi}(s,a).
\end{align}
\end{lemma}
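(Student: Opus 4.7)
The plan is to prove the identity by backward induction on the layer index $h = \kappa(s)$, which is a standard performance difference calculation adapted to the layered episodic setting. The base case is layer $H+1$, where $V^{\hat\pi}(s) - V^\pi(s) = 0$ trivially (terminal layer), matching the empty sum on the right-hand side.

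For the inductive step, fix a state $s$ at layer $h \leq H$ and write
\begin{align*}
V^{\hat\pi}(s) - V^\pi(s)
 &= V^{\hat\pi}(s) - Q^{\hat\pi}(s, \pi(s)) + Q^{\hat\pi}(s, \pi(s)) - Q^\pi(s, \pi(s)) \\
 &= \gap^{\hat\pi}(s, \pi(s)) + \bigl\langle P(\cdot \mid s, \pi(s)),\, V^{\hat\pi} - V^\pi \bigr\rangle,
\end{align*}
where the second equality uses the Bellman equation $Q^\pi(s,a) = r(s,a) + \langle P(\cdot\mid s,a), V^\pi\rangle$ for both $\pi$ and $\hat\pi$ so that the immediate rewards cancel. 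Applying the inductive hypothesis to every successor state $s'$ at layer $h+1$ and taking the expectation over $s' \sim P(\cdot \mid s, \pi(s))$ converts the inner product into $\EE_\pi[\sum_{h'=h+1}^H \gap^{\hat\pi}(S_{h'}, A_{h'}) \mid S_h = s]$. Combining with the leading term and noting that $A_h = \pi(s)$ conditional on $S_h = s$ yields exactly the claimed identity.

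For the second assertion, specialize to $s = s_1$ and observe that $\hat\pi \in \Pi^*$ satisfies $V^{\hat\pi} = V^*$ by definition of $\Pi^*$ (given in the problem setting), so the left-hand side equals $\return{*} - \return{\pi}$. Then swap expectation and sum,
\begin{align*}
\EE_\pi\!\left[\sum_{h=1}^H \gap^{\hat\pi}(S_h, A_h)\right]
 = \sum_{h=1}^H \sum_{s,a} \PP_\pi(S_h = s, A_h = a)\, \gap^{\hat\pi}(s,a)
 = \sum_{s,a} w^\pi(s,a)\, \gap^{\hat\pi}(s,a),
\end{align*}
where the last step crucially uses the layered MDP assumption: each state $s$ belongs to a unique layer $\kappa(s)$, so the outer sum over $h$ collapses and $\PP_\pi(S_{\kappa(s)} = s, A_{\kappa(s)} = a) = w^\pi(s,a)$ by definition.

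There is no real obstacle here; the only subtlety is recognizing that $\hat\pi \in \Pi^*$ implies $V^{\hat\pi} = V^*$ (which is why the identity can be written as a decomposition of the instantaneous regret rather than just a value difference), and that the layered structure is what lets us rewrite the time-indexed expectation as a sum over state-action pairs weighted by the visitation measure $w^\pi$.
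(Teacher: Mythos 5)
Your proof is correct and follows essentially the same route as the paper: the same decomposition $V^{\hat\pi}(s) - V^\pi(s) = \gap^{\hat\pi}(s,\pi(s)) + \langle P(\cdot\mid s,\pi(s)), V^{\hat\pi}-V^\pi\rangle$, unrolled across layers (the paper writes this as a recursion rather than a formal backward induction, but it is the same argument). Your additional care in spelling out the swap of expectation and sum via the layered structure is a detail the paper leaves implicit, not a different approach.
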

\begin{proof}
We start by establishing a recursive bound for the value difference of $\pi$ and $\hat \pi$ for any $s$
\begin{align*}
    V^{\hat \pi}(s) - V^{\pi}(s)
    &= V^{\hat \pi}(s) 
    - Q^{\hat \pi}(s, \pi(s)) 
    +  Q^{\hat \pi}(s, \pi(s)) 
    - V^{\pi}(s)\\
    &= \gap^{\hat \pi}(s, \pi(s)) + Q^{\hat \pi}(s, \pi(s)) 
    - Q^{\pi}(s, \pi(s))\\
&= \gap^{\hat \pi}(s, \pi(s))
+ \sum_{s'} P_\theta(s' | s, \pi(s)) 
[V^{\hat \pi}(s') - V^{\pi}(s')].
\end{align*}
Unrolling this recursion for all layers gives
\begin{align*}
    V^{\hat \pi}(s) - V^{\pi}(s)
 = \EE_{\pi}\left[\sum_{h=\kappa(s)}^H \gap^{\hat \pi}(S_h, A_h) ~ \bigg| ~ S_{\kappa(s)} = s\right].
\end{align*}
To show the second identity, consider $s=s_1$ and note that $\return{\pi} = V^\pi(s_1)$ and $\return{*} = \return{\hat \pi} = V^{\hat \pi}(s_1)$ because $\hat \pi$ is an optimal policy.
\end{proof}
For the rest of the paper we are going to focus only on the Bellman optimal policy from each state and hence only consider $\gap^{\hat\pi}(s,a) = \gap(s,a)$. All of our analysis will also go through for arbitrary $\gap^{\hat\pi},\hat\pi\in \Pi^*$, however, this did not provide us with improved regret bounds.

We now show the following technical lemma which generalizes the decomposition of value function differences and will be useful in the surplus clipping analysis.
\begin{lemma}
\label{lem:rec_rel}
Let $\Psi:\Scal \rightarrow \RR$, $\Delta: \Scal \times \Acal \rightarrow \RR$ be functions satisfying $\Psi(s) = 0$ for any $s$ with $\kappa(s) = H+1$ and $\pi \colon \Scal \rightarrow \Acal$ a deterministic policy. Further, assume that the following relation holds
\begin{align*}
    \Psi(s)  = \Delta(s, \pi(s)) + \langle P(\cdot|s,\pi(s)), \Psi \rangle,
\end{align*}
and let $\Acal$ be any event that is $\Hcal_h$-measurable where $\Hcal_h = \sigma(S_1, A_1, R_1, \dots, S_h)$ is the sigma-field induced by the episode up to the state at time $h$.
Then, for any $h \in [H]$ and $h' \in \NN$ with $h \leq h' \leq H+1$, it holds that
\begin{align*}
    \EE_{\pi}[\indicator{\Acal}\Psi(S_h))] = \EE_{\pi}\left[\indicator{\Acal}
    \left(\sum_{t=h}^{h' - 1} \Delta(S_{t}, A_t) + \Psi(S_{h'+1})\right)\right]
    = \EE_{\pi}\left[\indicator{\Acal}
    \sum_{t=h}^{H} \Delta(S_{t}, A_t)\right]
    .
\end{align*}
\end{lemma}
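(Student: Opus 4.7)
The plan is to unroll the given one-step recurrence $\Psi(s) = \Delta(s,\pi(s)) + \langle P(\cdot\mid s,\pi(s)),\Psi\rangle$ inside the expectation, using measurability of $\mathcal{A}$ and the tower property to telescope. The boundary condition $\Psi(s)=0$ for $\kappa(s)=H+1$ then closes the sum.

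First, I would rewrite the recurrence as a pathwise identity under $\pi$. Since $\pi$ is deterministic, $A_t = \pi(S_t)$, and since $S_{t+1}\sim P(\cdot\mid S_t,A_t)$ given $\mathcal{H}_t$ under $\pi$, the hypothesis becomes
\[
\Psi(S_t) \;=\; \Delta(S_t,A_t) \;+\; \mathbb{E}_\pi[\Psi(S_{t+1})\mid \mathcal{H}_t],
\]
holding almost surely for every $t\in[H]$ (with the convention $\Psi(S_{H+1})=0$ by the layered structure).

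Second, I would multiply by $\chi(\mathcal{A})$ and take $\mathbb{E}_\pi$. Because $\mathcal{A}$ is $\mathcal{H}_h$-measurable and $h\le t$, the indicator $\chi(\mathcal{A})$ is $\mathcal{H}_t$-measurable, so it can be pulled inside the conditional expectation and the tower property gives
\[
\mathbb{E}_\pi\bigl[\chi(\mathcal{A})\,\mathbb{E}_\pi[\Psi(S_{t+1})\mid \mathcal{H}_t]\bigr] \;=\; \mathbb{E}_\pi\bigl[\chi(\mathcal{A})\,\Psi(S_{t+1})\bigr].
\]
Combining the last two displays yields the one-step telescoping relation
\[
\mathbb{E}_\pi[\chi(\mathcal{A})\Psi(S_t)] \;=\; \mathbb{E}_\pi[\chi(\mathcal{A})\Delta(S_t,A_t)] \;+\; \mathbb{E}_\pi[\chi(\mathcal{A})\Psi(S_{t+1})],
\]
valid for every $t\ge h$.

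Third, I would iterate this identity. A finite induction on the number of unrolled steps, starting at $t=h$ and stopping at any desired intermediate $t=h'$, produces the middle expression in the lemma; continuing the induction all the way to $t=H$ and invoking $\Psi(S_{H+1})=0$ produces the final expression. The only genuine subtlety is index bookkeeping at the boundary (the statement as written has $\sum_{t=h}^{h'-1}\Delta(S_t,A_t)+\Psi(S_{h'+1})$, which appears to be a minor off-by-one typo; the induction naturally gives $\sum_{t=h}^{h'-1}\Delta(S_t,A_t)+\Psi(S_{h'})$), but this is cosmetic and does not affect the argument. There is no real obstacle beyond verifying that $\mathcal{H}_h\subset\mathcal{H}_t$ for $t\ge h$ so the measurability step is valid at every iteration.
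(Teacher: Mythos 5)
Your proof is correct and follows essentially the same route as the paper's: unrolling the recurrence for $\Psi$, using that $\chi(\Acal)$ is $\Hcal_t$-measurable for $t \geq h$ together with the tower property, and closing the telescoping sum with the boundary condition $\Psi(S_{H+1})=0$. You are also right that the $\Psi(S_{h'+1})$ in the statement is an off-by-one typo for $\Psi(S_{h'})$; the paper's own proof derives the latter.
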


\begin{proof}
First apply the assumption of $\Psi$ recursively to get
\begin{align*}
    \Psi(s) = \EE_\pi \left[ \sum_{t=\kappa(s)}^{h' - 1} \Delta(S_t, A_t) + \Psi(S_{h'}) ~\Bigg|~ S_{\kappa(s)} = s\right].
\end{align*}
Plugging this identity into $\EE_{\pi}[\indicator{\Acal}\Psi(S_h))]$ yields
\begin{align*}
    \EE_{\pi}[\indicator{\Acal}\Psi(S_h))]
    &= \EE_\pi \left[
    \indicator{\Acal}
    \EE_\pi \left[ \sum_{t=h}^{h' - 1} \Delta(S_t, A_t) + \Psi(S_{h'})~\Bigg|~ S_{h}\right]\right]\\
         &\overset{(i)}{=} \EE_\pi \left[
    \indicator{\Acal}
    \EE_\pi \left[ \sum_{t=h}^{h' - 1} \Delta(S_t, A_t) + \Psi(S_{h'}) ~\Bigg|~ \Hcal_h \right]\right]\\
    &\overset{(ii)}{=} \EE_\pi \left[
    \EE_\pi \left[ \indicator{\Acal}\left( \sum_{t=h}^{h' - 1} \Delta(S_t, A_t) + \Psi(S_{h'}) \right) ~\Bigg|~ \Hcal_h \right]\right]
    \\
    &\overset{(iii)}{=}\EE_\pi \left[\indicator{\Acal}\left( \sum_{t=h}^{h' - 1} \Delta(S_t, A_t) + \Psi(S_{h'}) \right) \right]
\end{align*}
where $\Hcal_h = \sigma(S_1, A_1, R_1, \dots, S_h)$ is the sigma-field induced by the episode up to the state at time $h$. Identity $(i)$ holds because of the Markov-property and $(ii)$ holds because $\Acal$ is $\Hcal_h$-measurable. The final identity $(iii)$ uses the tower-property of conditional expectations.
\end{proof}

%

\subsection{General surplus clipping for optimistic algorithms}

\paragraph{Clipped operators.}
One of the main arguments to derive instance dependent bounds is to write the instantaneous regret in terms of the surpluses which are clipped to the minimum positive gap. We now define the clipping threshold $\epsilon_{k} : \Scal \times \Acal \rightarrow \RR^+_0$ and associated clipped surpluses
\begin{align}
\label{eq:clipped_surp_def}
    \ddot E_{k}(s,a) = \clip\left[E_k(s,a) \mid \epsilon_{k}(s,a)\right] = \indicator{ E_{k}(s,a) \geq \epsilon_k(s,a)} E_{k}(s,a).
\end{align}
Next, define the clipped $Q$- and value-function as
\begin{equation}
\label{eq:clipped_value_def}
    \begin{aligned}
        \ddot Q_{k}(s,a) &= \ddot E_{k}(s,a) + r(s,a) + \langle P(\cdot|s,a), \ddot V_{k}, \rangle \quad\textrm{and}
        &
        \ddot V_{k}(s) &= \ddot Q_{k}(s,\pi_k(s)).
    \end{aligned}
\end{equation}

The random variable which is the state visited by $\pi_k$ at time $h$ throughout episode $k$ is denoted by $S_h$ and $A_h$ is the action at time $h$.

\paragraph{Events about encountered gaps} Define the event 
$\Ecal_h = \{\gap(S_h, A_h) > 0\}$ that at time $h$ an action with a positive gap played,
the $\Pcal_{1:h} = \bigcap_{h' =1 }^{h-1} \Ecal_{h'}^{c}$ that only actions with zero gap have been played until $h$
and the event 
$\Acal_h = \Ecal_h \cap \Pcal_{1:h}$ that the first positive gap was encountered at time $h$. Let $\Acal_{H+1}= \Pcal_{1:H}$ be the event that only zero gaps were encountered. 
Further, let 
\begin{align*}
    B = \min\{ h \in [H+1] \colon \gap(S_h, A_h) > 0 \}
\end{align*} 
be the first time a non-zero gap is encountered. Note that $B$ is a stopping time w.r.t. the filtration $\mathcal{F}_h = \sigma(S_1, A_1, \dots, S_h, A_h)$.

The proof of \citet{simchowitz2019non} consists of two main steps. First show that for their definition of clipped value functions one can bound $\ddot V_k(s_1) - V^{\pi_k}(s_1) \geq \frac{1}{2}(\bar V_k(s_1) - V^{\pi_k}(s_1))$. Next, using optimism together with the fact that $\pi_k$ has highest value function at episode $k$ it follows that $\bar V_k(s_1) - V^{\pi_k}(s_1) \geq V^*(s_1) - V^{\pi_k}(s_1)$. The second main step is to use a high-probability bound on the clipped surpluses to relate them to the probability to visit the respective state-action pair and the proof is finished via an integration lemma. We now show that the first step can be carried out in greater generality by defining a less restrictive clipping operator. This operator is independent of the details in the definition of gap at each state-action pair but rather only uses a certain property which allows us to decompose the episodic regret as a sum over gaps. We will also further show that one does not need to use an integration lemma for the second step but can rather reformulate the regret bound as an optimization problem. This will allow us to clip surpluses at state-action pairs with zero gaps beyond the $\gap_{\min}$ rate.


\paragraph{Clipping with an arbitrary threshold.} Recall the definition of the clipped surpluses and clipped value function in Equation~\ref{eq:clipped_surp_def} and Equation~\ref{eq:clipped_value_def}. We begin by showing a general relation between the clipped value function difference and the non-clipped surpluses for any clipping threshold $\epsilon_{k} : \Scal \rightarrow \RR$. This will help in establishing $\ddot V_{k}(s_1) - V^{\pi_k}(s_1) \geq \frac{1}{2}(\bar V_k(s_1) - V^{\pi_k}(s_1))$.
\begin{lemma}
\label{lem:Vdd_lb1}
Let $\epsilon_k : \Scal \times \Acal \rightarrow \RR^+_0$ be arbitrary. Then for any optimistic algorithm it holds that 
\begin{equation}
    \label{eq:clipped_ineq_tight}
    \ddot V_k(s_1) - V^{\pi_k}(s_1) \geq 
    \mathbb{E}_{\pi_k}\left[
    \sum_{h=B}^H\left( \gap(S_{h},A_h) - \epsilon_{k}(S_{h}, A_h)\right)
    \right].
\end{equation}
\end{lemma}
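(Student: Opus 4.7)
The plan is to unroll $\ddot V_k(s_1) - V^{\pi_k}(s_1)$ using the Bellman-style recursion that the clipped value function satisfies, then combine a pointwise bound on clipped surpluses with an optimism argument that brings gaps into the picture. Concretely, I would set $\Psi(s) = \ddot V_k(s) - V^{\pi_k}(s)$ and observe, from the definitions of $\ddot Q_k, \ddot V_k$ in \pref{eq:clipped_value_def} together with the standard Bellman equation for $V^{\pi_k}$, that
\begin{align*}
\Psi(s) = \ddot E_k(s, \pi_k(s)) + \langle P(\cdot \mid s, \pi_k(s)), \Psi\rangle~.
\end{align*}
Applying \pref{lem:rec_rel} with $\Acal$ the whole sample space yields $\Psi(s_1) = \EE_{\pi_k}[\sum_{h=1}^H \ddot E_k(S_h, A_h)]$. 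Since $\epsilon_k \geq 0$, the clipping operator enforces $\ddot E_k(s,a) \geq 0$, so dropping the terms with $h < B$ can only weaken the identity and gives $\Psi(s_1) \geq \EE_{\pi_k}[\sum_{h=B}^H \ddot E_k(S_h, A_h)]$.

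Next, I would use the pointwise inequality $\ddot E_k(s,a) \geq E_k(s,a) - \epsilon_k(s,a)$, which is immediate from the three cases of the clipping definition (negative, subthreshold, and above-threshold surplus, using $\epsilon_k \geq 0$). It then remains to show $\EE_{\pi_k}[\sum_{h=B}^H E_k(S_h, A_h)] \geq \EE_{\pi_k}[\sum_{h=B}^H \gap(S_h, A_h)]$, which is where optimism enters. Let $\Phi(s) = \bar V_k(s) - V^{\pi_k}(s)$; because $\pi_k$ is greedy w.r.t.\ $\bar Q_k$ we have $\bar V_k(s) = \bar Q_k(s, \pi_k(s)) = E_k(s, \pi_k(s)) + r(s, \pi_k(s)) + \langle P(\cdot\mid s, \pi_k(s)), \bar V_k\rangle$, so $\Phi$ also obeys the recursion required by \pref{lem:rec_rel} with $\Delta = E_k$.

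The careful step is to combine this with the stopping time $B$. For each fixed $h \in [H]$, the event $\{B = h\}$ is $\Hcal_h$-measurable: since $A_j = \pi_k(S_j)$ is a deterministic function of $S_j$, the condition that a positive gap first appears at $h$ depends only on $S_1, \dots, S_h$. Applying \pref{lem:rec_rel} with $\Acal = \{B = h\}$ gives $\EE_{\pi_k}[\indicator{B=h}\Phi(S_h)] = \EE_{\pi_k}[\indicator{B=h}\sum_{t=h}^H E_k(S_t, A_t)]$. Summing over $h$ collapses the right-hand side to $\EE_{\pi_k}[\sum_{h=B}^H E_k(S_h, A_h)]$ and the left-hand side to $\EE_{\pi_k}[\indicator{B \leq H}\Phi(S_B)]$. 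By optimism $\Phi(S_B) \geq V^*(S_B) - V^{\pi_k}(S_B)$, and \pref{lem:gap_decomp_pi} rewrites $V^*(s) - V^{\pi_k}(s) = \EE_{\pi_k}[\sum_{t=\kappa(s)}^H \gap(S_t,A_t) \mid S_{\kappa(s)}=s]$. Using that $\indicator{B=h}$ is $\Hcal_h$-measurable together with the tower property converts this back to $\EE_{\pi_k}[\sum_{h=B}^H \gap(S_h, A_h)]$, closing the chain. The main obstacle I expect is precisely handling the stopping time $B$ when invoking \pref{lem:rec_rel} and \pref{lem:gap_decomp_pi}: instead of conditioning directly on $B$ and $S_B$, I decompose along the $\Hcal_h$-measurable events $\{B=h\}$ and only recombine at the end, which is what keeps the Markov/tower arguments clean.
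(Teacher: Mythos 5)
Your proof is correct and follows essentially the same route as the paper's: both reduce the claim to $\ddot V_k(s_1)-V^{\pi_k}(s_1)\ge \EE_{\pi_k}\bigl[\sum_{h=B}^H \ddot E_k(S_h,A_h)\bigr]$ via non-negativity of the clipped surpluses, then apply the pointwise bound $\ddot E_k \ge E_k - \epsilon_k$ and two invocations of \pref{lem:rec_rel} on the $\Hcal_h$-measurable events $\{B=h\}=\Acal_h$ together with optimism $\bar V_k \ge V^*$ and the gap decomposition to replace surpluses by gaps. Your first step (unrolling the full recursion and simply dropping the non-negative prefix terms $h<B$) is a slightly more direct derivation of the same intermediate inequality that the paper obtains through a staged recursion over the events $\Pcal_{1:h}$.
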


\begin{proof}
We use $W_k(s) = \ddot V_k(s) - V^{\pi_k}(s)$ in the following and first show that $W(s_1) \geq \EE_{\pi_k}[ W_k(S_B)]$.
As a precursor, we prove 
\begin{align}
    \EE_{\pi_k}\left[ \indicator{\Pcal_{1:h} } W_k(S_h)\right]
    \geq 
    \EE_{\pi_k}\left[  \indicator{ \Acal_{h+1} } W_k(S_{h+1})\right]
    + 
    \EE_{\pi_k}\left[ \indicator{ \Pcal_{1:h+1}  } W_k(S_{h+1})\right].
    \label{eq:first_rec}
\end{align}
To see this, plug the definitions into $W_k(s)$ which gives $W_k(s) = \ddot V_k(s) - V^{\pi_k}(s) = \ddot E_k(s, \pi_k(s)) + \langle P(\cdot | s, \pi_k(s)), W_{k} \rangle $ and use this in the LHS of \pref{eq:first_rec} as
\begin{align*}
\EE_{\pi_k}\left[ \indicator{\Pcal_{1:h}  } W_k(S_h)\right]
&= \EE_{\pi_k}\big[ \indicator{\Pcal_{1:h}  }
\underset{\geq 0}{\underbrace{ \ddot E_k(S_h, A_h)}\big]}
+ 
\EE_{\pi_k}\left[ \indicator{\Pcal_{1:h}} \EE[ W_k(S_{h+1}) \mid S_h]\right]
\\
& \overset{(i)}{\geq} 
\EE_{\pi_k}\left[ \indicator{\Pcal_{1:h}} \EE_{\pi_k}[ W_k(S_{h+1}) \mid \Hcal_{h}]\right]
\\
& \overset{(ii)}{=}
\EE_{\pi_k}\left[ \EE_{\pi_k}[ \indicator{\Pcal_{1:h} }  W_k(S_{h+1}) \mid \Hcal_{h}]\right] =
\EE_{\pi_k}\left[ \indicator{\Pcal_{1:h}}  W_k(S_{h+1}) \right]
\end{align*}
where $\Hcal_h = \sigma(S_1, A_1, R_1, \dots, S_h)$ is the sigma-field induced by the episode up to the state at time $h$. Step $(i)$ follows from $\clip[\cdot | c] \geq 0$ for any $c \geq 0$ and the Markov property and $(ii)$ holds because $\Pcal_{1:h}$ is $\Hcal_h$-measurable.
We now rewrite the RHS by splitting the expectation based on whether event $\Ecal_{h+1}$ occurred as
\begin{align*}
    \EE_{\pi_k}\left[ \indicator{\Pcal_{1:h} }  W_k(S_{h+1}) \right]
    = 
    \EE_{\pi_k}\left[ \indicator{\Pcal_{1:h+1} }  W_k(S_{h+1}) \right]
    + \EE_{\pi_k}\left[ \indicator{\Acal_{h+1}}  W_k(S_{h+1}) \right].
\end{align*}
We have now shown \pref{eq:first_rec}, which we will now use to lower-bound $W_k(s_1)$ as
\begin{align*}
W_k(s_1) 
&= \EE_{\pi_k}[\indicator{\Ecal_1} W_1(S_1)] + \EE_{\pi_k}[\indicator{\Ecal_1^c} W_1(S_1)]\\
&= \EE_{\pi_k}[\indicator{\Acal_1} W_1(S_1)] + \EE_{\pi_k}[\indicator{\Pcal_{1:1}} W_1(S_1)]\\
& \geq  \EE_{\pi_k}[\indicator{\Acal_1} W_1(S_1)] + \sum_{h=2}^H \EE_{\pi_k}\left[  \indicator{ \Acal_{h} } W_k(S_{h})\right]\\
&= \sum_{h=1}^H \EE_{\pi_k}\left[  \indicator{ \Acal_{h} } W_k(S_{h})\right] =
\EE_{\pi_k}[ W_k(S_B)].
\end{align*}
Applying \pref{lem:rec_rel} with $\Acal = \Acal_h$, $\Psi = W_k$ and $\Delta = \ddot E_k$ yields
\begin{align*}
W_k(s_1) 
&\geq \sum_{h=1}^H \EE_{\pi_k}\left[  \indicator{ \Acal_{h}} \sum_{h' = h}^H \ddot E_k(S_{h'}, A_{h'})\right]\\
&\geq \sum_{h=1}^H \EE_{\pi_k}\left[  \indicator{ \Acal_{h} } \sum_{h' = h}^H E_k(S_{h'}, A_{h'})\right] - \sum_{h=1}^H \EE_{\pi_k}\left[  \indicator{ \Acal_{h} } \sum_{h' = h}^H \epsilon_{k}(S_{h'}, A_{h'})\right],
\end{align*}
where we applied the definition clipped surpluses which gives $\ddot E_k(s,a) = \clip[ E_k(s,a) \mid \epsilon_{k}(s,a)] \geq E_k(s,a) - \epsilon_{k}(s,a)$. It only remains to show that 
\begin{align*}
\EE_{\pi_k}\left[  \indicator{ \Acal_{h} } \sum_{h' = h}^H E_k(S_{h'}, A_{h'})\right] \geq 
\EE_{\pi_k}\left[  \indicator{ \Acal_{h} } \sum_{h' = h}^H \gap(S_{h'}, A_{h'})\right].
\end{align*}
To do so, we apply \pref{lem:rec_rel} twice, first with $\Acal = \Acal_h$, $\Psi = \bar V_k - V^{\pi_k}$ and $\Delta = E_k$ and then again with $\Acal = \Acal_h$, $\Psi =  V^{*} - V^{\pi_k}$ and $\Delta = \gap$ which gives
\begin{align*}
\EE_{\pi_k}\left[  \indicator{ \Acal_{h} } \sum_{h' = h}^H E_k(S_{h'}, A_{h'})\right] 
&= 
\EE_{\pi_k}\left[  \indicator{ \Acal_{h} }  (\bar V_k(S_h) - V^{\pi_k}(S_{h}))\right] \\
&\geq 
\EE_{\pi_k}\left[  \indicator{ \Acal_{h} }  (V^*(S_h) - V^{\pi_k}(S_{h}))\right]\\
&=
\EE_{\pi_k}\left[  \indicator{ \Acal_{h} } \sum_{h' = h}^H \gap(S_{h'}, A_{h'})\right].
\end{align*}
Thus, we have shown that 
\begin{align*}
    & \ddot V_k(s_1) - V^{\pi_k}(s_1) = W_k(s_1) \\
&\geq \sum_{h=1}^H \EE_{\pi_k}\left[  \indicator{ \Acal_{h} } \sum_{h' = h}^H \gap(S_{h'}, A_{h'})\right] - \sum_{h=1}^H \EE_{\pi_k}\left[  \indicator{ \Acal_{h} } \sum_{h' = h}^H \epsilon_{k}(S_{h'}, A_{h'})\right]\\
&= \sum_{h=1}^H \EE_{\pi_k}\left[  \indicator{ \Acal_{h} } \sum_{h' = h}^H \left( \gap(S_{h'}, A_{h'}) - \epsilon_{k}(S_{h'}, A_{h'}) \right)\right]\\
&= \EE_{\pi_k}\left[  \sum_{h = B}^H \left( \gap(S_{h}, A_{h}) - \epsilon_{k}(S_{h}, A_{h}) \right)\right]
\end{align*}
where the last equality uses the definition of $B$, the first time step at which a non-zero gap was encountered.
\end{proof}

\begin{lemma}[Optimism of clipped value function]
\label{lem:opt_clipped_weak_stopping_time}
Let the clipping thresholds $\epsilon_k \colon \Scal \times \Acal \rightarrow \RR^+_0$ used in the definition of $\ddot V_k$ satisfy
\begin{align*}
    \EE_{\pi_k}\left[\sum_{h=B}^H \epsilon_k(S_h, A_h) \right]
    \leq \frac{1}{2} \EE_{\pi_k} \left[\sum_{h=1}^H \gap(S_h, A_h) \right]
\end{align*}
for some optimal policy $\hat \pi$. Then scaled optimism holds for the clipped value function, i.e.,
\begin{align*}
    \ddot V_k(s_1) - V^{\pi_k}(s_1) 
    \geq \frac{1}{2} (V^*(s_1) - V^{\pi_k}(s_1)).
\end{align*}
\end{lemma}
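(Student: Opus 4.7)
The plan is to combine \pref{lem:Vdd_lb1} (which gives a lower bound on $\ddot V_k(s_1) - V^{\pi_k}(s_1)$ in terms of expected gaps minus expected clipping thresholds, summed from the stopping time $B$ onwards) with the general policy gap decomposition \pref{lem:gap_decomp_pi} (which identifies the sum of expected gaps along the trajectory of $\pi_k$ with the value-function difference $V^*(s_1) - V^{\pi_k}(s_1)$), and then invoke the hypothesis on $\epsilon_k$ to absorb the clipping loss into half of this difference.

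Concretely, I would start from the conclusion of \pref{lem:Vdd_lb1}, namely
\begin{align*}
    \ddot V_k(s_1) - V^{\pi_k}(s_1) \geq \EE_{\pi_k}\!\left[\sum_{h=B}^H \gap(S_h, A_h)\right] - \EE_{\pi_k}\!\left[\sum_{h=B}^H \epsilon_k(S_h, A_h)\right].
\end{align*}
The key observation is that $\gap(S_h, A_h) = 0$ for every $h < B$ by the very definition of $B$ as the first time a positive gap is encountered. Therefore the first expectation on the right-hand side coincides with $\EE_{\pi_k}[\sum_{h=1}^H \gap(S_h, A_h)]$, which by \pref{lem:gap_decomp_pi} (applied with $s = s_1$ and taking the Bellman optimal policy as $\hat\pi$) equals exactly $V^*(s_1) - V^{\pi_k}(s_1)$.

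Next, I would apply the hypothesis of the lemma, which bounds the second expectation above by $\tfrac{1}{2}\EE_{\pi_k}[\sum_{h=1}^H \gap(S_h, A_h)]$, i.e.\ by $\tfrac{1}{2}(V^*(s_1) - V^{\pi_k}(s_1))$ by the same identification. Subtracting then gives
\begin{align*}
    \ddot V_k(s_1) - V^{\pi_k}(s_1) \geq (V^*(s_1) - V^{\pi_k}(s_1)) - \tfrac{1}{2}(V^*(s_1) - V^{\pi_k}(s_1)) = \tfrac{1}{2}(V^*(s_1) - V^{\pi_k}(s_1)),
\end{align*}
which is precisely the scaled-optimism inequality claimed.

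There is no real obstacle here since all the heavy lifting has already been done in \pref{lem:Vdd_lb1} (the recursion argument handling surplus clipping across time steps) and in \pref{lem:gap_decomp_pi} (the regret decomposition as a sum of value-function gaps). The only subtlety worth double-checking is the identification $\sum_{h=B}^H \gap(S_h, A_h) = \sum_{h=1}^H \gap(S_h, A_h)$ pathwise, which requires $B$ to be defined as the \emph{first} index where $\gap$ is strictly positive so that all terms before $B$ vanish almost surely; this is exactly the definition of $B$ given in the excerpt, so the argument goes through cleanly.
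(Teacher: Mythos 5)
Your proposal is correct and follows essentially the same route as the paper's proof: both combine \pref{lem:Vdd_lb1} with \pref{lem:gap_decomp_pi}, use the fact that $\gap(S_h,A_h)=0$ for $h<B$ to identify $\sum_{h=B}^H\gap(S_h,A_h)$ with $\sum_{h=1}^H\gap(S_h,A_h)$, and then invoke the hypothesis on $\epsilon_k$ to absorb the clipping loss into half the value difference. The paper merely writes the same chain of (in)equalities starting from $\tfrac12(V^*(s_1)-V^{\pi_k}(s_1))$ rather than from the conclusion of \pref{lem:Vdd_lb1}, which is a purely cosmetic difference.
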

\begin{proof}
The proof works by establishing the following chain of inequalities:
\begin{align*}
    \frac{V^*(s_1) - V^{\pi_k}(s_1)}{2} 
    &\overset{(a)}{=} 
    \frac{1}{2} \EE_{\pi_k}\left[\sum_{h=1}^H\gap(S_{h},A_{h})\right]
        \overset{(b)}{=} 
    \frac{1}{2} \EE_{\pi_k}\left[\sum_{h=B}^H\gap(S_{h},A_{h}))\right]
    \\
    &\overset{(c)}{=}
    \EE_{\pi_k}\left[\sum_{h=B}^H\left( \gap(S_{h},A_{h})) - \frac{1}{2}\gap(S_{h},A_{h}))\right)\right]\\
    &\overset{(d)}{\leq}
    \EE_{\pi_k}\left[\sum_{h=B}^H\left( \gap(S_{h},A_{h})) - \epsilon_k(S_{h},A_{h}))\right)\right]\\
    &\overset{(e)}{\leq} \ddot V_k(s_1) - V^{\pi_k}(s_1).
\end{align*}
Here, $(a)$ uses Lemma~\ref{lem:gap_decomp_pi} and $(b)$ uses the definition of $B$. Step $(c)$ is just algebra and
step $(d)$ uses the assumption on the threshold function. 
The last step $(e)$ follows from Lemma~\ref{lem:Vdd_lb1}.
\end{proof}

\surplusclippingbound*
\begin{proof}
Applying \pref{lem:opt_clipped_weak_stopping_time} which ensures scaled optimism of the clipped value function gives
\begin{align*}
    V^*(s_1) - V^{\pi_k}(s_1) \leq 2(\ddot V_k(s_1) - V^{\pi_k}(s_1))
    = 2\sum_{s,a} w^{\pi_k}(s,a) \ddot E_k(s,a),
\end{align*}
where the equality follows from the definition of $\ddot V_k(s_1)$ and \pref{lem:rec_rel}. Subtracting $\frac 1 2 (V^*(s_1) - V^{\pi_k}(s_1))$ from both sides gives
\begin{align*}
   \frac{1}{2}( V^*(s_1) - V^{\pi_k}(s_1) )\leq  2\sum_{s,a} w^{\pi_k}(s,a) \left( \ddot E_k(s,a) - \frac{\gap(s,a)}{4} \right)
\end{align*}
because \pref{lem:gap_decomp_pi} ensures that $\frac{1}{2}(V^*(s_1) - V^{\pi_k}(s_1)) = \frac{1}{2} \sum_{s,a} w^{\pi_k}(s,a) \gap(s,a)$. Reordering terms yields
\begin{align*}
    V^*(s_1) - V^{\pi_k}(s_1) 
    & \leq 4 \sum_{s,a} w^{\pi_k}(s,a) \left( \ddot E_k(s,a) - \frac{\gap(s,a)}{4} \right)\\
    & = 4 \sum_{s,a} w^{\pi_k}(s,a) \left( \clip\left[ E_k(s,a) ~\bigg| ~ \epsilon_{k}(s, a) \right] - \frac{\gap(s,a)}{4} \right)\\
    & \leq 4 \sum_{s,a} w^{\pi_k}(s,a)  \clip\left[ E_k(s,a) ~\bigg| ~ \epsilon_{k}(s, a)  \vee \frac{\gap(s,a)}{4} \right],
\end{align*}
where the final inequality follows from the general properties of the clipping operator, 
which satisfies 
\begin{align*}
    \clip[a | b ] - c = 
    \begin{cases}
    a - c \leq a & \textrm{for } a \geq b \vee c\\
    0 - c \leq 0 & \textrm{for } a \leq b\\
    a - c \leq 0 & \textrm{for } a \leq c
    \end{cases}
    \leq \clip[a | b \vee c].
\end{align*}
\end{proof}

\subsection{Definition of valid clipping thresholds $\epsilon_{k}$}

\pref{prop:surplus_clipping_bound} establishes a sufficient condition on the clipping thresholds $\epsilon_k$ that ensures that the penalized surplus clipping bounds holds.
We now discuss several choices for this threshold that satisfy this condition.

\paragraph{Minimum positive gap $\gap_{\min}$:} 
We now make the quick observation that taking $\epsilon_{k} \equiv \frac{\gap_{\min}}{2H}$ will satisfy the condition of Proposition~\ref{prop:surplus_clipping_bound}, because on the event $\Bcal \equiv \Acal_{H+1}^c$ there exists at least one positive gap in the sum $\sum_{h=1}^H\gap(S_h,A_h)$, which, by definition, is at least $\gap_{\min}$. This shows that our results already can recover the bounds in prior work, with significantly less effort.

\paragraph{Average gaps:}Instead of the minimum gap which was used in existing analyses, we now show that we can also use the marginalized average gap which we will define now.
Recall that $B = \min\{ h \in [H+1] \colon \gap(S_h, A_h) > 0 \}$ is the first time a non-zero gap is encountered. Note that $B$ is a stopping time w.r.t. the filtration $\mathcal{F}_h = \sigma(S_1, A_1, \dots, S_h, A_h)$. Further let
\begin{align}
    \Bcal(s,a) \equiv \{B \leq \kappa(s), S_{\kappa(s)} = s, A_{\kappa(s)} = a\} 
\end{align}
be the event that $(s,a)$ was visited after a non-zero gap in the episode.
We now define this clipping threshold
\begin{align}
    \label{eqn:new_avgclip_clean}
    \epsilon_k(s,a) \equiv 
    \begin{cases}
    \frac{1}{2H}
    \EE_{\pi_k}\left[ \sum_{h=1}^H \gap(S_h, A_h) ~ \bigg| ~\Bcal(s,a) \right]
    & \textrm{if }\PP_{\pi_k}(\Bcal(s,a)) > 0\\
    \infty & \textrm{otherwise}
    \end{cases}
\end{align}
As the following lemma shows, this is a valid choice which satisfies the condition of \pref{prop:surplus_clipping_bound}.
\begin{lemma}
\label{lem:clipping_gaps_rel}
The expected sum of clipping thresholds in Equation~\eqref{eqn:new_avgclip_clean} over all state-action pairs encountered after a positive gap is at most half the expected total gaps per episode. That is,
\begin{align*}
    \EE_{\pi_k}\left[\sum_{h=B}^H \epsilon_k(S_h, A_h) \right]
    \leq \frac{1}{2} \EE_{\pi_k} \left[\sum_{h=1}^H \gap(S_h, A_h) \right].
\end{align*}
\end{lemma}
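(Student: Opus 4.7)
The plan is to rewrite the LHS, which is a random sum starting at the stopping time $B$, as a deterministic sum over all layers $h \in [H]$ of indicators $\indicator{B \leq h}$ times $\epsilon_k(S_h, A_h)$, and then partition by the identity of $(S_h, A_h)$ at layer $h$ so we can replace the $\epsilon_k$ values with their defining conditional expectations. Concretely, I would first write
\begin{align*}
\EE_{\pi_k}\!\left[\sum_{h=B}^H \epsilon_k(S_h, A_h)\right]
= \sum_{h=1}^H \sum_{\substack{s:\kappa(s)=h\\ a\in\Acal}} \PP_{\pi_k}(\Bcal(s,a))\,\epsilon_k(s,a),
\end{align*}
using that $\Bcal(s,a) = \{B \leq h, S_h = s, A_h = a\}$ for states $s$ with $\kappa(s) = h$, and that terms with $\PP_{\pi_k}(\Bcal(s,a))=0$ contribute zero (so the $\infty$ convention is harmless).

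Next I would plug in the definition of $\epsilon_k(s,a)$, which turns each summand into
\begin{align*}
\PP_{\pi_k}(\Bcal(s,a))\,\epsilon_k(s,a)
= \frac{1}{2H}\,\EE_{\pi_k}\!\left[\indicator{\Bcal(s,a)}\sum_{h'=1}^H \gap(S_{h'}, A_{h'})\right].
\end{align*}
Now comes the key observation: for fixed $h$, the events $\Bcal(s,a)$ with $\kappa(s)=h$ are pairwise disjoint and their union equals $\{B \leq h\}$ almost surely, because the agent is in some state-action pair at layer $h$. Summing the identity above over $(s,a)$ with $\kappa(s)=h$ therefore collapses to
\begin{align*}
\sum_{\substack{s:\kappa(s)=h\\ a\in\Acal}} \PP_{\pi_k}(\Bcal(s,a))\,\epsilon_k(s,a)
= \frac{1}{2H}\,\EE_{\pi_k}\!\left[\indicator{B \leq h}\sum_{h'=1}^H \gap(S_{h'}, A_{h'})\right].
\end{align*}

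Finally, I would swap the outer sum over $h$ with the expectation, yielding a factor $\sum_{h=1}^H \indicator{B \leq h} = (H-B+1)\indicator{B \leq H}$. On $\{B = H+1\}$ all gaps vanish, so we can drop the indicator, and since $H - B + 1 \leq H$ whenever $B \geq 1$, we conclude
\begin{align*}
\EE_{\pi_k}\!\left[\sum_{h=B}^H \epsilon_k(S_h, A_h)\right]
= \frac{1}{2H}\,\EE_{\pi_k}\!\left[(H-B+1)\sum_{h'=1}^H \gap(S_{h'}, A_{h'})\right]
\leq \frac{1}{2}\,\EE_{\pi_k}\!\left[\sum_{h'=1}^H \gap(S_{h'}, A_{h'})\right].
\end{align*}
There is no serious obstacle here; the only subtlety worth being careful about is the disjoint-union identity $\bigcup_{s,a:\kappa(s)=h}\Bcal(s,a) = \{B\leq h\}$ and the handling of pairs with zero probability (which justifies the $\infty$ convention in the definition of $\epsilon_k$).
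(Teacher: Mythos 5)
Your proof is correct, and every step checks out: the reduction of the random-start sum to $\sum_{h=1}^H \EE_{\pi_k}[\indicator{B \le h}\,\epsilon_k(S_h,A_h)]$, the unfolding of the conditional expectation into $\frac{1}{2H}\EE_{\pi_k}[\indicator{\Bcal(s,a)}\sum_{h'}\gap(S_{h'},A_{h'})]$, the disjoint-partition identity $\sum_{s,a:\kappa(s)=h}\indicator{\Bcal(s,a)} = \indicator{B\le h}$, and the final pointwise bound $(H-B+1)\indicator{B\le H}\le H$ against a nonnegative gap sum. The paper proves the same lemma by the same layer-by-layer decomposition, but it executes the middle step differently: it writes $\epsilon_k(S_h,A_h)$ as a ratio of a conditional expectation given $(S_h,A_h)$ to $\PP_{\pi_k}(B\le h \mid S_h,A_h)$, splits the gap sum into the segments $[B,h]$ and $[h+1,H]$, and cancels the probability via the tower property, arriving at the per-layer bound $\frac{1}{2H}\EE_{\pi_k}[\sum_{h'=B}^H\gap(S_{h'},A_{h'})]$. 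Your partition argument sidesteps that machinery entirely and in fact yields the sharper exact identity $\EE_{\pi_k}[\sum_{h=B}^H\epsilon_k(S_h,A_h)] = \frac{1}{2H}\EE_{\pi_k}[(H-B+1)\indicator{B\le H}\sum_{h'=1}^H\gap(S_{h'},A_{h'})]$ before relaxing to the stated inequality; the only bookkeeping you need, and correctly supply, is that pairs with $\PP_{\pi_k}(\Bcal(s,a))=0$ contribute nothing so the $\infty$ convention never bites. The two arguments are mathematically equivalent, but yours is the more economical presentation and makes visible exactly where the factor $H$ (and hence the $1/2H$ normalization in the threshold) is spent.
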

\begin{proof}
We rewrite the LHS of the inequality to show as
$\EE_{\pi_k}\left[\sum_{h=1}^H \indicator{B \leq h}\epsilon_k(S_h, A_h) \right]$ and from now on consider the random variable $f_h(B, S_h, A_h) = \indicator{B \leq h}\epsilon_k(S_h, A_h)$ where $f_h(b, s, a) = \indicator{b \leq h}\epsilon_k(s,a)$  is a deterministic function\footnote{It may still depend on the current policy $\pi_k$ which is determined by observations in episodes $1$ to $k-1$. But, crucially, $f_h$ does not depend on any realization in the $k$-th episode}.
We will show below that $\EE_{\pi_k}\left[ f_h(B, S_h, A_h)\right] \linebreak[1]\leq \linebreak[1]\frac{1}{2H}\EE_{\pi_k}\left[\sum_{h=B}^H \gap(S_{h}, A_{h}) \right]$. This is sufficient to prove the statement, because
\begin{align*}
    \EE_{\pi_k}\left[\sum_{h=B}^H \epsilon_k(S_h, A_h) \right]
        &= \sum_{h=1}^H \EE_{\pi_k}\left[ f_h(B, S_h, A_h)\right]\\
       & \leq \frac{1}{2H}\sum_{h=1}^H\EE_{\pi_k}\left[
    \sum_{h'=B}^H \gap(S_{h'}, A_{h'}) 
    \right]\\
    &= \frac{1}{2}\EE_{\pi_k}\left[
    \sum_{h=B}^H \gap(S_{h}, A_{h}) 
    \right]
    = \frac{1}{2}\EE_{\pi_k}\left[
    \sum_{h=1}^H \gap(S_{h}, A_{h}) 
    \right].
    \end{align*}
To bound the expected value of $f_h(B, S_h, A_h)$, we first write $f_h$ for all triples $b, s, a$ such that $\PP_{\pi_k}(B = b, A_h = a, S_h = s) > 0$ as
\begin{align*}
    f_h(b, s, a) &\overset{(i)}{=} 
    \indicator{b \leq h}
    \frac{1}{2H}
    \EE_{\pi_k}\left[ \sum_{h'=1}^H \gap(S_{h'}, A_{h'}) ~ \bigg| ~B \leq h, ~S_h = s, A_h = a \right]\\
    &\overset{(ii)}{=} 
        \indicator{b \leq h}
    \frac{1}{2H}
    \EE_{\pi_k}\left[ \sum_{h'=B}^h \gap(S_{h'}, A_{h'}) ~ \bigg| ~B \leq h, ~S_h = s, A_h = a \right]\\
        &\quad   + \indicator{b \leq h}
    \frac{1}{2H}
    \EE_{\pi_k}\left[ \sum_{h'=h+1}^H \gap(S_{h'}, A_{h'}) ~ \bigg| S_h = s, A_h = a \right],
\end{align*}
where $(i)$ expands the definition of $\epsilon_k$ and $(ii)$ decomposes the sum inside the conditional expectation and uses the Markov-property to simplify the conditioning for terms after $h$.
Before taking the expectation of $f_h(B,S_h, A_h)$, we first rewrite the conditional expectation in the first term above, which will be useful later.
\begin{align*}
    &\EE_{\pi_k}\left[ \sum_{h'=B}^h \gap(S_{h'}, A_{h'}) ~ \bigg| ~B \leq h, ~S_h = s, A_h = a \right]\\
    &\overset{(i)}{=} \frac{
    \EE_{\pi_k}\left[ \sum_{h'=B}^h \gap(S_{h'}, A_{h'}) \indicator{A_h = a, S_h = s} \indicator{B \leq h}\right]
    }{    \PP_{\pi_k}\left[ B \leq h, ~S_h = s, A_h = a \right]}\\
        &\overset{(ii)}{=} \frac{
    \EE_{\pi_k}\left[ \sum_{h'=B}^h \gap(S_{h'}, A_{h'}) \indicator{A_h = a, S_h = s} \right]
    }{    \PP_{\pi_k}\left[ B \leq h, ~S_h = s, A_h = a \right]}
    \\
        &= \frac{
    \EE_{\pi_k}\left[ \sum_{h'=B}^h \gap(S_{h'}, A_{h'}) ~ \bigg| ~ S_h = s, A_h = a \right]
    }{    \PP_{\pi_k}\left[ B \leq h ~ \mid  ~S_h = s, A_h = a \right]}.
\end{align*}
Here, step $(i)$ uses the property of conditional expectations with respect to an event with nonzero probability and $(ii)$ follows from the definition of $B$: When $B > h$, the sum of gaps until $h$ is zero.
Consider now the expectation of $f_h(B, S_h, A_h)$
\begin{align}
    &\EE_{\pi_k}\left[ f_h(B, S_h, A_h)\right]\nonumber\\
    &=
    \frac{1}{2H}\EE_{\pi_k}\left[\indicator{B \leq h}
    \frac{
    \EE_{\pi_k}\left[ \sum_{h'=B}^h \gap(S_{h'}, A_{h'}) ~ \bigg| ~ S_h, A_h \right]
    }{    \PP_{\pi_k}\left[ B \leq h ~ \mid  ~S_h, A_h \right]}
    \right]
    \label{eqn:term1_fexp}
    \\
        &\quad   + 
    \frac{1}{2H} \EE_{\pi_k}\left[\indicator{B \leq h}
    \EE_{\pi_k}\left[ \sum_{h'=h+1}^H \gap(S_{h'}, A_{h'}) ~ \bigg| S_h , A_h  \right]\right]
    \label{eqn:term2_fexp}
\end{align}
The term in \eqref{eqn:term2_fexp} can be bounded using the tower-property of expectations as
\begin{align*}
    &\frac{1}{2H} \EE_{\pi_k}\left[\indicator{B \leq h}
    \EE_{\pi_k}\left[ \sum_{h'=h+1}^H \gap(S_{h'}, A_{h'}) ~ \bigg| S_h , A_h  \right]\right]\\
    &\leq 
    \frac{1}{2H} \EE_{\pi_k}\left[
    \EE_{\pi_k}\left[ \sum_{h'=h+1}^H \gap(S_{h'}, A_{h'}) ~ \bigg| S_h , A_h  \right]\right]
    = \frac{1}{2H} \EE_{\pi_k}\left[\sum_{h'=h+1}^H \gap(S_{h'}, A_{h'}) \right].
\end{align*}
For the term in \eqref{eqn:term1_fexp}, we also use the tower-property to rewrite it as
\begin{align*}
    &\frac{1}{2H}\EE_{\pi_k}\left[\indicator{B \leq h}
    \frac{
    \EE_{\pi_k}\left[ \sum_{h'=B}^h \gap(S_{h'}, A_{h'}) ~ \bigg| ~ S_h, A_h \right]
    }{    \PP_{\pi_k}\left[ B \leq h ~ \mid  ~S_h, A_h \right]}
    \right]
    \\
    &=
    \frac{1}{2H}\EE_{\pi_k}\left[
    \EE_{\pi_k}\left[
    \indicator{B \leq h}
    \frac{
    \EE_{\pi_k}\left[ \sum_{h'=B}^h \gap(S_{h'}, A_{h'}) ~ \bigg| ~ S_h, A_h \right]
    }{    \PP_{\pi_k}\left[ B \leq h ~ \mid  ~S_h, A_h \right]}
     ~ \bigg| ~ S_h, A_h\right]
    \right]\\
        &=
    \frac{1}{2H}\EE_{\pi_k}\left[
    \EE_{\pi_k}\left[
    \indicator{B \leq h} ~ \bigg| ~ S_h, A_h\right]
    \frac{
    \EE_{\pi_k}\left[ \sum_{h'=B}^h \gap(S_{h'}, A_{h'}) ~ \bigg| ~ S_h, A_h \right]
    }{    \PP_{\pi_k}\left[ B \leq h ~ \mid  ~S_h, A_h \right]}
    \right]\\
            &=
    \frac{1}{2H}\EE_{\pi_k}\left[
    \EE_{\pi_k}\left[ \sum_{h'=B}^h \gap(S_{h'}, A_{h'}) ~ \bigg| ~ S_h, A_h \right]
    \right]\\
    &=\frac{1}{2H}\EE_{\pi_k}\left[
    \sum_{h'=B}^h \gap(S_{h'}, A_{h'}) 
    \right].
\end{align*}
Summing both terms yields the required upper-bound $\frac{1}{2H}\EE_{\pi_k}\left[\sum_{h=B}^H \gap(S_{h}, A_{h}) \right]$ on the expectation $\EE_{\pi_k}\left[ f_h(B, S_h, A_h)\right]$.
\end{proof}

\subsection{Policy-dependent regret bound for \textsc{StrongEuler}}
We now show how to derive a regret bound for \textsc{StrongEuler} algorithm in \citet{simchowitz2019non} that depends on the gaps of the played policies throughout the $K$ episodes.

To build on parts of the analysis in \citet{simchowitz2019non}, we first define some useful notation analogous to \citet{simchowitz2019non} but adapted to our setting:
\begin{align*}
    \bar n_k(s,a) &= \sum_{j=1}^k w^{\pi_k}(s,a),\\
    M &= (SAH)^3,\\
    \Vcal^{\pi}(s,a) &= \VV[R(s,a)] + \VV_{s'\sim P(\cdot|s,a)}[V^{\pi}(s')],\\
    \Vcal_k(s,a) &= \Vcal^{\pi_k}(s,a) \wedge \Vcal^*(s,a)
\end{align*}

We will use their following results:
\begin{proposition}[Proposition~F.1, F.9 and B.4 in \citet{simchowitz2019non}]
\label{prop:strongeuler_surplus_bound}
There is a good event $\Acal^{\mathrm{conc}}$ that holds with probability $1 - \delta / 2$. In this event,  
\textsc{StrongEuler} is strongly optimistic (as well as optimistic). Further, there is a universal constant $c \geq 1$ so that for all $k \geq 1$, $s \in \Scal$, $a \in \Acal$, the surpluses are bounded as
\begin{align*}
        0 \leq \frac{1}{c} E_{k}(s,a) \leq \Blead_{k}(s,a) + \sum_{h=\kappa(s)}^H \EE_{\pi_k}\left[\Bfut_{k}(S_h ,A_h) \mid (S_{\kappa(s)},A_{\kappa(s)}) = (s,a) \right],
\end{align*}
where $\Blead, \Bfut$ are defined as
\begin{align*}
    \Blead_{k}(s,a) &= H \wedge \sqrt{\frac{\Vcal_{k}(s,a)\log(M n_k(s,a)/\delta) }{ n_k(s,a)}},\\
    \Bfut_{k}(s,a) &= H^3 \wedge H^3\left(\sqrt{\frac{S\log(M n_k(s,a)/\delta) }{n_k(s,a)}} + \frac{S\log(M n_k(s,a)/\delta) }{n_k(s,a)}\right)^2.
\end{align*}
\end{proposition}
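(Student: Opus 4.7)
This proposition is essentially a restatement of results from \citet{simchowitz2019non}, so the plan is to follow their template. I would (i) define the good event $\Acal^{\mathrm{conc}}$ via concentration inequalities, (ii) deduce (strong) optimism on that event by backward induction over the layers, and (iii) decompose the surplus into a one-step Bernstein-type bonus plus a recursive propagation error that integrates to $\Bfut$.

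First, I would define $\Acal^{\mathrm{conc}}$ as the intersection, over all $k\in[K]$, $(s,a)\in\Scal\times\Acal$, and auxiliary functions $V\in\{V^*,\bar V_k, V^{\pi_k}\}$, of the events on which Bernstein's inequality applied to $\hat r_k(s,a)-r(s,a)$ and Freedman's / empirical-Bernstein inequality applied to $\langle \hat P_k(\cdot|s,a)-P(\cdot|s,a),V\rangle$ each hold with deviations of order $\sqrt{\Vcal(s,a,V)\log(Mn_k(s,a)/\delta)/n_k(s,a)}$ plus a $\log/n_k$ lower-order term. A union bound over all $(k,s,a)$ combined with a doubling trick in $n_k$ yields $\PP(\Acal^{\mathrm{conc}}) \geq 1-\delta/2$ with $M=(SAH)^3$ absorbing the polynomial factors.

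Next, on $\Acal^{\mathrm{conc}}$, I would prove $E_k(s,a)\geq 0$ (strong optimism) by backward induction on $\kappa(s)$: assuming $\bar V_k \geq V^*$ pointwise on layer $\kappa(s)+1$, the bonus appearing inside $\bar Q_k(s,a)$ is chosen to dominate both $|\hat r_k-r|(s,a)$ and $|\langle \hat P_k-P,\bar V_k\rangle(s,a)|$, which makes the surplus nonnegative; summing surpluses along any trajectory then upgrades this to the usual $\bar V_k\geq V^*$. For the upper bound on $E_k(s,a)$ I would expand
\begin{align*}
    E_k(s,a) = (\hat r_k-r)(s,a) + \langle \hat P_k(\cdot|s,a)-P(\cdot|s,a),\bar V_k\rangle + b_k(s,a),
\end{align*}
split $\bar V_k = V^* + (\bar V_k-V^*)$, and bound each piece. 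The $V^*$ piece is controlled directly by the Bernstein deviation against $\Vcal^*(s,a)$; switching between $\Vcal^*$ and $\Vcal^{\pi_k}$ (to get $\Vcal_k = \Vcal^{\pi_k}\wedge\Vcal^*$) uses that $\bar V_k$ is close to both $V^*$ and $V^{\pi_k}$ on $\Acal^{\mathrm{conc}}$ and gives $\Blead_k(s,a)$. The $(\bar V_k-V^*)$ piece unrolls via the Bellman recursion into $\EE_{\pi_k}\bigl[\sum_{h=\kappa(s)}^H (\text{one-step error})\,\big|\,S_{\kappa(s)}=s,A_{\kappa(s)}=a\bigr]$, and bounding the one-step error by Cauchy--Schwarz together with $\|\hat P_k(\cdot|s',a')-P(\cdot|s',a')\|_1\lesssim \sqrt{S\log(Mn_k/\delta)/n_k(s',a')}$ and then squaring produces exactly the $\Bfut$ form.

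The main obstacle is keeping variance-tightness throughout: the one-step lead term must be controlled by $\Vcal_k$ and not by the trivial $H^2$, which forces us to combine empirical Bernstein on $\bar V_k$ with the translation $\Vcal^{\bar V_k}\approx \Vcal^*\wedge\Vcal^{\pi_k}$ on $\Acal^{\mathrm{conc}}$. Simultaneously, the propagation error must telescope cleanly across layers without re-introducing an un-averaged $1/\sqrt{n_k}$ factor; this is why $\Bfut$ appears as the \emph{square} of a $1/\sqrt{n_k}$ quantity rather than linearly, so that the $\sqrt{S\log/n_k}$ factors from the descendant layer and the current transition multiply together. Strong optimism (rather than plain optimism) is what allows us to bound the surplus by its positive part without additional absolute values, which is essential for the clipping arguments in \pref{prop:surplus_clipping_bound}.
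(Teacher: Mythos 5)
The paper does not prove this proposition at all: it is imported verbatim from \citet{simchowitz2019non} (their Propositions F.1, F.9 and B.4) and used as a black box, so there is no in-paper argument to compare yours against. Your sketch is consistent with the architecture of the proof in that reference — concentration event via Bernstein/Freedman with a union bound, strong optimism by backward induction, and the surplus split into a variance-aware lead term plus a propagated $\ell_1$-transition-error term that yields the squared form of $\Bfut$ — but as written it is a plan rather than a proof, and for the purposes of this paper a citation suffices.
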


\begin{lemma}[Lemma~B.3 in \citet{simchowitz2019non}]
\label{lem:clip_dist}
Let $m \geq 2$, $a_1, \dots, a_m \geq 0$ and $\epsilon \geq 0$. Then $\clip\left[\sum_{i=1}^m a_i \big| \epsilon \right] \leq 2 \sum_{i=1}^m \clip\left[ a_i | \frac{\epsilon}{2m} \right]$.
\end{lemma}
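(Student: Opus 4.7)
The plan is a short case analysis on whether the unclipped sum exceeds the threshold. If $\sum_{i=1}^m a_i < \epsilon$, then the left-hand side is zero by definition of $\clip$ and the inequality is immediate since every term on the right is nonnegative. So the only interesting case is $\sum_{i=1}^m a_i \geq \epsilon$, where $\clip[\sum_i a_i \mid \epsilon] = \sum_i a_i$ and I have to show $\sum_i a_i \le 2\sum_i \clip[a_i \mid \epsilon/(2m)]$.

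To handle that case, I would partition the indices into $L = \{i : a_i \ge \epsilon/(2m)\}$ (the \emph{large} terms) and $S = \{i : a_i < \epsilon/(2m)\}$ (the \emph{small} terms). By the definition of clipping, $\sum_{i=1}^m \clip[a_i \mid \epsilon/(2m)] = \sum_{i \in L} a_i$, so the goal reduces to bounding $\sum_i a_i$ by $2\sum_{i \in L} a_i$. The small terms contribute at most $|S|\cdot \epsilon/(2m) \leq \epsilon/2$ in total, so
\begin{equation*}
\sum_{i=1}^m a_i \;=\; \sum_{i \in L} a_i + \sum_{i \in S} a_i \;\le\; \sum_{i \in L} a_i + \tfrac{\epsilon}{2}.
\end{equation*}
Now I use the case hypothesis $\sum_i a_i \ge \epsilon$, which gives $\epsilon/2 \le \tfrac{1}{2}\sum_i a_i$. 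Plugging this in and rearranging yields $\tfrac{1}{2}\sum_i a_i \le \sum_{i \in L} a_i$, i.e., $\sum_i a_i \le 2\sum_{i \in L} a_i = 2\sum_{i=1}^m \clip[a_i \mid \epsilon/(2m)]$, which is exactly what is needed.

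There is really no obstacle here: the only subtlety is choosing the split threshold $\epsilon/(2m)$ rather than, say, $\epsilon/m$, because we need the total mass of ``small'' terms to be at most $\epsilon/2$ so that the case hypothesis $\sum_i a_i \ge \epsilon$ can absorb it and still leave half the sum to compare against the large terms. This is exactly what produces the multiplicative factor of $2$ on the right-hand side. The assumption $m \ge 2$ is not actually needed for the argument above and is presumably only there because the lemma is used to split a nontrivial sum.
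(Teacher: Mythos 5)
Your proof is correct: the case split on whether $\sum_i a_i \geq \epsilon$, the partition into large and small terms at threshold $\epsilon/(2m)$, and the absorption of the small terms' total mass $\leq \epsilon/2$ into half of the sum is exactly the standard argument (and your remark that $m\geq 2$ is not actually needed is also right). Note that this paper does not prove the lemma itself but imports it from \citet{simchowitz2019non}, whose Lemma~B.3 proof proceeds along essentially the same lines as yours.
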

 Equipped with these results and our improved surplus clipping proposition in \pref{prop:strongeuler_surplus_bound}, we can now derive the following bound on the regret of \textsc{StrongEuler}

 \begin{lemma}
 \label{lem:strongeuler_regB_bound}
In event $\Acal^{\mathrm{conc}}$, the regret of \textsc{StrongEuler} is bounded for all $k\geq 1$ as
 \begin{align*}
 \regret(K) \leq &
 8 \sum_{k=1}^K \sum_{s,a} w^{\pi_k}(s,a) \clip\left[c\Blead_{k}(s,a) ~ \bigg| ~ \frac{\breve\gap_k(s,a)}{4}\right]\\
& +
16 \sum_{k=1}^K \sum_{s,a} w^{\pi_k}(s,a) \clip\left[c\Bfut_{k}(s,a) ~ \bigg| ~  \frac{\breve\gap_k(s,a)}{8SA} \right],
 \end{align*}
 with a universal constant $c \geq 1$ and $\breve \gap_k(s,a) = \frac{\gap(s,a)}{4} \lor \epsilon_{k}(s,a)$.
 \end{lemma}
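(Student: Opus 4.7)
My plan is to chain three ingredients: the improved surplus clipping bound \pref{prop:surplus_clipping_bound}, the \textsc{StrongEuler}-specific surplus bound \pref{prop:strongeuler_surplus_bound}, and the clip-distribution inequality \pref{lem:clip_dist}. I first apply \pref{prop:surplus_clipping_bound} with the average-gap clipping threshold $\epsilon_k$ from \eqref{eqn:new_avgclip_clean}, whose validity is precisely \pref{lem:clipping_gaps_rel}. Summing the resulting per-episode bound over $k$ gives
\[
\regret(K) \leq 4 \sum_{k=1}^K \sum_{s,a} w^{\pi_k}(s,a) \clip\bracket*{E_k(s,a) \mid \breve \gap_k(s,a)}.
\]

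Conditioning on the concentration event $\Acal^{\mathrm{conc}}$, \pref{prop:strongeuler_surplus_bound} decomposes $E_k(s,a) \leq c\Blead_k(s,a) + cF_k(s,a)$ with $F_k(s,a) = \sum_{h=\kappa(s)}^H \EE_{\pi_k}[\Bfut_k(S_h,A_h) \mid (S_{\kappa(s)}, A_{\kappa(s)}) = (s,a)]$. I invoke \pref{lem:clip_dist} with $m=2$ to split the clipped sum into a lead and a future part at threshold $\breve \gap_k(s,a)/4$ each (with an extra factor of $2$); the lead part directly supplies the first term of the claim with coefficient~$8$. For the future part I expand $F_k(s,a) = \sum_{(s',a')} \mu_{s,a}(s',a') \Bfut_k(s',a')$, where $\mu_{s,a}(s',a') \in [0,1]$ is the conditional visit probability of $(s',a')$ under $\pi_k$ given arrival at $(s,a)$; by the layered structure this sum has at most $SA$ nonzero terms. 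A second application of \pref{lem:clip_dist} with $m = SA$, combined with the elementary inequality $\clip[p x \mid \tau] \leq p\,\clip[x \mid \tau]$ (valid for $p \in [0,1]$), yields
\[
\clip\bracket*{cF_k(s,a) \mid \breve \gap_k(s,a)/4} \leq 2 \sum_{(s',a')} \mu_{s,a}(s',a') \clip\bracket*{c\Bfut_k(s',a') \mid \breve \gap_k(s,a)/(8SA)}.
\]

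The main obstacle is the final reindexing. After multiplying by $w^{\pi_k}(s,a)$ and swapping the order of summation so the outer index is the descendant $(s',a')$ with joint weight $w^{\pi_k}(s,a)\mu_{s,a}(s',a') = \PP_{\pi_k}((s,a) \text{ and } (s',a') \text{ both visited})$, the clipping threshold still depends on the ancestor gap $\breve \gap_k(s,a)$ rather than on the descendant gap $\breve \gap_k(s',a')$ required by the claim. Closing this gap requires careful bookkeeping that exploits the definition of $\epsilon_k$ via conditional averages of $\sum_h \gap(S_h, A_h)$ in \eqref{eqn:new_avgclip_clean}: the key is to relate an ancestor's threshold to the descendant's threshold along $\pi_k$-trajectories via a case analysis on whether the first-gap time $B$ falls before or after $\kappa(s)$ (when $B > \kappa(s)$ the ancestor threshold is infinite by convention, and otherwise both conditional expectations integrate the same trajectory-sum over compatible events), and to exploit the fact that each trajectory reaches $(s',a')$ through at most $H \leq SA$ distinct ancestors so the residual ancestor overcount is absorbed into the $8SA$ denominator via a final invocation of \pref{lem:clip_dist}. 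This collapses the double sum to the claimed $16 \sum_k \sum_{s,a} w^{\pi_k}(s,a) \clip[c\Bfut_k(s,a) \mid \breve \gap_k(s,a)/(8SA)]$ bound.
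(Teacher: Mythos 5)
Your route is the paper's route, ingredient for ingredient: \pref{prop:surplus_clipping_bound} instantiated with the average-gap threshold \eqref{eqn:new_avgclip_clean} (whose admissibility is \pref{lem:clipping_gaps_rel}), the surplus bound of \pref{prop:strongeuler_surplus_bound} on $\Acal^{\mathrm{conc}}$, one application of \pref{lem:clip_dist} with $m=2$ and one with $m=SA$, and a final ancestor-to-descendant reindexing. The constants $8$ and $16$ and the thresholds $\breve\gap_k(s,a)/4$ and $\breve\gap_k(s,a)/(8SA)$ all come out correctly, and your use of $\clip[px\mid\tau]\le p\,\clip[x\mid\tau]$ for $p\in[0,1]$ is valid. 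Up to and including the second split, your argument is correct and matches the paper.

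The problem is the step you yourself call the main obstacle: your sketch does not close it. After the second split, the descendant term $\clip[c\Bfut_k(s',a')\mid\breve\gap_k(s,a)/(8SA)]$ carries the \emph{ancestor's} threshold, and since $\clip[x\mid b]$ is non-increasing in $b$, replacing it by the descendant's threshold requires $\breve\gap_k(s,a)\ge\breve\gap_k(s',a')$ along $\pi_k$-trajectories. For stochastic transitions this monotonicity can fail: $\Bcal(s,a)$ and $\Bcal(s',a')$ are not nested events (they condition on different layers and different cut-offs for $B$), so the conditional average gap defining $\epsilon_k$ can strictly \emph{increase} from ancestor to descendant, e.g.\ when $(s',a')$ is reached only on the high-regret branch out of $(s,a)$ while $\epsilon_k(s,a)$ averages over a low-regret branch as well. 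Your two proposed repairs do not supply the missing inequality: the case analysis on whether $B\le\kappa(s)$ only identifies when the ancestor threshold is $\infty$ (the easy case), and a ``final invocation of \pref{lem:clip_dist}'' cannot absorb the discrepancy, both because the $8SA$ budget is already consumed splitting over the at most $SA$ descendants and because \pref{lem:clip_dist} distributes a clip over a sum rather than absorbing a multiplicative overcount of ancestors. To be fair, the paper's own proof is silent at exactly this point --- it justifies only the weight reindexing $\sum_{s,a}w^{\pi_k}(s,a)\,\PP^{\pi_k}[\,\cdot\mid\cdot\,]$ and never discusses what happens to the threshold --- so you have correctly located the delicate step; but as written your treatment of it is an assertion, not an argument, and the lemma is not proved.
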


\begin{proof}
We now use our improved surplus clipping result from \pref{prop:surplus_clipping_bound} as a starting point to bound the instantaneous regret of \textsc{StrongEuler} in the $k$th episode as
\begin{align}
V^*(s_1) - V^{\pi_k}(s_1) 
\leq  
4\sum_{s,a} w^{\pi_k}(s,a) \clip\left[E_{k}(s,a)~ \bigg| \breve \gap_k(s,a) ~\right].
\label{eq:surplus_clip1}
\end{align}
Next, we write the bound on the surpluses from \pref{prop:strongeuler_surplus_bound} as
\begin{align*}
     E_{k}(s,a) \leq &~ c\Blead_{k}(s,a) \\
     &+ c \sum_{s', a'} \indicator{\kappa(s') \geq \kappa(s)}\PP^{\pi_k}\left[ S_{\kappa(s')} = s', A_{\kappa(s')} = a' \mid (S_{\kappa(s)},A_{\kappa(s)}) = (s,a) \right] \Bfut_{k}(s' ,a')
\end{align*}
and plugging it in \pref{eq:surplus_clip1} and applying \pref{lem:clip_dist} gives
\begin{align*}
    V^*(s_1) - V^{\pi_k}(s_1) 
\leq & ~  
8 \sum_{s,a} w^{\pi_k}(s,a) \clip\left[c\Blead_{k}(s,a) ~ \bigg| ~ \frac{\breve\gap_k(s,a)}{4}\right]\\
& +
16 \sum_{s,a} w^{\pi_k}(s,a) \clip\left[c\Bfut_{k}(s,a) ~ \bigg| ~  \frac{\breve\gap_k(s,a)}{8SA} \right].
\end{align*}
The statement to show follows now by summing over $k \in [K]$. The form of the second term in the previous display follows from the inequality 
\begin{align*}
    &\sum_{s,a}w^{\pi_k}(s,a) \indicator{\kappa(s') \geq \kappa(s)}\PP^{\pi_k}\left[ S_{\kappa(s')} = s', A_{\kappa(s')} = a' \mid (S_{\kappa(s)},A_{\kappa(s)})
    = (s,a) \right] \\
    &\leq \sum_{s,a}w^{\pi_k}(s,a) \PP^{\pi_k}\left[ S_{\kappa(s')} = s', A_{\kappa(s')} 
    = a' \mid (S_{\kappa(s)},A_{\kappa(s)}) = (s,a) \right] = w^{\pi_k}(s', a').
\end{align*}
\end{proof}
We note that if $\pi_k \equiv \hat\pi$ for any $\hat\pi \in \Pi^*$ then $V^*(s_1) - V^{\pi_k}(s_1) = 0$, and WLOG we can disregard such terms in the total regret.

The next step is to relate $\bar n_k(s,a)$ to $n_k(s,a)$ via the following lemma.
\begin{lemma}[Lemma~B.7 in \citet{simchowitz2019non}]
\label{lem:b7_simchowitz}
Define the event $\Acal^{\mathrm{samp}}$
\begin{align*}
    \Acal^{\mathrm{samp}} = \left\{\forall (s,a) \in \Scal\times\Acal, \forall k \geq \tau(s,a) \colon n_k(s,a) \geq \frac{\bar n_k(s,a)}{4}\right\},
\end{align*}
where $\tau(s,a) = \inf\{k : \bar n_k(s,a) \geq H_{\mathrm{samp}}\}$ and $H_{\mathrm{samp}} = c' \log(M/\delta)$ for a universal constant $c'$. Then event $\Acal^{\mathrm{samp}}$ holds with probability $1 - \delta/2$.
\end{lemma}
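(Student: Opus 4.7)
The plan is to prove this via a multiplicative concentration bound for the Bernoulli-like indicator martingale of visits to each fixed state-action pair, followed by a union bound over $\Scal \times \Acal$. Fix $(s,a)$ and let $X_j = \indicator{S_{\kappa(s)}^{(j)} = s,\, A_{\kappa(s)}^{(j)} = a}$ denote the indicator that $(s,a)$ is visited in episode $j$. Let $\Hcal_{j-1}$ be the sigma-algebra generated by episodes $1,\dots,j-1$; then $\pi_j$ is $\Hcal_{j-1}$-measurable, so $\EE[X_j \mid \Hcal_{j-1}] = w^{\pi_j}(s,a)$. Consequently $n_k(s,a) = \sum_{j=1}^k X_j$ and $\bar n_k(s,a) = \sum_{j=1}^k \EE[X_j \mid \Hcal_{j-1}]$, so $\bar n_k(s,a) - n_k(s,a)$ is a martingale with increments in $[-1,1]$ and conditional variance at most $w^{\pi_j}(s,a)(1-w^{\pi_j}(s,a)) \leq w^{\pi_j}(s,a)$.

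The next step is to apply a multiplicative Chernoff/Freedman-style inequality for martingales adapted to bounded $[0,1]$ increments with a predictable variance proxy. Such a bound yields: with probability at least $1 - \delta'$, simultaneously for all $k \geq 1$,
\begin{align*}
    \bar n_k(s,a) - n_k(s,a) \leq \tfrac{1}{2}\bar n_k(s,a) + C\log(1/\delta')
\end{align*}
for an absolute constant $C$. (Anytime guarantees of this form are standard; they can be obtained via a stitching/peeling argument over a geometric grid of values of $\bar n_k(s,a)$, or by using a time-uniform variant of Freedman.) Rearranging, for every $k$ with $\bar n_k(s,a) \geq 4 C\log(1/\delta')$ we have $n_k(s,a) \geq \tfrac{1}{2}\bar n_k(s,a) - C\log(1/\delta') \geq \tfrac{1}{4}\bar n_k(s,a)$, which is exactly the desired conclusion at the pair $(s,a)$ once we set $\delta' = \delta/(2SA)$ and choose $c'$ so that $H_{\mathrm{samp}} = c'\log(M/\delta) \geq 4C\log(1/\delta')$ (which is easy since $M = (SAH)^3$).

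Finally, take a union bound over all $(s,a) \in \Scal \times \Acal$, absorbing the factor $SA$ into the choice of $c'$ (and correspondingly into $H_{\mathrm{samp}}$). This yields $\PP(\Acal^{\mathrm{samp}}) \geq 1 - SA \cdot \delta/(2SA) = 1 - \delta/2$, as required.

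The main obstacle is producing the \emph{anytime} multiplicative deviation bound: a naive single-time Freedman inequality gives the statement only for a fixed $k$, whereas the lemma quantifies over all $k \geq \tau(s,a)$. I would address this by peeling: partition $\NN$ into dyadic blocks $\{k : 2^i \leq \bar n_k(s,a) < 2^{i+1}\}$, apply the fixed-time inequality at the right endpoint of each block with failure probability $\delta'/(i+1)^2$, and sum, using the monotonicity of $\bar n_k(s,a)$ to transfer the bound to every $k$ inside the block. Everything else (the martingale setup, variance control by $\bar n_k$, and the final union bound over $SA$ pairs) is routine.
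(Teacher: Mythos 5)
Your proposal is correct and follows essentially the same route as the paper, which simply defers to Lemma~B.7 of Simchowitz--Jamieson and Lemma~6 of Dann et al.\ (2019); those proofs are exactly the martingale setup with an anytime multiplicative Freedman/Chernoff bound and a union bound over state-action pairs that you reconstruct. Your observation that each $(s,a)$ is observed at most once per episode is also the same point the paper makes when noting that the layered assumption lets $H_{\mathrm{samp}}$ drop an $H$ factor relative to the original lemma.
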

\begin{proof}
    This can be proved analogously to Lemma~B.7 in \citet{simchowitz2019non} and Lemma~6 in \citet{dann2019policy} with the difference that in our case, there can only be at most one observation of $(s,a)$ per episode for each $(s,a)$ due to our layered assumption. Thus, there is no need to sum over observations accumulated for each $h \in [H]$ and our  $H_{\mathrm{samp}} = O(\log(H))$ as opposed to $O(H \log(H))$.
\end{proof}

\begin{lemma}
\label{lem:fntofbarn}
Let $f_{s,a} \colon \NN \rightarrow \RR$ be non-increasing with $\sup_{u} f_{s,a}(u) \leq \hat f < \infty$ for all $s,a \in \Scal \times \Acal$. Then on event $\Acal^{\mathrm{samp}}$ in \pref{lem:b7_simchowitz}, we have
\begin{align*}
    \sum_{k=1}^K \sum_{s,a} w^{\pi_k}(s,a) f_{s,a}(n_k(s,a)) \leq S A \hat f H_{\mathrm{samp}} +
    \sum_{s,a} \sum_{k=\tau(s,a)}^K  w^{\pi_k}(s,a) f_{s,a}(\bar n_k(s,a) / 4).
\end{align*}
\end{lemma}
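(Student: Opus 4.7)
The plan is to decompose the double sum by splitting the range of $k$ at the stopping time $\tau(s,a)$, and then handle the two resulting pieces separately.

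First, for each $(s,a)$, I would write
\begin{align*}
\sum_{k=1}^K w^{\pi_k}(s,a) f_{s,a}(n_k(s,a)) = \sum_{k=1}^{\tau(s,a)-1} w^{\pi_k}(s,a) f_{s,a}(n_k(s,a)) + \sum_{k=\tau(s,a)}^K w^{\pi_k}(s,a) f_{s,a}(n_k(s,a)).
\end{align*}
The first, "early" piece is bounded by the uniform upper bound $\hat f$ on $f_{s,a}$ times $\sum_{k=1}^{\tau(s,a)-1} w^{\pi_k}(s,a) = \bar n_{\tau(s,a)-1}(s,a)$, and by the very definition of $\tau(s,a)$ as the first index at which $\bar n_k(s,a) \geq H_{\mathrm{samp}}$, this partial sum is strictly less than $H_{\mathrm{samp}}$. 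Thus the contribution of the early piece is at most $\hat f\, H_{\mathrm{samp}}$ per state-action pair, which sums to at most $SA\,\hat f\, H_{\mathrm{samp}}$ overall, matching the first term in the claimed bound.

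For the "late" piece, I invoke the event $\Acal^{\mathrm{samp}}$ from \pref{lem:b7_simchowitz}, which ensures $n_k(s,a) \geq \bar n_k(s,a)/4$ for every $k \geq \tau(s,a)$. Since $f_{s,a}$ is non-increasing by hypothesis, this implies $f_{s,a}(n_k(s,a)) \leq f_{s,a}(\bar n_k(s,a)/4)$ for all such $k$. Summing over $k \geq \tau(s,a)$ and then over $(s,a)$ yields exactly the second term in the claimed bound. Adding the two pieces completes the proof.

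The only real subtlety is the passage from $n_k$ to $\bar n_k/4$, which is not a routine calculation but is handed to us by \pref{lem:b7_simchowitz}; once that is in hand, the argument is just a clean split-at-$\tau(s,a)$ followed by monotonicity of $f_{s,a}$. Nothing here requires the layered structure or details of \textsc{StrongEuler} beyond what is packaged into $\Acal^{\mathrm{samp}}$, so the main obstacle is purely conceptual: recognizing that $\tau(s,a)$ is the right breakpoint and that $\bar n_{\tau(s,a)-1}(s,a) < H_{\mathrm{samp}}$ provides the $SA\,\hat f\, H_{\mathrm{samp}}$ overhead for free.
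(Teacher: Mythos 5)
Your proof is correct and follows essentially the same route as the paper: split each $(s,a)$-sum at $\tau(s,a)$, bound the early part by $\hat f\,\bar n_{\tau(s,a)-1}(s,a) < \hat f\, H_{\mathrm{samp}}$, and use the event $\Acal^{\mathrm{samp}}$ together with monotonicity of $f_{s,a}$ on the late part. If anything, your handling of the early piece is slightly cleaner than the paper's, which loosely writes the accumulated weight as $n_{\tau(s,a)}(s,a)$ before bounding it by $H_{\mathrm{samp}}$.
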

\begin{proof}
\begin{align*}
        &\sum_{k=1}^K \sum_{s,a} w^{\pi_k}(s,a) f_{s,a}(n_{k}(s,a)) \\
        =&~\sum_{s,a}\sum_{k=1}^{\tau(s,a)-1}  w^{\pi_k}(s,a)
        f_{s,a}(n_{k}(s,a))
        +\sum_{s,a}\sum_{k=\tau(s,a)}^{K}  w^{\pi_k}(s,a)
        f_{s,a}(n_{k}(s,a))\\
        \leq&~\sum_{s,a}\left(\sum_{k=1}^{\tau(s,a)-1}  w^{\pi_k}(s,a)\right) \hat f
        +\sum_{s,a}\sum_{k=\tau(s,a)}^{K}  w^{\pi_k}(s,a)
        f_{s,a}(\bar n_{k}(s,a) / 4)\\
        =& ~
        \sum_{s,a}n_{\tau(s,a)}(s,a) \hat f
        +\sum_{s,a}\sum_{k=\tau(s,a)}^{K}  w^{\pi_k}(s,a)
        f_{s,a}(\bar n_{k}(s,a) / 4)\\
        \leq&~         SA H_{\mathrm{samp}} \hat f
        +\sum_{s,a}\sum_{k=\tau(s,a)}^{K}  w^{\pi_k}(s,a)
        f_{s,a}(\bar n_{k}(s,a) / 4).
\end{align*}
\end{proof}

\begin{theorem}[Regret Bound for \textsc{StrongEuler}]
\label{thm:reg_bound_gen_se}
With probability at least $1 - \delta$, the regret of \textsc{StrongEuler} is bounded for all number of episodes $K \in \NN$ as
\begin{align*}
    \regret(K) \lesssim &~
    \sum_{s,a} \min_{t \in [K_{(s,a)}]}\Bigg\{\frac{\Vcal^*(s,a)\mathcal{LOG}(M/\delta,t,\breve\gap_t(s,a))}{\breve\gap_t(s,a)}\\
    &+ \sqrt{(K_{(s,a)}-t)\mathcal{LOG}(M/\delta,K_{(s,a)},\breve\gap_{K_{(s,a)}}(s,a))}\Bigg\}
    \\&
     + \sum_{s,a} SH^3 \log\frac{MK}{\delta} \min\left\{ \log \frac{MK}{\delta}, \log \frac{MH}{\breve \gap_{\min}(s,a)}\right\}
     \\& + SAH^3 (S \vee H)\log\frac{M}{\delta}.
\end{align*}
Here, $K_{(s,a)}$ is the last round during which a policy $\pi$ was played such that $w^{\pi}(s,a)>0$, $\breve\gap_{t}(s,a) = \gap(s,a) \lor \epsilon_{t}(s,a)$, $\breve \gap_{\min}(s,a) = \min_{k \in [K] \colon \breve\gap_k(s,a) > 0} \breve\gap_k(s,a)$ is the smallest gap encountered for each $(s,a)$, and $\mathcal{LOG}(M/\delta,t,\breve\gap_t(s,a)) = \log\left(\frac{M}{\delta}\right)\log\left(t\land 1 + \frac{16\Vcal^*(s,a)\log(M/\delta)}{\breve\gap_t(s,a)^2}\right)$.
\end{theorem}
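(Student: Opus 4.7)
I would condition on the intersection of the concentration event $\Acal^{\mathrm{conc}}$ of \pref{prop:strongeuler_surplus_bound} and the sampling event $\Acal^{\mathrm{samp}}$ of \pref{lem:b7_simchowitz}, which by a union bound holds with probability at least $1 - \delta$. On this event, \pref{lem:strongeuler_regB_bound} bounds the regret by two sums, one involving $\clip[c\Blead_k(s,a) \,|\, \breve\gap_k(s,a)/4]$ and one involving $\clip[c\Bfut_k(s,a) \,|\, \breve\gap_k(s,a)/(8SA)]$, each weighted by $w^{\pi_k}(s,a)$. Applying \pref{lem:fntofbarn} with $f_{s,a}(u)$ equal to each bonus viewed as a non-increasing function of the visitation count, and using the crude bounds $\Blead_k \leq H$ and $\Bfut_k \leq H^3$, lets me replace the empirical count $n_k(s,a)$ inside the bonuses by $\bar n_k(s,a)/4$. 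The resulting residuals are of order $SAH \cdot H_{\mathrm{samp}}$ and $SAH^3 \cdot H_{\mathrm{samp}}$, which together with $H_{\mathrm{samp}} = O(\log(M/\delta))$ are absorbed into the additive $SAH^3(S \vee H)\log(M/\delta)$ term.

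The bulk of the work is to bound, for each $(s,a)$ separately, the $\Blead$ contribution $\sum_{k \leq K_{(s,a)}} w^{\pi_k}(s,a)\, \clip[c\Blead_k(s,a) \,|\, \breve\gap_k(s,a)/4]$. For any splitting time $t \in [K_{(s,a)}]$, I split the sum at $t$. On the prefix $k \leq t$, I keep the clip active: a round contributes only when $\bar n_k(s,a) \lesssim \Vcal^*(s,a)\log(M/\delta)/\breve\gap_k(s,a)^2$, so combining the pigeonhole identity $\sum_k w^{\pi_k}(s,a)/\sqrt{\bar n_k(s,a)} \lesssim \sqrt{\bar n_t(s,a)}$ with Cauchy--Schwarz and the explicit form of $\Blead_k$ yields the $\Vcal^*(s,a)\,\mathcal{LOG}(M/\delta, t, \breve\gap_t(s,a))/\breve\gap_t(s,a)$ piece. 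On the suffix $k > t$, I drop the clip and apply the same pigeonhole argument over the $K_{(s,a)} - t$ remaining increments of $\bar n_k(s,a)$, combined with $\Blead_k(s,a) \leq c\sqrt{\Vcal^*(s,a)\log(M\bar n_k/\delta)/\bar n_k(s,a)}$, to obtain the $\sqrt{(K_{(s,a)} - t)\,\mathcal{LOG}(M/\delta, K_{(s,a)}, \breve\gap_{K_{(s,a)}}(s,a))}$ worst-case term. Taking the minimum over $t$ produces the first line of the stated bound.

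For the $\Bfut$ contribution, the key observation is that $\Bfut_k(s,a)$ decays like $S/\bar n_k(s,a)$ rather than $1/\sqrt{\bar n_k(s,a)}$, so $\sum_k w^{\pi_k}(s,a) \Bfut_k(s,a)$ telescopes logarithmically in $\bar n_{K_{(s,a)}}(s,a)$. The clip at threshold $\breve\gap_k(s,a)/(8SA)$ further caps the effective range: the term vanishes once $\bar n_k(s,a) \gtrsim SA\, H^{3/2}/\breve\gap_k(s,a)$, producing the $\log(MH/\breve\gap_{\min}(s,a))$ alternative inside the $\min$. After summing over $(s,a)$ this becomes the $SH^3 \log(MK/\delta)\min\{\log(MK/\delta), \log(MH/\breve\gap_{\min}(s,a))\}$ term.

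The main obstacle is handling the fact that the clipping threshold $\breve\gap_k(s,a) = \gap(s,a) \vee \epsilon_k(s,a)$ is not monotonic in $k$, since $\epsilon_k$ depends on the currently played policy $\pi_k$ (see \eqref{eqn:new_avgclip_clean}). This prevents a clean ``last unclipped round determines the bound'' argument, and is precisely the reason the final bound is written as a minimum over a splitting time $t$: we are free to choose a reference time at which $\breve\gap_t(s,a)$ is favorable and pay the worst-case $\sqrt{K_{(s,a)} - t}$ price only for the suffix. A careful accounting of the $\log$ factors arising from the concentration event inside $\mathcal{LOG}$, together with the tightness of the pigeonhole inequalities on the prefix and suffix, will yield the stated constants.
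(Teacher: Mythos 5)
Your plan follows the paper's proof essentially step for step: condition on $\Acal^{\mathrm{conc}} \cap \Acal^{\mathrm{samp}}$, start from \pref{lem:strongeuler_regB_bound}, convert $n_k(s,a)$ to $\bar n_k(s,a)/4$ via \pref{lem:fntofbarn} (absorbing the $SAH^3 H_{\mathrm{samp}}$ residuals into the additive term), and then treat the two bonus sums separately. Your split-at-$t$ pigeonhole argument for the $\Blead$ term is precisely what the paper packages as the optimization lemma (\pref{lem:sa_regret_bound}), and your telescoping/capping treatment of the $\Bfut$ term is the integration-lemma argument (Lemmas~C.1 and B.9(b) of \citet{simchowitz2019non}) that the paper invokes, so the two proofs coincide in substance.
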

\begin{proof}
We here consider the event $\Acal^{\mathrm{conc}} \cap \Acal^{\mathrm{samp}}$ which has probability at least $1 - \delta$ by \pref{prop:strongeuler_surplus_bound} and \pref{lem:b7_simchowitz}. We now start with the regret bound in \pref{lem:strongeuler_regB_bound} and bound the two terms individually in the following:
\paragraph{Bounding the $\Blead$ term}
We have
\begin{align}
& \sum_{k=1}^K \sum_{s,a} w^{\pi_k}(s,a) \clip\left[c\Blead_{k}(s,a) ~ \bigg| ~ \frac{\breve\gap_k(s,a)}{4}\right]\nonumber\\
&\overset{(i)}{\leq} 
SAH H_{\mathrm{samp}} + \sum_{s,a}\sum_{k=\tau(s,a)}^{K}  w^{\pi_k}(s,a)
\clip\left[c\sqrt{\frac{4\Vcal_{k}(s,a)\log(M \bar n_k(s,a)/4\delta) }{ \bar n_k(s,a)}} ~ \bigg| ~ \frac{\breve\gap_k(s,a)}{4}\right]
\nonumber\\
&\overset{(ii)}{\leq} \!
SAH H_{\mathrm{samp}} +\! \sum_{s,a}\sum_{k=\tau(s,a)}^{K_{(s,a)}} \!\!\! w^{\pi_k}(s,a)\!
\clip\left[2c\sqrt{\Vcal^*(s,a)\log\frac{M}{\delta}}\sqrt{\frac{\log(\bar n_k(s,a)) }{ \bar n_k(s,a)}} ~ \bigg| ~ \frac{\breve\gap_k(s,a)}{4}\right],
\label{eq:optapp1}
\end{align}
where step $(i)$ applies \pref{lem:fntofbarn} and $(ii)$ follows from the definition of $\Vcal_k(s,a)$, the definition of $K_{(s,a)}$ and
\begin{align*}
    &\log\left(\frac{M \bar n_k(s,a)}{4\delta}\right) = 
    \log\left(\frac{M}{4\delta}\right)
    +\log\left(\bar n_k(s,a)\right)\\
    &\leq \left(\log\left(\frac{M}{4\delta}\right) + 1\right)\log(\bar n_k(s,a)) 
    =  \log\left(\frac{Me}{4\delta}\right) \log(\bar n_k(s,a)) \leq \log(M/ \delta) \log(\bar n_k(s,a)).
\end{align*}
We now apply our optimization lemma (\pref{lem:sa_regret_bound}) with $x_k = w^{\pi_k}(s,a)$, $v_k = 2c\sqrt{\Vcal^*(s,a)\log(M /\delta)}$, and $\epsilon_{k} = \frac{\breve \gap_{k}(s,a)}{4 v_k}$ to bound each $(s,a)$-term in \pref{eq:optapp1} for any $t \in [K]$ as
\begin{align*}
    &4\frac{v_t}{\epsilon_t}\log\left(t\land 1 + \frac{1}{\epsilon_t^2}\right) + 4v_t\sqrt{\log\left(K\land 1+\frac{1}{\epsilon_K^2}(K-t)\right)}\\
    =&\frac{32c^2\Vcal^*(s,a)\log\left(\frac{M}{\delta}\right)\log\left(t\land 1 + \frac{16\Vcal^*(s,a)\log(M/\delta)}{\breve\gap_t(s,a)^2}\right)}{\breve\gap_t(s,a)}\\
    +&8c\sqrt{(K-t)\Vcal^*(s,a)\log\left(\frac{M}{\delta}\right)\log\left(K\land 1 + \frac{16\Vcal^*(s,a)\log(M/\delta)}{\breve\gap_K(s,a)^2}\right)}.
\end{align*}
Let $\mathcal{LOG}(M/\delta,t,\breve\gap_t(s,a)) = \log\left(\frac{M}{\delta}\right)\log\left(t\land 1 + \frac{16\Vcal^*(s,a)\log(M/\delta)}{\breve\gap_t(s,a)^2}\right)$. We have
\begin{align*}
& \sum_{k=\tau(s,a)}^{K}  w^{\pi_k}(s,a)
\clip\left[2c\sqrt{\Vcal^*(s,a)\log(M /4\delta)}\sqrt{\frac{\log(\bar n_k(s,a)) }{ \bar n_k(s,a)}} ~ \bigg| ~ \frac{\breve\gap_k(s,a)}{4}\right]\\
\leq &\frac{32c^2\Vcal^*(s,a)\mathcal{LOG}(M/\delta,t,\breve\gap_t(s,a))}{\breve\gap_t(s,a)} + 8c\sqrt{(K-t)\mathcal{LOG}(M/\delta,K,\breve\gap_K(s,a))}.
\end{align*}
%
Plugging this bound back in \pref{eq:optapp1} gives
\begin{align*}
    &\sum_{k=1}^K \sum_{s,a} w^{\pi_k}(s,a) \clip\left[c\Blead_{k}(s,a) ~ \bigg| ~ \frac{\breve\gap_k(s,a)}{4}\right]\\
    &\lesssim SAH \log\frac{M}{\delta}\\
    &+  \sum_{s,a} \min_{t \in [K_{(s,a)}]}\Bigg\{\frac{\Vcal^*(s,a)\mathcal{LOG}(M/\delta,t,\breve\gap_t(s,a))}{\breve\gap_t(s,a)} + \sqrt{(K_{(s,a)}-t)\mathcal{LOG}(M/\delta,K,\breve\gap_{K_{(s,a)}}(s,a))}
    \Bigg\}
\end{align*}
where $\lesssim$ only ignores absolute constant factors.
\paragraph{Bounding the $\Bfut$ term}
Consider the second term in \pref{lem:strongeuler_regB_bound} and event  $\Acal^{\mathrm{conc}} \cap \Acal^{\mathrm{samp}}$. Then by \pref{lem:fntofbarn}
\begin{align*}
    &\sum_{k=1}^K \sum_{s,a} w^{\pi_k}(s,a) \clip\left[c\Bfut_{k}(s,a) ~ \bigg| ~  \frac{\breve\gap_k(s,a)}{8SA} \right]\\
    &\leq SAH^3 H_{\mathrm{samp}} + \sum_{s,a} \sum_{k=\tau(s,a)}^{K}  w^{\pi_k}(s,a) f_{s,a}(\bar n_k(s,a))
\end{align*}
where $f_{s,a}$ is
\begin{align*}
    f_{s,a}(\bar n_k(s,a)) = \clip\left[2c H^3 \wedge 2cH^3 \left(\sqrt{\frac{S\log(M \bar n_k(s,a)/\delta) }{\bar n_k(s,a)}} + \frac{S\log(M \bar n_k(s,a)/\delta) }{\bar n_k(s,a)}\right)^2 ~ \bigg| ~ \frac{\breve\gap_k(s,a)}{4}\right].
\end{align*}
We now apply Lemma~C.1 by \citet{simchowitz2019non} which gives
\begin{align*}
    &\sum_{k=1}^K \sum_{s,a} w^{\pi_k}(s,a) \clip\left[c\Bfut_{k}(s,a) ~ \bigg| ~  \frac{\breve\gap_k(s,a)}{8SA} \right]\\
    &\leq SAH^3 H_{\mathrm{samp}} + \sum_{s,a} H f_{s,a}(H) + \sum_{s,a}\int_{H}^{\bar n_K(s,a)}  f_{s,a}(u) du\\
    &\leq SAH^4 c'\log(M/\delta) + \sum_{s,a}\int_{H}^{\bar n_K(s,a)}  f_{s,a}(u) du.
\end{align*}
The remaining integral term is bounded with Lemma~B.9 (b) by \citet{simchowitz2019non} with $C' = S, C= H^3$ and $\epsilon = \breve \gap_{\min}(s,a) = \min_{k \in [K_{(s,a)}] \colon \breve\gap_k(s,a) > 0} \breve\gap_k(s,a)$ as follows.
\begin{align*}
    &\sum_{k=1}^K \sum_{s,a} w^{\pi_k}(s,a) \clip\left[c\Bfut_{k}(s,a) ~ \bigg| ~  \frac{\breve\gap_k(s,a)}{8SA} \right]\\
    &\lesssim  SAH^4 \log\frac{M}{\delta} + \sum_{s,a}\left(SH^3 \log\frac{M}{\delta}
    + SH^3 \log\frac{MK}{\delta} \min\left\{ \log \frac{MK}{\delta}, \log \frac{MH}{\breve \gap_{\min}(s,a)}\right\}\right)\\
        &\lesssim  SAH^3 (S \vee H)\log\frac{M}{\delta} + \sum_{s,a} SH^3 \log\frac{MK}{\delta} \min\left\{ \log \frac{MK}{\delta}, \log \frac{MH}{\breve \gap_{\min}(s,a)}\right\}
\end{align*}

\end{proof}

\paragraph{Comparing with the bound in \citet{simchowitz2019non}.} We now proceed to compare our bound directly to the one stated in Corollary B.1~\citep{simchowitz2019non}. We will ignore the factors with only poly-logarithmic dependence on gaps as they are are common between both bounds. We now recall the regret bound presented in Corollary B.1, modulo said factors:
\begin{align*}
    \regret(K) \leq O\Bigg(\sum_{(s,a) \in \Zcal_{sub}} \frac{\alpha H\Vcal^*(s,a)}{\gap(s,a)}\mathcal{LOG}(M/\delta,K,\gap(s,a)) + |\Zcal_{opt}|\frac{H\Vcal^*}{\gap_{\min}}\mathcal{LOG}(M/\delta,K,\gap_{\min})\Bigg),
\end{align*}
where $\Vcal^* = \max_{(s,a)}\Vcal(s,a)$, $\Zcal_{opt}$ is the set on which $\gap(s,\pi^*(s)) = 0$, i.e., the set of state-action pairs assigned to $\pi^*$ according to the Bellman optimality condition, and $\Zcal_{sub}$ is the complement of $\Zcal_{opt}$. If we take $t = K$ in Theorem~\ref{thm:reg_bound_gen_se}, we have the following upper bound:
\begin{align*}
    \regret(K) \leq O\Bigg(\sum_{(s,a) \in \Zcal_{sub}} \frac{\Vcal^*(s,a)\mathcal{LOG}(M/\delta,K,\gap(s,a))}{\gap(s,a)} + \frac{H\Vcal^*|\Scal_{opt}|\mathcal{LOG}(M/\delta,K,\gap_{\min})}{\min_{k,s,a}\epsilon_k(s,a)}\Bigg),
\end{align*}
where $\Scal_{opt}$ is the set of all states for $s\in\Scal$ for which $\gap(s,\pi^*(s)) = 0$ and there exists at least one state $s'$ with $\kappa(s')<s$ for which $\gap(s',\pi^*(s))>0$. We note that this set is no larger than the set $\mathcal{Z}_{opt}$ and further that even the smallest $\epsilon_k(s,a)$ can still be much larger than $\gap_{\min}$, as it is the conditional average of the gaps. In particular, this leads to an arbitrary improvement in our example in Figure~\ref{fig:summary} and an improvement of $SA$ in the example in Figure~\ref{fig:fail_gap2}.

\subsection{Nearly tight bounds for deterministic transition MDPs}
We recall that for deterministic MDPs, $\epsilon_{k}(s,a) = \frac{V^*(s_1) - V^{\pi_k}(s_1)}{2H},\forall a$ and the definition of the set $\Pi_{s,a}$:
$$
\Pi_{s,a} \equiv \{\pi \in \Pi ~\colon s^\pi_{\kappa(s)} = s ,a^\pi_{\kappa(s)} = a, \exists~ h \leq \kappa(s), \gap(s^\pi_{h},a^\pi_{h})>0\}.
$$
We note that $\Vcal(s,a) \leq 1$ as this is just the variance of the reward at $(s,a)$.
Theorem~\ref{thm:reg_bound_gen_se} immediately yields the following regret bound by taking $t=K$.
\begin{corollary}[Explicit bound from \pref{eq:det_trans_reg}]
\label{cor:det_trans_formal}
Suppose the transition kernel of the MDP consists only of point-masses. Then with probability $1-\delta$, \texttt{StrongEuler}'s regret is bounded as
\begin{align*}
    \regret(K) &\leq O\Bigg(\sum_{(s,a) : \Pi_{s,a}\neq \emptyset} \frac{H\mathcal{LOG}\left(M/\delta,K,\returngap(s,a)\right)}{\return{*}-\return{*}_{s,a}}\\
    \\&+ \sum_{s,a} SH^3 \log\frac{MK}{\delta} \min\left\{ \log \frac{MK}{\delta}, \log \frac{MH}{\returngap(s,a)}\right\}
     \\& + SAH^3 (S \vee H)\log\frac{M}{\delta}\Bigg),
\end{align*}
where $\return{*}_{s,a} = \max_{\pi\in\Pi_{s,a}}\return{\pi}$.
\end{corollary}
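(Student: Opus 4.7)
The plan is to invoke Theorem~\ref{thm:reg_bound_gen_se} with the choice $t = K_{(s,a)}$ in the inner minimum and then translate the algorithmic clipping threshold $\breve\gap_k(s,a)$ into the structural return gap $\returngap(s,a)$ using the determinism of transitions. Two simplifications apply immediately: since transitions are deterministic, $\Vcal^*(s,a) = \VV[R(s,a)] \leq 1$, and the clipping threshold collapses to
\[
\epsilon_k(s,a) = \tfrac{\return{*} - \return{\pi_k}}{2H} \text{ if } \pi_k \in \Pi_{s,a}, \qquad \epsilon_k(s,a) = \infty \text{ otherwise,}
\]
because the trajectory of $\pi_k$ is deterministic and, conditional on $\Bcal(s,a)$ having positive probability, the sum of gaps over the entire trajectory equals $\return{*} - \return{\pi_k}$ by \pref{lem:gap_decomp_pi}. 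With $t = K_{(s,a)}$ the square-root term in Theorem~\ref{thm:reg_bound_gen_se} vanishes identically, leaving only $\Vcal^*(s,a)\mathcal{LOG}(M/\delta, K_{(s,a)}, \breve\gap_{K_{(s,a)}}(s,a))/\breve\gap_{K_{(s,a)}}(s,a)$ per state-action pair.

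Next I would split the main sum based on whether $\Pi_{s,a} = \varnothing$. When $\Pi_{s,a} = \varnothing$, any policy visiting $(s,a)$ does so along an all-optimal prefix, forcing $\gap(s,a) = 0$ and $\epsilon_k(s,a) = \infty$ for every $k$, so $\breve\gap_{K_{(s,a)}}(s,a) = \infty$ and the summand vanishes (note $\mathcal{LOG}(\cdot,\cdot,\infty) = 0$). When $\Pi_{s,a} \neq \varnothing$, either $\pi_{K_{(s,a)}} \in \Pi_{s,a}$, so $\epsilon_{K_{(s,a)}}(s,a) \geq (\return{*} - \return{*}_{s,a})/(2H)$ by definition of $\return{*}_{s,a}$, or else $\gap(s,a) = 0$ (else $\pi_{K_{(s,a)}}$ would itself lie in $\Pi_{s,a}$) and again the term vanishes. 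In the nontrivial subcase,
\[
\breve\gap_{K_{(s,a)}}(s,a) \geq \tfrac{\return{*} - \return{*}_{s,a}}{2H}
\qquad\text{and}\qquad
\breve\gap_{K_{(s,a)}}(s,a) \geq \tfrac{\returngap(s,a)}{2},
\]
where the second inequality follows from $\max(a,b/2) \geq \tfrac12\max(a,b)$ applied with $a = \gap(s,a)$ and $b = (\return{*}-\return{*}_{s,a})/H$. The first controls the denominator of the leading term and the second controls the $\mathcal{LOG}$ factor (monotone decreasing in its third argument), jointly producing the desired $O(H\,\mathcal{LOG}(M/\delta, K, \returngap(s,a))/(\return{*} - \return{*}_{s,a}))$ contribution.

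For the middle lower-order term, the same case analysis yields $\breve\gap_{\min}(s,a) \geq \returngap(s,a)/2$ whenever $\returngap(s,a) > 0$, since every $\pi_k \in \Pi_{s,a}$ visiting $(s,a)$ contributes $\breve\gap_k(s,a) \geq \returngap(s,a)/2$ while all remaining rounds produce either $\infty$ or a value excluded from the minimum's domain. Hence $\log(MH/\breve\gap_{\min}(s,a)) \leq \log(MH/\returngap(s,a)) + O(1)$, so this term transfers with at most a constant-factor inflation, and the final $SAH^3(S\vee H)\log(M/\delta)$ additive term carries over verbatim. The main obstacle is the careful bookkeeping around the clipping threshold being $\infty$: one must verify in every case that this corresponds to a genuine zero contribution (not an indeterminate form) and that no $(s,a)$ with $\Pi_{s,a} \neq \varnothing$ slips through the case split between the last-visit policy lying inside or outside $\Pi_{s,a}$.
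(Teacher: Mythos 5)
Your proposal is correct and follows essentially the same route as the paper: the paper's proof consists of noting that for deterministic transitions $\Vcal^*(s,a)=\VV[R(s,a)]\leq 1$ and $\epsilon_k(s,a)=(\return{*}-\return{\pi_k})/(2H)$ (or $\infty$ when $\pi_k\notin\Pi_{s,a}$), and then invoking \pref{thm:reg_bound_gen_se} with $t=K_{(s,a)}$ so the square-root term vanishes. Your additional case analysis relating $\breve\gap_{K_{(s,a)}}(s,a)$ to $\return{*}-\return{*}_{s,a}$ and $\returngap(s,a)$ just makes explicit the bookkeeping the paper leaves implicit.
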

We now compare the above bound with the one in \citep{simchowitz2019non} again. For simplicity we are going to take $K$ to be the smaller of the two quantities in the logarithm. To compare the bounds, we compare $\sum_{(s,a) : \Pi_{s,a}\neq \emptyset} \frac{H(\log(KM/\delta)))}{\return{*}- \return{*}_{(s,a)}}$ to $\sum_{(s,a) \in \mathcal{Z}_{sub}} \frac{\alpha H \log(KM/\delta)}{\gap(s,a)} + \frac{|\mathcal{Z}_{opt}|H}{\gap_{\min}}$. Recall that $\alpha \in [0,1]$ is defined as the smallest value such that for all $(s,a,s') \in \Scal\times\Acal\times\Scal$ it holds that
\begin{align*}
    P(s'|s,a) - P(s'|s,\pi^*(s)) \leq \alpha P(s'|s,a).
\end{align*}
For any deterministic transition MDP with more than one layer and one sub-optimal action it holds that $\alpha = 1$. We will compare $V^*(s_1) - V^{\pi^*_{(s,a)}}(s_1)$ to $\gap(s,a) = Q^*(s,\pi^*(s)) - Q^*(s,a)$. This comparison is easy as by Lemma~\ref{lem:gap_decomp_pi} we can write 
\begin{align*}
V^*(s_1) - V^{\pi^*_{(s,a)}}(s_1) = \sum_{(s',a') \in \pi^*_{(s,a)}} w_{\pi^*_{(s,a)}(s',a')} \gap(s',a') = \sum_{(s',a') \in \pi^*_{(s,a)}}\gap(s',a') \geq \gap(s,a).
\end{align*}
Hence, our bound in the worst case matches the one in \citet{simchowitz2019non} and can actually be significantly better. We would further like to remark that we have essentially solved all of the issues presented in the example MDP in Figure~\ref{fig:summary}. In particular we do not pay any gap-dependent factors for states which are only visited by $\pi^*$, we do not pay a $\gap_{\min}$ factor for any state and we never pay any factors for distinguishing between two suboptimal policies. Finally, we compare this bound to the lower bound derived Theorem~\ref{thm:lower_bound_deterministic} only with respect to number of episodes and gaps. Let $\Scal^*$ be the set of all states in the support of an optimal policy
\begin{align*}
    \sum_{(s,a) \in \Scal\setminus\Scal^*\times\Acal} \frac{\log(K)}{H(\return{*} - \return{\pi^*_{(s,a)}}(s_1))} \leq \regret(K) \leq \sum_{(s,a) : \Pi_{s,a}\neq \emptyset}\frac{H\log(K)}{\return{*} - \return{*}_{s,a}}.
\end{align*}
The difference between the two bounds, outside of an extra $H^2$ factor, is in the sets $\Scal^*$ and the set $\{s,a : \Pi_{s,a}=\emptyset\}$. We note that $\{s,a : \Pi_{s,a}=\emptyset\} \subseteq \Scal^*$. Unfortunately there are examples in which $\{s,a : \Pi_{s,a}=\emptyset\}$ is $O(1)$ and $\Scal^* = \Omega(S)$ leading to a discrepancy between the upper and lower bounds of the order $\Omega(S)$. As we show in \pref{thm:det_lower_bound} this discrepancy can not really be avoided by optimistic algorithms.

\subsection{Tighter bounds for unique optimal policy.}
\label{app:unique_opt_pol}
If we further assume that the optimal policy is unique on its support, then we can show \textsc{StrongEuler} will only incur regret on sub-optimal state-action pairs. This matches the information theoretic lower bound up to horizon factors. We begin by showing a different type of upper bound on the expected gaps by the surpluses.
Define the set $\beta_k = range(B)$ where $B$ is the r.v. which is the stopping time with respect to $\pi_k$. For any $\pi^*$, define the set 
\begin{align*}
    \Ocal_k(\pi^*) = \bigcup_{s_b \in \beta_k} \{(s,a)\in \Scal\times\Acal : \PP_{\pi^*}((S_h,A_h) = (s,a)|S_{\kappa(s_b)}= s_b) \geq \PP_{\pi_k}((S_h,A_h) = (s,a)|S_{\kappa(s_b)}= s_b)\}.
\end{align*}
This set has the following intuitive definition -- whenever $\Acal_B$ occurs we restrict our attention to the MDP with initial state $S_B$. On this restricted MDP, $\Ocal_k$ is the set of state-action pairs which have greater probability to be visited by the optimal $\pi^*$ than by $\pi_k$.
\begin{lemma}
\label{lem:gap_surp_bound}
Assume strong optimism and greedy $\bar V_k,$ i.e., $\bar V_k(s) \geq \max_a \bar Q_k(s,a)$ for all $s \in \Scal$. Then there exists an optimal $\pi^*$ for which 
\begin{align*}
    \EE_{\pi_k}\left[\sum_{h=B}^H\gap(S_h,A_h)\right] \leq \EE_{\pi_k}\left[\sum_{h=B}^H \chi(S_h,A_h \not \in \Ocal_k(\pi^*))E_k(S_h,A_h)\right].
\end{align*}
\end{lemma}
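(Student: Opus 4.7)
The plan is to rewrite the left-hand side using the value-function representation from \pref{lem:gap_decomp_pi} and then control it by comparing the optimistic value $\bar V_k$ when unrolled along $\pi_k$ versus along any fixed Bellman-optimal $\pi^*\in\Pi^*$. Conditioning on $S_B$ and applying \pref{lem:gap_decomp_pi} yields
\begin{align*}
\EE_{\pi_k}\!\left[\sum_{h=B}^H \gap(S_h,A_h)\,\bigg|\,S_B\right] = V^*(S_B)-V^{\pi_k}(S_B).
\end{align*}
I next derive two estimates for $\bar V_k(S_B)$. Since $\pi_k$ is greedy, $\bar V_k(s)=r(s,\pi_k(s))+\langle P(\cdot|s,\pi_k(s)),\bar V_k\rangle+E_k(s,\pi_k(s))$, so \pref{lem:rec_rel} applied to $\Psi=\bar V_k-V^{\pi_k}$ and $\Delta=E_k$ gives
\begin{align*}
\bar V_k(S_B)-V^{\pi_k}(S_B)=\EE_{\pi_k}\!\left[\sum_{h=B}^H E_k(S_h,A_h)\,\bigg|\,S_B\right].
\end{align*}
For the second estimate, greediness implies $\bar V_k(s)\geq \bar Q_k(s,\pi^*(s))=r(s,\pi^*(s))+\langle P(\cdot|s,\pi^*(s)),\bar V_k\rangle+E_k(s,\pi^*(s))$, and strong optimism ($E_k\geq 0$) lets a backward induction over layers convert this into
\begin{align*}
\bar V_k(S_B)\geq V^{\pi^*}(S_B)+\EE_{\pi^*}\!\left[\sum_{h=B}^H E_k(S_h,A_h)\,\bigg|\,S_B\right] = V^*(S_B)+\EE_{\pi^*}\!\left[\sum_{h=B}^H E_k(S_h,A_h)\,\bigg|\,S_B\right].
\end{align*}

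Subtracting the second from the first, canceling $\bar V_k(S_B)$, and rewriting each conditional expectation as a weighted sum of surpluses gives, conditionally on $S_B$,
\begin{align*}
V^*(S_B)-V^{\pi_k}(S_B)\leq \sum_{s',a'}\bigl[\PP_{\pi_k}\!\bigl((S_h,A_h)=(s',a')\,\big|\,S_{\kappa(S_B)}=S_B\bigr)-\PP_{\pi^*}\!\bigl((S_h,A_h)=(s',a')\,\big|\,S_{\kappa(S_B)}=S_B\bigr)\bigr]E_k(s',a').
\end{align*}
Because $E_k\geq 0$ by strong optimism, dropping every $(s',a')$ whose bracketed coefficient is non-positive only loosens the inequality. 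By the definition of $\Ocal_k(\pi^*)$ the surviving pairs are exactly those with $(s',a')\notin\Ocal_k(\pi^*)$, and upper-bounding their coefficient by the $\PP_{\pi_k}$-marginal, together with the fact that in a layered MDP each $(s',a')$ is visited at most once, rewrites the bound as an expectation of a single trajectory sum:
\begin{align*}
V^*(S_B)-V^{\pi_k}(S_B)\leq \EE_{\pi_k}\!\left[\sum_{h=B}^H\chi\bigl((S_h,A_h)\notin\Ocal_k(\pi^*)\bigr)E_k(S_h,A_h)\,\bigg|\,S_B\right].
\end{align*}
Taking the outer expectation over $S_B$ establishes the lemma for any chosen $\pi^*\in\Pi^*$.

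The main obstacle is the $\pi^*$-unrolling: the relation along $\pi^*$ is only an inequality, so one cannot invoke \pref{lem:rec_rel} verbatim. Instead one must verify backward that the per-step inequality $\bar V_k(s)\geq r(s,\pi^*(s))+\langle P(\cdot|s,\pi^*(s)),\bar V_k\rangle+E_k(s,\pi^*(s))$ telescopes correctly, which is precisely where strong optimism is needed to keep the residuals non-negative when averaged over the transition kernel. Everything else is a signed-coefficient argument made possible again by $E_k\geq 0$.
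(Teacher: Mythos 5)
Your proposal is correct and follows essentially the same route as the paper: both decompose $V^*(S_B)-V^{\pi_k}(S_B)$ as $(\bar V_k - V^{\pi_k})(S_B) - (\bar V^{\pi^*} - V^{\pi^*})(S_B)$ using the backward-induction fact that the surplus-augmented value of any policy is dominated by the greedy $\bar V_k$, express both terms as expected surplus sums under $\pi_k$ and $\pi^*$, and then use $E_k\geq 0$ together with the definition of $\Ocal_k(\pi^*)$ to drop the pairs where $\pi^*$'s visitation probability dominates. The only small imprecision is attributing the validity of the $\pi^*$-unrolling to strong optimism — that step needs only monotonicity of $\langle P(\cdot|s,a),\cdot\rangle$ in the backward induction; $E_k\geq 0$ is needed exactly where you use it second, in the signed-coefficient argument.
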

\begin{proof}
One can write the optimistic value function for any $s$ and $\pi$ as follows
\begin{align*}
    \bar V^{\pi}(s) &= \EE_{\pi}\left[\sum_{h=\kappa(s)}^H E_k(S_h,A_h) + r(S_h,A_h)\big\vert S_{\kappa(s)} = s\right]\\
    &= E_k(s,\pi(s)) + r(s,\pi(s)) + \langle P(\cdot | s,\pi(s)), \bar V^{\pi} \rangle.
\end{align*}
By backwards induction on $H$ we show that for any $s$, $\kappa(s) \leq H$ $\bar V^{\pi} \leq \bar V_k$. The base case holds from the fact that on all $s: \kappa(s)=H$, $\bar V_k(s)$ is just the largest optimistic reward over all actions at $s$. For the induction step it holds that
\begin{align*}
    \bar V^{\pi}(s) &= E_k(s,\pi(s)) + r(s,\pi(s)) + \langle P(\cdot|s,\pi(s)),\bar V^{\pi} \rangle\\
    &\leq E_k(s,\pi(s)) + r(s,\pi(s)) + \langle P(\cdot|s,\pi(s)),\bar V_k \rangle\\
    & = \bar Q_k(s,\pi(s)) \leq \bar V_k(s),
\end{align*}
where the first inequality holds from the induction hypothesis and the second inequality holds by definition of the value function.
We now have
\begin{align*}
    \EE_{\pi_k}\left[\sum_{h=B}^H \gap(S_h,A_h)\right] &= \EE_{\pi_k}\left[V^*(S_B) - V_k(S_B)\right]\\
    &\leq \EE_{\pi_k}\left[\bar V_k(S_B) - V_k(S_B)\right] - \EE_{\pi_k}\left[\bar V^*(S_B) - V^*(S_B)\right].
\end{align*}
Let us focus on the term $\EE_{\pi_k}\left[\bar V^*(S_B) - V^*(S_B)\right]$
\begin{align*}
    \EE_{\pi_k}\left[\bar V^*(S_B) - V^*(S_B)\right] &= \EE_{\pi_k}\left[\EE_{\pi_k}\left[\bar V^*(S_B) - V^*(S_B)|S_B\right]\right]\\
    &=\EE_{\pi_k}\left[ \sum_s \frac{\bar V^*(s) - V^*(s)}{\PP_{\pi_k}(S_B=s)}\chi(S_B=s)\right]\\
    &=\EE_{\pi_k}\left[ \sum_s \frac{ \EE_{\pi^*}\left[ \sum_{h=\kappa(s)}^H E_k(S_h,A_h)| S_{\kappa(s)}=s\right] }{\PP_{\pi_k}(S_B=s)}\chi(S_B=s)\right].
\end{align*}
We can similarly expand the term $\EE_{\pi_k}\left[\bar V_k(S_B) - V_k(S_B)\right]$. By the definition of $\Ocal_k(\pi^*)$ it holds that for any $h\geq\kappa(s)$
\begin{align*}
    \EE_{\pi_k}\left[E_k(S_h,A_h)|S_{\kappa(s)} = s\right] &- \EE_{\pi^*}\left[E_k(S_h,A_h)|S_{\kappa(s)} = s\right]\\
    &\leq \EE_{\pi_k}\left[\chi(S_h,A_h \not \in \Ocal_k(\pi^*))E_k(S_h,A_h)|S_{\kappa(s)} = s\right].
\end{align*}
This implies 
\begin{align*}
    \EE_{\pi_k}\left[\bar V^*(S_B) - V^*(S_B)\right] &\leq \EE_{\pi_k}\left[ \sum_s \frac{ \EE_{\pi_k}\left[ \sum_{h=\kappa(s)}^H \chi(S_h,A_h \not \in \Ocal_k(\pi^*))E_k(S_h,A_h)| S_{\kappa(s)}=s\right] }{\PP_{\pi_k}(S_B=s)}\chi(S_B=s)\right]\\
    &= \EE_{\pi_k}\left[\sum_{h=B}^H \chi(S_h,A_h \not \in \Ocal_k(\pi^*))E_k(S_h,A_h)\right].
\end{align*}
\end{proof}

We next show a version of Lemma~\ref{lem:Vdd_lb1} which takes into account the set $\Ocal_k(\pi^*)$.

\begin{lemma}
\label{lem:clipped_val_lower}
With the same assumptions as in Lemma~\ref{lem:gap_surp_bound}, there exists an optimal $\pi^*$ for which
\begin{align*}
    \ddot V_k(s_1) - V_k(s_1) \geq \EE_{\pi_k}\left[\sum_{h=B}^H \gap(S_h,A_h) - \sum_{h=B}^H\chi(S_h,A_h \not \in \Ocal_k(\pi^*))\epsilon_k(S_h,A_h)\right],
\end{align*}
where $\epsilon_k$ is arbitrary.
\end{lemma}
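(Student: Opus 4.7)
The proof plan is to mirror the structure of the proof of Lemma~\ref{lem:Vdd_lb1}, but to replace the final step (which lower-bounds expected surpluses by expected gaps via optimism $\bar V_k \geq V^*$) with the sharper bound from Lemma~\ref{lem:gap_surp_bound}. Let $W_k(s) := \ddot V_k(s) - V^{\pi_k}(s)$. The first portion of the proof of Lemma~\ref{lem:Vdd_lb1} uses strong optimism (so $\ddot E_k \geq 0$), the Markov property, and a telescoping over the disjoint events $\Acal_h$ ``first positive gap is encountered at time $h$'' to establish $W_k(s_1) \geq \EE_{\pi_k}[W_k(S_B)]$. I would carry out exactly this argument, then apply Lemma~\ref{lem:rec_rel} with $\Psi = W_k$, $\Delta = \ddot E_k$, and event $\Acal_h$, summing over $h \in [H]$, to get
$$W_k(s_1) \;\geq\; \EE_{\pi_k}\!\left[\sum_{h=B}^{H} \ddot E_k(S_h, A_h)\right].$$

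Fix any optimal policy $\pi^*$. Since $\ddot E_k \geq 0$ on every state-action pair, I can drop the terms in the sum for which $(S_h, A_h) \in \Ocal_k(\pi^*)$, giving
$$W_k(s_1) \;\geq\; \EE_{\pi_k}\!\left[\sum_{h=B}^{H} \chi\bigl((S_h, A_h) \notin \Ocal_k(\pi^*)\bigr)\,\ddot E_k(S_h, A_h)\right].$$
Next I would apply the standard pointwise clipping bound $\ddot E_k(s,a) = \clip[E_k(s,a)\,|\,\epsilon_k(s,a)] \geq E_k(s,a) - \epsilon_k(s,a)$ on the surviving terms, which separates the right-hand side into an $E_k$ sum minus an $\epsilon_k$ sum, both restricted to $\{(S_h, A_h) \notin \Ocal_k(\pi^*)\}$.

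The final step is to choose $\pi^*$ to be the particular optimal policy supplied by Lemma~\ref{lem:gap_surp_bound}, which guarantees
$$\EE_{\pi_k}\!\left[\sum_{h=B}^{H} \gap(S_h, A_h)\right] \;\leq\; \EE_{\pi_k}\!\left[\sum_{h=B}^{H} \chi\bigl((S_h, A_h) \notin \Ocal_k(\pi^*)\bigr)\,E_k(S_h, A_h)\right].$$
Plugging this into the previous display yields the advertised inequality.

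There is no substantive obstacle: the telescoping has already been done in Lemma~\ref{lem:Vdd_lb1} and the gap-to-surplus conversion in Lemma~\ref{lem:gap_surp_bound}. The only subtlety is consistency of the optimal policy, namely using the same $\pi^*$ from Lemma~\ref{lem:gap_surp_bound} both inside the indicator $\chi((S_h,A_h) \notin \Ocal_k(\pi^*))$ attached to the $\epsilon_k$ term and in the invocation of the surplus bound for the $E_k$ term; since both are selected at the same step, this is automatic.
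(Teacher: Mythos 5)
Your proposal is correct and follows essentially the same route as the paper: drop the terms with $(S_h,A_h)\in\Ocal_k(\pi^*)$ by nonnegativity of $\ddot E_k$, apply $\clip[a\mid b]\geq a-b$ to the surviving terms, and invoke Lemma~\ref{lem:gap_surp_bound} to convert the restricted surplus sum into the gap sum. The only cosmetic difference is that the paper obtains $\ddot V_k(s_1)-V^{\pi_k}(s_1)\geq \EE_{\pi_k}[\sum_{h=B}^H \ddot E_k(S_h,A_h)]$ directly from the identity $\ddot V_k(s_1)-V^{\pi_k}(s_1)=\EE_{\pi_k}[\sum_{h=1}^H \ddot E_k(S_h,A_h)]$ and nonnegativity, rather than re-running the telescoping over the events $\Acal_h$ from Lemma~\ref{lem:Vdd_lb1}.
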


\begin{proof}
Since $\ddot E_k$ is non-negative on all state-action pairs we have
\begin{align*}
    \ddot V_k(s_1) - V^{\pi_k}(s_1) &= \EE_{\pi_k}\left[\sum_{h=1}^{H} \ddot E_k(S_h,A_h)\right]
    \geq \EE_{\pi_k}\left[\sum_{h=B}^{H} \ddot E_k(S_h,A_h)\right]\\
    & \geq \EE_{\pi_k}\left[\sum_{h=B}^{H}
    \indicator{(S_h,A_h) \not\in \Ocal_k}
    \ddot E_k(S_h,A_h)\right]\\
    &\geq \EE_{\pi_k}\left[\sum_{h=B}^{H}\chi((S_h,A_h) \not\in \Ocal_k) E_k(S_h,A_h)\right] - \EE_{\pi_k}\left[\sum_{h=B}^{H}\chi((S_h,A_h) \not\in \Ocal_k) \epsilon_k(S_h,A_h)\right]\\
    &\geq \EE_{\pi_k}\left[\sum_{h=B}^H \gap(S_h,A_H)\right] - \EE_{\pi_k}\left[\sum_{h=B}^{H}\chi((S_h,A_h) \not\in \Ocal_k) \epsilon_k(S_h,A_h)\right],
\end{align*}
where the second to last inequality follows from the definition of $\ddot E_k$ and the last inequality follows from Lemma~\ref{lem:gap_surp_bound}.
\end{proof}
Next, we define $\bar\epsilon_k$ in the following way. Let
\begin{align}
    \bar\epsilon_k(s,a) &\equiv
    \begin{cases}
    \epsilon_k(s,a) &\textrm{if } (s,a) \not\in \Ocal_k(\pi^*)\\
    \infty & \textrm{otherwise},
    \end{cases}
\end{align}
where $\epsilon_k$ is the clipping function defined in \pref{eqn:new_avgclip_clean}.
Lemma~\ref{lem:clipped_val_lower} now implies that
\begin{align*}
    \ddot V_k(s_1) - V_k(s_1) \geq \EE_{\pi_k}\left[\sum_{h=B}^H \gap(S_h,A_h) - \sum_{h=B}^H \bar \epsilon_k(S_h,A_h)\right].
\end{align*}
This is sufficient to argue Lemma~\ref{lem:strongeuler_regB_bound} with $\breve\gap_k(s,a) = \frac{\gap(s,a)}{4}\lor \bar\epsilon_k(s,a)$ and hence arrive at a version of Corollary~\ref{cor:det_trans_formal} which uses $\bar\epsilon_k$ as the clipping thresholds. Let us now argue that $\bar\epsilon_k(s,a) = \infty$ for all $(s,a)\in\pi^*$ whenever $\pi^*$ is the unique optimal policy for the deterministic MDP. To do so consider $(s,a)\in\pi^*$ and $\pi_k \neq \pi^*$. Since the MDP is deterministic, $\beta_k$ is a singleton and is the the first state $s_b$ at which $\pi_k$ differs from $\pi^*$. We now observe that if $\kappa(s) < \kappa(s_b)$, this implies $\epsilon_k(s,a) = \infty$ as $\Bcal(s,a)$ does not occur. Further, the conditional probabilities $\PP_{\pi^*}((S_h,A_h) = (s,a)|S_{\kappa(s_b)}= s_b)$ and $\PP_{\pi_k}((S_h,A_h) = (s,a)|S_{\kappa(s_b)}= s_b)$ are both equal to $1$ if $\kappa(s) > \kappa(s_b)$ and so $(s,a) \in \Ocal_k(\pi^*)$ which implies $\bar\epsilon_k(s,a) = \infty$. Thus we can clip all gaps at $(s,a) \in \pi^*$ to infinity and they will never appear in the regret bound. With the notation from Corollary~\ref{cor:det_trans_formal} we have the following tighter bound.
\begin{corollary}
\label{cor:det_upper_tight}
Suppose the transition kernel of the MDP consists only of point-masses and there exists a unique optimal $\pi^*$. Then with probability $1-\delta$, \texttt{StrongEuler}'s regret is bounded as
\begin{align*}
    \regret(K) &\leq O\Bigg(\sum_{(s,a) \not\in \pi^*} \frac{\mathcal{LOG}\left(M/\delta,K,\returngap(s,a)\right)}{\returngap(s,a)}\\
    \\&+ \sum_{(s,a)\not\in\pi^*} SH^3 \log\frac{MK}{\delta} \min\left\{ \log \frac{MK}{\delta}, \log \frac{MH}{\returngap(s,a)}\right\}
     \\& + SAH^3 (S \vee H)\log\frac{M}{\delta}\Bigg).
\end{align*}
\end{corollary}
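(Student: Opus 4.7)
The plan is to follow the outline sketched in the paragraph immediately preceding the corollary: replace the clipping threshold $\epsilon_k(s,a)$ from \eqref{eqn:new_avgclip_clean} with the policy-aware threshold
\[
\bar\epsilon_k(s,a) = \begin{cases} \epsilon_k(s,a) & \text{if } (s,a)\notin\Ocal_k(\pi^*),\\ \infty & \text{if } (s,a)\in\Ocal_k(\pi^*),\end{cases}
\]
and then rerun the analysis of \pref{thm:reg_bound_gen_se} with $\breve\gap_k(s,a)=\tfrac14\gap(s,a)\vee\bar\epsilon_k(s,a)$. Because $\clip[\,\cdot\,|\infty]=0$, every $(s,a)$ for which $\bar\epsilon_k$ is identically infinite across episodes disappears from the final bound.

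To justify this substitution, I would first establish a policy-aware analogue of \pref{prop:surplus_clipping_bound}: the instantaneous regret of $\pi_k$ is bounded by $4\sum_{s,a} w^{\pi_k}(s,a)\,\clip[E_k(s,a)\mid \tfrac14\gap(s,a)\vee\bar\epsilon_k(s,a)]$. The argument is verbatim that of \pref{prop:surplus_clipping_bound}, except that Lemma~\ref{lem:Vdd_lb1} is replaced by \pref{lem:clipped_val_lower}, which gives the scaled optimism of $\ddot V_k$ using only the contribution of $\bar\epsilon_k$ on pairs outside $\Ocal_k(\pi^*)$. On those pairs $\bar\epsilon_k=\epsilon_k$, so the validity condition needed to invoke the clipped-optimism argument is inherited directly from \pref{lem:clipping_gaps_rel}. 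With this new surplus clipping bound in hand, the proofs of \pref{lem:strongeuler_regB_bound} and \pref{thm:reg_bound_gen_se} go through unchanged, since neither uses any structural property of $\epsilon_k$ beyond the clipping bound itself. Taking $t=K$ and using $\Vcal^*(s,a)\leq 1$ (as variances are rewards-only under deterministic transitions) then recovers the main $\sum \mathcal{LOG}/\returngap$ term, restricted to pairs where some $\bar\epsilon_k$ is finite.

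The crux of the argument, and what I expect to be the main obstacle, is then showing that under a unique optimal policy $\pi^*$, we have $\bar\epsilon_k(s,a)=\infty$ for every $(s,a)\in\pi^*$ and every $\pi_k\neq\pi^*$. Since transitions are deterministic and $\pi^*$ is unique, $\pi_k$ and $\pi^*$ agree on a common prefix and diverge at a single state $s_b$, so $\beta_k=\{s_b\}$. I would argue by cases on $\kappa(s)$ versus $\kappa(s_b)$: if $\kappa(s)<\kappa(s_b)$, then $\pi_k$ has not yet taken a suboptimal action by the time it reaches $(s,a)$, so $\PP_{\pi_k}(\Bcal(s,a))=0$ and $\epsilon_k(s,a)=\infty$ by definition; if $\kappa(s)\geq\kappa(s_b)$, then uniqueness of $\pi^*$ together with deterministic transitions forces $\PP_{\pi^*}((S_h,A_h)=(s,a)\mid S_{\kappa(s_b)}=s_b)=1$, which dominates the corresponding $\pi_k$-conditional probability, placing $(s,a)$ in $\Ocal_k(\pi^*)$ and yielding $\bar\epsilon_k(s,a)=\infty$. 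This case analysis is delicate because it relies crucially on uniqueness — if $\pi^*$ were not unique, a competing optimal policy could reach $s_b$ through a different prefix and then visit $(s,a)$, potentially breaking the $\Ocal_k$-comparison and forcing some pair in $\pi^*$ back into the bound. Combining this with the regret bound from the previous step leaves only a sum over $(s,a)\notin\pi^*$, matching the statement of the corollary.
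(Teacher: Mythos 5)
Your proposal is correct and follows essentially the same route as the paper: define $\bar\epsilon_k$ via $\Ocal_k(\pi^*)$, substitute \pref{lem:clipped_val_lower} for \pref{lem:Vdd_lb1} to rerun the clipping and regret machinery, and then use determinism plus uniqueness of $\pi^*$ to argue by the same case split on $\kappa(s)$ versus $\kappa(s_b)$ that every $(s,a)\in\pi^*$ is clipped to infinity. The only cosmetic difference is that you phrase the $\kappa(s)\geq\kappa(s_b)$ case via domination of the conditional probabilities rather than asserting both equal one, which is if anything slightly more careful than the paper's wording.
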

Comparing terms which depend polynomially on $1/\returngap$ to the information theoretic lower bound in \pref{thm:lower_bound_deterministic} we observe only a multiplicative difference of $H^2$.

\subsection{Alternative to integration lemmas}
The following lemma is an alternative to the integration lemmas when bounding the sum of the clipped surpluses and in some cases allows us to save additional factors of $H$.
\begin{restatable}[]{lemma}{optimlemma}
Consider the following optimization problem
\begin{equation}
\begin{aligned}
\label{eq:opt_prob_regret_upper}
    \maximize{x_1,\ldots,x_K}{\sum_{k=1}^K \frac{v_k x_k\sqrt{\log(\sum_{j=1}^k x_j)}}{\sqrt{\sum_{j=1}^k x_j}}}
    {
    1 \leq x_1,\quad 
    0\leq x_k\leq 1, \quad
    \frac{\sqrt{\log(\sum_{j=1}^k x_j)}}{\sqrt{\sum_{j=1}^k x_j}} \geq \epsilon_k\qquad\forall~ k\in[K]},
\end{aligned}
\end{equation}
with $(v_i)_{i \in [K]} \in \RR_{+}^K$ and $(\epsilon_i)_{i \in [K]} \in \RR_+^{K}$. Then the optimal value of Problem~\ref{eq:opt_prob_regret_upper} is bounded for any $t \in [K]$ as
\begin{align}
    4\frac{\bar v_{t}}{\epsilon_t}\log\left(t \land 1+\frac{1}{\epsilon_{t}^2}\right) + 4v^*_t\sqrt{\log\left(K\land 1+ \frac{1}{\epsilon_{K}^2}\right)(K-t)},
    \label{eq:opt_prob_bound}
\end{align}
where $\bar v_t = \max_{k \in [t]} v_k$ and $v^*_t = \max_{K\geq k\geq t} v_k$.
\label{lem:sa_regret_bound}
\end{restatable}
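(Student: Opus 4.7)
The proof plan proceeds by introducing the shorthand $N_k = \sum_{j=1}^k x_j$, observing that $N_k$ is non-decreasing, that the constraints $x_1\ge 1$ and $x_k\le 1$ give $1\le N_k \le k$, and that the clipping constraint $\sqrt{\log N_k/N_k}\ge \epsilon_k$ is equivalent to $N_k \le \log N_k / \epsilon_k^2$. The overall strategy is the standard one for such optimization bounds: split the sum at the fixed index $t$, bound the two halves by telescoping arguments, and then convert the implicit constraint $N_k\le \log N_k/\epsilon_k^2$ into an explicit bound on $\log N_k$.

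For the first half $k\le t$, I would upper bound each $v_k$ by $\bar v_t$ and pull out $\sqrt{\log N_k}\le\sqrt{\log N_t}$ (which is legal because $N_k$ is non-decreasing). The remaining sum $\sum_{k=1}^t x_k/\sqrt{N_k}$ telescopes using $\tfrac{x_k}{\sqrt{N_k}}=\tfrac{N_k-N_{k-1}}{\sqrt{N_k}}\le 2(\sqrt{N_k}-\sqrt{N_{k-1}})$ (with $N_0:=0$), yielding $2\sqrt{N_t}$. Hence the first half is at most $2\bar v_t\sqrt{N_t\log N_t}$. Invoking the constraint at $k=t$ gives $N_t\le \log N_t/\epsilon_t^2$, so $\sqrt{N_t\log N_t}\le \log N_t/\epsilon_t$, producing a bound of the form $2\bar v_t\log N_t/\epsilon_t$.

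For the second half $k>t$, I would upper bound each $v_k$ by $v^*_t$ and pull out $\sqrt{\log N_k}\le\sqrt{\log N_K}$. The remainder $\sum_{k=t+1}^K x_k/\sqrt{N_k}$ telescopes to $2(\sqrt{N_K}-\sqrt{N_t})$, and the sub-additivity inequality $\sqrt{N_K}-\sqrt{N_t}\le\sqrt{N_K-N_t}\le\sqrt{K-t}$ (since $x_k\le 1$) gives a clean factor of $2\sqrt{K-t}$. Combined, the second half is at most $2v^*_t\sqrt{(K-t)\log N_K}$.

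The main obstacle is the last step: turning the implicit bound $N_k\le \log N_k/\epsilon_k^2$ into an explicit upper bound on $\log N_k$ matching the claimed $\log(k\vee 1+1/\epsilon_k^2)$ form. I would argue this by a small "inversion lemma": if $N\ge 1$ satisfies $N\le \log N/\epsilon^2$, then iterating the inequality (substituting the bound back into $\log N$ once) gives $N\le (2/\epsilon^2)\log(1/\epsilon^2)$, hence $\log N\le c\log(1+1/\epsilon^2)$ for a universal constant $c$. Combined with the trivial bound $N_k\le k$ (which gives $\log N_k\le\log k$), we obtain $\log N_k\le c'\log(k\vee 1+1/\epsilon_k^2)$. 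Plugging this in at $k=t$ and $k=K$ yields the two terms of the stated bound, with the constant $4$ absorbing the constants $2$, $c$, $c'$ from the individual steps.
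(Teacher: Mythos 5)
Your proposal is correct and follows essentially the same route as the paper's proof: split the sum at $t$, telescope $\sum_k x_k/\sqrt{X_k} \le 2(\sqrt{X_M}-\sqrt{X_m})$, invoke the constraint at $k=t$ and $k=K$, and invert the implicit bound $X_k \le \log(X_k)/\epsilon_k^2$ via a small claim (the paper's Claim~\ref{claim:log_ineq}) to get $\log(X_k) \le 2\log(1+1/\epsilon_k^2)$, combined with the trivial $\log(X_k)\le\log k$. The only slip is cosmetic: the combination of those two bounds should be a minimum ($\land$), not a maximum ($\vee$) as written in your last paragraph.
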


\begin{proof}Denote by $X_k = \sum_{t=1}^k x_t$ the cumulative sum of $x_t$.
The proof consists of splitting the objective of \prettyref{eq:opt_prob_regret_upper} into
two terms:
\begin{align}
    \sum_{k=1}^t \frac{v_k x_k\sqrt{\log( X_k)}}{\sqrt{X_k}} + \sum_{k=t+1}^K \frac{v_k x_k\sqrt{\log(X_k)}}{\sqrt{X_k}}
    \label{eq:obj_split}
\end{align}
and bounding each by the corresponding one in  \prettyref{eq:opt_prob_bound} respectively.

Before doing so, we derive the following bound on the sum of $\frac{x_k}{\sqrt{X_k}}$ terms:
\begin{align}
\label{eq:sqrt_X_bound}
    \sum_{k=m+1}^M\frac{x_k}{\sqrt{X_k}}=\sum_{k=m+1}^M\frac{X_k-X_{k-1}}{\sqrt{X_k}}\leq \int_{X_m}^{X_M}\frac{1}{\sqrt{x}}\,dx = 2(\sqrt{X_M}-\sqrt{X_m})\,,
\end{align}
where the inequality is due to $X_k$ being non-decreasing.

Consider now each term in the objective in \prettyref{eq:obj_split} separately.

\paragraph{Summands up to $t$:}

Since $X_k$ is non-decreasing, we can bound
\begin{align*}
    \sum_{k=1}^t \frac{v_k x_k \sqrt{\log(X_k)}}{\sqrt{X_k}}
    \leq 
    \bar v_t \sqrt{\log(X_t)}
    \sum_{k=1}^t \frac{x_k}{\sqrt{X_k}}
    & \overset{(i)}{\leq} 2\bar v_t \sqrt{\log(X_t)}\sqrt{ X_t}\\
    & \overset{(ii)}{\leq}
    2\frac{\bar v_t}{\epsilon_t} \log(X_t),
\end{align*} 
where $(i)$ follows from \prettyref{eq:sqrt_X_bound} using the convention $X_0=0$ and $(ii)$ from the optimization constraint $\sqrt{\log(X_t)} \geq \epsilon_t \sqrt{X_t}$. It remains to bound $\log(X_t)$ by $2\log\left(t \land 1+\frac{1}{\epsilon_{t}^2}\right)$.  Since all increments $x_j$ are at most $1$, the bound $\log(X_t) \leq \log(t)$ holds.

We claim the following:
\begin{claim}
\label{claim:log_ineq}
    For any $x$ s.t. $\log(x) \leq \log(\log(x)/a)$ it holds that $\log(x) \leq 2\log(1+1/a)$.
\end{claim}
\begin{proof}
    First, we note that if $0<x\leq e$, then $\log(\log(x)) < 0$ and thus the assumption of the claim implies $\log(x) \leq \log(1/a)$. Next, assume that $x > e$. Then we have $\frac{\log(\log(x))}{\log(x)} \leq 1/e$, which together with the assumption of the claim implies $\log(x) \leq 1/e\log(x) + \log(1/a)$ or equivalently $\log(x) \leq \frac{e}{e-1}\log(1/a)$. Noting that $e/(e-1) \leq 2$ completes the proof.
\end{proof}

 The constraints of the problem enforce $\sqrt{X_k} \leq \frac{\sqrt{\log(X_k)}}{\epsilon_k}$,
 which implies after squaring and taking the $\log$: 
 $\log(X_k) \leq \log(\log(X_k)/\epsilon_k^2)$. Thus, using Claim~\ref{claim:log_ineq} yields:
 \begin{align}
     \log(X_k) \leq 2\log(k\land 1+1/\epsilon_k^2).
     \label{eq:logX_bound}
 \end{align}

\paragraph{Summands larger than $t$:}
Let $v^*_t = \max_{k \colon t < k \leq K} v_k$. For this term, we have
\begin{align*}
    \sum_{k=t+1}^K \frac{v_k x_k \sqrt{\log(X_k)}}{\sqrt{X_k}} 
   &\overset{\prettyref{eq:logX_bound}}{\leq} 2v^*_t\sqrt{\log(K\land 1+1/\epsilon^2_K)}\sum_{k=t+1}^K \frac{x_k}{\sqrt{X_k}}\\
   &\overset{\prettyref{eq:sqrt_X_bound}}{\leq} 4v^*_t\sqrt{\log(K\land 1+1/\epsilon^2_K)}(\sqrt{X_K}-\sqrt{X_t})\\
   &\leq 4v^*_t\sqrt{\log(K\land 1+1/\epsilon^2_K)}(\sqrt{X_K-X_t})\\
   &\leq 4v^*_t\sqrt{\log(K\land 1+1/\epsilon^2_K)}(\sqrt{K-t}),
\end{align*}
where we first bounded $\log(X_k) \leq \log(X_K)$, because $X_k$ is non-decreasing, and used the upper bound on $\log(X_K)$.
Then we applied \prettyref{eq:sqrt_X_bound} and finally used $0\leq x_k\leq 1$.
\end{proof}

\end{document}